\documentclass[a4paper, 11pt]{article}
\pdfoutput=1  
\usepackage{etoolbox,verbatim}

\newtoggle{arxiv}
\toggletrue{arxiv} 

\newtoggle{customthms}
\toggletrue{customthms}

\newtoggle{colt}
\togglefalse{colt} 

\usepackage[utf8]{inputenc} %
\usepackage[T1]{fontenc}    %
\usepackage{url}            %
\usepackage{booktabs}       %
\usepackage{amsfonts}       %
\usepackage{nicefrac}       %
\usepackage{microtype}      %
\usepackage[table]{xcolor}         %
\usepackage{subcaption}
\usepackage{algorithm}
\usepackage{algorithmic}
\usepackage{wrapfig}

\usepackage{xcolor}
\usepackage{tcolorbox}
\tcbuselibrary{skins,breakable}

\tcbset{
  aibox/.style={
      width=\linewidth,
      top=6pt,
      bottom=0pt,
      left=1.5pt,
      right=1.5pt,
      colback=blue!60!gray!5,
      colframe=black,
      colbacktitle=black,
      enhanced,
      center,
      attach boxed title to top left={yshift=-0.1in,xshift=0.15in},
      boxed title style={boxrule=0pt,colframe=white,},
    }
}
\newtcolorbox{AIbox}[2][]{aibox,title=#2,#1}

\newtcolorbox{bluequote}{
  colframe=black,        %
  arc=2pt,               %
  left=8pt,
  right=8pt,
  top=6pt,
  bottom=6pt
}

\definecolor{primalcolor}{HTML}{A60000}
\definecolor{contrarycolor}{HTML}{00A6A6}
\definecolor{darkcontrarycolor}{HTML}{004C4C}
\definecolor{lightblue}{HTML}{2970CC}
\definecolor{lightpurple}{HTML}{673147}
\definecolor{ForestGreen}{HTML}{FF5733}
\definecolor{myred}{HTML}{AA4A44}
\definecolor{hyppurple}{HTML}{800080}
\newcommand{\niceblue}{blue!55!black}
\newcommand{\nicepurple}{violet!70!black}

\newcommand{\nicered}{red!50!black}
\newcommand{\linkcolor}{darkcontrarycolor}
\newcommand{\urlcolor}{darkcontrarycolor}
\newcommand{\citecolor}{darkcontrarycolor}

\newcommand{\thmcolordark}{red!30!black}

\newcommand{\takeawaycolor}{red!40!black}
\newcommand{\takeawaybold}[1]{{\color{\takeawaycolor} {\textbf{#1}}}}

\usepackage[T1]{fontenc}
\usepackage{capt-of}
\usepackage[font=small,labelfont=bf]{caption}
\usepackage{pgf}

\usepackage{natbib}
\usepackage{scalefnt}
\usepackage{amssymb,upgreek,bm}
\usepackage{mathtools}
\usepackage[english]{babel}  
\usepackage{multirow,tcolorbox,tikz-cd,turnstile,xspace,xparse}
\usepackage{relsize,breakcites,appendix}
\usepackage{longtable,tablefootnote,booktabs,float,makecell}
\usepackage[normalem]{ulem}
\usepackage{tabu}
\usepackage[shortlabels]{enumitem} 
\usepackage{footnote}
\usepackage{boxedminipage}
\usepackage{multirow,nicefrac}
\usepackage{verbatim} %

\usepackage[colorlinks=true,linkcolor=\linkcolor,urlcolor=\urlcolor,citecolor=\citecolor,breaklinks]{hyperref}

\usepackage{prettyref}

\usepackage[capitalize,noabbrev]{cleveref} %

\iftoggle{colt}{
    \DeclareRobustCommand{\qed}{
        \usepackage{thmtools}
          \ifmmode \mathqed
          \else
            \leavevmode\unskip\penalty9999 \hbox{}\nobreak\hfill
            \quad\hbox{\qedsymbol}%
          \fi
    }
}
{
    \usepackage{amsthm}
     
}
\usepackage{thm-restate}

\iftoggle{arxiv}
{
\usepackage{mathrsfs}

    \usepackage[
  letterpaper,
  left=1in,
  right=1in,
  top=0.9in,
  bottom=0.9in,
  headheight=14pt,
  headsep=14pt,
  footskip=20pt
]{geometry}

\usepackage{fancyhdr}
\usepackage{lipsum} %

\pagestyle{fancy}
\fancyhf{} 
\fancyfoot[C]{\small \thepage}

\renewcommand{\headrulewidth}{0.4pt}
\renewcommand{\footrulewidth}{0pt}

\usepackage[bitstream-charter]{mathdesign}
 
\usepackage[scaled=0.92]{PTSans}

}
{
    \usepackage{mathrsfs}
}

\DeclareMathAlphabet{\mathbfsf}{\encodingdefault}{\sfdefault}{bx}{n}

\numberwithin{equation}{section}

\iftoggle{arxiv}
{
    }
{
    
}
\iftoggle{arxiv}
{
    
}
{
    
}

\usepackage{thmtools}

\Crefname{equation}{Eq.}{Eqs.}
\Crefname{assumption}{Assumption}{Assumptions}
\Crefname{condition}{Condition}{Conditions}
\Crefname{claim}{Claim}{Claims}
\Crefname{property}{Property}{Properties}
\Crefname{construction}{Construction}{Constructions}

\declaretheoremstyle[
    headformat=\normalfont\textcolor{\thmcolordark}{\bfseries\NAME\,\NUMBER}\NOTE,%
    notefont={\normalfont\textcolor{\thmcolordark}{\bfseries}}, 
    notebraces={}{},
    bodyfont=\normalfont\itshape,
    spaceabove = 6pt,
    spacebelow = 6pt,
    ]{coloredthmversion}

\declaretheoremstyle[
    headformat=\normalfont\textcolor{\thmcolordark}{\bfseries\NAME\,\NUMBER}\NOTE,%
    bodyfont=\normalfont\itshape,
    spaceabove = 6pt,
    spacebelow = 6pt,
    ]{coloredthm}

\declaretheoremstyle[
    headformat=\normalfont\textcolor{\thmcolordark}{\bfseries\NAME\,\NUMBER}\NOTE,%
    bodyfont=\normalfont,
    spaceabove = 6pt,
    spacebelow = 6pt,
    ]{coloreddef}

\iftoggle{customthms}
{
    \theoremstyle{coloredthmversion}
}
{}

\iftoggle{customthms}{
  \theoremstyle{coloredthm}
  \newtheorem{theorem}{Theorem}
  \newtheorem{lemma}{Lemma}[section]
  \newtheorem{corollary}{Corollary}[section]
  \newtheorem{proposition}[lemma]{Proposition}
}
{}

\newtheorem*{thminformal*}{Informal Theorem}

\iftoggle{customthms}{
    \theoremstyle{coloreddef}
    \newtheorem{definition}{Definition}[section]
    
    \newtheorem{remark}{Remark}[section]
    \newtheorem{property}{Property}[section]
}
{
  
}

\newtheorem{observation}[lemma]{Observation}

\newtheorem{assumption}{Assumption}[section]
\newtheorem{condition}{Condition}[section]

\makeatletter
\newcommand{\neutralize}[1]{\expandafter\let\csname c@#1\endcsname\count@}
\makeatother

\iftoggle{customthms}{
    \newtheoremstyle{named}{}{}{\itshape}{}{\bfseries}{}{.5em}{\Cref{#3} {\normalfont (informal)} }{}
    \theoremstyle{named}
    
    \theoremstyle{plain}
}
{}

\newtheorem*{theorem*}{Theorem}
\newtheorem*{lemma*}{Lemma}
\newtheorem*{corollary*}{Corollary}
\newtheorem*{proposition*}{Proposition}
\newtheorem*{claim*}{Claim}
\newtheorem*{fact*}{Fact}
\newtheorem*{observation*}{Observation}
\newtheorem*{definition*}{Definition}
\newtheorem*{remark*}{Remark}
\newtheorem*{example*}{Example}

\newcommand{\bzero}{\ensuremath{\mathbf 0}}
\def\bb{\mathbf{b}}
\def\bB{\mathbf{B}}

\def\by{\mathbf{y}}
\def\bz{\mathbf{z}}

\def\bv{\mathbf{v}}

\def\bw{\mathbf{w}}
\def\bA{\mathbf{A}}
\def\bS{\mathbf{S}}
\def\bI{\mathbf{I}}

\def\bQ{\mathbf{Q}}
\def\bV{\mathbf{V}}

\def\ddefloop#1{\ifx\ddefloop#1\else\ddef{#1}\expandafter\ddefloop\fi}
\def\ddef#1{\expandafter\def\csname bb#1\endcsname{\ensuremath{\mathbb{#1}}}}
\ddefloop ABCDEFGHIJKLMNOPQRSTUVWXYZ\ddefloop

\def\ddefloop#1{\ifx\ddefloop#1\else\ddef{#1}\expandafter\ddefloop\fi}
\def\ddef#1{\expandafter\def\csname frak#1\endcsname{\ensuremath{\mathfrak{#1}}}}
\ddefloop ABCDEFGHIJKLMNOPQRSTUVWXYZ\ddefloop

\def\ddefloop#1{\ifx\ddefloop#1\else\ddef{#1}\expandafter\ddefloop\fi}
\def\ddef#1{\expandafter\def\csname fr#1\endcsname{\ensuremath{\mathfrak{#1}}}}
\ddefloop ABCDEFGHIJKLMNOPQRSTUVWXYZ\ddefloop

\def\ddefloop#1{\ifx\ddefloop#1\else\ddef{#1}\expandafter\ddefloop\fi}
\def\ddef#1{\expandafter\def\csname eul#1\endcsname{\ensuremath{\EuScript{#1}}}}
\ddefloop ABCDEFGHIJKLMNOPQRSTUVWXYZ\ddefloop

\def\ddefloop#1{\ifx\ddefloop#1\else\ddef{#1}\expandafter\ddefloop\fi}
\def\ddef#1{\expandafter\def\csname scr#1\endcsname{\ensuremath{\mathscr{#1}}}}
\ddefloop ABCDEFGHIJKLMNOPQRSTUVWXYZ\ddefloop

\def\ddefloop#1{\ifx\ddefloop#1\else\ddef{#1}\expandafter\ddefloop\fi}
\def\ddef#1{\expandafter\def\csname b#1\endcsname{\ensuremath{\mathbf{#1}}}}
\ddefloop ABCDEFGHIJKLMNOPQRSTUVWXYZ\ddefloop

\def\ddefloop#1{\ifx\ddefloop#1\else\ddef{#1}\expandafter\ddefloop\fi}
\def\ddef#1{\expandafter\def\csname bhat#1\endcsname{\ensuremath{\hat{\mathbf{#1}}}}}
\ddefloop ABCDEFGHIJKLMNOPQRSTUVWXYZ\ddefloop

\def\ddefloop#1{\ifx\ddefloop#1\else\ddef{#1}\expandafter\ddefloop\fi}
\def\ddef#1{\expandafter\def\csname btil#1\endcsname{\ensuremath{\tilde{\mathbf{#1}}}}}
\ddefloop ABCDEFGHIJKLMNOPQRSTUVWXYZ\ddefloop

\def\ddefloop#1{\ifx\ddefloop#1\else\ddef{#1}\expandafter\ddefloop\fi}
\def\ddef#1{\expandafter\def\csname bst#1\endcsname{\ensuremath{\mathbf{#1}^\star}}}
\ddefloop ABCDEFGHIJKLMNOPQRSTUVWXYZ\ddefloop

\def\ddefloop#1{\ifx\ddefloop#1\else\ddef{#1}\expandafter\ddefloop\fi}
\def\ddef#1{\expandafter\def\csname bst#1\endcsname{\ensuremath{\mathbf{#1}^\star}}}
\ddefloop abcdeghijklmnopqrstuvwxyz\ddefloop

\def\ddefloop#1{\ifx\ddefloop#1\else\ddef{#1}\expandafter\ddefloop\fi}
\def\ddef#1{\expandafter\def\csname bhat#1\endcsname{\ensuremath{\hat{\mathbf{#1}}}}}
\ddefloop abcdefghijklmnopqrstuvwxyz\ddefloop

\def\ddefloop#1{\ifx\ddefloop#1\else\ddef{#1}\expandafter\ddefloop\fi}
\def\ddef#1{\expandafter\def\csname b#1\endcsname{\ensuremath{\mathbf{#1}}}}
\ddefloop abcdeghijklnopqrstuvwxyz\ddefloop

\def\ddefloop#1{\ifx\ddefloop#1\else\ddef{#1}\expandafter\ddefloop\fi}
\def\ddef#1{\expandafter\def\csname barb#1\endcsname{\ensuremath{\bar{\mathbf{#1}}}}}
\ddefloop abcdefghijklmnopqrstuvwxyz\ddefloop

\def\ddef#1{\expandafter\def\csname c#1\endcsname{\ensuremath{\mathcal{#1}}}}
\ddefloop ABCDEFGHIJKLMNOPQRSTUVWXYZ\ddefloop
\def\ddef#1{\expandafter\def\csname h#1\endcsname{\ensuremath{\widehat{#1}}}}
\ddefloop ABCDEFGHIJKLMNOPQRSTUVWXYZ\ddefloop
\def\ddef#1{\expandafter\def\csname hc#1\endcsname{\ensuremath{\widehat{\mathcal{#1}}}}}
\ddefloop ABCDEFGHIJKLMNOPQRSTUVWXYZ\ddefloop
\def\ddef#1{\expandafter\def\csname t#1\endcsname{\ensuremath{\widetilde{#1}}}}
\ddefloop ABCDEFGHIJKLMNOPQRSTUVWXYZ\ddefloop
\def\ddef#1{\expandafter\def\csname tc#1\endcsname{\ensuremath{\widetilde{\mathcal{#1}}}}}
\ddefloop ABCDEFGHIJKLMNOPQRSTUVWXYZ\ddefloop

\usepackage{tcolorbox}
\tcbuselibrary{skins,breakable}

\tcbset{
  aibox/.style={
      width=\linewidth,
      top=6pt,
      bottom=0pt,
      left=1.5pt,
      right=1.5pt,
colback=blue!60!gray!5,
colframe=black,
      colbacktitle=black,
      enhanced,
      center,
      attach boxed title to top left={yshift=-0.1in,xshift=0.15in},
      boxed title style={boxrule=0pt,colframe=white,},
    }
}

\DeclarePairedDelimiter{\abs}{\lvert}{\rvert} %

\DeclareMathOperator*{\argmax}{arg\,max}

\let\Pr\relax
\DeclareMathOperator{\Pr}{\mathbb{P}}

\newcommand{\E}{\mathbb{E}}
\newcommand{\Exp}{\mathbb{E}}

\newcommand{\var}{\mathrm{Var}}

\newcommand{\Normal}{\mathrm{N}}

\newcommand{\eye}{\mathbf{I}}

\newcommand{\trace}{\mathrm{tr}}

\newcommand{\ballkr}[1][r]{\cB_{k}(r)}

\newcommand{\fro}{\mathrm{F}}

\newcommand{\rmd}{\mathrm{d}}

\newcommand{\nicehalf}{\nicefrac{1}{2}}

\DeclareMathSymbol{\shortminus}{\mathbin}{AMSa}{"39}

\newcommand{\R}{\mathbb{R}}

\newcommand{\dist}{\mathrm{dist}}

\newcommand{\dx}{\mathrm{d}_{\scriptscriptstyle\mathsf{X}}}

\newcommand{\dy}{\mathrm{d}_{\scriptscriptstyle\mathsf{Y}}}

\newcommand{\bSigma}{\bm{\Sigma}}

\newcommand{\hatExp}{\widehat{\Exp}}

\newcommand{\mem}{\hspace{-1.5pt}}

\newcommand{\Gperp}{\mathcal{P}_{\star}^\perp}
\newcommand{\bveps}{\bm{\varepsilon}}

\newcommand{\rowsp}{\mathrm{rowsp}}

\newcommand{\btheta}{\boldsymbol{\theta}}

\newcommand{\cLhat}{\widehat{\cL}}

\newcommand{\VEC}{\mathrm{vec}}

\newcommand{\RMS}{\mathsf{RMS}}

\newcommand{\KFAC}{\texttt{KFAC}\xspace}
\newcommand{\SGD}{\texttt{SGD}\xspace}

\newcommand{\GD}{\texttt{GD}\xspace}
\newcommand{\RMSprop}{\texttt{RMSprop}\xspace}
\newcommand{\Adam}{\texttt{Adam}\xspace}
\newcommand{\DoPe}{\texttt{DoPr}\xspace}
\newcommand{\DoPr}{\texttt{DoPr}\xspace}
\newcommand{\AP}{\texttt{AP}\xspace}
\newcommand{\GP}{\texttt{GP}\xspace}
\newcommand{\AdamW}{\texttt{AdamW}\xspace}
\newcommand{\Shampoo}{\texttt{Shampoo}\xspace}
\newcommand{\Muon}{\texttt{Muon}\xspace}
\newcommand{\AdaGrad}{\texttt{AdaGrad}\xspace}

\newcommand{\Signum}{\texttt{Signum}\xspace}
\newcommand{\AdaMuon}{\texttt{AdaMuon}\xspace}

\newcommand{\bnorm}[1]{\ensuremath{\left\| #1 \right\|}}
\newcommand{\opnorm}[1]{\bnorm{#1}_{\rm{op}}}
\DeclareMathOperator{\lmin}{\lambda_\textup{min}}
\DeclareMathOperator{\lmax}{\lambda_\textup{max}}

\newcommand{\bmat}[1]{\begin{bmatrix} #1\end{bmatrix} }
\newcommand{\paren}[1]{\left(#1\right)}

\newcommand{\scurly}[1]{\{#1\}}
\newcommand{\brac}[1]{\left[#1\right]}

\newcommand{\ip}[1]{\ensuremath{\left\langle #1 \right\rangle}}
\DeclarePairedDelimiter\norm{\lVert}{\rVert}

\iftoggle{arxiv}
{
  \newcommand{\icmlpar}[1]{\paragraph{#1}}
}
{
  \newcommand{\icmlpar}[1]{\textbf{#1}}
}

\newcommand{\blue}[1]{{\color{\niceblue}#1}}
\newcommand{\red}[1]{{\color{\nicered}#1}}
\newcommand{\purple}[1]{{\color{\nicepurple}#1}}

\newcommand{\xblue}[1][t]{{\color{\niceblue} \bx_{#1}}}
\newcommand{\xred}[1][t]{{\color{\nicered} \widehat{\bx}_{#1}}}
\newcommand{\yblue}[1][t]{{\color{\niceblue} \by_{#1}}}

\newcommand{\zred}[1][t]{{\color{\nicered} \widehat{\bz}_{#1}}}
\newcommand{\sblue}[1][t]{{\color{\niceblue} \bs}_{#1}}
\newcommand{\sred}[1][t]{{\color{\nicered} \hat{\bs}}_{#1}}
\newcommand{\ablue}[1][t]{{\color{\niceblue} \ba}_{#1}}
\newcommand{\ared}[1][t]{{\color{\nicered} \hat{\ba}}_{#1}}

\newcommand{\bxbar}{\overline{\bx}}
\newcommand{\bsbar}{{\color{\niceblue} \bar\bs}}
\newcommand{\bthetabar}{\bar{\btheta}}
\newcommand{\bzbar}[1][\ell]{\overline{\bz}_{#1}}
\newcommand{\bWbar}[1][\ell]{\overline{\bW}_{#1}}
\newcommand{\bZbar}[1][\ell]{\overline{\bZ}_{#1}}

\newcommand{\bGbar}{\overline{\bG}}
\newcommand{\Sigmabar}{\overline{\Sigma}}

\newcommand{\vred}{{\color{\nicered} \widehat{\bv}}}

\newcommand{\Ktheta}{{\color{\nicered} \bK_{\theta}}}
\newcommand{\Ftheta}{{\color{\nicered} \mathbf{F}_{\theta}}}
\newcommand{\Gtheta}{{\color{\nicered} \mathbf{G}_{\theta}}}
\newcommand{\GthetaGD}{{\color{\nicered} \mathbf{G}^{\normalfont\mathrm{next}, \GD}_{\theta}}}
\newcommand{\GthetaAP}{{\color{\nicered} \mathbf{G}^{\normalfont\mathrm{next}, \AP}_{\theta}}}
\newcommand{\Gproj}{\mathcal{P}_{\mathbf{G}}}

\newcommand{\Fstar}{{\color{\niceblue} \mathbf{F}_{\star}}}
\newcommand{\Gstar}{{\color{\niceblue} \mathbf{G}_{\star}}}

\newcommand{\Kdemo}{{\color{\niceblue} \bK_{\star}}}

\newcommand{\Sigmas}{\bSigma_{\bs}}

\newcommand{\rmnext}{\mathrm{next}}

\newcommand{\cLval}{\cL_{\mathrm{val}}}
\newcommand{\cLtrain}{\cL_{\mathrm{train}}}

\newcommand{\cRtest}{\cR_{\mathrm{test}}}

\newcommand{\Lval}{L_{\mathrm{val}}}
\newcommand{\Ltrain}{L_{\mathrm{train}}}

\newcommand{\cLideal}{\cL_{\mathrm{ideal}}}

\newcommand{\pidemo}{{\color{\niceblue}\pi_{\mathrm{demo}}}}
\newcommand{\pitheta}{{\color{\nicered}\pi_{\btheta}}}
\newcommand{\mutheta}{{\color{\nicered}\bm{\mu}_{\theta}}}

\newcommand{\Diverg}{\mathrm{D}}

\iftoggle{arxiv}{
    \newcommand{\nvsp}[1][0]{}
}{
    \newcommand{\nvsp}[1][0.3]{\vspace{-#1cm}}
}

\def\ep{{\varepsilon}}

\newcommand{\normal}{{\sf N}}

\newcommand{\out}{\textit{out}}
\newcommand{\inp}{\textit{in}}

\usepackage{preamble/color-edits}

\addauthor{ms}{magenta}

\newcommand{\ignore}[1]{}

\iftoggle{arxiv}
{
\input{preamble/arxiv_title}
}
{}

\addauthor{tz}{blue}
\addauthor{as}{magenta}
\addauthor{vz}{orange}
\addauthor{yz}{green}

\title{Double Preconditioning (\DoPe): Optimization  for \\ Test-Time Performance, not Validation Loss}
\author{
  \small Thomas T.\ Zhang\footnote{\texttt{\{ttz2, alokshah, vsyzhang, nmatni\}@seas.upenn.edu} \label{foot:penn}}\textsuperscript{,$\dagger$}  ~ Alok Shah\textsuperscript{\ref{foot:penn},$\dagger$} ~ Yifei Zhang\footnote{\texttt{\{yifeizh3,msimchow\}@andrew.cmu.edu}\label{foot:cmu}}\textsuperscript{,$\dagger$} ~ Vincent Zhang\textsuperscript{\ref{foot:penn},$\dagger$}\\
  \small Nikolai Matni\textsuperscript{\ref{foot:penn},$\ddagger$} ~ Max Simchowitz\textsuperscript{\ref{foot:cmu},c,$\ddagger$}\\
  \vspace{-.3em}
 \rule{.38\textwidth}{.7pt}\\
   \footnotesize $^a$University of Pennsylvania~~$^b$Carnegie Mellon University~~$^c$Amazon FAR \\
   \footnotesize $^\dagger$Equal contribution \quad $^\ddagger$Equal advising\\
}
\papercode{https://tinyurl.com/3kfuhmpf}
\paperdate{\today}

\date{\vspace{-.5em}}

\begin{document}

\begin{tcolorbox}[
colback=blue!60!gray!5, colframe=gray!50, 
boxrule=0pt, 
    arc=2mm%
]
\maketitle
\vspace{-0.7cm}
\tcbline
\begin{minipage}[t]{1.0\linewidth}
    \centering
    \includegraphics[width=0.85\columnwidth]{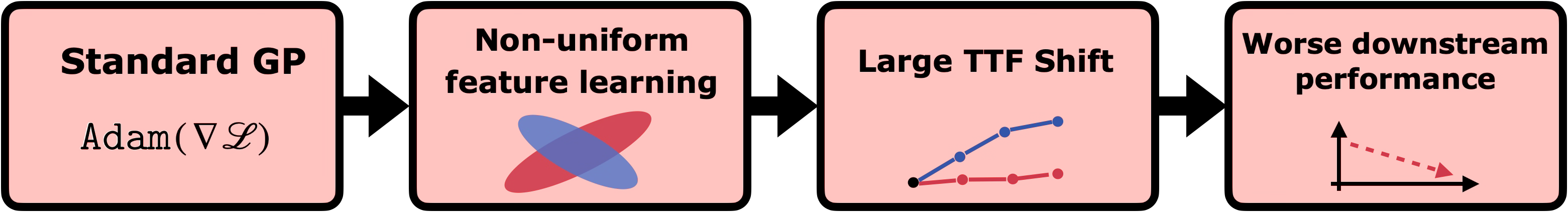}
    
    \vspace{0.1cm}
    \includegraphics[width=0.85\columnwidth]{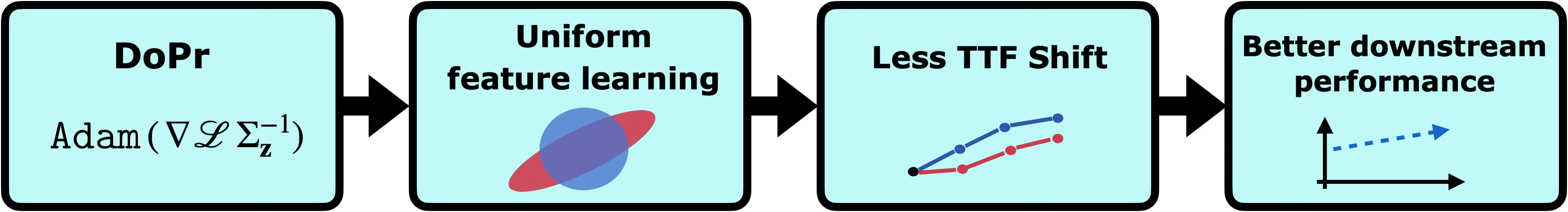}

    \captionof{figure}{Standard optimizers, while effective at accelerating validation loss convergence, may induce poor feature learning. This can exacerbate distribution shift due to \textbf{test-time feedback} (TTF), the growing compounding errors as the model is deployed along its own predictions, ultimately leading to degraded downstream performance. We propose \textbf{Double Preconditioning} (\DoPe) as a plug-in approach, where we apply a particular preconditioner to the layerwise gradient to encourage more ``uniform'' feature learning before passing it to a more standard optimizer. This reduces susceptibility to TTF, thereby improving downstream performance.
    }
    \label{fig:flowchart}
\end{minipage}

  \vskip 0.5cm
  \makeatletter
\vspace{-0.7cm}
  \tcbline \ifdefempty{\metadatalist}{}{\metadatalist\par}
  \makeatother
  
\end{tcolorbox}
\begin{abstract}

Many modern applications of deep learning involve training a neural network via a one-step prediction loss (e.g., $L^2$ regression, cross-entropy), but deploy the network by rolling out along its own predictions. Key examples include autoregressive language modeling, flow-based generative modeling, and robot policy learning. It is well-documented that these settings induce a phenomenon we call \textbf{test-time feedback} (TTF): the mismatch between the training/validation loss and downstream metrics of interest, such as task success rate and generation quality, which grows with task length. While data curation, architecture, and objective design have been proposed to combat train-test shift in TTF settings, this paper proposes \emph{optimization} as a new design axis to mitigate error accumulation. Specifically, we introduce a new optimization paradigm called \textbf{double-preconditioning} (\DoPe) uniquely tailored to the challenges of TTF. \DoPe combines gradient-wise preconditioning, as in \Adam and \Muon, with activation-wise preconditioning (\AP), such as in \KFAC. We show that the addition of \AP yields a \emph{drop-in intervention} for increasing downstream model performance across a range of TTF settings.
Interestingly, these gains in test-time performance \emph{do not consistently accompany improvements in validation loss}, opening new questions about how to properly evaluate models trained with one-step supervised objectives.

\iftoggle{arxiv}{}{\textbf{Anonymized Code:} [\href{https://anonymous.4open.science/r/fm_precond-2601/}{link}]}
\vspace{-0.3cm}

\end{abstract}

\iftoggle{arxiv}{
\clearpage
}{}

\sloppy
\allowdisplaybreaks

\nvsp
\section{Introduction}
\nvsp[0.3]

Recent years have seen a resurgence in the design of novel optimizers for deep learning,  particularly via the use of ``matrix-shaped'' normalization or \textbf{preconditioning}. These optimizers adjust gradient updates by a ``preconditioning'' matrix to reweight the descent direction. The successes highlight improved \emph{convergence time} on training or validation loss, such as
\Shampoo \citep{gupta2018shampoo, anil2020scalable, shi2023distributed} winning the AlgoPerf optimizer competition \citep{mlcommons2024algoperf} as measured by an overall ``holdout error per unit compute'' metric, and \Muon \citep{jordan2024muon} coming to prominence in greatly accelerating low ``time to target validation loss'' on NanoGPT pretraining and seeing adoption in training production-scale language models \citep{team2025kimi}.

These successes implicitly assume that low generalization error on the training objective implies good downstream performance, e.g. pass@$k$ in the context of language reasoning. %
However, it has been widely noted that validation loss may not track natural performance metrics. 
The two diverge under a regime we call \textbf{test-time feedback} (TTF), ubiquitous in
contemporary deep learning, in which one \emph{trains on per-step supervised objectives}, while deployment involves \emph{rolling out multiple steps according to the model's own predictions}. 
Key examples include: (1) \textbf{robot policy learning} where per-step (or per-chunk) imitation of expert actions along expert trajectories, and deployment involves rolling out using the learned policy's actions and (2) autoregressive \textbf{LLM training} in which models are trained via next-token prediction, yet deployed via autoregressive sampling. In these settings, sequential deployment can cause the distribution of inputs (e.g., context tokens in language modeling) to diverge from that of the training distribution, leading to distribution shift driven solely by the model's own prediction errors. This ``TTF shift'' is widely attested in both robotics \citep{ross2011reduction,simchowitz2025pitfalls} and in long-horizon language generation (e.g., \citep{bengio2015scheduled, ranzato2015sequence, song2023consistency}).
\iftoggle{arxiv}{
}
{\begin{figure}[t]
  \vskip 0.2in
  \begin{center}
    \includegraphics[width=0.85\columnwidth]{figs/figs_ttf/flowchart1.png}
    
    \vspace{0.1cm}
    \includegraphics[width=0.85\columnwidth]{figs/figs_ttf/flowchart2.png}

    \caption{
      Standard optimizers can induce non-uniform feature learning (see \Cref{sec:connection TTF feature learning}), where small errors measured by validation loss $\cLval$ can cause outsized errors during deployment due to \textbf{test-time feedback} (TTF), inducing train-test mismatch leading to worsened downstream performance. We propose \textbf{Double Preconditioning} (\DoPe), see \Cref{sec:derivation}, which targets more ``uniform'' feature learning for reducing TTF.
    }
    \label{fig:flowchart}
  \end{center}
  \vspace{-0.7cm}
\end{figure}}
Motivated by these findings and recent achievements in deep-learning optimizer design, we ask:
\begin{quote}
  Is there a design space for lightweight, generally applicable interventions on deep learning optimizers that target improved \textbf{downstream performance?}
\end{quote}

\takeawaybold{Contributions.}  \textbf{First}, we establish a rigorous connection between \emph{distribution shift} induced by TTF and \emph{imperfect feature learning} under today's popular optimizers that precondition updates solely using gradient statistics (e.g. \citet{kingma2014adam,kellerjordan/cifar10-airbench}). Specifically, we observe that when activations/inputs at each layer are non-isotropic, feature learning suffers, which induces errors that can disproportionately drive distribution shift under TTF. Moreover, these \emph{cannot} be remedied by updates at later layers (\Cref{prop:TTF_feature_learning}).

\textbf{Second}, based on the above insights, we propose the \takeawaybold{Double Preconditioning} (\DoPe) framework, which couples an \textbf{activation-based preconditioner} (\AP) using layer activations to compute a preconditioning matrix (e.g. \cite{martens2018kronecker}) with more standard \textbf{gradient-based preconditioners} (\GP), such as in \Adam \citep{kingma2014adam, loshchilov2017decoupled} or \Muon \citep{jordan2024muon}. \AP encourages more uniformly accurate feature learning, mitigating the TTF distribution shift  described above, whilst benefiting from the training speed and stability afforded by \GP.

Whereas past variants of \AP in the literature were introduced as standalone optimizers, \DoPe 
shows how to incorporate \AP 
as an \emph{drop-in} modification to any  \GP optimizer, such as \Adam and \Muon, providing a stable training recipe and produces models with strong downstream performance. We develop an \emph{invariance principle} under which \DoPe updates can be systematically derived for any \{architecture, \GP\} pairing, e.g., convolution and self-attention layers. Further, optimal hyperparameters can be reliably predicted by popular \GP scaling heuristics \citep{yang2023tensor}.

Finally, we carefully evaluate the \textbf{capabilities} of \DoPe across a range of continuous-control, robotics, and language generation tasks. We find that \DoPe consistently improves downstream performance---measured via natural, task-specific metrics across numerous applications--- by intervening \emph{purely on the optimizer}, without additional modifications to the data, training objective, or architecture.

\takeawaybold{Notation.} We denote vector quantities by \textbf{bold} lower-case, and matrix/tensor quantities by \textbf{bold} upper-case. We use $\odot$ to denote element-wise (Hadamard) product, $\otimes$ for Kronecker product, and $\VEC(\cdot)$ the \emph{column-major} vectorization operator. We reserve $\bW$ to denote weights in their tensor shape, and $\btheta = \VEC(\bW)$ be the vectorized version. We use $\Exp[f(\bx)]$ to denote the expectation of $f(\bx)$. We reserve $\bSigma$ for \emph{uncentered} covariance matrices, e.g.\ $\Sigma_\bx = \Exp[\bx \bx^\top]$. We denote layer-indexing with subscripts and (optimizer) iterates with superscripts, e.g., $\bW_{\ell}^{(k)}$. Lastly, we use the index shorthand $[L] = \scurly{1,\dots,L}$.

\nvsp
\section{Learning under Test-Time Feedback}\label{sec:TTF}
\nvsp

We study the problem of learning under \textbf{test-time feedback} (TTF) --- where our goal is to learn a model that is deployed in feedback with its own generations (as in language modeling) or with the environment (as in robotics). 
We formalize TTF
as a problem of behavior cloning \citep{pomerleau1988alvinn} in a Markov Decision Process (MDP). Consider a finite-horizon  MDP with states $\bs_t \in S$, actions $\ba_t \in A$, transitions $\bs_{t+1} \sim P(\bs_t,\ba_t)$, and total horizon $T$.
 \emph{Models} or \emph{policies} $\pi$ are maps from $S \to \Delta(A)$, and we let $\Exp^{\pi}[\cdot]$ denote the expectation over actions $\ba_t \sim \pi(\bs_t)$ and state transitions $\bs_{t+1} \sim P(\bs_t,\ba_t)$.  In behavior cloning, we learn a policy $ \pi_\theta: S \to \Delta(A)$ via demonstration sequences $(\bs_1,\ba_1,\dots,\bs_T,\ba_T)$.  Given a per-example training loss $\Ltrain(\pi;\bs,\ba)$ measuring the distance between $\pi(\bs)$ and $\ba$, and pairs of states and corresponding actions $(\sblue[],\ablue[])$ (blue denoting demonstrator/data distribution), we train a learned policy $\pitheta$ by minimizing
\begin{align}
\cLtrain(\pitheta) = \textstyle\sum_{(\sblue[],\ablue[]) \in \mathrm{data}} \Ltrain(\pitheta;\sblue[],\ablue[]). \label{eq:Ltrain}
\end{align}
In the infinite data regime, and assuming $(\sblue[1],\ablue[1],\dots)$ are collected by a demonstrator $\pidemo$ deployed in the MDP, minimizing \Cref{eq:Ltrain} approximately minimizes \emph{validation loss}
\begin{align}
\textstyle\cLval(\pitheta) = \Exp^{\pidemo} \Big[\frac{1}{T}\sum_{t = 1}^T \Ltrain(\pi_\theta;\sblue,\ablue)\Big].
\end{align}
When there is no risk of confusion, we will abbreviate $\Ltrain = L$ and $\cLval = \cL$. Subsequently, $\pi_\theta$ is \emph{deployed} in the MDP, and we evaluate its expected reward under the distribution of states $(\sred[1],\ared[1],\dots,\sred[T],\ared[T])$ induced by $\pitheta$ under a  trajectory-wise reward function $R(\cdot)$:
\begin{align}
\textstyle\cRtest(\pitheta) = \Exp^{\pitheta}\Big[  R(\sred[1], \ared[1],\dots,\sred[T],\ared[T])\Big].
\end{align}
\nvsp[0.2]

\icmlpar{Language modeling.} Language modeling  with tokens $x_{t}$ can be cast as an instance of TTF \citep{ouyang2022training, foster2024behavior}, where the states $\bs_t = x_{1:t}$ denote the current context, actions $\ba_t = x_{t+1}$ are the next token, and the dynamics are induced by concatenation: $\bs_{t+1} = x_{1:t+1}$. The language model then produces tokens $x_{t+1} \sim \pitheta(\cdot \mid x_{1:t})$. Next-token prediction uses the  cross entropy loss $\Ltrain(\pitheta ; \sblue = x_{1:t},\ablue =x_{t+1}) = -\log \pitheta(x_{t+1} \mid x_{1:t})$. Typical choices of reward include success rate or pass@$k$ \citep{chen2021evaluating}. 
\icmlpar{Robotic behavior cloning.}
    In robotics, $\bs$ corresponds to the robot and environment state, and $\ba$ to the robot actions. In practice, $\bs$ is replaced by robot observations (e.g., pixels, tactile), and actions $\ba$ may be short sequences or ``action-chunks''. Earlier works parameterize Gaussian policies $\pitheta(\cdot \mid \bs) = \Normal(\mutheta(\bs),\sigma^2 \bI)$, motivating an $L_2$ training loss \iftoggle{arxiv}{$\Ltrain(\pitheta;\sblue[],\ablue[]) = \|\mutheta(\sblue[]) - \ablue[]\|^2$}{} whereas modern works use generative model parameterizations (e.g. flows or diffusion \citep{chi2023diffusion, pan2025much}).
    \iftoggle{arxiv}{
    Typical rewards can include task success (e.g., an object was successfully moved to a desired location) or dense locomotive rewards (e.g., how far the robot has traversed).
    }{
    Rewards can include task success or dense locomotive rewards.
    }
\newcommand{\Sigmaw}{\Sigma_w}

\newcommand{\thetared}[1][t] {{\color{\nicered}\bm{\theta}}^{(k)}}
\newcommand{\Alg}{\texttt{alg}}
\icmlpar{Optimizer Preconditioning for Minimizing $\cLtrain$ in TTF.} 
In our TTF formulation, $\cLtrain$ is a standard supervised-learning loss. We consider an optimization algorithm $\Alg$ that produces iterates $\{\thetared\}_{k \ge 1}$ in order to minimize $\cLtrain$. Examples include mini-batch stochastic gradient descent (\SGD), \Adam(W), \Shampoo \citep{gupta2018shampoo, shi2023distributed}, and the recently popularized Muon optimizer \citep{jordan2024muon}. 
Most popular deep learning optimizers can be viewed as \emph{gradient-based preconditioners} (\GP), where the update direction is shaped purely using gradient information. Many prominent members therein, such as \Adam, \Shampoo, \Muon etc., are specifically structured approximations (e.g., diagonal, layer-wise) of the \AdaGrad gradient-covariance $\sum_{k\geq 1} \nabla\cLtrain(\thetared) \nabla\cLtrain(\thetared)^\top$ \citep{duchi2011adaptive}, and admit specific interpretations as gradient \emph{whiteners} or \emph{normalizers}. We provide full discussion in \Cref{appdx:extended related}.
\textbf{Past work} has extensively studied how the choice of optimization improves performance only on the \emph{validation loss} $\cLval$, both in terms of final iterate $\lim_{k \to \infty} \cLval(\thetared)$, or the convergence rate, i.e., how quickly $\Lval(\thetared)$ approaches its limit. In this work, we consider a different problem statement:
\begin{AIbox}{Problem Statement 1}
    How do we design an optimizer to minimize $\cLtrain$, so as to maximize \emph{downstream performance} of the resulting model $\pitheta$ when evaluated at test-time, via $\cRtest(\pitheta)$? 
\end{AIbox}

\nvsp
\section{Distribution Shift under Test-Time Feedback}
\label{sec:ttf}
\nvsp

Whereas the validation loss $\cLval$ evaluates training loss under the distribution of states induced by $\pidemo$,  test-time reward $\cRtest$ considers the distribution of states under $\pitheta$. Due to sequential deployment in \emph{feedback} with the MDP (e.g. through autoregressive generation in language modeling or with the environment in robotics), these two distributions do not agree (\Cref{fig:TTF rollout error}). We call their consequent difference test-time feedback (TTF) shift.
\begin{figure}[t]
  \vskip 0.2in
  \begin{center}
    \includegraphics[width=0.3\columnwidth]{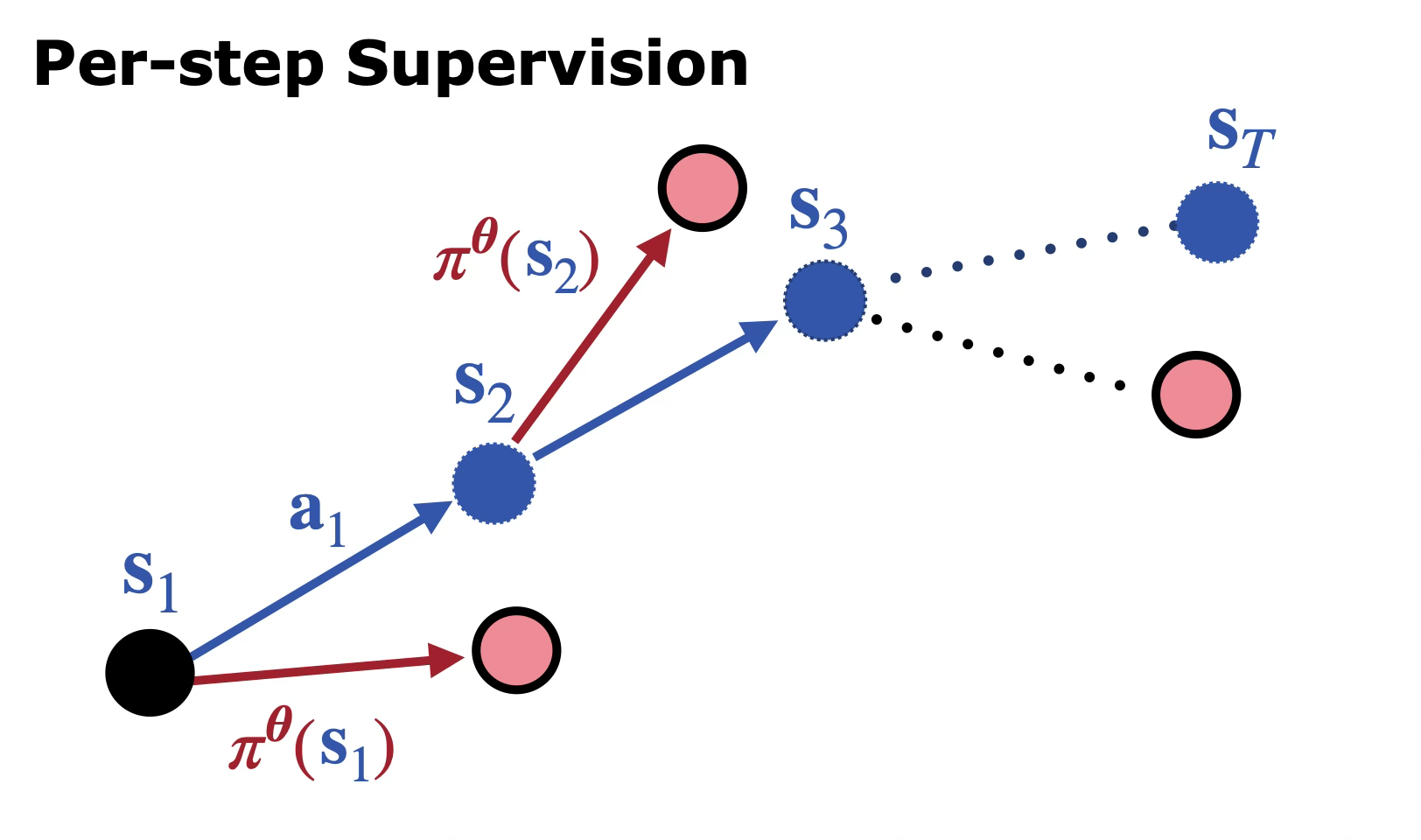}
    \includegraphics[width=0.3\columnwidth]{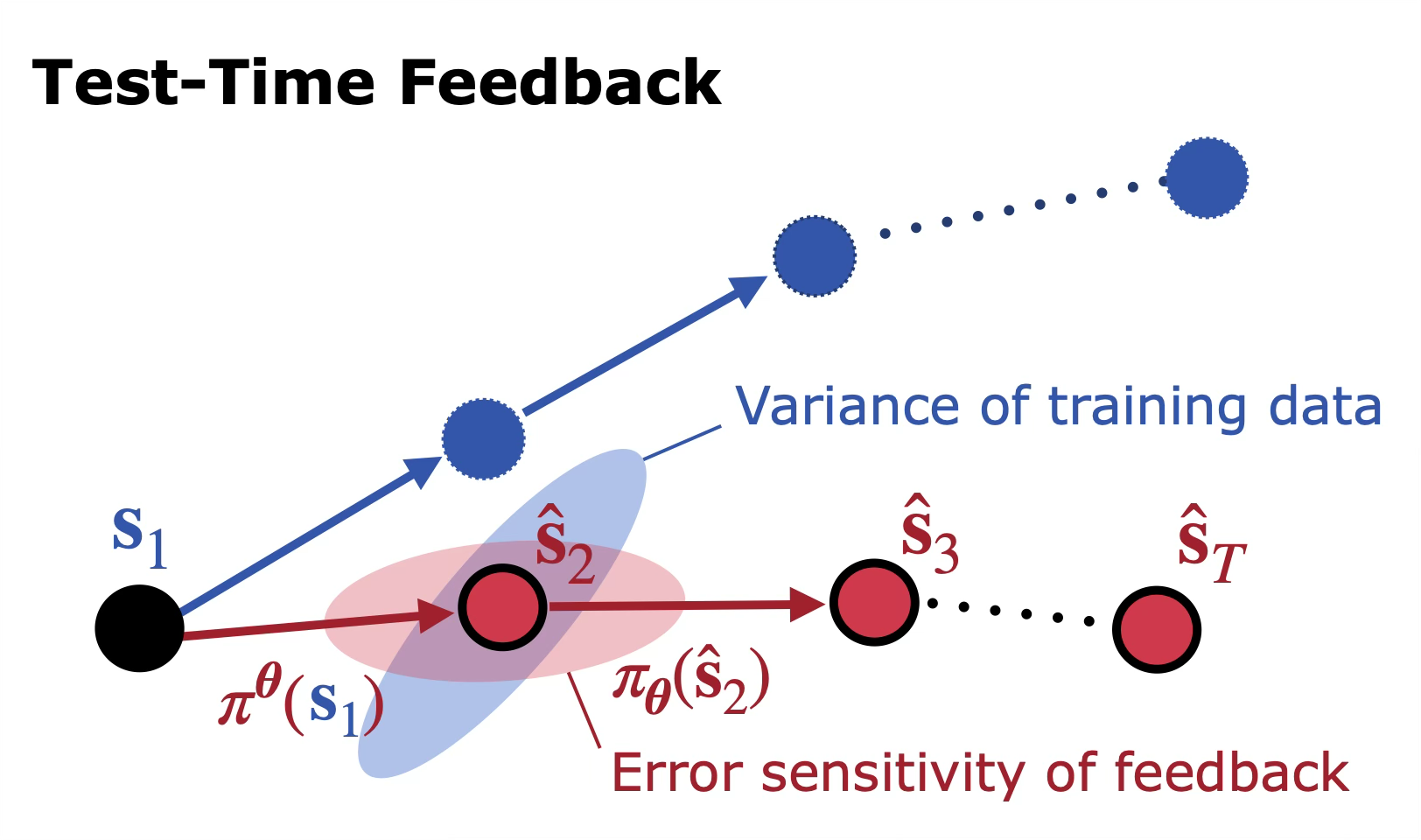}
    \includegraphics[width=0.3\columnwidth]{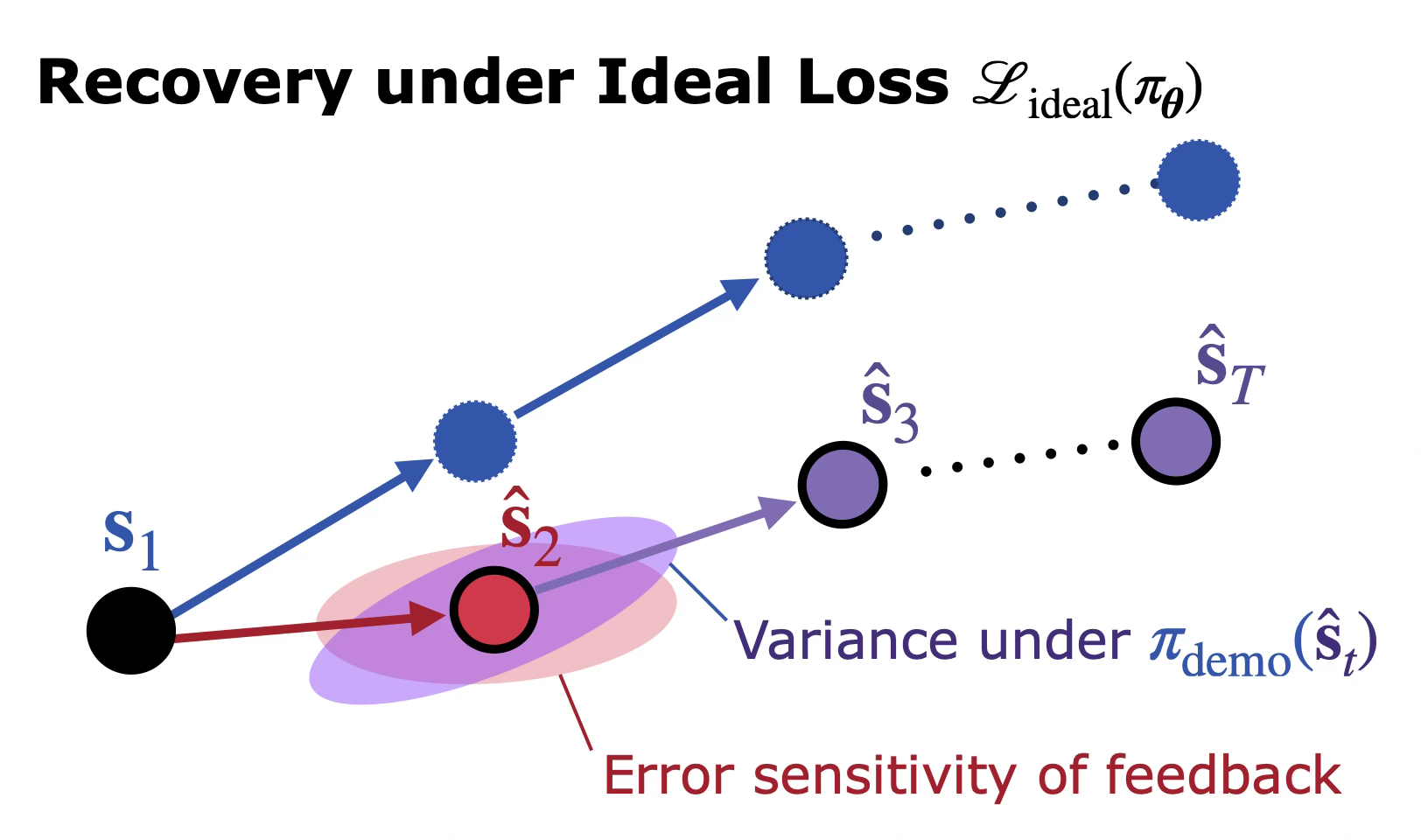}

    \caption{Many settings involve using per-step supervised objectives along training sequences. However, due to rolling out along the model's own predictions, mismatches between directions salient for $\cLval(\pitheta)$ versus $\cRtest(\pitheta)$ cause TTF shift. Hypothetically, instead optimizing for directions salient for $\cLideal(\pitheta)$---often unavailable for offline training---would induce smaller TTF shift.
    }
    \label{fig:TTF rollout error}
  \end{center}
  \vspace{-0.7cm}
\end{figure}
\iftoggle{arxiv}{
    \begin{definition}[TTF Shift]\label{def:test-time feedback} Let $\pidemo$ be the data-collection policy, and $\pitheta$ a learned policy. Given a probability distance/divergence $\Diverg$ (e.g. KL, Wasserstein Distance, etc\dots), we define the \textbf{TTF shift}  via  
    $\Diverg_t( \pitheta) := \Diverg(\Pr_{\sblue}^{\pidemo},  \Pr_{\sred}^{\pitheta})$, where $\Pr_{\bs_t}^{\pi}$ denotes the marginal distribution of $\bs_t \sim \Pr^{\pi}$.
    \end{definition}
}{Let $\pidemo$ be the data-collection policy, and $\pitheta$ a learned policy. Given a probability distance/divergence $\Diverg$ (e.g. KL, Wasserstein Distance etc.), we define the \textbf{TTF shift}  via  
    $\Diverg_t( \pitheta) := \Diverg(\Pr_{\sblue}^{\pidemo},  \Pr_{\sred}^{\pitheta})$, where $\Pr_{\bs_t}^{\pi}$ denotes the marginal distribution of $\bs_t \sim \Pr^{\pi}$.}
When TTF shift occurs, the distribution used to measure $\cLval$, $\Pr^{\pidemo}_{\sblue[t]}$, and that used for $\cRtest$, $\Pr^{\pitheta}_{\sred[t]}$,  differ, 
which has been shown in the literature to degrade performance $\cRtest$. The effects of test-time distribution shifts has been extensively studied in the robotics and sequence-modeling literature \citep{ross2011reduction, bengio2015scheduled,simchowitz2025pitfalls}; see \Cref{appdx:extended related} 
\iftoggle{arxiv}{for a full account}{}.
Importantly, in contrast to general distribution shifts, TTF shift is \textbf{due to model error} propagated through sequential deployment.
To reduce both TTF shift and its effects on downstream reward, past work has suggested that one strive to minimize an \textbf{idealized validation loss} with respect to actions from $\pidemo$, but under the state distribution of the \emph{learned policy} $\pitheta$ \citep{ross2010efficient}:
\begin{align}
    \cLideal(\pitheta) := \Exp^{\pitheta}[\textstyle \frac{1}{T}\sum_{t=1}^T \Ltrain(\pitheta; \sred,\ablue^\star) ], \quad \sred \sim \Pr^{\pitheta}, \ablue^\star \sim \pidemo(\cdot \mid \sred).
\end{align}
In particular, prior work \citep{ross2010efficient, foster2024behavior} show that not only is $\cLideal$ a good surrogate for $\cRtest$, but that minimizing $\cLideal$ would suffice to reduce TTF by implicitly correcting $\pitheta$ towards $\pidemo$ when it veers off distribution (\Cref{fig:TTF rollout error}(c)).
However, minimizing $\cLideal$ in practice is often challenging, due to its circular dependence on $\pitheta$'s own distribution and the necessity of collecting $\pidemo$ actions on $\pitheta$ states. 
\iftoggle{arxiv}{Adaptive or multi-stage data collection, such as the seminal DAgger algorithm \citep{ross2011reduction}, attempt to bridge this gap.}
{\footnote{Adaptive or multi-stage data collection, such as the seminal DAgger \citep{ross2011reduction}, attempt to bridge this gap.}}
However, we are often \emph{only} given data from $\Pr^{\pidemo}$ (e.g., finetuning corpora or teleoperator demonstrations). Hence, we can refine Problem Statement 1:
\begin{AIbox}{Problem Statement 2}
    Can we use \textbf{optimization preconditioning} to encourage minimization of $\cLideal$ by mitigating TTF shift, even if we are only given data from $\pidemo$?
\end{AIbox}

\newcommand{\bGamma}{\bm{\Gamma}}
\nvsp[0.2]
\subsection{TTF Shift and Imperfect Feature Learning}\label{sec:connection TTF feature learning}
\nvsp[0.2]
We consider \textbf{linear dynamical systems} (LDS) as a minimal example of TTF. We consider a LDS with state $\bs \in \R^{n}$, actions $\ba \in \R^m$,  state transitions $\bs_{t+1} = \bA \bs_t + \bB \ba_t + \bw_t$, where $\bw_t \sim \Normal(0,\bm{\Sigma}_{\bw})$ is process noise and $\bA,\bB$ are matrices of appropriate shape. We assume that the demonstrator $\pidemo(\bs) = \Kdemo \bs$ and imitator $\pitheta(\bs) = \Ktheta \bs$ are both state feedback policies described by matrices in $\R^{m \times n}$, and the latter uses an $L_2$ training loss: $\Ltrain(\pitheta;\sblue[],\ablue[]) = \norm{\Ktheta \sblue[] - \ablue[]}^2_2 = \norm{(\Kdemo  - \Ktheta) \sblue[]}^2_2 = \Ltrain(\pitheta;\sblue[])$, where the last equality notes that the $\ablue[]$ argument is redundant. Lastly, we consider a reward $R(\sred[1],\ared[1],\dots,\sred[T],\ared[T]) = - \frac{1}{T}\sum_{t=1}^T \Ltrain(\pitheta;\sred[t]) =- \frac{1}{T}\sum_{t=1}^T \|(\Kdemo  - \Ktheta)\sred[t]\|^2 $. In this setup, we can describe all relevant quantities in closed form:\footnote{See \Cref{appdx:dynamical systems theory} for proofs and further details of the results in this section.}
\begin{restatable}{proposition}{LDS}\label{prop:LDS TTF}
    Define the state-covariance matrix $\bGamma_t(\bK) = \sum_{s=0}^{t-1} (\bA + \bB \bK)^{s} \bm{\Sigma_}{\bw} (\bA + \bB \bK)^{s \top}$, and $\bar \bGamma_t(\bK) = \frac{1}{t}\sum_{s=1}^{t} \bGamma_s(\bK)$. Then,  $ \Pr^{\pidemo}_{\sblue}  = \Normal(0, \bGamma_t(\Kdemo))$, $ \Pr^{\pitheta}_{\sred}  = \Normal(0, \bGamma_t(\Ktheta))$, and
\iftoggle{arxiv}{
\begin{align}
    \cLval(\pitheta) &=\textstyle \left\|(\Kdemo - \Ktheta)\bar\bGamma_T(\Kdemo)^{\nicehalf}\right\|_{\fro}^2, \\ 
    - \cRtest(\pitheta) = \cLideal(\pitheta)  &=\textstyle \left\| (\Kdemo - \Ktheta)\bar\bGamma_T(\Ktheta)^{\nicehalf}\right\|_{\fro}^2.
\end{align}
}{
    $\cLval(\pitheta) =\textstyle \left\|(\Kdemo - \Ktheta)\bar\bGamma_T(\Kdemo)^{\nicehalf}\right\|_{\fro}^2$, $ \cLideal(\pitheta) = - \cRtest(\pitheta) = \textstyle \left\| (\Kdemo - \Ktheta)\bar\bGamma_T(\Ktheta)^{\nicehalf}\right\|_{\fro}^2.$
}
If $\Diverg$ denotes the Wasserstein-$2$ distance, we have $\Diverg_t(\pitheta) = \|\Gamma_t(\Kdemo)^{\nicehalf} - \Gamma_t(\Ktheta)^{\nicehalf}\|^2_{\fro}$.
    \end{restatable}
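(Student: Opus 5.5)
Everything here follows from unrolling the closed-loop recursion; the implicit assumption is $\bs_1 = 0$, or more precisely that $\bs_1 \sim \Normal(0,\bm{\Sigma}_{\bw})$ so that $\bGamma_1 = \bm{\Sigma}_{\bw}$. Under a linear policy $\bK$, the dynamics are $\bs_{t+1} = (\bA + \bB\bK)\bs_t + \bw_t$. Starting from $\bs_1 = \bw_0$, the plan is to unroll this by induction to
\[
\bs_t = \sum_{s=0}^{t-1} (\bA+\bB\bK)^s \bw_{t-1-s}.
\]
This is a linear combination of independent zero-mean Gaussians. Hence $\bs_t$ is Gaussian with mean zero and covariance $\sum_s (\bA+\bB\bK)^s \bm{\Sigma}_{\bw} (\bA+\bB\bK)^{s\top} = \bGamma_t(\bK)$. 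Equivalently, one checks the Lyapunov recursion $\bGamma_{t+1} = (\bA+\bB\bK)\bGamma_t(\bA+\bB\bK)^\top + \bm{\Sigma}_{\bw}$. Instantiating $\bK = \Kdemo$ and $\bK = \Ktheta$ gives the two marginal laws.

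For the losses, use the identity $\Exp\|\bM\bs\|^2 = \trace(\bM \Exp[\bs\bs^\top]\bM^\top) = \|\bM\bSigma^{1/2}\|_\fro^2$ for $\bs\sim\Normal(0,\bSigma)$. With $\bM = \Kdemo - \Ktheta$, we get $\cLval = \frac1T\sum_t \trace(\bM\bGamma_t(\Kdemo)\bM^\top)$. By linearity of trace this equals $\trace(\bM\bar\bGamma_T(\Kdemo)\bM^\top)$, which is the stated Frobenius expression. For $\cLideal$ the same computation applies with states drawn from $\Pr^{\pitheta}$, giving $\bar\bGamma_T(\Ktheta)$. The label $\ablue^\star = \Kdemo\sred$ is deterministic, so no extra noise term appears. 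Finally, $-\cRtest$ equals $\cLideal$ because the reward is defined as the negative average of the same per-state losses along $\pitheta$ rollouts.

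For the Wasserstein claim, invoke the standard closed form for centered Gaussians:
\[
W_2^2\big(\Normal(0,\bSigma_1),\Normal(0,\bSigma_2)\big) = \trace\bSigma_1 + \trace\bSigma_2 - 2\trace\big(\bSigma_1^{1/2}\bSigma_2\bSigma_1^{1/2}\big)^{1/2}.
\]
This reduces to $\|\bSigma_1^{1/2}-\bSigma_2^{1/2}\|_\fro^2$ exactly when the covariances commute, and in general it is only upper bounded by that quantity. This step is the main obstacle, and I plan to handle it in one of two ways. One option is to interpret $\Diverg_t$ as the squared $W_2$ distance under the commuting (or coupling-restricted) case. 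The other is to note that the coupling $\bs\mapsto \bGamma_t(\Ktheta)^{1/2}\bGamma_t(\Kdemo)^{-1/2}\bs$, or more simply the synchronous coupling of the underlying Gaussians $\bSigma_i^{1/2}\bz$, achieves exactly $\|\bSigma_1^{1/2}-\bSigma_2^{1/2}\|_\fro^2$. That makes the stated expression a valid upper bound, with equality under commutativity. I would state which convention is used and verify the matching Bures formula there.
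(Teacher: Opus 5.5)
For the Gaussian marginals and the two loss identities, your argument is the paper's. You unroll the closed-loop recursion from a zero state, then use $\Exp\|\bM\bs\|^2=\trace(\bM\,\Exp[\bs\bs^\top]\bM^\top)$ and linearity of the trace over $t$ to obtain $\bar\bGamma_T$. Your remarks that the expert label on learner states is deterministic, and that $-\cRtest=\cLideal$ holds by the definition of the reward, are correct. Your convention $\bs_1=\bw_0$ is the paper's $\bs_0=\bzero$. Drop the phrase ``$\bs_1=0$'', which would give $\bGamma_1=0$.

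On the Wasserstein claim, the obstacle you flagged is a genuine error in the statement, not a gap in your reasoning. The paper's proof invokes the same Bures formula and then asserts that ``simplifying yields the result''. It also plugs in the averaged $\bar\bGamma_t$ rather than the time-$t$ covariances $\bGamma_t$ that define $\Diverg_t$.

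That simplification fails in general. With $\bM=\bSigma_2^{1/2}\bSigma_1^{1/2}$ one has $\trace\big((\bSigma_1^{1/2}\bSigma_2\bSigma_1^{1/2})^{1/2}\big)=\|\bM\|_*\ge\trace(\bM)=\trace(\bSigma_1^{1/2}\bSigma_2^{1/2})$. Equality holds iff $\bM$ is positive semidefinite, i.e.\ iff $\bSigma_1$ and $\bSigma_2$ commute. Hence $W_2^2\le\|\bSigma_1^{1/2}-\bSigma_2^{1/2}\|_\fro^2$, with equality exactly in the commuting case.

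Non-commuting covariances already occur in the paper's own example. Take $\bA=\mathrm{diag}(0,1)$, $\bB=\bSigma_{\bw}=\eye$, $\Kdemo=\mathrm{diag}(0,-\alpha)$ and $\bK=\Kdemo-\epsilon\,\be_1\be_2^\top$. Then $\bGamma_2(\Kdemo)=\mathrm{diag}\big(1,1+(1-\alpha)^2\big)$, whereas $\bGamma_2(\bK)=\eye+\bX\bX^\top$ has off-diagonal entry $-\epsilon(1-\alpha)$. So for $\epsilon\neq0$ and $\alpha\neq1$ the two do not commute, and the stated identity fails with strict inequality.

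Your second option is therefore the right repair. State $\Diverg_t(\pitheta)\le\|\bGamma_t(\Kdemo)^{1/2}-\bGamma_t(\Ktheta)^{1/2}\|_\fro^2$, with equality iff $\bGamma_t(\Kdemo)$ and $\bGamma_t(\Ktheta)$ commute. Your synchronous coupling $\bs=\bSigma_i^{1/2}\bz$ proves the bound directly, without the Bures formula. Note also that $\Diverg_t$ here is the squared $W_2$ distance, not the distance itself.
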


\icmlpar{Validation loss need not track downstream performance due to TTF.}    
\Cref{prop:LDS TTF} characterizes TTF shift in terms of the state-covariance matrices induced by  $\Kdemo$, via $\bGamma_t(\Kdemo)$ (resp. $\Ktheta $ via $\bGamma_t(\Ktheta)$). These same covariances weight the errors $\Kdemo - \Ktheta$ in both $\cLval$ and $\cLideal$. Thus, TTF and downstream performance depends not only on error magnitudes, but \emph{in what directions} the error between policies  $\Kdemo - \Ktheta$  lies with respect to covariances. In \Cref{appdx:construct LDS Lideal Lval example}, we construct 
two linear policies $\pi_i(\bs) = \bK_{\pi_i}\bs,i \in\{1,2\}$ such that $\Lval(\pi_1) \ll \Lval(\pi_2)$, but $\cLideal(\pi_1)\gg \cLideal(\pi_2)$:
$\bK_{\pi_1}- \Kdemo$ has error along small eigenspaces of $\bGamma_T(\Kdemo)$, but through feedback between $\bK_{\pi_1}$ and the dynamics, these errors lead to large eigenvalues of $\bGamma_T(\bK_{\pi_1})$, driving up $\cLideal$.

\icmlpar{Understanding TTF from Feature Learning.}
To see how feature learning influences TTF, we \emph{over}-parameterize the linear learner policy $\pitheta$ as a product of linear layers $\pitheta(\bs) = \Ktheta \bs = \Ftheta \Gtheta \bs$, where $\Ftheta \in \R^{m \times d}$, $\Gtheta \in \R^{d \times n}$; similarly write $\Kdemo = \Fstar \Gstar$, and assume $ \mathrm{rank}(\Kdemo) = d$ \iftoggle{arxiv}{for simplicity}{}. A necessary condition for globally optimal imitation of $\pidemo(\cdot)$ is accurately recovering the (row-)span of $\Gstar$, e.g., measured by a subspace distance:
\begin{align}\label{eq: subspace dist}
    \dist(\mathbf{G}, \Gstar) := \opnorm{ \Gproj \Gperp}, \;\; 
\end{align}
where $\Gproj,\Gperp$are orthogonal projections onto $\rowsp(\Gproj),\rowsp(\Gstar)^\perp$.
Vanilla \GD, by minimizing $\cLval$,  always ensures that $(\Gtheta-\Gstar)\bs$ is small along directions $\bs$ which align with large eigenvalues of $\bar \bGamma_T(\Kdemo)$. However, large subspace distance $\dist(\Gtheta,\Gstar)$ means that there exist states $\bs$ for which $\Gstar \bs$ is non-zero, but 
$\Gtheta \bs$ vanishes. Effectively, $\Gtheta$ ``zeros out'' certain directions of state-space, and therefore $\Ftheta \Gtheta$ will as well. Errors in these directions can compound, driving up TTF, and in the worst case, no adjustment of later layers can mitigate the errors made in $\Gtheta$:
\begin{restatable}[Poor Feature Learning causes TTF Shift, Informal]{proposition}{FeatureTTF}
    \label{prop:TTF_feature_learning} There exists   $(\bF_1, \bG_1)$, $(\bF_2, \bG_2)$, $(\Fstar,\Gstar)$ and linear dynamics $\bA,\bB,\bSigma_{\bw}$ such that $\cLval(\bF_1\bG_1)\ll \cLval(\bF_2\bG_2)$, but $\dist(\bG_1,\Gstar) > \dist(\bG_2, \Gstar)$, and for any possible ``last-layer'' $\Ftheta$, we have $\cLideal(\bF_2 \bG_2) \ll \cLideal(\Ftheta\bG_1)$.
\end{restatable}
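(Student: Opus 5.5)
We will prove \Cref{prop:TTF_feature_learning} with an explicit two-dimensional example, evaluating every quantity through the closed forms of \Cref{prop:LDS TTF}. The idea is to let the demonstrator stabilize an unstable mode that is barely excited under $\pidemo$. A first layer $\bG_1$ that discards that mode then pays almost nothing in $\cLval$. Under its own rollouts, however, the mode is no longer controlled and blows up, and no last layer can repair this because $\bG_1$ never sees that coordinate.

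\textbf{Construction.} Take $n=2$, $d=m=1$, and the following system and policies.
\begin{itemize}
\item Dynamics: $\bA = \mathrm{diag}(a,0)$ with $a>1$, $\bB = e_1$, and $\bSigma_{\bw} = \mathrm{diag}(\varepsilon,1)$ with $\varepsilon$ small.
\item Demonstrator: $\Gstar = e_1^\top$ and $\Fstar = -k$, with $|a-k|<1$, so $\Kdemo = -k e_1^\top$ is stabilizing.
\item First candidate: $\bG_1 = e_2^\top$, $\bF_1 = 0$. Then $\dist(\bG_1,\Gstar)=1$.
\item Second candidate: $\bG_2 = e_1^\top + c\,e_2^\top$, $\bF_2=-k$. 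Then $\dist(\bG_2,\Gstar) = c/\sqrt{1+c^2}<1$.
\end{itemize}

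\textbf{Validation losses.} Under $\Kdemo$ the closed loop is $\mathrm{diag}(a-k,0)$. Hence $\bGamma_t(\Kdemo)$ is diagonal, with $e_1$-entry at most $\varepsilon/(1-(a-k)^2)$ and $e_2$-entry equal to $1$.
\begin{itemize}
\item $\Kdemo - \bF_1\bG_1 = -k e_1^\top$, so $\cLval(\bF_1\bG_1) \le k^2\varepsilon/(1-(a-k)^2)$.
\item $\Kdemo - \bF_2\bG_2 = kc\, e_2^\top$, so $\cLval(\bF_2\bG_2) = k^2c^2$.
\end{itemize}
Choosing $c^2 \gg \varepsilon/(1-(a-k)^2)$ gives $\cLval(\bF_1\bG_1)\ll\cLval(\bF_2\bG_2)$, while $\dist(\bG_1,\Gstar)>\dist(\bG_2,\Gstar)$.

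\textbf{Ideal loss of the second candidate.} The closed loop $\bA+\bB\bF_2\bG_2 = \begin{bmatrix} a-k & -kc\\ 0 & 0\end{bmatrix}$ is stable. So $\bar\bGamma_T(\bF_2\bG_2)$ is bounded uniformly in $T$ by a constant $C(a,k,c)$, and $\cLideal(\bF_2\bG_2) = k^2c^2\,[\bar\bGamma_T]_{22} = k^2c^2$. This holds because the $e_2$ coordinate is pure noise at every step.

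\textbf{Ideal loss for any last layer on $\bG_1$ (main step).} Fix an arbitrary scalar $\Ftheta=f$. The policy is $f e_2^\top$ and the closed loop is $\begin{bmatrix} a & f\\ 0&0\end{bmatrix}$. The error $\Kdemo - f\bG_1 = (-k, -f)$ always has $e_1$-component $-k$, independent of $f$. Since $\bar\bGamma_T$ is PSD, we can drop the cross term:
\[
\cLideal(f\bG_1) = \left\|(-k,-f)\,\bar\bGamma_T(f e_2^\top)^{\nicehalf}\right\|^2 \ge k^2\,[\bar\bGamma_T]_{11} - (\text{terms involving } f).
\]
Bounding this cleanly is the point needing care. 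It is easiest to compute directly from the rollout. Write $s_{t+1,1} = a s_{t,1} + f s_{t,2} + w_{t,1}$, where $s_{t,2}=w_{t-1,2}$ is independent of $s_{t,1}$. The action error is $-k s_{t,1} - f s_{t,2}$. Its variance is therefore at least $k^2\mathrm{Var}(s_{t,1})$, and the extra term involving $f$ is nonnegative. Moreover $\mathrm{Var}(s_{t,1}) \ge \varepsilon\sum_{j<t}a^{2j}$, since the $f$-driven part only adds independent variance. Averaging over $t$,
\[
\cLideal(\Ftheta\bG_1) \ge \frac{k^2\varepsilon}{T}\sum_{t=1}^T \frac{a^{2t}-1}{a^2-1}.
\]
This grows exponentially in $T$, uniformly over $\Ftheta$.

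\textbf{Conclusion.} Take $T$ large, given fixed $\varepsilon$, $c$, $k$, $a$. Then $\cLideal(\bF_2\bG_2) = k^2c^2 \ll \cLideal(\Ftheta\bG_1)$ for every $\Ftheta$, which completes the construction.
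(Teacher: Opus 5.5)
Your construction is correct. Once $\varepsilon\ll c^2(1-(a-k)^2)$ is fixed, the validation-loss ordering holds uniformly in $T$, and taking $T$ large then gives the ideal-loss separation uniformly over the scalar last layer $f$. It is, however, a genuinely different route from the paper's.

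\textbf{The paper's route.} It uses a marginally stable, fully actuated system ($\bA=\mathrm{diag}(0,1)$, $\bB=\bI$, $\bSigma_{\bw}=\bI$) in which the demonstrator weakly stabilizes the marginal mode with gain $\alpha$. The poor feature $\Gtheta\propto(\cos\theta,\sin\theta)$ keeps a small $\sin\theta$ component of $\Gstar\propto e_2^\top$. The good model uses $\Gstar$ exactly, with an $\epsilon$-sized last-layer error. Everything is evaluated at stationarity ($T\to\infty$) via discrete Lyapunov equations. Its ``any last layer'' step is a real minimization over all stabilizing $\bF\in\R^{2\times 1}$: it parametrizes the closed loop, solves the Lyapunov equation in closed form, and applies AM--GM to get $\inf_{\bF}\cLideal(\bF\Gtheta)\geq\alpha^2/(4\sin\theta)$.

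\textbf{Your route.} You use an open-loop unstable mode that is barely excited on-distribution, and make $\bG_1$ exactly orthogonal to $\Gstar$. That mode is then invisible to every policy $f\bG_1$, so the closed loop $\left[\begin{smallmatrix} a & f\\ 0 & 0\end{smallmatrix}\right]$ is unstable for every $f$. The uniform lower bound reduces to a two-line variance computation with exponential growth in $T$.

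\textbf{What each buys.} Yours is short and fully checkable. By contrast, the paper's formula for $J(r,h)$ and its numerator bound are asserted rather than derived. Its stated regime also needs repair: since $\cLval(\Ktheta)\sim\alpha$ while the good model's validation loss is $\sim\epsilon^2/\alpha$, the required ordering needs $\alpha\ll\epsilon$, not $\epsilon\ll\alpha$. The paper's route, once repaired, gives a less degenerate and more informative statement:
\begin{itemize}
\item a horizon-free comparison among stable closed loops;
\item a last layer that can redistribute the feature across two actuators;
\item a blow-up controlled by how misaligned the feature is ($\dist(\Gtheta,\Gstar)=\cos\theta$, close to but below $1$).
\end{itemize}
In your example, the poor-feature model is the zero policy ($\bF_1=0$) and $\dist(\bG_1,\Gstar)=1$ is maximal. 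The last layer is a scalar that can only rescale. The separation is driven entirely by the horizon; at stationarity every $f\bG_1$ has infinite ideal loss.

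\textbf{One expository fix.} Drop the display justified by ``$\bar{\bm{\Gamma}}_T$ is PSD, so we can drop the cross term.'' Positive semidefiniteness does not make cross terms nonnegative. The actual reason is what your rollout computation establishes: $s_{t,1}$ and $s_{t,2}=w_{t-1,2}$ are independent. Equivalently, $\bm{\Gamma}_t(fe_2^\top)$ is diagonal with $[\bm{\Gamma}_t]_{22}=1$. This gives the exact identity $\cLideal(f\bG_1)=k^2[\bar{\bm{\Gamma}}_T]_{11}+f^2$.
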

\Cref{prop:TTF_feature_learning} demonstrates that a model that appears better measured by in-distribution validation loss can hide significantly poorer feature learning, and can thus suffer vastly worse TTF, even assuming the downstream layers can be retrained. A natural question would be whether one can encourage better feature learning, as a separate consideration from validation loss performance.

\begin{figure}[t]
  \vskip 0.2in
  \begin{center}
    \includegraphics[width=0.9\columnwidth]{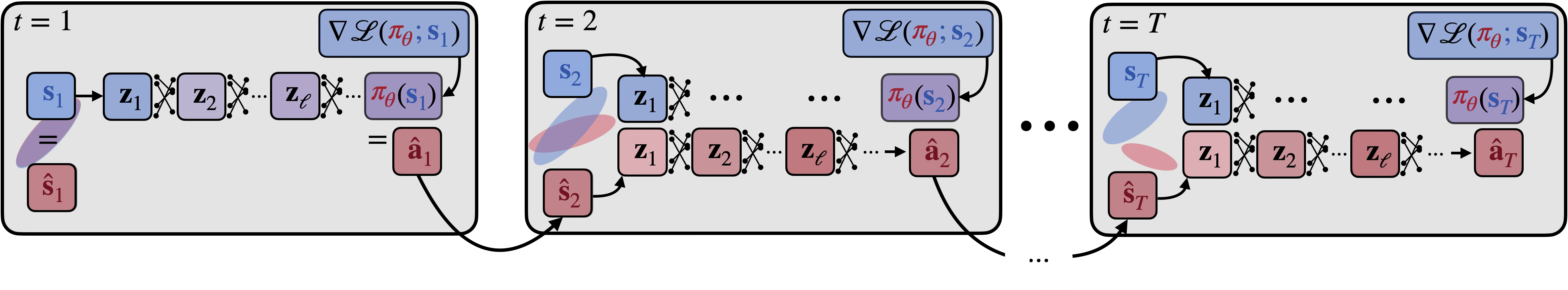}

    \caption{
    A depiction of how test-time feedback exacerbates distribution shift: errors in the network's predictions affect the ensuing states, which changes their distribution away from the one seen in training. Changes in the state distribution (\red{\textbf{red}}) also affect the quality of the learned features (depicted as \blue{\textbf{blue}}, to \purple{\textbf{purple}}, to \red{\textbf{red}}) at intermediate layers (e.g., \Cref{prop:TTF_feature_learning}). Errors propagate layerwise, further exacerbating TTF. Notably, the learning signal (e.g., the loss gradient $\nabla \cLval$) only supervises the predictions on the training distribution (\blue{\textbf{blue}}).
    }
    \label{fig:layerwise TTF}
  \end{center}
  \vspace{-0.7cm}
\end{figure}

\begin{AIbox}{Key Takeaways}
    \begin{enumerate}[
    noitemsep,
    topsep=-5pt,
    parsep=0pt,
    partopsep=0pt,
    left=2pt]
        \item In TTF settings, distribution shift is inevitable: future predictions are rolled out along a model's own (imperfect) predictions.

        \item Therefore, under TTF, what is crucial is not necessarily the training objective, but rather the error directions exacerbated under feedback dynamics, see \Cref{prop:LDS TTF}.

        \item TTF further exposes a mismatch between in-distribution loss convergence and ``feature learning.'' Poor convergence on the latter can further exacerbate TTF \emph{layer-wise}.
    
    \end{enumerate}
\end{AIbox}

\nvsp
\section{Double Preconditioning (\DoPe)}\label{sec:derivation}
\nvsp

Toward addressing the pathologies of TTF, we propose the \textbf{double preconditioning} (\DoPe) framework, which be summarized as applying layer-wise an \emph{activation-covariance} preconditioner (\AP) onto the gradient, then passing the \AP-gradient into a \emph{gradient preconditioner} (\GP) of choice, such as in \Adam or \Muon. For concreteness,  consider  an $L$-layer feedforward network:
\iftoggle{arxiv}{\begin{align}\label{eq:MLP}
    f_{\btheta}(\bx) = \bW_L \phi (\bW_{L-1} \cdots \phi(\bW_1 \bx) \cdots ), 
\end{align}}{$f_{\btheta}(\bx) = \bW_L \phi (\bW_{L-1} \cdots \phi(\bW_1 \bx) \cdots )$,}
where 
$\btheta$ is the concatenation of $\btheta_\ell = \VEC(\bW_\ell)$, $\ell \in [L]$. Denote the intermediate activations $\bz_\ell \triangleq \phi(\bW_{\ell-1} \bz_{\ell-1} \cdots)$, $\bz_{1} \triangleq \bx$ and pre-activations $\bh_{\ell} = \bW_{\ell} \bz_{\ell}$.\iftoggle{arxiv}{ We omit the layer index $\ell$ when it is clear from context.}{\footnote{We omit the layer index $\ell$ when it is clear from context.}} The core double preconditioning update can be summarized as:
\begin{align*}
        &\bM = \hat\nabla_{\bW} \cL(f_{\btheta}) \; \hat\bSigma_{\bz}^{-1}, \; \hat\bSigma_{\bz}\approx \Exp[\bz \bz^\top ] \tag{\AP}\\
        &\bD = \texttt{GP}(\bG), \;\;\bW^{\mathrm{next}} \gets \bW - \eta \bD,
    \tag{{\DoPe}} 
\end{align*}
where $\hat \nabla_{\bW}$ is the minibatch gradient, and $\hat\bSigma_{\bz}$ the batch empirical uncentered covariance of $\bz$.
We display the core algorithm in \Cref{alg:DoPe simple}, and discuss derivations for general architectures in \Cref{sec:AP general architectures}. Full details and practical features are described in \Cref{appdx:derivations}.

\iftoggle{arxiv}{
}{
\begin{wrapfigure}{r}{0.45\textwidth}
  \vspace{-\baselineskip}
  \hrule
  \vspace{0.5em}
  {
  \captionsetup{font=normalsize}
  \captionof{algorithm}{Double Preconditioning (Feedforward Layer, layer index $\ell$ suppressed)}
  \label{alg:DoPe simple}
  }
  \vspace{-0.2em}
  \hrule
  \vspace{0.5em}
  \begin{algorithmic}
    \STATE {\bfseries Input:} learning rate $\eta$, weight decay $\lambda$, damping $\gamma$
    \FOR{$k=1$ {\bfseries to} $K$}
    \STATE Sample batch $B$, $\abs{B} = n$.
    \STATE $\bG \leftarrow \nabla_{\bW} \cLhat(\btheta^{(k-1)})$
    \STATE $\hat\bSigma_{\bz} \leftarrow \frac{1}{n}\sum_{i=1}^n \bz^{(k-1)}_i \mathopen{}\bz^{(k-1)}_i\mathclose{}^\top$
    \STATE $\bM \leftarrow \bG \cdot \big(\hat\bSigma_{\bz} + \gamma \,\trace(\hat\bSigma_{\bz}) \bI \big)^{-1}$ \hfill (\texttt{AP})
    \STATE $\bD \leftarrow \texttt{GP}(\bM,\;^{\ast\ast}\texttt{kwargs})$ \hfill (\DoPe)
    \STATE $\bW^{(k)} \leftarrow (1-\eta\lambda)\bW^{(k-1)} - \eta \bD$
    \ENDFOR
  \end{algorithmic}
  \vspace{0.5em}
  \hrule
  \vspace{-2em}
\end{wrapfigure}
}
Whereas much of the prior literature has focused on the role of \emph{gradient} pre-conditioned optimization for stabilizing training and improving validation loss, we will show that \DoPe's use of \emph{activation} preconditioning mitigates TTF, and therefore \textbf{improve downstream model performance independent of validation loss performance}. %

\iftoggle{arxiv}{
\begin{wrapfigure}{r}{0.45\textwidth}
  \vspace{-\baselineskip}
  \hrule
  \vspace{0.5em}
  {
  \captionsetup{font=normalsize}
  \captionof{algorithm}{Double Preconditioning (Feedforward Layer, layer index $\ell$ suppressed)}
  \label{alg:DoPe simple}
  }
  \vspace{-0.2em}
  \hrule
  \vspace{0.5em}
  \begin{algorithmic}
    \STATE {\bfseries Input:} learning rate $\eta$, weight decay $\lambda$, damping $\gamma$
    \FOR{$k=1$ {\bfseries to} $K$}
    \STATE Sample batch $B$, $\abs{B} = n$.
    \STATE $\bG \leftarrow \nabla_{\bW} \cLhat(\btheta^{(k-1)})$
    \STATE $\hat\bSigma_{\bz} \leftarrow \frac{1}{n}\sum_{i=1}^n \bz^{(k-1)}_i \mathopen{}\bz^{(k-1)}_i\mathclose{}^\top$
    \STATE $\bM \leftarrow \bG \cdot \big(\hat\bSigma_{\bz} + \gamma \,\trace(\hat\bSigma_{\bz}) \bI \big)^{-1}$ \hfill (\texttt{AP})
    \STATE $\bD \leftarrow \texttt{GP}(\bM,\;^{\ast\ast}\texttt{kwargs})$ \hfill (\DoPe)
    \STATE $\bW^{(k)} \leftarrow (1-\eta\lambda)\bW^{(k-1)} - \eta \bD$
    \ENDFOR
  \end{algorithmic}
  \vspace{0.5em}
  \hrule
\end{wrapfigure}
}{}
\icmlpar{Gradient- v.s.\ Activation- preconditioning.}
We remark that activation preconditioning (\AP) and many gradient preconditioners (\GP's) are motivated as different approximations to different curvature preconditioners; see \Cref{appdx:extended related} for an extensive account.
Prior work has modeled \GP's (e.g., \Adam, \Muon) as performing (layer-wise) steepest descent with respect to a salient norm $\norm{\cdot}$ \citep{bernstein2024old, pethick2025lmo}: $\bW^{\mathrm{next}} = \bW - \eta \argmax_{\norm{\bG} \leq 1} \ip{\bG, \nabla_{\bW} \cL}$.\footnote{Practical features like momentum and weight decay can be incorporated with slight modifications \citep{pethick2025lmo}.} 
Versions of \AP have appeared in prior literature as a \emph{standalone} optimizer (see e.g., \citet{amid2022locoprop, benzing2022gradient, zhang2023meta}); we provide a full account in \Cref{appdx:extended related}. 
Rather than establishing the convergence/acceleratory properties of \AP, we demonstrate how it is directly relevant to TTF. However, just as vanilla gradient descent can suffer from numerical instability in deep networks, the \AP update requires stabilization to converge meaningfully on modern taskloads. This motivates the combination of \AP with \GP, yielding \DoPe. In the steepest descent \GP framing  described above, we can express \DoPe by keeping the induced norm constraint $\|\bG\| \le 1$ while replacing the Euclidean inner product with the local ``activation-covariance metric'' induced by the activation covariance matrix:
\begin{align*}
    \bW^{\mathrm{next}} &= \bW - \eta \argmax_{\norm{\bG} \leq 1} \ip{\bG, \nabla_{\bW} \cL}_{\hat\bSigma^{-1}_{\bz}} = \bW - \eta \argmax_{\norm{\bG} \leq 1} \langle \bG, \nabla_{\bW} \cL \cdot \hat\bSigma^{-1}_{\bz}\rangle.
\end{align*}
For example, setting $\norm{\cdot} = \norm{\cdot}_\infty$ as the entry-wise $\ell^\infty$-norm yields a Sign-Descent \AP update ($\sim$\Adam), and $\norm{\cdot}_{\RMS-\RMS}$ recovers a Spectral-Descent AP update ($\sim$\Muon).

\begin{figure}
    \centering
        \includegraphics[width=0.38\linewidth]{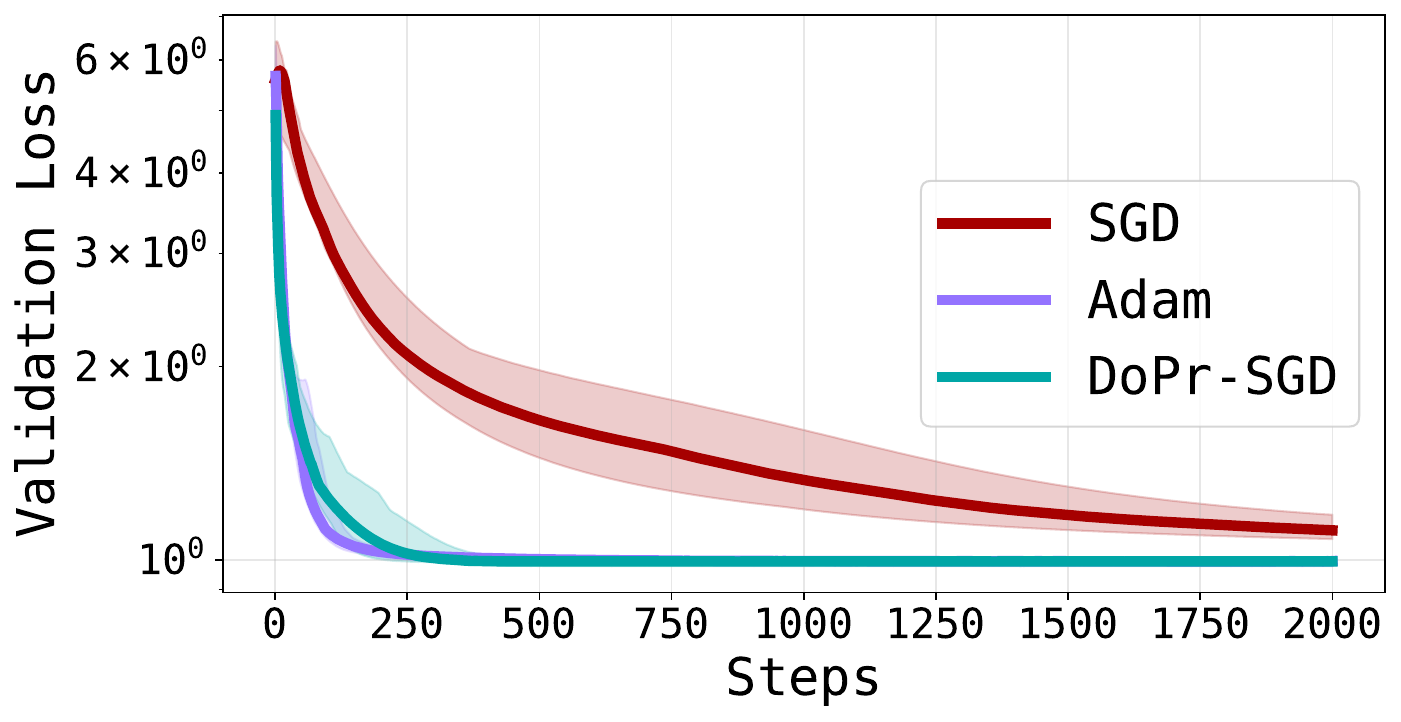}
        \includegraphics[width=0.38\linewidth]{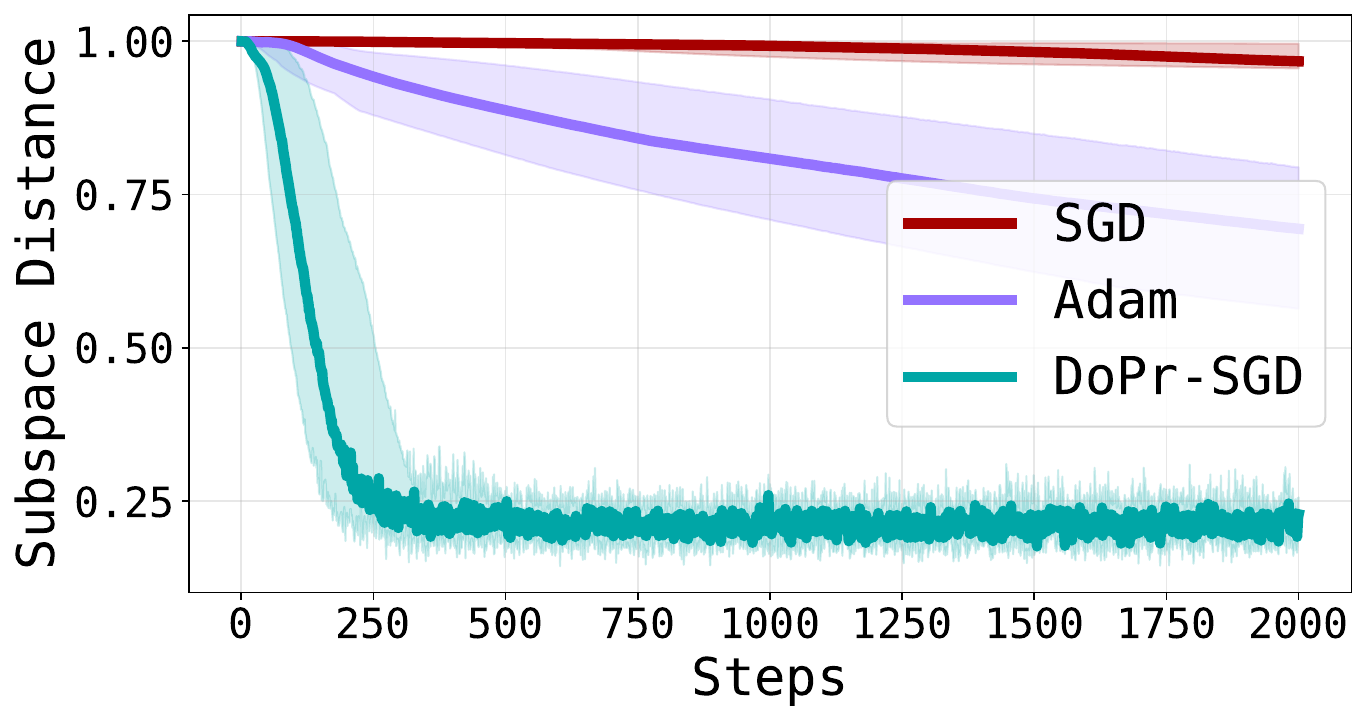} 
        \caption{Mismatch between validation loss and feature learning. In-distribution validation loss converges and is even accelerated by \GP. However, unless \AP is applied, the feature subspace distance \eqref{eq: subspace dist} converges poorly, which can exacerbate TTF (\Cref{prop:TTF_feature_learning}). Details in \Cref{appdx:feature learn exp}.}
        \label{fig:feature_learn}
    \vspace{-0.5cm}
\end{figure}

\nvsp[0.2]
\subsection{Theoretical Motivation: \AP Mitigates TTF via Isotropic Feature Learning}\label{sec:TTF feature learning}
\nvsp[0.2]

\Cref{prop:TTF_feature_learning} therefore reveals that \textbf{non-uniform feature learning} at a given layer (i.e., larger $\dist(\bG_1,\Gstar)$) can \textbf{induce greater TTF shift and reduce downstream performance}, regardless of what is learned at subsequent layers (the choice of $\bF_1$).  We now argue that \AP optimizers exhibit \textbf{more uniform feature learning} by correcting a bias in \GD that occurs when inputs are non-isotropic, i.e. $\Exp^{\pidemo}[\sblue \sblue^\top]$ is ill conditioned \citep{collins2021exploiting, zhang2023meta}. Taken together, these findings imply that \textbf{\AP optimizers have the potential to mitigate TTF shift} and its consequences. %

\newcommand{\cLtrainfirst}{\cL_{\mathrm{train},\bF}}
\begin{proposition}[Informal, adapted from \citet{zhang2023meta}]\label{prop:SGD and AP feature learning}  Let $\eta > 0$ denote the step size, $\GthetaGD$ be $\Gtheta$ after  a step of \emph{full-batch} gradient step on $\Ltrain(\Ftheta\Gtheta)$ with respect to $\Gtheta$, and let $\GthetaAP$ denote the same with the {\normalfont\AP{}} update. Then, setting $\bSigma_{\bs} = \bar\bGamma_T(\Kdemo)$,  we have:
\begin{align}
\begin{split}\label{eq:GD vs AP update}
    \GthetaGD \Gperp  &= (\Gtheta - \eta \Ftheta^\top \Ftheta \Gtheta \bSigma_{\bs}) \Gperp + \eta \Ftheta^\top \Fstar \Gstar \bSigma_{\bs} \Gperp\\
\GthetaAP\Gperp &=  (\bI - \eta \Ftheta^\top \Ftheta) \Gtheta \Gperp.
\end{split}
\end{align}
Consequently, omitting some technical details, when $\mathrm{cond}(\bar \bGamma_T(\Kdemo)) \gg 1$, 
\begin{align*}
    \dist(\GthetaGD, \Gstar) \approx \dist(\Gtheta, \Gstar), \quad \dist(\GthetaAP, \Gstar) \leq \paren{1 - \eta \lmin(\Ftheta^\top \Ftheta)} \dist(\Gtheta, \Gstar).
\end{align*}
\end{proposition}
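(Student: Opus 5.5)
We first compute the full-batch gradient of the validation objective with respect to $\Gtheta$ in closed form, and then multiply the two update rules on the right by $\Gperp$. By \Cref{prop:LDS TTF}, with $\bSigma_{\bs} = \bar\bGamma_T(\Kdemo)$, the validation loss is
\[
\cLval = \|(\Fstar\Gstar - \Ftheta\Gtheta)\bSigma_{\bs}^{\nicehalf}\|_{\fro}^2 .
\]
Its gradient in $\Gtheta$ is therefore
\[
\nabla_{\bG}\cL = \Ftheta^\top(\Ftheta\Gtheta - \Fstar\Gstar)\bSigma_{\bs},
\]
up to a factor of $2$, which we absorb into $\eta$. The activation feeding the layer $\Gtheta$ is the state $\bs$, so the \AP preconditioner is $\bSigma_{\bz} = \bSigma_{\bs}$ (taking $\bSigma_{\bs}$ invertible, with no damping, in the full-batch idealization).

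\textbf{Step 1 (GD).} The update is $\GthetaGD = \Gtheta - \eta\Ftheta^\top\Ftheta\Gtheta\bSigma_{\bs} + \eta\Ftheta^\top\Fstar\Gstar\bSigma_{\bs}$. Right-multiplying by $\Gperp$ gives the first line of \eqref{eq:GD vs AP update} directly. The key point is that $\Gstar\bSigma_{\bs}\Gperp \neq 0$ in general, because $\bSigma_{\bs}$ does not commute with the projection onto $\rowsp(\Gstar)$.

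\textbf{Step 2 (AP).} The preconditioned gradient is $\bM = \Ftheta^\top(\Ftheta\Gtheta - \Fstar\Gstar)$, since $\bSigma_{\bs}$ cancels exactly. Then
\[
\GthetaAP = \Gtheta - \eta\Ftheta^\top\Ftheta\Gtheta + \eta\Ftheta^\top\Fstar\Gstar .
\]
Since $\Gstar\Gperp = 0$ by definition of $\Gperp$, right-multiplying by $\Gperp$ gives $(\bI - \eta\Ftheta^\top\Ftheta)\Gtheta\Gperp$, which is the second line.

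\textbf{Step 3 (distance bounds).} This step translates the unnormalized statements into the subspace distance $\dist(\bG,\Gstar) = \opnorm{\Gproj\Gperp}$, and it is the main obstacle, because $\Gproj$ depends on the row space of the new iterate. Following \citet{zhang2023meta}, we work with the normalized form. Write $\Gtheta = \bR\bQ$ with $\bQ$ having orthonormal rows, so that $\dist(\Gtheta,\Gstar) = \opnorm{\bQ\Gperp}$. The update gives
\[
\GthetaAP\Gperp = (\bI - \eta\Ftheta^\top\Ftheta)\bR\,\bQ\Gperp .
\]
Re-orthonormalizing $\GthetaAP = \bR'\bQ'$, we get $\bQ'\Gperp = \bR'^{-1}(\bI - \eta\Ftheta^\top\Ftheta)\bR\,\bQ\Gperp$.

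We then bound $\opnorm{\bR'^{-1}(\bI - \eta\Ftheta^\top\Ftheta)\bR}$. For $\eta\opnorm{\Ftheta^\top\Ftheta} \le 1$, the middle factor has norm at most $1 - \eta\lmin(\Ftheta^\top\Ftheta)$. The ratio $\bR'^{-1}\cdots\bR$ is controlled by showing $\bR'$ is a small perturbation of $(\bI - \eta\Ftheta^\top\Ftheta)\bR$ plus the in-span term $\eta\Ftheta^\top\Fstar\Gstar$. This is where the omitted technical details enter: a well-conditioned $\bR$, a small step size, and a bounded $\Fstar$-alignment term, handled by a first-order perturbation argument in $\eta$. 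The result is the contraction factor $1 - \eta\lmin(\Ftheta^\top\Ftheta)$, up to higher-order terms.

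For GD, the extra term $\eta\Ftheta^\top\Fstar\Gstar\bSigma_{\bs}\Gperp$ does not vanish. When $\mathrm{cond}(\bSigma_{\bs}) \gg 1$, the effective step size along the small eigendirections of $\bSigma_{\bs}$ (which $\eta$ must respect to avoid divergence along the large ones) is negligible. Moreover, the cross term injects $\Gstar$-energy into $\rowsp(\Gstar)^\perp$. Taken together, the distance changes only by $O(\eta\,\lmin(\bSigma_{\bs}))$ relative to the stable scale $\eta \sim 1/\lmax(\bSigma_{\bs})$, which gives $\dist(\GthetaGD,\Gstar) \approx \dist(\Gtheta,\Gstar)$. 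To make this rigorous, we would exhibit $\bSigma_{\bs}$ with an eigenvector straddling $\rowsp(\Gstar)$ and its complement, and bound the change of $\bQ\Gperp$ to first order in $\eta$.
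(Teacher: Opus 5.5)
Your Steps 1 and 2 match the paper's argument exactly. You differentiate $\|(\Ftheta\Gtheta-\Fstar\Gstar)\bSigma_{\bs}^{1/2}\|_{\fro}^2$, observe that \AP cancels $\bSigma_{\bs}$, and right-multiply by $\Gperp$ using $\Gstar\Gperp=\bzero$.

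\textbf{The gap is in Step 3.} You are right that $\dist$ depends on the row space of the new iterate. However, at first order the renormalization cancels the $\Ftheta^\top\Ftheta$ shrinkage rather than preserving it. Take $\Gtheta=\bQ$ row-orthonormal and set $\bS:=\Ftheta^\top\Fstar\Gstar\bQ^\top+\bQ\Gstar^\top\Fstar^\top\Ftheta$. Then $\GthetaAP\GthetaAP^\top=\bI-\eta(2\Ftheta^\top\Ftheta-\bS)+O(\eta^2)$. With $\bQ'=(\GthetaAP\GthetaAP^\top)^{-1/2}\GthetaAP$,
\begin{align*}
\bQ'\Gperp &= \big(\bI+\eta\Ftheta^\top\Ftheta-\tfrac{\eta}{2}\bS\big)(\bI-\eta\Ftheta^\top\Ftheta)\bQ\Gperp+O(\eta^2)\\
&= \big(\bI-\tfrac{\eta}{2}\bS\big)\bQ\Gperp+O(\eta^2).
\end{align*}
Your own framing shows the same thing. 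If $\bR'$ really were a small perturbation of $(\bI-\eta\Ftheta^\top\Ftheta)\bR$, then $\bR'^{-1}(\bI-\eta\Ftheta^\top\Ftheta)\bR\approx\bI$, i.e.\ no contraction. Whatever contraction exists comes from the cross term $\eta\Ftheta^\top\Fstar\Gstar$, which you planned to bound away as a nuisance.

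Concretely, suppose $\Ftheta^\top\Fstar=\bzero$ (possible once $m\ge 2d$). Then $\GthetaAP=(\bI-\eta\Ftheta^\top\Ftheta)\Gtheta$ has the same row space as $\Gtheta$. So $\dist(\GthetaAP,\Gstar)=\dist(\Gtheta,\Gstar)$, even though $1-\eta\lmin(\Ftheta^\top\Ftheta)<1$. This instance satisfies every condition you list: $\bR=\bI$ is perfectly conditioned, $\eta$ can be arbitrarily small, and the alignment term vanishes. Hence no perturbation argument under those hypotheses can give the claimed factor for the normalized distance.

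\textbf{How the paper proceeds.} The paper never renormalizes. It assumes the current $\Gtheta$ is row-orthonormal, so $\dist(\Gtheta,\Gstar)=\opnorm{\Gtheta\Gperp}$. It then bounds the unnormalized quantity $\opnorm{\GthetaAP\Gperp}\le\opnorm{\bI-\eta\Ftheta^\top\Ftheta}\,\opnorm{\Gtheta\Gperp}$ by submultiplicativity and reads the left side as $\dist(\GthetaAP,\Gstar)$. With $\eta\lmax(\Ftheta^\top\Ftheta)\le 1$, the operator norm equals $1-\eta\lmin(\Ftheta^\top\Ftheta)$, and that one line is the whole \AP half.

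\textbf{A possible repair.} If you want the genuine subspace-distance statement, you need a hypothesis tying $\Ftheta$ to $\Fstar\Gstar\bQ^\top$, in the spirit of \citet{zhang2023meta}. For example, if $\Ftheta=\Fstar\Gstar\bQ^\top$, then $\GthetaAP\bQ^\top=\bI$ exactly. Hence $\GthetaAP\GthetaAP^\top\succeq\bI$ and $\opnorm{\bR'^{-1}}\le 1$. The bound $\opnorm{\bQ'\Gperp}\le(1-\eta\lmin(\Ftheta^\top\Ftheta))\opnorm{\bQ\Gperp}$ then holds exactly. Here the in-span term is what keeps $\bR'$ from shrinking, not an error to control.

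\textbf{The \GD half.} Your estimate that the change is only $O(\eta\lmin(\bSigma_{\bs}))$ contradicts your own remark about the cross term. The quantity $\eta\opnorm{\Gstar\bSigma_{\bs}\Gperp}$ can be of order $\eta\lmax(\bSigma_{\bs})=O(1)$ at the stable step size. The paper instead exhibits an explicit $d=1$, $n=2$ instance in which $\opnorm{\GthetaGD\Gperp}$ grows, again measured without renormalizing. If you insist on the normalized distance, that instance does not work: the distance actually falls from $\sqrt{3}/2$ to about $0.62$, so you would need a different example.
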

The proof of \Cref{prop:SGD and AP feature learning} can be found in \Cref{appdx:feature learn theory}. The first statement exposes the \emph{bias} of \GD under non-isotropic covariance, and the non-convergence of \GD and contraction of \AP in feature space. Notably, under \AP the resulting update \eqref{eq:GD vs AP update} behaves \emph{as if the features were isotropic} $\bSigma_{\bs} = \bI$, which is uniquely where \GD and \AP experience the same feature learning, noting $\Gstar \bSigma_{\bs} \Gperp = \Gstar \Gperp = \bzero$.
Notably, we show in \Cref{fig:feature_learn} that \textbf{pure \GP (e.g. \Adam, \Muon) do not yield the same improvements}, even if the $\cLval$ convergence is seemingly accelerated. Conceptually, the pathology illustrated in \Cref{prop:SGD and AP feature learning} can be inductively applied for multi-layer networks, where under TTF, poor feature learning at early layers will under TTF destroy signal for all downstream layers; see e.g., \citet{davis2025spectral, shumaylov2026muon} for detailed investigations of optimizer dynamics in similar settings.

\nvsp[0.2]
\subsection{Lifting AP to General Architectures: An Invariance Principle}\label{sec:AP general architectures}
\nvsp[0.2]

In \Cref{prop:SGD and AP feature learning}, we see that optimizers suffer from impaired feature learning, thus exacerbating TTF, when the covariances of layerwise activations are ill-conditioned, i.e. highly anisotropic, and observed that \AP mitigates this by feature-learning \emph{as if the covariances were isotropic}. Here, we leverage this observation to yield a general \textbf{invariance principle}, providing a reliable schematic for generalizing \DoPe to general network layers, such as convolutions and self-attention.
We first consider the following thought experiment: consider arbitrary layerwise full-rank affine transformations:
\begin{align}\label{eq:affine transform}
    \bzbar \triangleq \bA_\ell \bz_{\ell}, \bWbar \triangleq \bW_{\ell}\bA_{\ell}^{-1},\; \ell \in [L],
\end{align}
and denote the resulting network output $f_{\bthetabar}(\cdot)$, by construction, the pre-activations remain the same: $\bWbar[] \bzbar[] = \bW \bz$, and thus the network outputs are identical $f_{\bthetabar}(\bxbar) = f_{\btheta}(\bx)$.
\iftoggle{arxiv}{
\begin{remark}[Coordinate-dependent optimization paths]\label{rem:coordinate dependence}
    By definition of \eqref{eq:affine transform}, we have $f_{\btheta}(\bx) = f_{\bthetabar}(\bxbar)$ for all $\bx,\bxbar = \bA_1 \bx$. Assume the loss gradient $\nabla_{\btheta} \cL$ is non-zero. Applying the backpropagation formula reveals that after a step of (full-batch) gradient descent:
    \allowdisplaybreaks
    \iftoggle{arxiv}{
    \begin{align}
        \begin{split}\label{eq:GD not invariant}
            \bW_{+} &= \bW - \eta\; \Exp\mem\brac{\frac{\partial L}{\partial\bh} \bz^\top} \\
            \bWbar[+] &= \bWbar[] - \eta\; \Exp\mem\brac{\frac{\partial L}{\partial\overline{\bh}} \bzbar[]^\top} \\
            &= \bWbar[] - \eta\; \Exp\mem\brac{\frac{\partial L}{\partial\bh} \bz^\top}\bA^\top, \;\;\bh \equiv \overline{\bh} \\
            &= \bW_{+} \bA^{-1}  + \eta \frac{\partial\cL}{\partial\bW} (\bA^{-1} - \bA^\top),
        \end{split}
    \end{align}
    }
    {
    \begin{align}
        \begin{split}\label{eq:GD not invariant}
            \bW^{\rmnext} &= \bW - \eta\; \Exp\mem\brac{\frac{\partial L}{\partial\bh} \bz^\top}\mem \\
            \bWbar[]^{\rmnext} &= \bWbar[] - \eta\; \Exp\mem\brac{\frac{\partial L}{\partial\overline{\bh}} \bzbar[]^\top} = \bW^{\rmnext} \bA^{-1}  + \eta \frac{\partial\cL}{\partial\bW} (\bA^{-1} - \bA^\top),
        \end{split}
    \end{align}
    }
    the updated layer outputs are non-identical $\bW_{\ell}^{\rmnext} \bz_\ell \neq \bWbar[\ell]^{\rmnext} \bzbar$ barring the special case of orthogonal $\bA_\ell$ for all $\ell \in [L]$, and thus the updated models no longer have identical outputs:
    $f_{\btheta^{\rmnext}}\mem(\bx) \not\equiv f_{\bthetabar^{\rmnext}}\mem(\bxbar)$.
\end{remark}
}
{Assume the loss gradient $\nabla_{\btheta} \cL$ is non-zero. Applying the backpropagation formula reveals that after a step of (full-batch) gradient descent: $\bW^{\rmnext} = \bW - \eta\; \Exp\mem\brac{\frac{\partial L}{\partial\bh} \bz^\top}\mem$, whereas $\bWbar[]^{\rmnext} = \bWbar[] - \eta\; \Exp\mem\brac{\frac{\partial L}{\partial\overline{\bh}} \bzbar[]^\top} = \bW^{\rmnext} \bA^{-1}  + \eta \frac{\partial\cL}{\partial\bW} (\bA^{-1} - \bA^\top)$, such that
the updated layer outputs are non-identical $\bW_{\ell}^{\rmnext} \bz_\ell \neq \bWbar[\ell]^{\rmnext} \bzbar$ barring the special case of orthogonal $\bA_\ell$ for all $\ell \in [L]$, and thus the updated models no longer have identical outputs:
    $f_{\btheta^{\rmnext}}\mem(\bx) \not\equiv f_{\bthetabar^{\rmnext}}\mem(\bxbar)$.}
\iftoggle{arxiv}{In other words, two NNs of the same architecture initialized to have identical output, can take different optimization paths depending on the ``coordinate system'' of the internal activations, independently of the final output of the network. Consequently, the activation distributions propagated from the training distribution, e.g., $\Sigma_{\bs} = \Exp^{\pidemo}[\sblue\sblue^\top]$ versus $\Sigma_{\bsbar} = \bA_1 \Sigma_{\bs} \bA_1$, \emph{impart a coordinate-dependent bias on the optimizer trajectory}, which \Cref{prop:LDS TTF} and \Cref{prop:TTF_feature_learning} demonstrate can exacerbate TTF.}
{Consequently, the activation distributions propagated from the training distribution, e.g., $\Sigma_{\bs} = \Exp^{\pidemo}[\sblue\sblue^\top]$ versus $\Sigma_{\bsbar} = \bA_1 \Sigma_{\bs} \bA_1$, \emph{impart a coordinate-dependent bias on the optimizer trajectory}, which \Cref{prop:LDS TTF} and \Cref{prop:TTF_feature_learning} demonstrate can exacerbate TTF.}
Deriving \AP for generic layer architectures boils down to the following principle: under an affine transformation on the input to the layer (and appropriate inverse transform to the applied weights) that preserve the layer output, \textbf{what is the update rule that renders the layer outputs of the updated network invariant to the transform}? In the feedforward case, this precisely yields the \AP update as described in (\DoPr).
\begin{restatable}{proposition}{PrecondInvariance}\label{prop:precond invariance}
    Consider a feedforward network, and consider the weights and activations $\scurly{(\bW_\ell, \bz_\ell)}_{\ell\in[L]}$ under non-degenerate layer-wise affine transforms defined in \eqref{eq:affine transform}. Then, given the following layer-wise update rule:
    \iftoggle{arxiv}
    {
    \begin{align*}
        \bW^{\rmnext} = \bW - \eta \nabla_{\bW} \cL(f_{\btheta}), \; \bSigma_{\bz}^{-1},\bSigma_{\bz}= \Exp[\bz \bz^\top], \tag{\AP} 
    \end{align*}
    }
    {
    $\bW^{\rmnext} = \bW - \eta \nabla_{\bW} \cL(f_{\btheta})\bSigma_{\bz}^{-1}$, $\bSigma_{\bz}= \Exp[\bz \bz^\top]$ {\normalfont(\AP)},
    }
    the updated weights satisfy $\bW^{\rmnext}_\ell \bz_\ell = \bWbar^{\rmnext} \bzbar$ for all $\ell \in [L]$, $\bz_1=\bx,\bzbar[1] = \bA_1 \bx$.\nvsp[0.2]
\end{restatable}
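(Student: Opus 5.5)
The plan is to verify the invariance layer by layer, using two facts. First, before the update the transformed network computes the same pre-activations at every layer: $\bWbar[\ell]\bzbar = \bW_\ell\bA_\ell^{-1}\bA_\ell\bz_\ell = \bW_\ell\bz_\ell$. Hence every downstream quantity coincides, in particular the loss and the backpropagated signals $\partial L/\partial\overline{\bh}_\ell = \partial L/\partial\bh_\ell$. Second, we need to know how the gradient and the activation covariance transform under \eqref{eq:affine transform}. I interpret the statement as saying that, with the transformed activations held fixed (that is, $\bzbar = \bA_\ell\bz_\ell$ for the pre-update activations, which is how $\bzbar[\ell]$ is defined), each layer's updated map sends corresponding inputs to identical outputs. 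Since all updates use only pre-update quantities, the layers decouple and no induction over $\ell$ is needed for the gradient computations.

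Fix a layer $\ell$ and drop the index. As in \Cref{rem:coordinate dependence}, backpropagation gives $\nabla_{\bW}\cL = \Exp[\frac{\partial L}{\partial\bh}\bz^\top]$. By the first fact,
\begin{align*}
\nabla_{\bWbar[]}\cL = \Exp\Big[\frac{\partial L}{\partial\bh}\bzbar[]^\top\Big] = \nabla_{\bW}\cL\,\bA^\top,
\end{align*}
and the covariance transforms as $\bSigma_{\bzbar[]} = \Exp[\bA\bz\bz^\top\bA^\top] = \bA\bSigma_{\bz}\bA^\top$. We need $\bSigma_{\bz}$ to be invertible, which is implicit in the \AP rule, and then $\bSigma_{\bzbar[]}^{-1} = \bA^{-\top}\bSigma_{\bz}^{-1}\bA^{-1}$ because $\bA$ is non-degenerate. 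This gives the key identity
\begin{align*}
\nabla_{\bWbar[]}\cL\,\bSigma_{\bzbar[]}^{-1} = \nabla_{\bW}\cL\,\bA^\top\bA^{-\top}\bSigma_{\bz}^{-1}\bA^{-1} = \big(\nabla_{\bW}\cL\,\bSigma_{\bz}^{-1}\big)\bA^{-1}.
\end{align*}
So the \AP direction transforms exactly like the weights, and therefore $\bWbar[]^{\rmnext} = \bWbar[] - \eta\,\nabla_{\bWbar[]}\cL\,\bSigma_{\bzbar[]}^{-1} = \bW^{\rmnext}\bA^{-1}$. Multiplying on the right by $\bzbar[] = \bA\bz$ yields $\bWbar[]^{\rmnext}\bzbar[] = \bW^{\rmnext}\bz$ for every $\ell$. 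Note that this relation holds for every input $\bz$, since it comes from $\bWbar[]^{\rmnext} = \bW^{\rmnext}\bA^{-1}$.

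It remains to conclude the network-level statement. Because $\bWbar[\ell]^{\rmnext} = \bW^{\rmnext}_\ell\bA_\ell^{-1}$ holds at all layers, we induct forward from $\bzbar[1] = \bA_1\bx$. Suppose the updated networks satisfy $\bzbar^{\rmnext} = \bA_\ell\bz_\ell^{\rmnext}$. Then the updated pre-activations agree at layer $\ell$, so after applying $\phi$ the next layer's raw activations agree. For this hypothesis to propagate we must read \eqref{eq:affine transform} as a reparametrization in which $\bA_{\ell+1}$ is absorbed into the layer: the transformed network applies $\bA_{\ell+1}$ after $\phi$, or equivalently the transform is built into $\bWbar[\ell+1]$. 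This is the only delicate point, namely making precise that the transform on $\bz_{\ell+1}$ is part of the transformed architecture rather than produced by the previous layer. The main obstacle is thus this bookkeeping, not the algebra, and I would handle it by stating that convention explicitly. The core of the proof is the covariance and gradient transformation identity above, which is short.
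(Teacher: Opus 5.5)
Your proposal is correct and takes essentially the same route as the paper. Both compute $\nabla_{\overline{\mathbf{W}}}\mathcal{L} = \nabla_{\mathbf{W}}\mathcal{L}\,\mathbf{A}^\top$ and $\bSigma_{\overline{\mathbf{z}}}^{-1} = \mathbf{A}^{-\top}\bSigma_{\mathbf{z}}^{-1}\mathbf{A}^{-1}$, so the \AP update acquires exactly a right factor $\mathbf{A}^{-1}$; your explicit convention (the fixed map $\mathbf{A}_{\ell+1}$ applied after $\phi$) and the forward induction from $\overline{\mathbf{W}}_\ell^{\mathrm{next}} = \mathbf{W}_\ell^{\mathrm{next}}\mathbf{A}_\ell^{-1}$ just make rigorous the network-level step the paper dispatches with ``the argument holds for arbitrary $\ell$.''
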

In other words, an \AP update for a given layer architecture is one that induces invariance to affine transforms. Thus, \AP updates for convolutional or self-attention layers can be similarly derived; see \Cref{appdx:activation invariance}. This invariance principle can be viewed as a customization of \citet[Section 10]{martens2015optimizing}, itself a specialization of the natural-gradient method's parameterization invariance; see \Cref{appdx:extended related}. We emphasize that the activation-invariance of \AP is an \emph{exact} property of the optimizer trajectory (see \Cref{fig:AP invariance property}) and \Cref{prop:precond invariance} \emph{does not} imply accelerated loss convergence.

\iftoggle{arxiv}{
\begin{figure}
  \begin{center}        \includegraphics[width=0.85\linewidth]{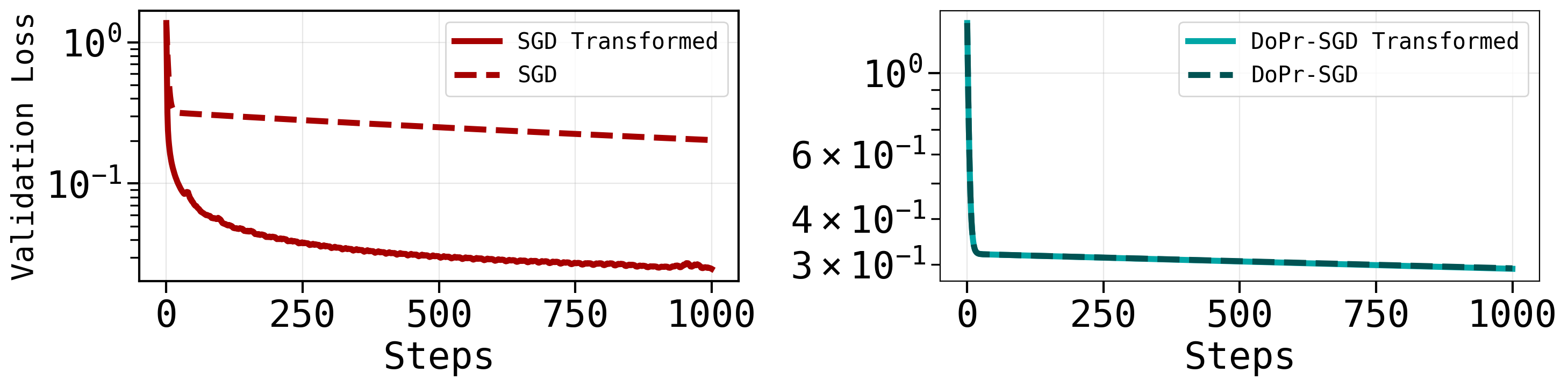}
    \caption{
      When an affine transform is applied to the input distribution, with the initial weights transformed accordingly \eqref{eq:affine transform}, the \SGD trajectories (\textbf{left}) diverge, while the \DoPe-\SGD trajectories (\textbf{right}) match exactly, demonstrating the invariance induced by \DoPe under affine transforms (\Cref{prop:precond invariance}). See \Cref{appdx:activation invariance} for experiment details.
    }
    \label{fig:AP invariance property}
  \end{center}
  \vspace{-0.7cm}
\end{figure}
}{}

\iftoggle{arxiv}{
\begin{remark}\label{remark:isotropy}
    Isotropic activations have been understood to possess desirable properties for enabling feature learning, and thus are often (approximately) \emph{enforced} by normalization layers such as \texttt{BatchNorm} \citep{ioffe2015batch}, \texttt{LayerNorm} \citep{ba2016layer} etc. However, misplacement of normalization layers may have unintended consequences on the conditioning and expressivity of the network. The \AP gradient provides a complementary approach: the effective model change under (\AP) evolves \emph{as if the activations were isotropic}, where $\nabla_{\bW} \cL(f_{\btheta}) \Sigma_{\bz}^{-1} = \nabla_{\bW} \cL(f_{\btheta})$.
\end{remark}
}{}

\nvsp[0.3]
\subsection{Gradient-Preconditioning and Hyperparameter Scaling for \DoPe}\label{sec:hyperparam scaling}
\nvsp[0.3]

\begin{figure}[t]
    \centering
    \includegraphics[width=0.48\linewidth]{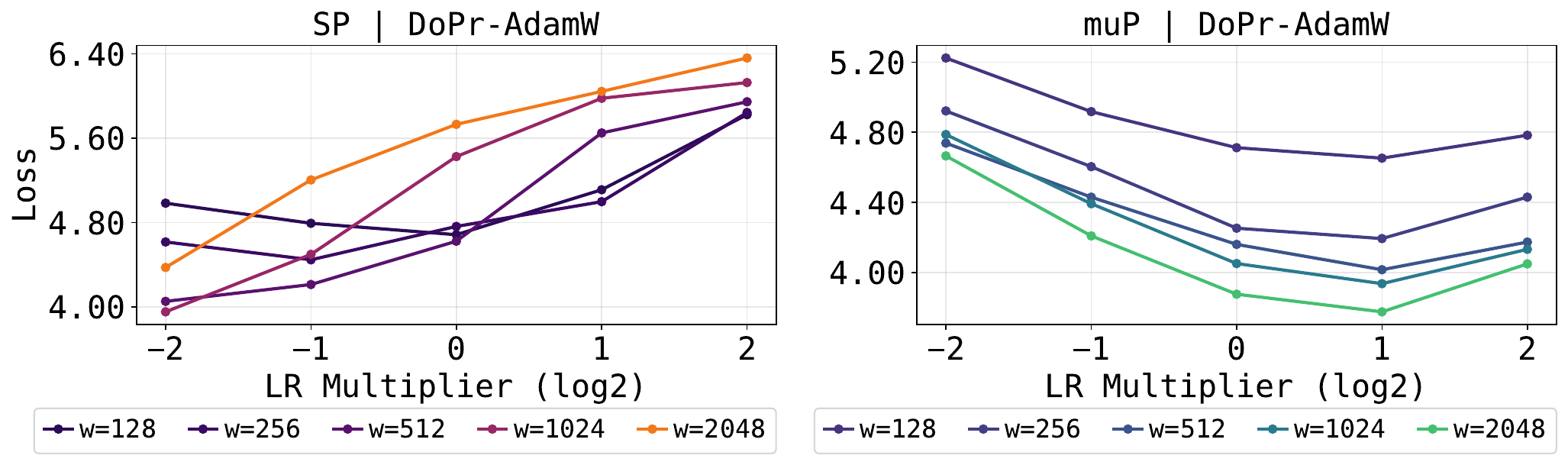}
    \hspace{0.2cm}
    \includegraphics[width=0.48\linewidth]{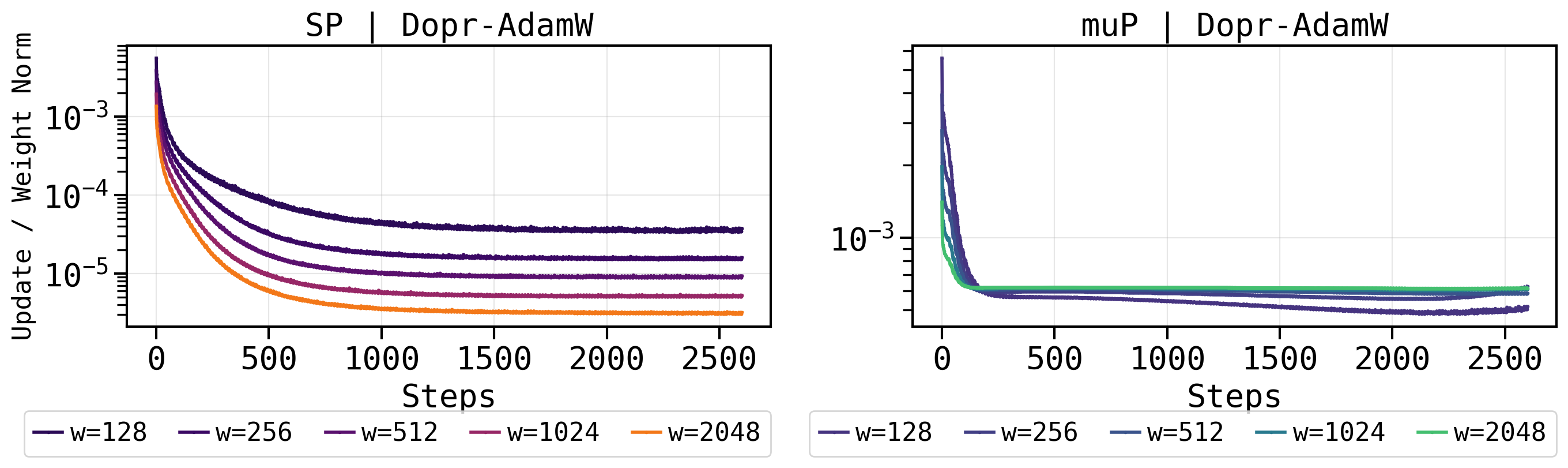}
    \caption{
\textbf{$\mu$P scaling behavior.}
\textbf{Left:} \DoPe-\AdamW's scaling trends under standard (SP) and \AdamW's $\mu$P parameterizations on a GPT2 model. We find base \AdamW's $\mu$P-scaling also enables hyperparameter transfer for \DoPe-\AdamW.
\textbf{Right:} update-to-weight norm ratio scaling trend, under standard constant weight decay (SP) and \AdamW's weight decay $\mu$P scaling. See \Cref{appdx:mup} for full details.
}
    \label{fig:mup_nanogpt_adamw}
    \vspace{-0.7cm}
\end{figure}

Determining favorable hyperparameters for large-scale training runs on a novel optimizer can prove challenging. This is a key motivation behind the ``maximum-update parameterization'' ($\mu$P)  \citep{yang2022tensor, dey2025don}, where the broad goal is to make hyperparameter choices invariant across network scales (e.g., width and depth); see \Cref{appdx:mup} for full discussion. 
Once made scale-invariant, hyperparameters can be tuned at small-scale and zero-shot transferred to the large-scale run. Recent work has demonstrated that in practice $\mu$P is largely determined by coarse statistics such as the magnitude of the update direction \citep{yang2023spectral, hong2025provable}. Thus, in addition to the practical stabilizing and accelerating properties of \GP{s}, a key benefit of using \GP{s} in \DoPe is as follows: whereas deriving scaling rules for a new optimizer can often be laborious, the \emph{normalizing} property of \GP trivializes this process for \DoPe.

\begin{observation}
    Normalization by definition returns outputs of the same norm regardless of input. Thus, substituting the raw gradient with the \AP gradient does not affect the magnitude of the update direction. Consequently, \DoPe hyperparameter scaling rules can be ported directly from established rules based on the \GP of choice, e.g. \Adam, \Muon, \Shampoo etc. \citep{everett2024scaling, qiu2025hyperparameter}.
\end{observation}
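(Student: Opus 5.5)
The plan is to make the observation precise by modeling each \GP as a normalization (steepest-descent) map, as in the framing of \citet{bernstein2024old, pethick2025lmo} recalled in \Cref{sec:derivation}. Concretely, I would write $\texttt{GP}(\bM) = \argmax_{\norm{\bD} \le 1} \ip{\bD, \bM}$ for a layer-wise norm $\norm{\cdot}$ that depends only on the layer shape. The key step is then immediate: for any nonzero input $\bM$, the maximizer lies on the boundary of the unit ball, so $\norm{\texttt{GP}(\bM)} = 1$. This holds whether $\bM$ is the raw gradient $\bG$ or the \AP gradient $\bG\,(\hat\bSigma_{\bz} + \gamma\,\trace(\hat\bSigma_{\bz})\bI)^{-1}$. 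The damping term guarantees the preconditioner is invertible, so $\bM \neq \bzero$ whenever $\bG \neq \bzero$.

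I would then check this explicitly for the two representative \GP{s}, to avoid relying only on the abstract argument.
\begin{itemize}
\item \textbf{Sign descent} ($\norm{\cdot}_\infty$, the momentum-free core of \Adam). Here $\bD = \mathrm{sign}(\bM)$. Every entry is $\pm 1$, apart from measure-zero ties, so $\norm{\bD}_\fro = \sqrt{mn}$ and the per-entry RMS equals $1$ regardless of $\bM$.
\item \textbf{Spectral descent} ($\norm{\cdot}_{\RMS-\RMS}$, \Muon). Here $\bD = \sqrt{d_{\out}/d_{\inp}}\,\bU\bV^\top$, where $\bU\bV^\top$ is the polar factor of $\bM$. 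All nonzero singular values of $\bU\bV^\top$ equal one, so the spectral norm of $\bD$ is $\sqrt{d_{\out}/d_{\inp}}$, independent of $\bM$.
\end{itemize}
Since the update is $\bW^{(k)} = (1-\eta\lambda)\bW^{(k-1)} - \eta\bD$, the update magnitude $\eta\norm{\bD}$ is therefore a function of $\eta$ and the layer shape alone.

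Next I would connect this to hyperparameter transfer. Following \citet{yang2023spectral}, $\mu$P is characterized by a spectral condition on the update: $\norm{\Delta\bW_\ell}_{\mathrm{op}} = \Theta(\sqrt{d_{\out}/d_{\inp}})$. For normalized \GP{s}, established rules \citep{everett2024scaling, qiu2025hyperparameter} enforce this condition by shape-dependent multipliers on $\eta$ and $\lambda$. Those multipliers were derived using only the quantity $\norm{\bD}$ computed above. Because that quantity is unchanged when $\bG$ is replaced by $\bM$, the same multipliers enforce the same condition for \DoPe.

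The main obstacle is that practical \GP{s} are not exactly normalizations.
\begin{itemize}
\item \textbf{Momentum and \Adam's second-moment EMA.} With these, $\bD$ is only bounded in magnitude rather than exactly unit-norm. For these I would first show a two-sided bound: each entry $|m/\sqrt{v}| \le C$ for fixed $\beta$'s, giving a uniform upper bound, together with a matching lower bound in the steady regime. Both bounds are invariant to rescaling of $\bM$, so they again depend only on shape.
\item \textbf{Alignment between $\Delta\bW$ and $\bz$.} $\mu$P implicitly also requires that $\norm{\Delta\bW\,\bz}$ scale like $\norm{\Delta\bW}\norm{\bz}$. The \AP whitening can change this alignment, and it is not controlled by the norm argument.
\end{itemize}
I would state this alignment assumption explicitly as a heuristic. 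The observation is thus rigorous for the magnitude claim, and for transfer I would defer to the empirical confirmation in \Cref{fig:mup_nanogpt_adamw}.
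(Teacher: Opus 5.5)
Your formalization of the first two sentences is fine. Modeling the \GP as a norm-ball LMO, $\texttt{GP}(\bM)$ has unit norm in the \GP's own geometry whether it is fed $\bG$ or the damped \AP gradient, and your sign and polar-factor checks are correct.

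The gap is in the step that carries the actual content of the observation: transfer of the scaling rules. You assert that the established multipliers ``were derived using only $\norm{\bD}$.'' For \Muon this is essentially right, since its normalization norm is a rescaled operator norm, which is the quantity in the spectral condition. For entrywise \GP{s} it is false. Every $\pm 1$ matrix $\bD\in\R^{m\times n}$ has $\norm{\bD}_{\fro}=\sqrt{mn}$, yet $\opnorm{\bD}$ can lie anywhere in $[\sqrt{\max(m,n)},\sqrt{mn}]$. \Adam's $\eta\propto 1/d_{\inp}$ rule comes precisely from $\mathrm{sign}(\bG)$ being low-rank and correlated with the incoming activation, so that $\bD\bz$ has $\Theta(d_{\inp})$ entries.

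This is exactly the structure \AP alters. Take batch activations $\bZ=\bP\bS\bQ^\top$, backpropagated errors $\bE$, and damping $c=\gamma\,\trace(\hat{\bSigma}_{\bz})$. Then $\bG\propto\bE\bQ\bS\bP^\top$ but $\bM\propto\bE\bQ\,\bS(\bS^2/n+c\bI)^{-1}\bP^\top$. So, up to damping, \AP inverts the activation spectrum, and the effective rank and activation-alignment of $\mathrm{sign}(\bM)$ need not match those of $\mathrm{sign}(\bG)$. Even for \Muon, the spectral condition gives maximal updates only if $\norm{\bD\bz}\asymp\opnorm{\bD}\norm{\bz}$. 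You notice this alignment issue yourself but defer it to experiments, so the ``consequently'' is never actually argued.

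The paper fills this step with a one-step, batch-size-one computation of the pre-activation change $\eta\bD\bz$, which is the quantity $\mu$P controls.
\begin{itemize}
\item For \DoPe-\Muon it treats $\bD$ as orthonormal, so $\bD\bz$ has $\Theta(1)$ entries at width-independent $\eta$.
\item For \DoPe-\AdamW it drops momentum, which does not affect $\mu$P, so your two-sided EMA bound is unnecessary. It writes $\bG=\delta\bz^\top$ and $\bSigma=\bz\bz^\top$, and argues that $\mathrm{sign}(\bM)$ has $\Theta(1)$ entries correlated with $\bz$. Hence $\bD\bz=\Theta(w)$ by the law of large numbers and $\eta\propto w^{-1}$, as for \AdamW.
\end{itemize}
To close your argument, you need an alignment statement of this kind in place of ``only $\norm{\bD}$ matters.''

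In the paper's setting this is in fact exact. $\bz$ is an eigenvector of $\bz\bz^\top+\rho\norm{\bz}^2\bI$ with eigenvalue $(1+\rho)\norm{\bz}^2$, so $\bM=\bG/\big((1+\rho)\norm{\bz}^2\big)$ is a positive multiple of $\bG$. Any positively scale-invariant \GP (sign, polar factor) then gives $\texttt{GP}(\bM)=\texttt{GP}(\bG)$, hence the same $\bD\bz$ and the same multipliers. Beyond batch size one, both your write-up and the paper's rest on heuristics plus the empirical transfer in \Cref{fig:mup_nanogpt_adamw}.
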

\nvsp
We exhibit the immediate transfer of hyperparameter scaling rules in \Cref{fig:mup_nanogpt_adamw}, where we show learning rate and weight decay scaling transfer.

\begin{AIbox}{Key Takeaways}
    \begin{enumerate}[
    noitemsep,
    topsep=-5pt,
    parsep=0pt,
    partopsep=0pt,
    left=2pt]
        \item As shown in \Cref{sec:ttf}, what is crucial is not necessarily the training objective, but rather the error directions most exacerbated under TTF. Thus, the design of \DoPe aims to \textbf{decouple uniform feature learning and accelerated training}/validation loss convergence.
        \item Activation-preconditioning (\AP) debiases the raw gradient direction from the statistics of the activations, which encourages \textbf{uniform feature learning} in all error directions, including those that are underweighted for the training objective but sensitive under TTF.
        \item Gradient-preconditioning (\GP), such as in \Adam and \Muon, stabilizes the \AP gradient, accelerating training, while preserving the \AP geometry. Furthermore, \GP-ed updates allow \textbf{seamless transfer of existing hyperparameter scaling} rules.
    \end{enumerate}
\end{AIbox}

\iftoggle{arxiv}{}{\nvsp}

\section{Capabilities}\label{sec:capabilities}
\iftoggle{arxiv}{}{\nvsp}

In \Cref{sec:derivation}, we derive \DoPr to address the poor feature learning of standard optimizers in TTF settings, with the promise of inducing better downstream behavior. To demonstrate this, we run experiments on diverse tasks and metrics across continuous control, robot policy learning, and LLM training.
As noted in \Cref{sec:AP general architectures}, \DoPe \textbf{need not improve the train/val loss convergence} compared to the base \GP, which we will revisit across our experiments.
We also provide in \cref{app:best_practices} general operating guidelines of our optimizer used in the ensuing experiments.

\nvsp[0.3]
\subsection{Setting the Stage: Drop-in for Continuous Control}\label{sec:mujoco exps}
\nvsp[0.2]

We first evaluate \DoPe for imitation learning in Gymnasium \citep{towers2025gymnasiumstandardinterfacereinforcement}, which precisely aligns with our formalism of behavior cloning in MDPs. Here, we focus on the \texttt{Humanoid-v5} task and refer to \Cref{app:state-based-IL} for full details and further experiments.
We consider four \GP primitives: \Adam, \Muon, \Signum, and \AdaMuon, that roughly cover instantaneous versus momentumized entrywise- and matrix-based normalization. We use a residual MLP architecture \citep{he2015deepresiduallearningimage}, and conduct a full sweep over the optimizer hyperparameters and report the results over independent evaluation trajectories and seeded training runs. We additionally use an EMA-ed copy of the policy parameters \citep{block2024butterfly} for evaluation, which has been shown to be crucial for stabilizing BC policy performance in continuous control. We observe from \Cref{fig:humanoid_il} that: 1.\ different base \GP{s} all have similar terminal rewards, 2. \DoPe always improves terminal reward compared to the baseline counterparts, 3. the train/validation losses of \DoPe variants are not uniformly better than the base \GP. 
\begin{figure}[t]
    \centering
    \includegraphics[width=0.9\linewidth]{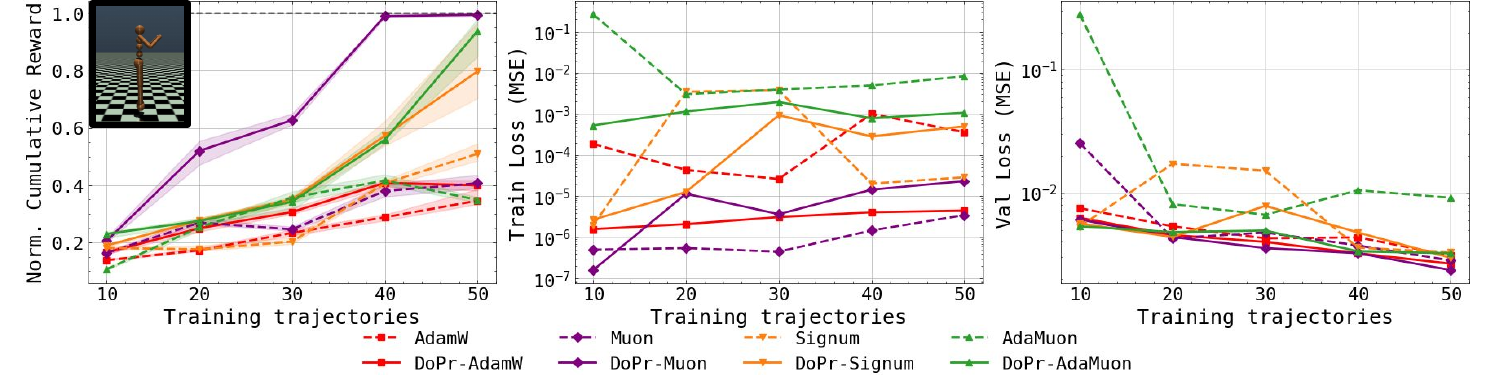}
    \caption{\textbf{Humanoid-v5} \DoPe performance across \AdamW, \Muon, \Signum, and \AdaMuon. \DoPe variants attain higher terminal reward which does not consistently correlate with train or validation loss improvements.}
    \label{fig:humanoid_il}
    \vspace{-0.5cm}
\end{figure}

\nvsp[0.2]
\subsection{Image-Based Robot Policy Learning}
\nvsp[0.2]

\begin{figure}[b]
    \centering
    \includegraphics[width=\iftoggle{arxiv}{0.9\linewidth}{0.82\linewidth}]{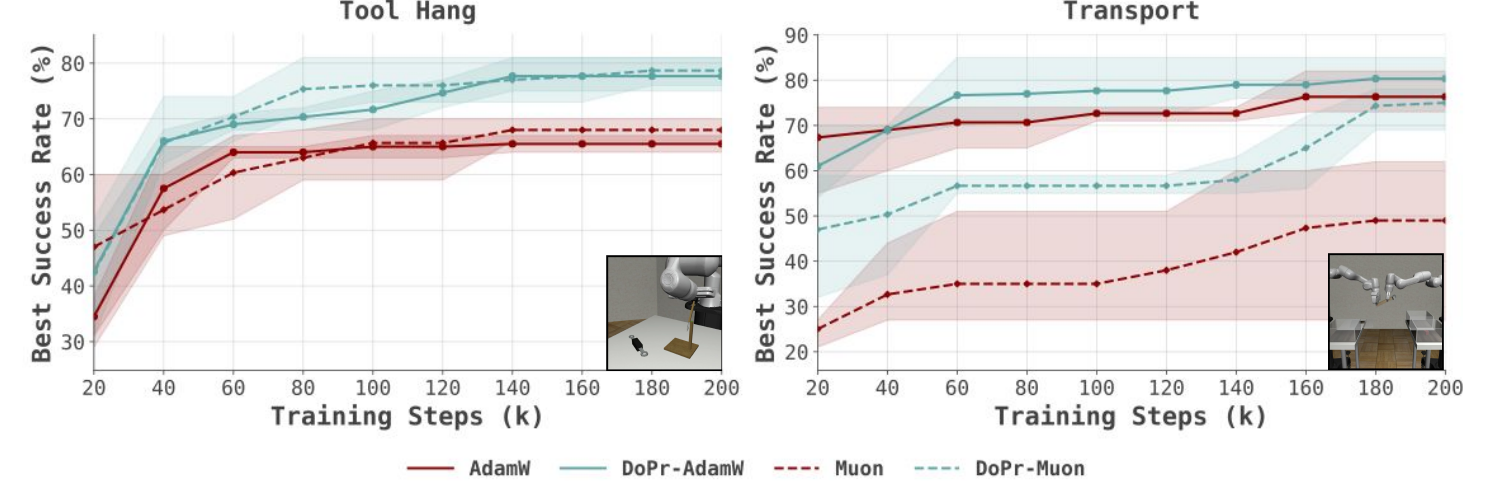}
    \caption{\texttt{Tool Hang (PH)} and \texttt{Transport (PH)} Best Success Rate for \texttt{AdamW}, \texttt{Muon}, and \DoPe variants. Each curve shows the min/median/max over 3 random seeds. \DoPe-variants outperform their baselines.
  }
    \label{fig:robomimic_merged_new.pdf}

    \vspace{-0.6cm}
\end{figure}
We evaluate \DoPe on pixel-based imitation learning with generative policies on \texttt{Robomimic} tasks. We focus on challenging tasks \texttt{Tool-Hang} Proficient-Human \texttt{(PH)} and \texttt{Transport} \texttt{(PH)}, which are not solvable to $\sim100\%$ under modern BC recipes \citep{pan2025much}, and test complementary capabilities: dexterous precision in \texttt{Tool-Hang} and long-horizon coordination in \texttt{Transport}.
Following \citet{pan2025much}, we train a flow-based policy \citep{lipman2023flow} with a U-Net backbone \citep{chi2023diffusion}. As in \Cref{sec:mujoco exps}, we apply model EMA to stabilize policy performance across all settings.
Tuned hyperparameters are listed in \cref{tab:toolhang_hparams,tab:transport_hparams}, and full experiment details are in \cref{app:image-based-il}.
We report the best checkpoint success rate across training in \Cref{fig:robomimic_merged_new.pdf}. 
We observe that \DoPe invariably improves the task success rate compared to the base \GP{s} $\AdamW$ and $\Muon$, while yielding worse or equal training loss---see \Cref{fig:robomimic_image_based_train_loss}. Notably, we see neither base \GP is better than the other on both tasks simultaneously.

\vspace{-0.2cm}
\subsection{Language Models}
\label{sec:sft}
\nvsp[0.2]

We finetune an LLM for mathematical reasoning, where training optimizes token likelihood but task performance depends on sequence accuracy. \Cref{app:sft_details} contains full experimental details.

\vspace{-0.2cm}
\icmlpar{3B Supervised Fine-tuning.}
To characterize the behavior of \DoPe, we first run a smaller-scale SFT experiment with the \texttt{Llama-3.2-3B} base model \citep{llama3modelcard, grattafiori2024llama}, using \texttt{LoRA} \citep{hu2022lora} on a 100K-sample subset of OpenMathInstruct-2 \citep{toshniwal2024openmathinstruct} for one epoch. This experiment probes learning-rate sensitivity, sample efficiency, and the relationship between downstream accuracy and token-level training loss. We report our results in \Cref{fig:sft}. Across most learning rates, \DoPe improves peak \texttt{GSM8K} performance. Notably, these improvements are \emph{not explained by lower token-level training loss}: the final token-level training loss of \DoPe is comparable to, or higher than, that of the base \GP. We report similar trends for \Muon (\Cref{fig:sft-3b-muon}) in \Cref{appdx:sft-3b-addres}.
\begin{figure}[t]
\nvsp
  \centering
  \setlength{\tabcolsep}{1pt}
  \renewcommand{\arraystretch}{0.05}

  \begin{tabular}{@{}cc@{}}
  \iftoggle{arxiv}{
    \includegraphics[width=0.45\linewidth]{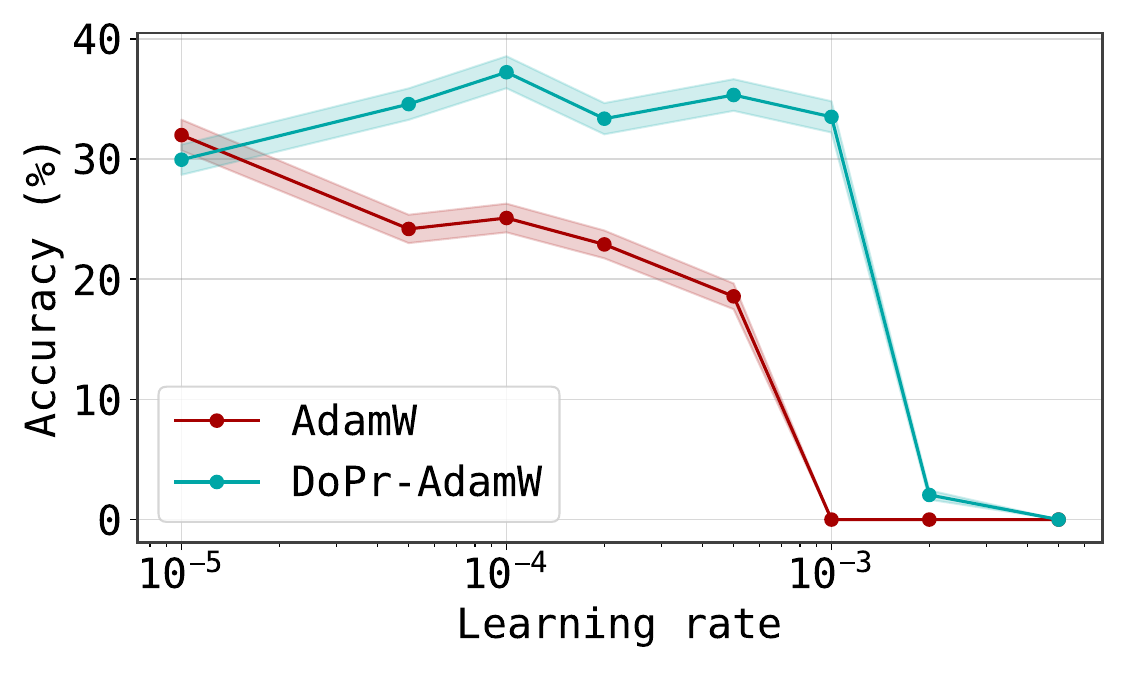}
    &
    \includegraphics[width=0.42\linewidth]{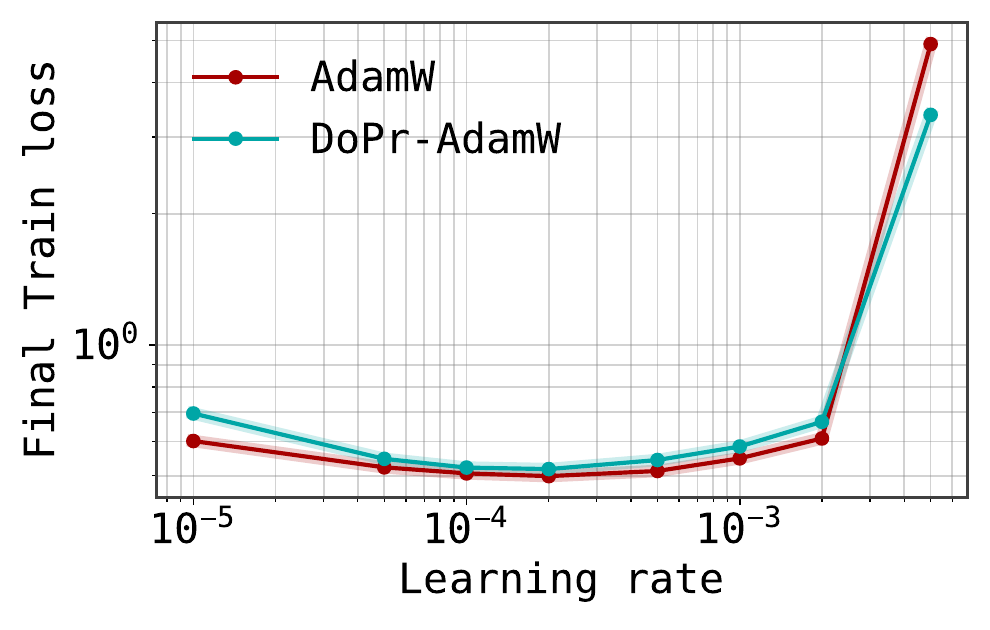}
  }
  {
  \includegraphics[width=0.4\linewidth]{figs/figs_sft/gsm8k__adamw_vs_precondadamw__peak_over_steps_vs_lr.pdf}
    &
    \includegraphics[width=0.38\linewidth]{figs/figs_sft/trainloss_vs_lr_adamw_vs_precondadamw.pdf}
  }
  \end{tabular}

  \vspace{-0.15cm}
  \caption{
  \texttt{GSM8K} \textbf{3B SFT sweep}.
  Peak \texttt{GSM8K} accuracy across training steps vs.\ learning rate, comparing \AdamW with \DoPe-\AdamW. The comparison between \Muon and \DoPe-\Muon is provided in \Cref{appdx:sft-3b-addres}.
  }
  \label{fig:sft}
  \vspace{-0.5cm}
\end{figure}

\nvsp[0.2]
\vspace{-0.2cm}
\icmlpar{8B Supervised Fine-tuning: TTF at Larger Scale.}
Following the core SFT setup of \texttt{OpenMathInstruct-2} \citep{toshniwal2024openmathinstruct}, we fine-tune \texttt{Llama-3.1-8B} \citep{llama3modelcard, grattafiori2024llama} with LoRA \citep{hu2022lora} on the \texttt{OpenMathInstruct-2} \texttt{train\_1M} split for two epochs. We sweep learning rates for \AdamW and \DoPe-\AdamW, and evaluate the final checkpoints on \texttt{GSM8K}, \texttt{GSM8K-CoT}, and \texttt{MATH-500}, and estimate validation NLL on a 10\texttt{K} held-out subset of the SFT data.
We examine in \Cref{fig:sft-8b} how validation loss corresponds to downstream performance. For \DoPe-\AdamW, lower validation NLL generally corresponds to stronger downstream accuracy, especially on \texttt{GSM8K-CoT} and \texttt{MATH-500}. This is desirable, as holding data and architecture equal, one would expect a model with the best held-out NLL also has the best performance. However, this is \textbf{not true} for baseline \AdamW. For larger learning rates, \AdamW-trained models predictably improve in-distribution validation loss 
but task performance (e.g., \texttt{GSM8K}) degrades. This illustrates our thesis that, without intervention, validation loss can be a poor proxy for full-sequence performance. Finally, \DoPe-\AdamW also reduces cross-task conflict: optimal learning rates on \texttt{GSM8K-CoT} are also (near-)optimal on \texttt{MATH-500}, whereas no \AdamW checkpoints are simultaneously near-optimal for \texttt{GSM8K} and \texttt{MATH-500} performance. This suggests that, as previewed in \Cref{sec:TTF feature learning}, \DoPr encourages more robust feature learning.
Additional results from the full sweep can be found in \Cref{appdx:sft-8b}.
\begin{figure}[h]
  \centering    
  \iftoggle{arxiv}{\includegraphics[width=\linewidth]{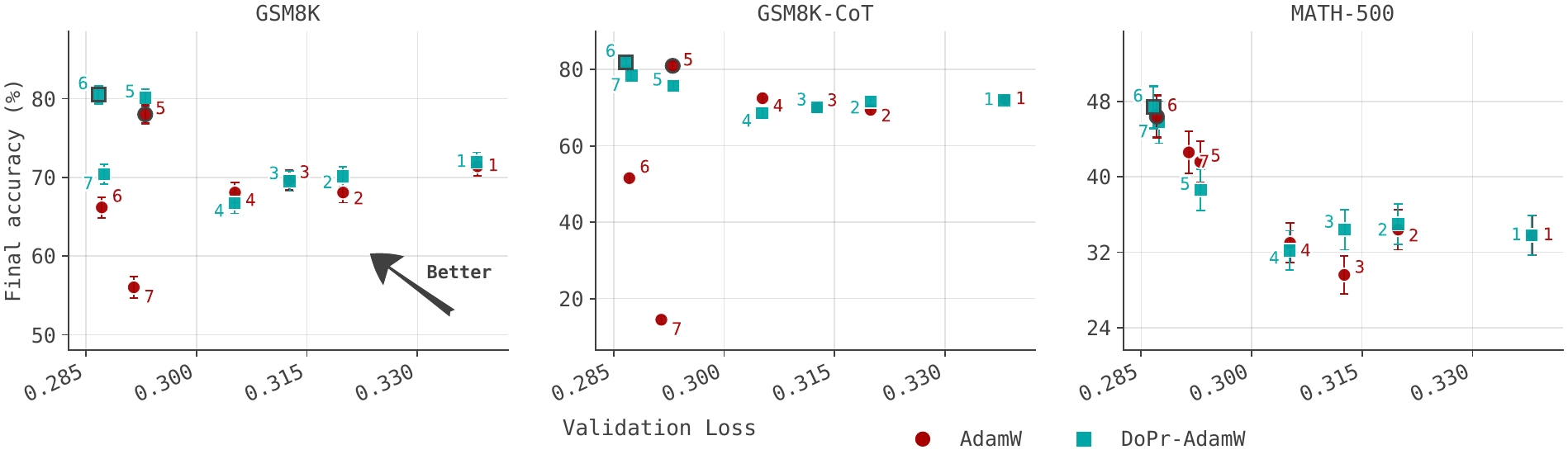}}
  {\includegraphics[width=0.9\linewidth]{figs/figs_sft/final_validation_nll_vs_final_accuracy.pdf}}
  \caption{
  \textbf{8B SFT sweep}.
   We plot final accuracy on \texttt{GSM8K}, \texttt{GSM8K-CoT}, and \texttt{MATH-500} against validation loss (NLL), computed on a 10K held-out subset of the training data. 
   \iftoggle{arxiv}{Each point is a final checkpoint from the learning-rate sweep while label numbers denote index of learning rates sorted by size: \texttt{2e-5, 5e-5, 7e-5, 1e-4, 2e-4, 5e-4, 7e-4}.}{Each point is a checkpoint from a learning-rate sweep with labels indexing by learning rate size (small to large).} Error bars indicate one standard error.
  }
  \label{fig:sft-8b}
  \vspace{-0.4cm}
\end{figure}

We lastly remark additional experiments and discussion on flow-based generative modeling, another TTF setting, can be found in \Cref{app:gen_modeling}.

\begin{AIbox}{Key Takeaways}
    \begin{enumerate}[
    noitemsep,
    topsep=-5pt,
    parsep=0pt,
    partopsep=0pt,
    left=2pt]
        \item We identify across the different TTF domains such as robot policy learning and language modeling a mismatch between train/validation loss and downstream performance metrics.

        \item We find that wrapping \DoPr around existing \GP{s} yields plug-in improvements on downstream performance, \textbf{despite not necessarily accelerating train/validation loss}, indicating an actionable facet of feature learning that is decoupled from loss convergence.

        \item \DoPr may enable commensurate scaling between training objective and downstream performance.

        \item As predicted in \Cref{sec:hyperparam scaling}, optimal \GP hyperparameters in \DoPr are inherited from the base \GP.
    \end{enumerate}
\end{AIbox}

\iftoggle{arxiv}{
}
{%
    \icmlpar{Generative Modeling.} We also view generative modeling as a TTF setting and report \DoPe results on score-based ImageNet-$256$ generation in \Cref{app:gen_modeling} due to space constraints.
}

\nvsp
\vspace{-0.3cm}
\section{Discussion}
\vspace{-0.3cm}
\nvsp

Toward the ultimate goal of optimizing for downstream performance, we: 1.\ introduce Test-Time Feedback as a unified setting for many modern applications, 2.\ identify the mismatch between features that accelerate training convergence versus those sensitive under TTF, 3.\ prescribe Double Preconditioning (\DoPe) that combines Activation Preconditioning (\AP) to equalize feature learning and Gradient Preconditioning (\GP) to stabilize training. We provide evidence for TTF and \DoPe being useful abstractions via experiments across distinct TTF applications. Importantly, we find that improved downstream performance may not accompany improved train/validation loss, suggesting that there remains a viable design space for improving deep learning optimization that is orthogonal to accelerating loss convergence. Notably, our proposed recipe of combining activation- and gradient-preconditioning is just one candidate of double preconditioning; we posit that decoupling considerations for directionality (e.g., \AP) and adaptivity (e.g., \GP) may lead to fruitful optimizer design. We conclude with a question:
\begin{bluequote} Given the incommensurability between validation loss and downstream performance under TTF, we conclude by asking: \textbf{is there still an optimizer free-lunch left on the table?}
\end{bluequote}

\clearpage
\nvsp
\section*{Acknowledgments}
\nvsp

TZ gratefully acknowledges a gift from AWS AI to Penn Engineering's ASSET Center for Trustworthy AI. TZ and NM are supported in part by NSF Award SLES-2331880, NSF CAREER award ECCS-2045834, NSF EECS-2231349, and AFOSR Award FA9550-24-1-0102. MS acknowledges support from a Google Robotics Award, and a Jane Street Fellowship. MS and TZ also thank Jeremy Cohen, Alex Damian, Hamed Hassani, and Behrad Moniri for helpful conversations.

\bibliographystyle{plainnat}
\bibliography{refs, refs_ZM2025}

\clearpage
\appendix
\crefalias{section}{appendix}
\crefalias{subsection}{appendix}
\crefalias{subsubsection}{appendix}

\tableofcontents
\section{Extended Related Work}\label{appdx:extended related}

\icmlpar{Imitation learning and the compounding errors problem.} The compounding errors problem has been recognized as a fundamental issue in policy learning from demonstration data \citep{pomerleau1988alvinn}, where small one-step learning errors may shift the learned model away from the offline data distribution it was trained on, leading to potentially catastrophic failure. Traditional approaches to mitigating compounding error come from modifying data collection, either through iterative online collection of recovery behavior \citep{ross2011reduction, kelly2019hg, foster2024behavior} or noising action execution \citep{laskey2017dart, zhang2025action} to better regularize the policy geometry locally around the data distribution. More recently, modifying \emph{policy parameterization} has been explored as an intervention for enabling consistent longer-horizon sequential deployment, such as the use of iteratively supervised methods (e.g., diffusion and flow-based parameterizations) \citep{chi2023diffusion, pan2025much} and action-chunking (i.e., multi-step prediction and execution) \citep{shafiullah2022behavior, chi2023diffusion, liu2025bidirectional}. Therefore, while data-collection and architecture design have been the primary avenues for mitigating compounding error, comparatively less focus has been directed to the role of the deep learning \emph{optimizer}. An exception of note is \citet{block2024butterfly}, which identifies that small loss fluctuations in gradient-based neural network training can cause disproportionately large or even oscillatory performance fluctuations in behavior cloning settings, precisely aligning with our core thesis that small error directions as measured by training loss can be disproportionately harmful when rolled out. Their proposed intervention is a variance-reduction approach via maintaining an EMA-ed copy of the model parameters for evaluation, which is complementary to our preconditioning approach, and as such we adopt in our continuous-space BC experiments. In most other literature, the optimizer is obviated as an oracle yielding (approximate) empirical risk minimizers (see e.g., \citet{ross2011reduction, foster2024behavior, zhang2025action}). However, the role of specifically the deep learning optimizer has received comparably more attention in deep reinforcement learning, likely due to the compounding difficulties RL presents over supervised learning, such as input- and target- nonstationarity \citep{lyle2022learning, sokar2023dormant, castanyer2025stable}. In this paper, we expose the potential suboptimalities of a wide range of deep learning optimizers that arise \takeawaybold{even in comparatively simpler offline supervised TTF settings}, such as behavior cloning.

\icmlpar{Fisher Information, \KFAC, and prior occurrences of Activation Preconditioning.} The Fisher Information matrix is a core object in mathematical statistics. When used as a preconditioner for local optimization in maximum likelihood estimation, the resulting method is known as the \emph{natural gradient method}. From this perspective, the Fisher Information in some cases is related to other curvature matrices such as the Hessian (or the Generalized Gauss-Newton approximation thereof), thus in settings where feasibly implementable, the natural gradient method often greatly accelerates optimization; see e.g., \citep{amari1998natural, amari2000methods, martens2020new}. Specified to neural networks, which have too many parameters for full second-order methods to be feasible, factorized versions of the Fisher Information have been proposed \citep{roux2007topmoumoute, ollivier2013riemannian, pascanu2013revisiting}. Most notably, under certain statistical independence assumptions, \citet{martens2015optimizing} derive a block-wise \emph{Kronecker-Factored} approximation of the Fisher Information, which renders preconditioning into left- and right- preconditioning the layerwise gradient \emph{matrix} $\bA^{-1} \nabla_\bW \cL \bB^{-1}$, which spawned the Kronecker-Factored Approximate Curvature (\KFAC) line of neural network optimizers, seeing consistent development and adaptation \citep{grosse2016kronecker, george2018fast, pauloski2021kaisa, eschenhagen2023kronecker, dangel2025kronecker}. However, most literature deriving from KFAC focus on the standard signal of improving train/validation convergence; due to the additional memory overhead of left- and right- preconditioner states per-layer, the numerical nuance of matrix inversion, and relative scarcity rules-of-thumb compared to popular \GP{s}, \AdaGrad-based (i.e., \GP) methods typically remain favored in mainstream training set-ups, though \KFAC finds utility in many scenarios that directly require curvature estimation \citep{he2025spectral, ikram2026crispedit}. 
Whereas \KFAC involves a ``left-side'' derivative-based preconditioner, its ``right-side'' preconditioner is simply the activation covariance, which only requires forward passes to compute. This has motivated ``right-side-only'' approximations to \KFAC (i.e., \AP in our parlance) \citep{benzing2022gradient}, which have been independently derived from proximal or local layer-wise loss perspectives \citep{frerix2018proximal, amid2022locoprop}. We note \AP has also been encountered from shallow neural-network learning theory \citep{zhang2023meta}, which we discuss in the dedicated feature learning theory paragraph. 

Furthermore, a perhaps underutilized perspective of \KFAC's approximation of the natural gradient method (or equivalently an alternate derivation of it) is reparameterization invariance; recall that natural gradient \emph{flow} traces a path over distributions that is invariant to smooth reparameterizations. Similarly, \KFAC, as an approximation thereof, induces a \emph{layerwise} invariance to affine reparameterizations, see \citet[Section 10]{martens2015optimizing}. The resulting optimizers can be equivalently recovered by a Euclidean-norm steepest descent formulation under a \emph{weighted} local inner product: $\ip{\bG, \nabla_{\btheta} \cL}_{\Sigma(\btheta)}$, $\mathrm{s.t. } \norm{\btheta}_2 \leq \eta$, where $\ip{\cdot, \cdot}_{\Sigma(\btheta)}$ is the classical ``Fisher metric'' for the natural gradient method, and the self-explanatory ``\KFAC'' metric for \KFAC \citep{luk2018coordinate}. Similarly, we may further reduce \KFAC to the one-sided \AP preconditioning that preserves the invariance property that we argue is salient for TTF in \Cref{prop:precond invariance}, where $\ip{\cdot, \cdot}_{\Sigma(\btheta)}$ is simply the activation-covariance reweighted inner product. Departing from the Euclidean-norm steepest descent to other (layerwise) weight norms recovers \DoPr for different \GP{s} \citep{pethick2025lmo}.

We also note concurrent work in \citep{du2026newton}, which independently derives activation-preconditioning from an ``isotropic curvature model'' \citep{su2025isotropic}. In fact, \DoPr-\Muon in our parlance precisely recovers their proposed algorithm. Crucially, the focus of their work is on improving pretraining loss convergence, while our central thesis studies the benefit of \AP independent of loss convergence; notably, we found that \DoPr-\Muon does not universally accelerate training/validation loss convergence across applications. Overall, we opted to use \AP rather than the full \KFAC-preconditioned gradient (which would additionally introduce a ``left-side'' preconditioner) due to subtleties of either requiring an MLE interpretation of the loss function \citep{martens2015optimizing}, contending with possible ``incorrectness'' of the approximation due to using the Empirical Fisher \citep{kunstner2019limitations}, and most importantly computing and possibly storing an additional optimizer preconditioner state. Anecdotally, using \AP- instead of \KFAC- preconditioning in \DoPr sufficed to provide the boosts in downstream performance documented in this paper, while avoiding compounding numerical instability or memory boundedness; however, we emphatically do not rule out the possibility for an efficient implementation or adaptation to yield further improvements. Ultimately, we view \DoPr as a framework for decoupling \emph{directionality} (e.g., from preconditioning) and \emph{self-normalization} (e.g., from popular \GP mechanisms).

\icmlpar{Adaptive gradient optimizers in deep learning.} Adaptive optimizers have served as the backbone of training deep learning models for more than a decade. In particular, the \AdaGrad family of gradient whiteners/normalizers \citep{duchi2011adaptive} serves as a unifying template for many ensuing optimizers. \RMSprop and \Adam use a ``diagonal'' approximation of the \AdaGrad gradient covariance matrix for adaptive optimization, while \Shampoo \citep{gupta2018shampoo, anil2020scalable, shi2023distributed} can be viewed as a layer-wise \emph{Kronecker-Factored} approximation (with one-sided variants \citep{an2026asgo}), and \Muon can be viewed as an instantaneous version of \Shampoo, which amounts to (approximately) orthogonalizing layerwise gradient matrices. From there, many further reductions or augmentations have been introduced. To name just a few, within the ``\Adam family'', we have the precursor \RMSprop \citep{tieleman2012lecture}, \texttt{AdaBelief} \citep{zhuang2020adabelief}, \texttt{LAProp} \citep{ziyin2020laprop}, \texttt{Lion} \citep{chen2023symbolic}, \texttt{MVN-Grad} \citep{patitucci2026adaptive}, augmented versions \texttt{AdEMAMix} \citep{pagliardini2025ademamix}, as well as reduced versions \texttt{AdaFactor} \citep{shazeer2018adafactor}, \texttt{GaLore} \citep{zhao2024galore}, \texttt{Adam-mini} \citep{zhang2025adammini}, among many more; see e.g., \citep{schmidt2021descending} for a more comprehensive account. On the matrix-shaped adaptive optimizer side, there have been recent developments on top of \Shampoo, e.g., through adaptive optimization in an eigenbasis-adjusted gradient (\texttt{SOAP} \citep{vyas2024soap}, \texttt{SPlus} \citep{frans2026splus}, \texttt{KL-Shampoo} \citep{lin2025understanding}), or additional adaptivity applied to \Muon, e.g., \texttt{AdaMuon} \citep{si2025adamuon}, \texttt{NorMuon} \citep{li2025normuon}, \texttt{MuonMax} \citep{crawshaw2025exploration}.
We also note the \texttt{PSGD} line of work \citep{li2017preconditioned, li2018preconditioner}, which spawns distinct preconditioners using various Lie group factorizations. Lastly, we mention a line of work that models (and creates) optimizers based on interpretations as steepest descent with respect to different choices of layerwise norm \citep{carlson2015preconditioned, bernstein2024modular, kovalev2025understanding, pethick2025lmo}.
We note that, with few exceptions, all the aforementioned optimizers are of a \emph{whitening} or \emph{normalizing} flavor, with the goal of ensuring the update direction is (approximately) well-conditioned in various senses, e.g., entry-wise, row/column-wise, or spectrum-wise. Though there are far too many optimizers in this list whose efficacy as \GP's in our \DoPe paradigm can be verified, our experiments on representative examples and general hypothesis is that any such ``normalizing-type'' \GP is compatible with the \AP gradient.

\icmlpar{Neural network feature learning theory.}
A by now classical approach to understanding how neural networks learn is via the neural tangent kernel (NTK) \citep{jacot2018neural, mei2022generalization}. However, kernel methods rely on fixed, features, which fundamentally limits their expressivity and leads to suboptimal sample complexity for learning nonlinear functions \citep{ghorbani2021linearized, ghorbani2021neural}.
This has motivated a large body of subsequent work studying feature learning in neural networks. In the setting of \textbf{isotropic covariates}, it has been shown that even a single step of SGD on the first layer of a two-layer network can learn sufficiently informative features to achieve improved sample complexity over kernel methods, provided the target function has low-dimensional structure \citep{damian2022neural, moniri2024asymptotics}. In contrast, for general anisotropic covariates, \cite{zhang2023meta} and \cite{zhang2025concurrence} identify that SGD is heavily distorted by anistropy of the inputs, which can cause the subspace distance between the learned and ground-truth representations to grow, even as training loss decreases. They reconcile this issue with a layerwise preconditioning schemes; the proposed algorithm in \cite{zhang2023meta} is a two-layer version of pure \AP (or equivalently \DoPr-\GD).

\icmlpar{Hyperparameter scaling conditions.}
A complementary line of work studies how optimizer hyperparameters should scale with model width, depth, and parameterization to ensure stable training and hyperparameter transfer. The Tensor Programs framework \citep{yang2019wide, yang2020tensor} provides a unifying theory of infinite-width limits, and maximal update parameterization ($\mu$P) \citep{yang2021tensor,yang2022tensor} derives learning-rate and initialization scalings that keep activations $\Theta(1)$ while preserving maximal feature learning across widths. Subsequent work extends these principles to second-order and matrix-preconditioned optimizers such as \KFAC and \Shampoo \citep{ishikawa2023parameterization}, alternative formulations based on spectral norms \citep{yang2023spectral}, depth-wise scaling laws and residual multipliers \citep{dey2025don,yang2024depth}, and newer optimizers including \texttt{SOAP} and \Muon \citep{qiu2025hyperparameter}. Recent work further shows that for long training runs, weight decay scaling can dominate stability, with some works advocating constant learning-rate--weight-decay products \citep{wang2025setadamsweightdecay,kosson2025weightdecaymattermup} and others proposing width-dependent weight decay scaling \citep{qiu2025hyperparameter}. Broadly, these methods derive optimizer-specific scaling rules, often through single-step update analyses, to enable efficient hyperparameter tuning across model sizes. Our work enables painless integration of these findings: by layering the \GP{s} studied in these works on top of the \AP update, we are able to make immediate use of these scaling rules.

\icmlpar{Mismatch between training performance and downstream behavior.} Beyond anecdotal or folklore understanding that training/validation loss across optimizers or even hyperparameters need not correlate to better downstream performance, there have been various works trying to understand or ameliorate this phenomenon. From the optimization perspective, recent work has identified that increasing pre-training effort (thus improving training performance) may lead to worse downstream finetunability \citep{springer2025overtrained, horoi2025less}. On the other hand, works since Chinchilla \citep{hoffmann2022training} have steadily supported that pretraining token-per-parameter scaling laws can be in practice be much larger \citep{gadre2025language} while downstream performance improves; notably, this includes prominent open-weight models such as \texttt{Llama 3} \citep{meta2024introducing}. Beyond coarse optimization interventions such as data quantity and training duration, certain works in the context of LLM fine-tuning have identified suboptimality in downstream performance when using different optimizers during pretraining and finetuning, e.g., \Adam v.s.\ \Muon, notably documented in Moonshot's whitepaper introducing large-scale \Muon-trained models, as well as follow-up studies \citep{liu2026optimizer}. We note that our fine-tuning experiments use open-weight pre-trained models, and thus using \DoPr for finetuning departs from the optimizer-consistency paradigm---it remains to verify: 1.\ if pretraining with \DoPr can further extend its benefits, 2.\ if our observed benefits using \DoPr for finetuning hold at scale and for general architectures. 
In a related perspective to ours regarding the benefits of ``uniform'' feature learning, some works have proposed using Sharpness-Aware Minimization (SAM) and extensions thereof as an \emph{optimization} intervention to yield better zero-shot or few-shot performance of the model on downstream tasks \citep{foret2021sharpness, bahri2022sharpness, zhao2022penalizing}, even at the cost of (pre-)training validation loss \citep{watts2026sharpness}. However, the mechanisms behind when SAM can yield reliable benefits as well as the feature geometry it actually induces do not necessarily reflect the nominal purpose of SAM \citep{andriushchenko2023sharpness, schapiro2024towards}; in the same light, \takeawaybold{the precise mechanisms by which \DoPr---or any modern optimizer---yield improvements in general remain largely ill-understood.}

As a parallel line of literature, many works have explored proxy statistics or metrics beyond naive held-out/validation loss on the training objective toward predicting downstream performance, robustness, or adaptability. As a small sample, we highlight certain works that explore \emph{jointly} accounting for gradient- and activation-statistics \citep{liu2026measure, liu2026spectral}. In \citet{liu2026measure}, a gradient-weighted activation statistic is proposed in the context of maintaining plasticity in RL networks, a phenomenon by which much of a network's capacity is silently wasted; notably, in RL contexts the training objective is nonstationary, and thus instantaneous training losses are often less predictive than in supervised learning. As our work focuses on supervised contexts, we leave investigating \DoPr's effectiveness in RL contexts for future work. Lastly, we note evidence cautioning using internal network statistics for concluding downstream performance; e.g. recent work finds that certain network trends typically associated with better/worse downstream performance may rather reflect current training practices, rather than a causal relationship with performance \citep{kulkarni2026disentangling, liu2026spectral}.

\section{Extended Method Derivation and Mechanisms}\label{appdx:derivations}

\subsection{Full \DoPe Algorithm}

In this section, we list the full \DoPe algorithm and draw attention to various important design decisions we made to obtain performant results.

\begin{algorithm}
\caption{Double Preconditioning (Feedforward Layer, layer index $\ell$ suppressed)}
\label{alg:DoPr_Full}
\begin{algorithmic}[1]
\STATE \textbf{Input:} Base Optimizer \texttt{GP},learning rate $\eta$, weight decay $\lambda$, damping $\gamma$, EMA rate $\beta$
\FOR{$k=1$ \textbf{to} $K$}
    \STATE Sample batch $B$, $\abs{B} = n$
    \STATE $\bG \leftarrow \nabla_{\bW} \cLhat(\btheta^{(k-1)})$
    \STATE $\Sigma \leftarrow \frac{1}{n}\sum_{i=1}^n \bz_i^{(k-1)} \bz_i^{(k-1)\top}$
    \STATE $\bGbar^{(k)} \leftarrow \beta \bGbar^{(k-1)} + (1-\beta)\bG$
    \STATE $\Sigmabar^{(k)} \leftarrow \beta \Sigmabar^{(k-1)} + (1-\beta)\Sigma$
    \STATE $\bM^{(k)} \leftarrow \bGbar^{(k)}\big(\Sigmabar^{(k)} + \lambda\,\trace(\Sigmabar^{(k)})\bI\big)^{-1}$ \hfill (\texttt{AP})
    \STATE $\bD^{(k)} \leftarrow \texttt{GP}(\bM^{(k)},\,^{\ast\ast}\texttt{kwargs})$ \hfill (\DoPe)
    \STATE $\bW^{(k)} \leftarrow (1-\eta\lambda)\bW^{(k-1)} - \eta \bD^{(k)}$
\ENDFOR
\end{algorithmic}
\end{algorithm}

\icmlpar{Handling Different Layer Types}
While we define \AP for linear layers, we extend it to other common architectural primitives by reducing them to equivalent linear maps. We unfold Conv1d layers and treat them akin to linear layers. For Conv2d layers, see \Cref{app:conv-affine-invariance}, and for Attention Layers, see \Cref{app:attention-affine-invariance}. Finally, for affine conditioning modules such as FiLM \citep{perez2018film} and AdaLN \citep{peebles2023scalable}, we recommend first trying \DoPr, as the performance of these modules can be sensitive to the relative learning speeds of the network backbone (which is presumably being trained with \DoPr). We note that zero-initialized networks (e.g., in AdaLN-Zero) may require slight attention as initial activations may be zero, potentially destabilizing early training due to ill-conditioned \AP. We found that with damping and the self-normalizing property of popular \GP{s}, \DoPr can work without modification, but if early training is unstable, we recommend a short warmup period with the base \GP before switching to the full \DoPr.

\icmlpar{Decoupled Weight Decay}
This stabilizes training by controlling parameter norms independently of the gradient update \citep{loshchilov2017decoupled}.

\icmlpar{EMA}
We provide the option to maintain exponential moving averages of both the gradient and the activation covariance. For a generic estimator $\bG^{(k)}$, the EMA update is
\[
\bar{\bG}^{(k)} = \beta \bar{\bG}^{(k-1)} + (1-\beta)\bG^{(k)}.
\]
Unrolling the recursion gives
\[
\bar{\bG}^{(k)} = (1-\beta)\sum_{t=0}^{k-1} \beta^t \bG^{(k-t)} + \beta^k \bar{\bG}^{(0)},
\]
i.e., a geometrically weighted average of past iterates.
Assuming $\bG^{(k)}$ are unbiased estimators of some quantity $\mathbf{\mu}$ with covariance $\Sigma_G$, we have
\[
\mathbb{E}[\bar{\bG}^{(k)}]
=
(1-\beta^k)\mathbf{\mu} + \beta^k \bar{\bG}^{(0)},
\]
so the estimator is biased at early iterations, but converges to $\mathbf{\mu}$ at rate $\beta^k$.
For the variance, assuming independence for simplicity,
\[
\mathrm{Var}(\bar{\bG}^{(k)})
=
(1-\beta)^2 \sum_{t=0}^{k-1} \beta^{2t} \Sigma_G
\;\longrightarrow\;
\frac{1-\beta}{1+\beta}\,\Sigma_G
\quad \text{as } k \to \infty.
\]
Thus EMA reduces variance by a factor of $\frac{1-\beta}{1+\beta}$ while introducing a bias, which becomes negligible for sufficiently large $k$.
We provide the option of applying EMA to both the gradient and the activation covariance. \takeawaybold{We recommend first trying \AP in \DoPr \emph{without} additional EMAs:} 1.\ most of our experiments compute \AP with just the batch-wise gradient and activation covariance, 2.\ many base \GP{s} are equipped with their own gradient EMAs (e.g.\ \Adam and \Muon), and thus applying gradient EMA in the \AP computation requires accounting for doubly applying EMA, 3.\ using batch-wise statistics in \AP removes the additional persistent memory requirement for the activation covariance buffer, making \DoPr's memory cost equal to its base \GP.

\icmlpar{Damping}
We add a damping term to the activation covariance estimate of the form
$\Sigma_{\bz} + \lambda\,\frac{\trace(\Sigma_{\bz})}{\mathrm{d}_{\bz}}\bI $
to ensure invertibility. We note that the damping is set \emph{relative} to the magnitude of $\Sigma_{\bz}$ via $\frac{\trace(\Sigma_{\bz})}{\mathrm{d}_{\bz}}$ such that the damping hyperparameter is scale-invariant \citep{ishikawa2023parameterization, dey2025don}. Though damping the activation covariance has conceptual connections with ridge regression, we posit that damping, and indeed using generic PD linsolvers that requires full-rankness such as Cholesky, is likely crude, as damping causes small activation directions (which \AP aims to upweight) to be boosted equally as \emph{zero} directions (which \AP should in principle keep zero). We leave designing flexible, fast linsolvers that better handle rank degeneracy and pseudo-inversion (e.g., extensions of the lauded Newton-Schulz iteration in \Muon) to future work.

\icmlpar{Inversion and Approximations}
For most layers, we compute the inverse of the activation covariance using PyTorch's implementation of Cholesky decomposition \texttt{torch.linalg.cholesky\_ex}.
In our experiments, this was tractable for the considered model sizes, and usually not the compute bottleneck. This is in line with empirical insight that matrix operations that scale independently of the batch size (such as in computing $\bG \Sigma^{-1}$) become negligible in large-batch training. For certain settings with inflated activation dimensions, such as in the im2col parameterization of convolutional layers, we also support structured approximations such as rank-$1$+diagonal (derived in \cref{app:r1d}) and rank-$k$ plus diagonal decompositions \citep{ahn2025dion2}. For convolutional layers specifically, in line with \KFAC's development, we employ the Spatially Uncorrelated Approximation (SUA) \cref{app:sua} to bypass the (mostly redundant) cost of forming and inverting large patch covariance matrices. We also remark that for LLM embedding layers lead to input (i.e., ``activation'') dimensions that scale with comparatively huge vocabulary size: however, we note that due to the one-hot nature of the inputs, the activation covariance is diagonal, where each non-zero entry is just the count of that token in the batch, and thus \AP-ing the embedding layer gradient can be performed extremely efficiently and does not require forming a full $\mathrm{d}_{\mathrm{vocab}} \times \mathrm{d}_{\mathrm{vocab}}$ matrix.

\subsection{\AP For General Architectures: Derivations via Invariance}

This section contains proofs for invariance under non-degenerate layer-wise affine transforms for feedforward, convolutional, and attention layers.

\icmlpar{Feedforward Affine Invariance}
\label{app:feedforward-affine-invariance}
\PrecondInvariance*

\begin{proof}
    Recall that: 
    \begin{align*}
        \bSigma_{\bz}^{-1} &= \paren{ \E \brac{ \bzbar \bzbar^\top}}^{-1} \\
        &= \paren{ \E \brac{ \bA_\ell \bz_\ell \bz_\ell^\top \bA_\ell^\top }}^{-1} \\
        &= \paren{ \bA_\ell^\top}^{-1} \bSigma_\ell^{-1} \bA_\ell ^{-1}.
    \end{align*}
By the chain rule
    \begin{align*}
        \nabla_{\bWbar} \cL \paren{f_{\bar{\theta}}} &= \E \brac{ \frac{\partial \cL}{\partial h} \bzbar^\top} \\
        &= \E \brac{ \frac{\partial \cL}{\partial h} \bz_\ell^\top } \bA_\ell^\top.
    \end{align*}
    Therefore:
    \begin{align*}
        \bWbar^{\rmnext} \bzbar &= \paren{ \bWbar - \eta \nabla_{\bWbar} \cL \paren{ f_{\bar{\theta}}} \bSigma_{\bzbar}^{-1} } \bzbar \\
        &= \paren{ \bW_\ell \bA_\ell^{-1} - \eta \Exp \brac{ \frac{\partial \cL}{\partial h} \bz_\ell^\top } \bA_\ell^\top \paren{ \bA_\ell^\top}^{-1} \bSigma_{\bz_\ell}^{-1} \bA_\ell ^{-1} } \bA_\ell \bz_\ell \\
        &= \paren{ \bW_\ell - \eta \Exp \brac{ \frac{\partial \cL}{\partial h} \bz_\ell^\top } \bSigma^{-1}_{\bz_\ell} } \bz_\ell \\
        &= \bW^{\rmnext}_\ell \bz_\ell
    \end{align*}
    Since the argument holds for arbitrary $\ell$, the conclusion holds for all layers, and thus the network output.

\end{proof}

\icmlpar{Convolutional Affine Invariance}
\label{app:conv-affine-invariance}

A convolutional layer can be written as a linear map applied to unfolded input patches via the standard \emph{im2col}parameterization. Specifically, let
$\bZ_\ell = \operatorname{im2col}(\bz_\ell) \in \mathbb{R}^{d_{\rm patch} \times N}$
denote the matrix of flattened patches, where each column $\bz_{\ell,b,p}$ corresponds
to a spatial patch indexed by batch element $b$ and location $p$, and let
$\bW_\ell \in \mathbb{R}^{C_{\rm out} \times d_{\rm patch}}$ denote the flattened
convolutional kernel. Then the layer can be written as $\bH_\ell = \bW_\ell \bZ_\ell$.

\begin{corollary}[Convolutional Affine invariance]
Consider a convolutional layer in im2col form with patch matrix $\bZ_\ell$ and weights $\bW_\ell$, and consider the transformed variables $\bZbar = \bA_\ell \bZ_\ell$, $\bWbar = \bW_\ell \bA_\ell^{-1}$ for invertible $\bA_\ell$. Then under the AP update $\bW_\ell^{\rm next} = \bW_\ell - \eta \nabla_{\bW_\ell} \cL(f_\theta)\bSigma_{\bZ_\ell}^{-1}$ with $\bSigma_{\bZ_\ell} = \mathbb{E}_{b,p}[\bz_{\ell,b,p}\bz_{\ell,b,p}^\top]$, the updated weights satisfy $\bWbar^{\rm next}\bZbar = \bW_\ell^{\rm next}\bZ_\ell$.
\end{corollary}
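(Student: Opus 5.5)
The plan is to reduce the convolutional case to the feedforward computation in \Cref{prop:precond invariance}, using the fact that in im2col form the layer is a linear map applied columnwise to patches. The key identities to check are how the covariance and the gradient transform when every patch column is multiplied by the same $\bA_\ell$. After that, the same cancellation as in the feedforward proof should go through verbatim.

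First, the patch covariance. Since $\bzbar_{\ell,b,p} = \bA_\ell \bz_{\ell,b,p}$ for every $(b,p)$, linearity of the expectation over $(b,p)$ gives
$\bSigma_{\bZbar} = \bA_\ell \bSigma_{\bZ_\ell} \bA_\ell^\top$. Hence, as in the feedforward case, $\bSigma_{\bZbar}^{-1} = (\bA_\ell^\top)^{-1}\bSigma_{\bZ_\ell}^{-1}\bA_\ell^{-1}$.

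Second, the gradient. The pre-activations are unchanged, $\overline{\bH}_\ell = \bWbar\bZbar = \bW_\ell\bZ_\ell = \bH_\ell$. This holds at every layer by induction over depth, with the inverse transforms absorbed as in \eqref{eq:affine transform}, so the upstream sensitivities $\partial\cL/\partial \bH_\ell$ coincide. The chain rule for $\bH = \bW\bZ$ gives $\nabla_{\bW_\ell}\cL = \frac{\partial \cL}{\partial \bH_\ell}\bZ_\ell^\top$, with the sum over columns playing the role of the expectation; normalization constants are irrelevant because they match in both coordinate systems. Therefore $\nabla_{\bWbar}\cL = \frac{\partial\cL}{\partial\bH_\ell}\bZ_\ell^\top\bA_\ell^\top = \nabla_{\bW_\ell}\cL\,\bA_\ell^\top$.

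Finally, I would substitute both identities into $\bWbar^{\rm next}\bZbar = \big(\bW_\ell\bA_\ell^{-1} - \eta\nabla_{\bW_\ell}\cL\,\bA_\ell^\top(\bA_\ell^\top)^{-1}\bSigma_{\bZ_\ell}^{-1}\bA_\ell^{-1}\big)\bA_\ell\bZ_\ell$. The factors $\bA_\ell^\top(\bA_\ell^\top)^{-1}$ and $\bA_\ell^{-1}\bA_\ell$ cancel, leaving $\bW_\ell^{\rm next}\bZ_\ell$.

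I do not expect a genuine obstacle. The one point needing care is bookkeeping: the im2col gradient aggregates over both batch and spatial positions, and the covariance must be taken over exactly the same index set $(b,p)$ with the same normalization. If, say, the covariance were averaged only over the batch, the cancellation would fail. A second subtlety is that the transform must be applied to patch vectors, not raw channels. A channel-wise transform induces a block-structured $\bA_\ell$ on patches, which is a special case of the statement and so is covered.
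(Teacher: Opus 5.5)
Your proof is correct and is essentially the paper's argument: the paper simply notes that im2col makes the convolution a linear map on patch vectors with preserved pre-activations and invokes the feedforward computation of \Cref{prop:precond invariance}, which is exactly the covariance/gradient transformation and cancellation you write out. One minor correction: your closing caveat is overcautious, since any second-moment statistic of the patch vectors (over any index set, with any normalization) transforms as $\bA_\ell(\cdot)\bA_\ell^\top$, so the cancellation would survive a mismatched index set; that choice affects which preconditioner you get, not the invariance.
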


\begin{proof}
Under the im2col representation, the convolutional layer is a linear map applied to patch vectors. The transformation $\bZbar = \bA_\ell \bZ_\ell$, $\bWbar = \bW_\ell \bA_\ell^{-1}$ preserves pre-activations, i.e. $\bWbar \bZbar = \bW_\ell \bZ_\ell$. Thus the setting reduces exactly to the feedforward case of Proposition~4.3 applied to the patch vectors $\bz_{\ell,b,p}$, yielding the claim.
\end{proof}

\icmlpar{Attention Affine Invariance}
\label{app:attention-affine-invariance}

A self-attention layer first applies learned linear projections to an input token
matrix $\bZ_\ell \in \mathbb{R}^{d_{\rm model}\times N}$:
\[
\begin{aligned}
    \bQ_\ell &= \bW_\ell^Q \bZ_\ell, \\
    \bK_\ell &= \bW_\ell^K \bZ_\ell, \\
    \bV_\ell &= \bW_\ell^V \bZ_\ell .
\end{aligned}
\]
The attention output is then computed as a deterministic function of these
projected activations, for example
\[
    \operatorname{Attn}(\bQ_\ell,\bK_\ell,\bV_\ell)
    =
    \bV_\ell
    \operatorname{softmax}
    \left(
        \bK_\ell^\top \bQ_\ell / \sqrt d
    \right),
\]
up to convention-dependent transposes. Thus, if an affine change of coordinates
is applied to the input token representation and the inverse transformation is
absorbed into each projection matrix, the projected queries, keys, and values are
unchanged.

\begin{corollary}[Attention projection AP invariance]
Consider a self-attention layer with input activations $\bZ_\ell$ and projection weights $\{\bW_\ell^r\}_{r\in\{Q,K,V\}}$. Let $\bA_\ell$ be invertible, and define $\bZbar=\bA_\ell\bZ_\ell$ and $\bWbar^r=\bW_\ell^r\bA_\ell^{-1}$ for each $r\in\{Q,K,V\}$. Then under the AP update $(\bW_\ell^r)^{\rm next}=\bW_\ell^r-\eta\nabla_{\bW_\ell^r}\cL(f_\theta)\bSigma_{\bZ_\ell}^{-1}$, where $\bSigma_{\bZ_\ell}=\mathbb{E}_{b,t}[\bz_{\ell,b,t}\bz_{\ell,b,t}^\top]$, the updated projections satisfy $(\bWbar^r)^{\rm next}\bZbar=(\bW_\ell^r)^{\rm next}\bZ_\ell$ for each $r\in\{Q,K,V\}$. Consequently, the updated attention queries, keys, values, and attention outputs are invariant under the transformed parameterization.
\end{corollary}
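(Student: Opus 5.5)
The plan is to reduce the attention case to the feedforward computation in \Cref{prop:precond invariance}, applied separately to each of the three projections $r\in\{Q,K,V\}$. Each projection is a linear map $\bW_\ell^r$ acting columnwise on tokens $\bz_{\ell,b,t}$, so the gradient takes the form $\nabla_{\bW_\ell^r}\cL = \Exp_{b,t}\big[\tfrac{\partial \cL}{\partial \bh^r_{b,t}}\bz_{\ell,b,t}^\top\big]$, where $\bh^r_{b,t}$ is the $t$-th column of $\bQ_\ell$, $\bK_\ell$ or $\bV_\ell$. All three projections share the same input covariance $\bSigma_{\bZ_\ell}$.

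The first step is to check that the transformed parameterization preserves pre-activations. Since $\bWbar^r\bZbar = \bW_\ell^r\bA_\ell^{-1}\bA_\ell\bZ_\ell = \bW_\ell^r\bZ_\ell$, the queries, keys and values coincide. Hence the attention output and every downstream quantity coincide as well. In particular, each upstream derivative $\partial\cL/\partial\bh^r_{b,t}$ is identical in the two parameterizations.

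Next I would use the chain rule to write
\[
\nabla_{\bWbar^r}\cL=\Exp_{b,t}\Big[\tfrac{\partial\cL}{\partial\bh^r_{b,t}}\bzbar[\ell,b,t]^\top\Big]=\nabla_{\bW^r_\ell}\cL\,\bA_\ell^\top,
\]
together with $\bSigma_{\bZbar}=\bA_\ell\bSigma_{\bZ_\ell}\bA_\ell^\top$, which gives $\bSigma_{\bZbar}^{-1}=\bA_\ell^{-\top}\bSigma_{\bZ_\ell}^{-1}\bA_\ell^{-1}$. Substituting these, exactly as in the feedforward proof, yields
\[
(\bWbar^r)^{\rm next}\bZbar=\big(\bW^r_\ell\bA_\ell^{-1}-\eta\nabla_{\bW_\ell^r}\cL\,\bSigma_{\bZ_\ell}^{-1}\bA_\ell^{-1}\big)\bA_\ell\bZ_\ell=(\bW^r_\ell)^{\rm next}\bZ_\ell .
\]
Since the updated $\bQ,\bK,\bV$ agree, the updated attention output $\operatorname{Attn}(\bQ,\bK,\bV)$ agrees too, because it is a deterministic function of those three matrices alone.

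The only step that needs care is the claim that the upstream gradients agree. Here I must note that the attention nonlinearity (softmax, scaling by $\sqrt d$) depends on $\bZ_\ell$ only through the projections, with no direct path from $\bZ_\ell$. If the block also contains a residual connection that routes $\bZ_\ell$ around the projections, I would treat $\bA_\ell$ as a reparameterization of the projection inputs only, for example a pre-projection coordinate change inside the layer, so that the rest of the computation graph is untouched. Otherwise the invariance must be stated only for the projected quantities, as in the corollary.
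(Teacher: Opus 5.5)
Your proposal is correct and follows the paper's proof. Each of the $Q,K,V$ projections is reduced to the feedforward case of \Cref{prop:precond invariance}; you write out the cancellation of $\bA_\ell^\top$ against $\bA_\ell^{-\top}$ explicitly instead of citing it, and the attention-output claim follows because the output is a deterministic function of $\bQ_\ell,\bK_\ell,\bV_\ell$. Your requirement that $\bZ_\ell$ enter only through the projections, so that the upstream derivatives $\partial\cL/\partial\bh^r$ coincide, is implicit in the paper; note, though, that your ``otherwise'' fallback does not hold, since if the transformed $\bZ_\ell$ also fed a residual path those derivatives would generally differ and even the identity for the projected quantities would fail, so the pre-projection coordinate-change reading is the one actually needed.
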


\begin{proof}
For each projection $r\in\{Q,K,V\}$, the map
$\bZ_\ell \mapsto \bW_\ell^r\bZ_\ell$ is a feedforward linear layer applied to
token activations. Moreover, the transformed parameterization preserves the
corresponding projected activations:
\[
\begin{aligned}
    \bWbar^r \bZbar
    &=
    \bW_\ell^r \bA_\ell^{-1}\bA_\ell\bZ_\ell \\
    &=
    \bW_\ell^r\bZ_\ell.
\end{aligned}
\]
Therefore Proposition~4.3 applies separately to the query, key, and value
projections, yielding
\[
    (\bWbar^r)^{\rm next}\bZbar
    =
    (\bW_\ell^r)^{\rm next}\bZ_\ell
    \qquad
    \text{for each } r\in\{Q,K,V\}.
\]
Thus the updated queries, keys, and values are identical in the two
parameterizations. Since the attention output is a deterministic function of
$\bQ_\ell,\bK_\ell,\bV_\ell$, the updated attention output is also identical.
\end{proof}

\subsection{Approximation Schemes for \AP}

For larger models, particularly those with 2d convolutions, storing and computing the full activation covariance used in the preconditioner can be expensive. To subvert this cost, we develop multiple preconditioning approximation schemes.

\subsubsection{Rank-1 + Diagonal}
\label{app:r1d}

Let $\mu = \E \left[ \bX \right]$ and $\var = \E \left[ \bX\bX^\top \right] - \mu \odot \mu^\top $. We approximately decompose
\begin{align}
    \Sigma + \lambda \bI &\approx \text{diag} \paren{ \var }  + \lambda \bI + \mu \mu^\top \notag
\end{align}

Let $\bD = \text{diag} \paren{ \var } + \lambda I $. Since $\bD$ is diagonal and $\mu \mu^\top$ is rank-1, by the Sherman-Morrison formula we may compute
\begin{align}
    \paren{ \Sigma + \lambda \bI}^{-1} &= \paren{ \bD + \mu \mu^\top }^{-1} \\
    &= \bD^{-1} - \frac{ \paren{\bD^{-1} \mu \mu^\top  \bD^{-1}}}{1 + \mu^\top \bD^{-1} \mu}
\end{align}
Which can be computed in linear time with respect to the hidden dimension $D$. 

\subsubsection{Spatially Uncorrelated Activations (SUA)}
\label{app:sua}
We derive the SUA \citep{george2018fast} approximation for a single 2D convolutional layer using an \texttt{im2col} parameterization.

Let the (tensor-shaped) convolutional weights be $\bW \in \R^{C_{\out} \times C_{\inp} \times k_h \times k_w}$ and let $\btheta = \VEC(\bW)$.
For each example $i \in [n]$ and output spatial index $u \in [m]$ (where $m = H_{\out}W_{\out}$), let
$\ba_{i,u} \in \R^{d}$ denote the vectorized input patch (concatenating channels and kernel offsets) with
\[
d \;=\; C_{\inp} k_h k_w,
\]
and let $\bg_{i,u} \in \R^{C_{\out}}$ denote the gradient of the loss w.r.t.\ the pre-activation at location $u$.
Define the stacked matrices
\[
\bA_i \triangleq \bmat{\ba_{i,1} & \cdots & \ba_{i,m}}^\top \in \R^{m \times d},
\qquad
\bG_i \triangleq \bmat{\bg_{i,1} & \cdots & \bg_{i,m}}^\top \in \R^{m \times C_{\out}}.
\]

Then the per-example weight gradient (reshaped to $C_{\out}\times d$) is
\[
\nabla_{\bW}\ell(\bx_i) \;=\; \bG_i^\top \bA_i \;\in\; \R^{C_{\out}\times d}.
\]
Using $\VEC(\bX\bY\bZ) = (\bZ^\top \otimes \bX)\VEC(\bY)$ gives
\[
\nabla_{\btheta}\ell(\bx_i)
\;\triangleq\;
\VEC(\nabla_{\bW}\ell(\bx_i))
\;=\;
\VEC(\bG_i^\top \bA_i)
\;=\;
(\bA_i^\top \otimes \bI_{C_{\out}})\,\VEC(\bG_i^\top).
\]

The empirical Fisher for this layer is
\[
\hat\bF
\;\triangleq\;
\hatExp 
\Big[\nabla_{\btheta}\ell(\bx)\,\nabla_{\btheta}\ell(\bx)^\top\Big]
\;=\;
\frac{1}{n}\sum_{i=1}^n
\sum_{u=1}^m\sum_{v=1}^m
\Big(\ba_{i,u}\ba_{i,v}^\top\Big)\otimes\Big(\bg_{i,u}\bg_{i,v}^\top\Big).
\]
The double sum over $(u,v)$ shows that the exact curvature couples \emph{pairs of spatial locations}, making exact storage/inversion expensive when $m$ is large.

KFAC \cite{grosse2016kronecker} applies the standard approximation that (patch) activations and backpropagated gradients are independent:
\[
\Exp\Big[(\ba_u\ba_v^\top)\otimes(\bg_u\bg_v^\top)\Big]
\;\approx\;
\Exp[\ba_u\ba_v^\top] \;\otimes\; \Exp[\bg_u\bg_v^\top].
\]
This reduces the Fisher to products of second moments of $\ba$ and $\bg$.

SUA further assumes spatial uncorrelatedness of activations:
\[
\Exp[\ba_u\ba_v^\top] \;=\; \mathbf{0}
\qquad\text{for } u\neq v,
\]
(and analogously, in many implementations, $\Exp[\bg_u\bg_v^\top]\approx \mathbf{0}$ for $u\neq v$).
Under this approximation, only the diagonal terms $u=v$ remain, yielding
\[
\bF
\;\approx\;
\sum_{u=1}^m \Exp[\ba_u\ba_u^\top] \;\otimes\; \Exp[\bg_u\bg_u^\top].
\]
If we additionally assume stationarity across spatial locations (i.e. the second moments do not depend on $u$), then
\[
\Sigma_{\ba} \;\triangleq\; \Exp[\ba_u\ba_u^\top] \in \R^{d\times d},
\qquad
\Sigma_{\bg} \;\triangleq\; \Exp[\bg_u\bg_u^\top] \in \R^{C_{\out}\times C_{\out}},
\]
and thus
\[
\bF \;\approx\; m\,(\Sigma_{\ba}\otimes \Sigma_{\bg}),
\]
where the scalar factor $m$ is absorbed into the step size / damping.

We estimate the factors by averaging over both batch and spatial indices:
\[
\hat{\Sigma}_{\ba}
\;=\;
\frac{1}{nm}\sum_{i=1}^n\sum_{u=1}^m \ba_{i,u}\ba_{i,u}^\top
\;=\;
\hatExp_{i,u}[\ba_{i,u}\ba_{i,u}^\top],
\qquad
\hat{\Sigma}_{\bg}
\;=\;
\frac{1}{nm}\sum_{i=1}^n\sum_{u=1}^m \bg_{i,u}\bg_{i,u}^\top
\;=\;
\hatExp_{i,u}[\bg_{i,u}\bg_{i,u}^\top].
\]
Notably, $\hat{\Sigma}_{\ba}$ has size $(C_{in}k_hk_w)\times(C_{in}k_hk_w)$ and is independent of the feature-map resolution or batch size (and reduces to $C_{in}\times C_{in}$ for $1\times1$ convolutions).

Let $\bg_W \triangleq \nabla_{\bW}\ell$ be the weight gradient in matrix form $\R^{C_{\out}\times d}$.
Under the Kronecker-factored approximation $\bF \approx \Sigma_{\ba}\otimes \Sigma_{\bg}$, the (damped) KFAC/SUA preconditioner applies
\[
\Delta \bW
\;\propto\;
(\Sigma_{\bg}+\lambda \bI)^{-1}\;\bg_W\;(\Sigma_{\ba}+\lambda \bI)^{-1},
\]
followed by reshaping $\Delta\bW$ back to the tensor shape of $\bW$.
Importantly, SUA does not drastically reduce the expressivity of the preconditioner as the discarded curvature directions correspond to spatial interaction modes that are unidentifiable under weight sharing. As such, they lie largely in the nullspace of the parameterization. Note that we only use the right-sided SUA.

\subsection{Feature Learning Theory}\label{appdx:feature learn theory}

We adapt parts of \citet{zhang2023meta} and \citet{zhang2025concurrence} toward establishing the feature learning result in \Cref{prop:SGD and AP feature learning}. We note for presentational convenience, we make some technical assumptions:
\begin{assumption}\label{assumption:Gtheta}
    The current iterate $\Gtheta$ satisfies:
    \begin{enumerate}[left=5pt]
        \item $\Gtheta \in \R^{d \times n}$ is row-orthonormal.
        \item $\dist(\Gtheta, \Gstar)\leq \frac{\lmin(\Fstar^\top \Fstar) \lmin(\bSigma_{\bs})}{\lmax(\Fstar^\top \Fstar) \lmax(\bSigma_{\bs})}$.
    \end{enumerate}
\end{assumption}

\begin{proposition}[\Cref{prop:SGD and AP feature learning}, formal ver.]\label{prop:SGD and AP feature learning formal}
    Let \Cref{assumption:Gtheta} hold. Given $0 < \eta \leq 0.5 \lmax(\Fstar^\top \Fstar)^{-1}$. Then, one step of full-batch \GD (resp.\ \AP) satisfies:
    \begin{align}
    \begin{split}
        \GthetaGD \Gperp  &= (\Gtheta - \eta \Ftheta^\top \Ftheta \Gtheta \bSigma_{\bs}) \Gperp + \eta \Ftheta^\top \Fstar \Gstar \bSigma_{\bs} \Gperp\\
    \GthetaAP\Gperp &=  (\bI - \eta \Ftheta^\top \Ftheta) \Gtheta \Gperp.
    \end{split}
    \end{align}
    Then, the \AP update satisfies the following one-step contraction in subspace distance: $\dist(\GthetaAP, \Gstar) \leq \paren{1 - \eta \lmin(\Ftheta^\top \Ftheta)} \dist(\Gtheta, \Gstar)$. On the other hand, one may construct $\Ftheta$, $\Fstar$, $\Gstar$, $\bSigma_{\bs}$ such that $\dist(\GthetaGD, \Gstar) > \dist(\Gtheta, \Gstar)$. 
    
\end{proposition}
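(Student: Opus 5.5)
We first derive the two update identities by direct computation. The loss is $\Ltrain(\Ftheta\Gtheta) = \trace\big((\Ftheta\Gtheta - \Fstar\Gstar)\bSigma_{\bs}(\Ftheta\Gtheta-\Fstar\Gstar)^\top\big)$, up to a factor of $\nicehalf$ that we absorb into $\eta$. Its gradient with respect to $\Gtheta$ is $\Ftheta^\top(\Ftheta\Gtheta - \Fstar\Gstar)\bSigma_{\bs}$. The \GD step therefore gives $\GthetaGD = \Gtheta - \eta\Ftheta^\top\Ftheta\Gtheta\bSigma_{\bs} + \eta\Ftheta^\top\Fstar\Gstar\bSigma_{\bs}$. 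Right-multiplying by $\Gperp$ yields the first identity. For \AP, the activation of the $\bG$-layer is $\bs$ itself, so the preconditioner is $\bSigma_{\bs}^{-1}$ and the $\bSigma_{\bs}$ factors cancel. This gives $\GthetaAP = (\bI - \eta\Ftheta^\top\Ftheta)\Gtheta + \eta\Ftheta^\top\Fstar\Gstar$. Since $\Gstar\Gperp = \bzero$ by the definition of $\Gperp$, the second identity follows.

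For the contraction we need to control the subspace distance of the updated matrix, which is not row-orthonormal. We use the standard form $\dist(\bG,\Gstar) = \opnorm{\bG^\dagger{}^\top\bG\Gperp}$-type, or more concretely $\opnorm{(\bG\bG^\top)^{-1/2}\bG\Gperp}$, the distance after orthonormalizing the rows. Write $\bG^+ := \GthetaAP$.
\begin{itemize}
\item The numerator satisfies $\opnorm{\bG^+\Gperp} \le \opnorm{\bI-\eta\Ftheta^\top\Ftheta}\,\opnorm{\Gtheta\Gperp}$. For $\eta\lmax(\Ftheta^\top\Ftheta)\le 1$ this is at most $(1-\eta\lmin(\Ftheta^\top\Ftheta))\dist(\Gtheta,\Gstar)$, using row-orthonormality of $\Gtheta$.
\item For the normalization we must show $\sigma_{\min}(\bG^+)\ge 1$, or at least that it does not cost more than the contraction gains. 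We decompose $\bG^+ = \bG^+\cP_\star + \bG^+\Gperp$, where $\cP_\star$ is the projector onto $\rowsp(\Gstar)$. The component in $\rowsp(\Gstar)$ is $(\bI-\eta\Ftheta^\top\Ftheta)\Gtheta\cP_\star + \eta\Ftheta^\top\Fstar\Gstar$. We argue its smallest singular value is close to $1$ using the step-size bound together with \Cref{assumption:Gtheta}(2), which makes $\Gtheta\cP_\star$ nearly orthogonal.
\end{itemize}
This normalization step is the main obstacle. The clean statement essentially requires $\Ftheta$ to be near its least-squares optimum given $\Gtheta$, so that the signal term $\eta\Ftheta^\top\Fstar\Gstar$ compensates the shrinkage. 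If the bound is not clean, we would adopt the convention, as in \citet{zhang2023meta}, that the \AP step is paired with the last-layer least-squares $\Ftheta$. We would then invoke the analysis there, where the distance condition in the assumption is exactly what makes the normalization factor at least one.

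For the \GD counterexample we take $n=2$, $d=m=1$, with $\Gstar = \bmat{1 & 0}$ and $\Gtheta = \bmat{\cos\alpha & \sin\alpha}$ for small $\alpha$, $\Fstar=\Ftheta=1$, and $\bSigma_{\bs}$ non-diagonal, e.g.\ with off-diagonal entry $c$. Then $\Gstar\bSigma_{\bs}\Gperp = c\,\bmat{0 & 1}\neq\bzero$. In the first identity the term $\eta\Ftheta^\top\Fstar\Gstar\bSigma_{\bs}\Gperp$ injects an error of size $\eta c$, independent of $\alpha$. We then pick $\alpha$ (consistent with \Cref{assumption:Gtheta}) small relative to $\eta c$, and a sign of $c$ aligned with $\sin\alpha$. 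The $\Gperp$-component then grows while the norm of $\GthetaGD$ stays bounded, so $\dist(\GthetaGD,\Gstar)>\dist(\Gtheta,\Gstar)$. Verifying this is a two-by-two computation.
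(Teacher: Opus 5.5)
Your derivation of the two update identities matches the paper's, and your first bullet is the paper's entire contraction argument. The paper evaluates $\dist$ at the updated, non-row-orthonormal iterate simply as $\opnorm{\GthetaAP\Gperp}$, without re-orthonormalizing (it does the same in its \GD example), and submultiplicativity finishes the job. You are right that $\opnorm{\bI-\eta\Ftheta^\top\Ftheta}=1-\eta\lmin(\Ftheta^\top\Ftheta)$ needs a step-size condition on $\Ftheta$, which the stated bound on $\Fstar$ does not supply.

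The gap is your second bullet. You leave it open, and it cannot be closed under the stated hypotheses. Take $d=m=1$, $n=2$, $\bSigma_{\bs}=\bI$, $\Gstar=[1,\,0]$, $\Gtheta=[\cos\alpha,\,\sin\alpha]$, $\Fstar=0.1$, $\Ftheta=1$, $\eta=\tfrac12$. Then \Cref{assumption:Gtheta}, the step-size bound, and your extra condition $\eta\lmax(\Ftheta^\top\Ftheta)\le1$ all hold. Yet $\GthetaAP=[\tfrac12\cos\alpha+0.05,\ \tfrac12\sin\alpha]$ has norm at most $0.55$, so its row-normalized distance to $\Gstar$ is at least $0.9\sin\alpha$, not at most $\tfrac12\sin\alpha$. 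Under the projection-based definition the claim is therefore false for a general head. Your fallback (least-squares $\Ftheta$ plus the analysis of \citet{zhang2023meta}) would prove a different statement with an added hypothesis, and you have not checked that it holds. You need to commit: either adopt the paper's unnormalized reading, in which case your first bullet is already the proof, or state the extra hypothesis explicitly.

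Your \GD counterexample fails as described. With $\Ftheta=\Fstar=1$ and $\bSigma_{\bs}=\bigl(\begin{smallmatrix}a&c\\c&b\end{smallmatrix}\bigr)$, the injected term $\eta\Gstar\bSigma_{\bs}\Gperp=\eta[0,\,c]$ is cancelled by $-\eta\Gtheta\bSigma_{\bs}\Gperp=-\eta[0,\,c\cos\alpha+b\sin\alpha]$. The reason is that the residual $\Ftheta\Gtheta-\Fstar\Gstar$, and with it the gradient, vanishes as $\alpha\to0$. The first identity gives $\GthetaGD\Gperp=[0,\,(1-\eta b)\sin\alpha+\eta c(1-\cos\alpha)]=[0,\,(1-\eta b)\alpha+O(\alpha^2)]$. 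This is a contraction whenever $0<\eta b<2$, so shrinking $\alpha$ relative to $\eta c$ does the opposite of what you want.

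Short of overshooting, an increase needs $c\tan(\alpha/2)>b$. For $\alpha\in(0,\pi/2)$, \Cref{assumption:Gtheta}(2) rules this out in your family, since $|c|\le\lmax(\bSigma_{\bs})/2$, $b\ge\lmin(\bSigma_{\bs})$, and $\tan(\alpha/2)\le\sin\alpha$. The paper gets an increase only with a large angle $\alpha=\pi/3$, $\bSigma_{\bs}=\bigl(\begin{smallmatrix}5&2\\2&1\end{smallmatrix}\bigr)$ and $\eta=1$. That choice lies outside Assumption (2) and the stated step-size range, and is checked only for the unnormalized $\opnorm{\GthetaGD\Gperp}$; row-normalized, it actually contracts from about $0.87$ to about $0.62$.

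Your small-angle idea does work with a mismatched head, e.g.\ $\Ftheta=1$, $\Fstar=2$. As $\alpha\to0$, the $\Gperp$-entry of $\GthetaGD$ tends to $\eta c$ and the other entry to $1+\eta a$. So even the row-normalized distance tends to $\eta|c|/\sqrt{(1+\eta a)^2+\eta^2c^2}>0$, which exceeds $\sin\alpha$ for small $\alpha$. This holds with $\eta\le 1/8$ and \Cref{assumption:Gtheta} satisfied.
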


\begin{proof}[Proof of \Cref{prop:SGD and AP feature learning}]
    First, recalling the loss: $\cL(\Ftheta \Gtheta) = \Exp^{\pidemo} \norm{\Ftheta\Gtheta - \sblue}_2^2 = \norm{(\Ftheta \Gtheta - \Fstar\Gstar)\bSigma_{\bs}^{1/2}}_F^2$, we have the \GD and \AP update:
    \begin{align*}
        \GthetaGD &= \Gtheta - \eta \nabla_{\Gtheta}\cL \\
        &= \Gtheta - \eta \Ftheta^\top \Ftheta \Gtheta \bSigma_{\bs}  + \eta \Ftheta^\top \Fstar \Gstar \bSigma_{\bs} \\
        \GthetaAP &= \Gtheta - \eta \nabla_{\Gtheta}\cL \cdot \Sigmas^{-1} \\
        &= \Gtheta - \eta \Ftheta^\top \Ftheta \Gtheta  + \eta \Ftheta^\top \Fstar \Gstar.
    \end{align*}
\end{proof}
Noting $\Gstar \Gperp = \bzero$, applying $\Gperp$ to the above yields \eqref{eq:GD vs AP update}. Now, given that
\begin{align*}
    \GthetaAP\Gperp &= (\bI - \eta \Ftheta^\top \Ftheta) \Gtheta \Gperp,
\end{align*}
applying $\opnorm{\cdot}$ and submultiplicativity on the right-hand side yields:
\begin{align*}
    \dist(\GthetaAP, \Gstar) &\leq \opnorm{\bI - \eta \Ftheta^\top \Ftheta} \opnorm{\Gtheta \Gperp} \\
    &= (1 - \eta \lmin(\Ftheta^\top \Ftheta)) \dist(\Gtheta, \Gstar),
\end{align*}
completing the one-step contraction bound for a step of \AP.

To show that conversely $\GthetaGD$ may in general be non-convergent in subspace distance, we construct a numerical example to illustrate this.
Take \(d=1\), \(n=2\), and let
\[
\Gstar = \begin{bmatrix}1 & 0\end{bmatrix},
\qquad
\Gperp =
\begin{bmatrix}
0 & 0 \\
0 & 1
\end{bmatrix}.
\]
Choosing $\Gtheta
=
\begin{bmatrix}\cos(\pi/3) & \sin(\pi/3)\end{bmatrix}
=
\begin{bmatrix}1/2 & \sqrt{3}/2\end{bmatrix}$, we have
\[
\operatorname{dist}(\Gtheta,\Gstar)
=
\|\Gtheta\Gperp\|_2
=
\left\|\begin{bmatrix}0 & \sqrt{3}/2\end{bmatrix}\right\|_2
=
\frac{\sqrt{3}}{2}.
\]
Now setting
\[
\Sigmas
=
\begin{bmatrix}
5 & 2 \\
2 & 1
\end{bmatrix} \succ \bzero,
\quad
\Ftheta=1,
\quad
\Fstar=1,
\quad
\eta=1,
\]
we have
\[
\GthetaGD \Gperp
=
(\Gtheta
-\Gtheta\Sigmas)\Gperp
+\Gstar\Sigmas\Gperp.
\]
Computing the three terms:
\[
\Gtheta\Gperp
=
\begin{bmatrix}0 & \sqrt{3}/2\end{bmatrix},
\]
\[
\Gtheta\Sigma_s\Gperp
=
\begin{bmatrix}
0 & 1+\sqrt{3}/2
\end{bmatrix},
\qquad
\Gstar\Sigma_s\Gperp
=
\begin{bmatrix}0 & 2\end{bmatrix},
\]
this yields
\begin{align*}
\GthetaGD \Gperp
&=
\begin{bmatrix}0 & \sqrt{3}/2\end{bmatrix}
-
\begin{bmatrix}0 & 1+\sqrt{3}/2\end{bmatrix}
+
\begin{bmatrix}0 & 2\end{bmatrix} \\
&=
\begin{bmatrix}0 & 1\end{bmatrix}.
\end{align*}
Hence
\[
\dist(\GthetaGD, \Gstar)
=1 > \frac{\sqrt{3}}{2}
=
\operatorname{dist}(\Gtheta,\Gstar),
\]
completing the construction and the proof.

\subsection{Dynamical Systems Theory}\label{appdx:dynamical systems theory}

\paragraph{Proof of \Cref{prop:LDS TTF}.}
We first establish the linear systems preliminaries in \Cref{prop:LDS TTF}
\LDS*
\begin{proof}
    Recall the state transition $\bs_{t+1} = \bA \bs_t + \bB \ba_t + \bw_t$, $\bw_t \sim \Normal(\bzero,\bm{\Sigma}_{\bw})$. Under $\ba_t = \bK \bs_t$, we have for given $\bs_t$:
    \begin{align*}
        \bs_{t+1} \sim \Normal((\bA + \bB \bK)\bs_t, \Sigma_\bW).
    \end{align*}
    Applying this recursively to $\bs_0 = \bzero$ yields:
    \begin{align*}
        \bs_t \sim \Normal\paren{\bzero, \sum_{s=0}^{t-1} (\bA + \bB \bK)^{s} \bm{\Sigma_}{\bw} (\bA + \bB \bK)^{s \top}} =: \Normal(\bzero, \Gamma_t(\bK)).
    \end{align*}
    Plugging in $\bK = \Kdemo, \Ktheta$ completes the proof of the distribution of the state marginals: $\Pr^{\pidemo}_{\sblue}  = \Normal(0, \bGamma_t(\Kdemo))$, $\Pr^{\pitheta}_{\sred}  = \Normal(0, \bGamma_t(\Ktheta))$. As for the losses, we have:
    \begin{align*}
        \cLval(\pitheta) &= \Exp^{\pidemo}\norm{\Ktheta \sblue[] - \ablue[]}_2^2 \\
        &= \Exp^{\pidemo}\frac{1}{T}\sum_{t=1}^{T}\norm{(\Ktheta - \Kdemo)\sblue}_2^2 \\
        &= \frac{1}{T}\sum_{t=1}^{T} \Exp^{\pidemo}\trace\paren{(\Ktheta - \Kdemo) \sblue \sblue^\top (\Ktheta - \Kdemo)^\top} \\
        &= \trace\paren{(\Ktheta - \Kdemo) \frac{1}{T}\sum_{t=1}^{T} \Exp^{\pidemo}[\sblue \sblue^\top] (\Ktheta - \Kdemo)^\top} \\
        &= \trace\paren{(\Ktheta - \Kdemo) \bar \bGamma_t(\Kdemo) (\Ktheta - \Kdemo)^\top} \\
        &= \norm{(\Ktheta - \Kdemo) \bar \bGamma_t(\Kdemo)^{1/2}}_F^2, \\
        \cLideal(\pitheta) &= \Exp^{\pitheta}\norm{\Ktheta \sred[] - \ared[]}_2^2  \\
        &= \trace\paren{(\Ktheta - \Kdemo) \frac{1}{T}\sum_{t=1}^{T} \Exp^{\pitheta}[\sred \sred^\top] (\Ktheta - \Kdemo)^\top} \\
        &= \norm{(\Ktheta - \Kdemo) \bar \bGamma_t(\Ktheta)^{1/2}}_F^2.
    \end{align*}
    Finally, when the divergence $\Diverg$ is the Wasserstein-$2$ distance, we may use the formula for Wasserstein-$2$ distance between two multivariate Gaussians $\bbP = \Normal(\bzero, \bSigma_1)$, $\bbQ = \Normal(\bzero, \bSigma_2)$:
    \begin{align*}
        \cW^2_2(\bbP, \bbQ) &= \trace(\bSigma_1 + \bSigma_2 - 2(\bSigma_1^{1/2} \bSigma_2 \bSigma_1^{1/2})^{1/2}).
    \end{align*}
    Plugging in $\bSigma_1 = \bar \bGamma_t(\Kdemo)$, $\bSigma_2 = \bar \bGamma_t(\Ktheta)$ and simplifying yields the result and completes the proof.

\end{proof}

\subsubsection{Mismatch between $\cLideal$ and $\cLval$: a construction}\label{appdx:construct LDS Lideal Lval example}

We want to demonstrate that due to feedback between a policy and the dynamics, two policies that may satisfy $\cLval(\pi_1) \ll \cLval(\pi_2)$ may in fact satisfy $\cLideal(\pi_1) \gg \cLideal(\pi_2)$.
Consider the dynamical system
\begin{align}
    \bA = \begin{bmatrix}
        0 & 0\\
        0 & 1
    \end{bmatrix}, \quad \bB = \eye, \quad \bSigma_{\bw} = \eye
\end{align}
Next, we set 
\begin{align}
    \Kdemo = \begin{bmatrix} 0 & 0\\
     0 &   -\alpha 
    \end{bmatrix}, \quad \Fstar = \be_2, \quad \Gstar = -\alpha \be_2^\top
\end{align}

\newcommand{\Kpsi}{{\color{\nicepurple}\bK_{\psi}}}

\newcommand{\Fpsi}{{\color{\nicepurple}\bF_{\psi}}}

\newcommand{\Gpsi}{{\color{\nicepurple}\bG_{\psi}}}

Finally, let us consider two learned parameters
\begin{align}
    &\Ktheta = \Ftheta \Gtheta, \quad \Ftheta = \mu \be_2, \quad \Gtheta = -\alpha\mu^{-1}\begin{bmatrix} 
\cos\theta \\
\sin \theta  \end{bmatrix}^\top\\
    &\Kpsi = \Fpsi \Gpsi, \quad \Fpsi = \begin{bmatrix} \alpha \\
        1
    \end{bmatrix}, \quad \Gpsi = \Gstar
\end{align}

Then, we have that
\begin{align}
    \Ktheta &= \Kdemo - \alpha  \begin{bmatrix} 0 & 0 \\
    - \cos  \theta & 1 - \sin \theta 
    \end{bmatrix}\\
    \Kpsi &= \Kdemo - \epsilon  \begin{bmatrix}  0 & 1\\
     0 &0 
    \end{bmatrix}
\end{align}
And the following computation will be useful:
\begin{lemma}\label{lem:err_comp}It holds that 
\begin{align}
    (\Ktheta - \Kdemo)^\top (\Ktheta - \Kdemo) &= \alpha^2 \begin{bmatrix} \cos^2 \theta & - \cos \theta (1-\sin \theta) \\
    -\cos \theta (1-\sin \theta) & (1-\sin \theta)^2
    \end{bmatrix}\\
    (\Kpsi - \Kdemo)^\top (\Kpsi - \Kdemo) &= \begin{bmatrix} 0& 0 \\
    0 & \epsilon^2
    \end{bmatrix}
\end{align}
\end{lemma}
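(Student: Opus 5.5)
The plan is to compute each product directly, using that both errors are rank one with a single nonzero row or column. Each Gram matrix is then an outer product of one vector with itself. I will first recompute $\Ktheta - \Kdemo$ and $\Kpsi - \Kdemo$ from the factorizations, rather than relying only on the displayed expressions. The overall sign of the error does not matter, since $(-\bE)^\top(-\bE) = \bE^\top \bE$.

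\textbf{First product.} Since $\Ftheta = \mu \be_2$ and $\Gtheta = -\alpha\mu^{-1}\,[\cos\theta,\ \sin\theta]$, the scalings $\mu,\mu^{-1}$ cancel. This gives
\begin{align*}
\Ktheta = -\alpha\,\be_2 \begin{bmatrix}\cos\theta & \sin\theta\end{bmatrix},
\end{align*}
so only the second row is nonzero. Subtracting $\Kdemo = -\alpha \be_2\be_2^\top$ gives
\begin{align*}
\Ktheta - \Kdemo = \pm\alpha\,\be_2 \bv^\top, \qquad \bv^\top = \begin{bmatrix}-\cos\theta & 1-\sin\theta\end{bmatrix},
\end{align*}
matching the display up to sign. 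Therefore
\begin{align*}
(\Ktheta-\Kdemo)^\top(\Ktheta-\Kdemo) = \alpha^2\,\bv\,\be_2^\top\be_2\,\bv^\top = \alpha^2 \bv\bv^\top .
\end{align*}
Expanding $\bv\bv^\top$ entrywise yields the diagonal entries $\cos^2\theta$ and $(1-\sin\theta)^2$, and the off-diagonal entries $-\cos\theta(1-\sin\theta)$. This is exactly the claimed matrix.

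\textbf{Second product.} Here $\Kpsi - \Kdemo = \pm\epsilon\,\be_1\be_2^\top$, a matrix whose only nonzero entry sits in position $(1,2)$. Then
\begin{align*}
(\Kpsi-\Kdemo)^\top(\Kpsi-\Kdemo) = \epsilon^2\,\be_2\be_1^\top\be_1\be_2^\top = \epsilon^2\,\be_2\be_2^\top,
\end{align*}
which is the claimed $\operatorname{diag}(0,\epsilon^2)$.

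\textbf{Main obstacle.} The only delicate point is reconciling the displayed form of $\Kpsi$ with its factors. Multiplying out $\Fpsi\Gpsi$ with $\Fpsi = [\alpha,\ 1]^\top$ and $\Gpsi = \Gstar = -\alpha\be_2^\top$ gives
\begin{align*}
\Fpsi\Gpsi = \begin{bmatrix} 0 & -\alpha^2 \\ 0 & -\alpha \end{bmatrix}, \qquad \Kpsi - \Kdemo = -\alpha^2\,\be_1\be_2^\top .
\end{align*}
So the second row cancels against $\Kdemo$ and only the $(1,2)$ entry survives. I would therefore state explicitly that $\epsilon$ denotes the magnitude of this entry, here $\epsilon = \alpha^2$, or more generally the first entry of $\Fpsi$ times $\alpha$ if that entry is left as a free parameter. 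With that identification, the lemma follows from the two outer-product computations above.
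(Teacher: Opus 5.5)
Your proposal is correct and follows essentially the paper's route: the paper gives no separate argument and treats the lemma as a direct multiplication of the error matrices, which is what your rank-one outer-product computation does. Your side observations are also accurate. The displayed $\Ktheta - \Kdemo$ has the wrong overall sign, which is harmless for the Gram matrix. With $\Fpsi = [\alpha,\ 1]^\top$ as written, one gets $\Kpsi - \Kdemo = -\alpha^2 \be_1\be_2^\top$, so $\epsilon = \alpha^2$ unless the first entry of $\Fpsi$ is taken to be $\epsilon/\alpha$.
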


For simplicity, we work now in the limit where $T \to \infty$.
The following is standard:
\begin{lemma}\label{lem:ricatti} Let $\bK$ be such that $\rho(\bA + \bB \bK) < 1$, where $\rho(\cdot)$ denotes the spectral radius. Then,  $\bGamma_{\infty}(\bK) := \lim_{T\to\infty}\bGamma_T(\bK)$ exists and is finite, is equal to $\lim_{T\to\infty}\bar\bGamma_T(\bK)$, and is given by the unique solution to the following Riccati equation:
\begin{align}
    \bGamma_{\infty}(\bK) = \bSigma_{\bw} + (\bA + \bB \bK) \bGamma_\infty(\bK) (\bA + \bB \bK)^\top.
\end{align}
\end{lemma}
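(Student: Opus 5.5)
The plan is to set $\bM := \bA + \bB\bK$ and argue directly from the finite-horizon recursion. From the definition in \Cref{prop:LDS TTF},
\begin{align*}
\bGamma_{t+1}(\bK) = \sum_{s=0}^{t} \bM^s \bSigma_{\bw} \bM^{s\top} = \bSigma_{\bw} + \bM \bGamma_t(\bK) \bM^\top.
\end{align*}
Each $\bGamma_t(\bK)$ is a partial sum of positive semidefinite terms, so the sequence is monotone nondecreasing in the Loewner order.

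\textbf{Existence and finiteness.} Since $\rho(\bM)<1$, Gelfand's formula gives, for any $\rho(\bM)<r<1$, a constant $C$ with $\opnorm{\bM^s}\le C r^s$ for all $s$. Hence
\begin{align*}
\opnorm{\bM^s\bSigma_{\bw}\bM^{s\top}} \le C^2 r^{2s}\opnorm{\bSigma_{\bw}},
\end{align*}
so the series converges absolutely. The limit $\bGamma_\infty(\bK)=\sum_{s\ge0}\bM^s\bSigma_{\bw}\bM^{s\top}$ therefore exists and is finite.

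\textbf{Ces\`aro limit and fixed-point equation.}
\begin{itemize}
\item Since $\bGamma_t\to\bGamma_\infty$, the Ces\`aro averages $\bar\bGamma_T=\frac1T\sum_{t\le T}\bGamma_t$ converge to the same limit. Concretely, given $\epsilon$, pick $t_0$ so that $\|\bGamma_t-\bGamma_\infty\|\le\epsilon$ for $t\ge t_0$. The first $t_0$ terms contribute $O(t_0/T)$, using the uniform bound $\|\bGamma_t\|\le\|\bGamma_\infty\|$ from monotonicity.
\item Passing $t\to\infty$ in the recursion, using continuity of $X\mapsto \bSigma_{\bw}+\bM X\bM^\top$, shows that $\bGamma_\infty$ satisfies
\begin{align*}
\bGamma_\infty(\bK) = \bSigma_{\bw} + \bM \bGamma_\infty(\bK) \bM^\top.
\end{align*}
This is a Stein/Lyapunov equation, which the statement calls a Riccati equation.
\end{itemize}

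\textbf{Uniqueness (main step).} Suppose $X$ and $Y$ both solve the equation, and set $\Delta = X-Y$. Then $\Delta=\bM\Delta\bM^\top$, and iterating gives $\Delta=\bM^k\Delta\bM^{k\top}$ for every $k$. The bound $\|\bM^k\Delta\bM^{k\top}\|\le C^2r^{2k}\|\Delta\|\to0$ then forces $\Delta=0$.

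An alternative route vectorizes the equation as $(\bI-\bM\otimes\bM)\VEC(X)=\VEC(\bSigma_{\bw})$. This system is uniquely solvable because the eigenvalues of $\bM\otimes\bM$ are the products $\lambda_i\lambda_j$, all of modulus less than $1$.

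The only mildly delicate point is that $\rho(\bM)<1$ does not give $\opnorm{\bM}<1$. Every estimate must therefore go through the geometric bound $\opnorm{\bM^s}\le Cr^s$ rather than through submultiplicativity with $\opnorm{\bM}$ itself. Beyond that, all steps are routine.
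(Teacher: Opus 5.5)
Your proof is correct and complete. The geometric bound $\|(\mathbf{A}+\mathbf{B}\mathbf{K})^s\|_{\mathrm{op}}\le C r^s$ from Gelfand's formula gives absolute convergence of the series. Convergence of the Ces\`aro averages then follows from convergence of the partial sums. Iterating the homogeneous equation $\Delta=\mathbf{M}\Delta\mathbf{M}^\top$, or inverting $\mathbf{I}-\mathbf{M}\otimes\mathbf{M}$, gives uniqueness among all matrices.

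The paper gives no proof of this lemma; it only calls it standard. Your argument is the standard one being invoked, and you correctly handle the one subtlety, that $\rho(\mathbf{M})<1$ does not imply $\|\mathbf{M}\|_{\mathrm{op}}<1$.
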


We now compute the stationary covariances under $\Kdemo, \Ktheta, \Kpsi$.
\begin{lemma}  $\bGamma_{\infty}(\Kdemo)$ exists, and is given by 
\begin{align}
    \bGamma_{\infty}(\Kdemo) = \begin{bmatrix} 1 & 0 \\
    0 & \frac{1}{1-(1-\alpha)^2} 
    \end{bmatrix}
\end{align}
\end{lemma}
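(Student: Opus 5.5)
The plan is to compute the closed-loop matrix $\bA + \bB\Kdemo$ explicitly, check stability, and then invoke \Cref{lem:ricatti}. This reduces $\bGamma_\infty(\Kdemo)$ to the unique solution of a discrete Lyapunov equation, which we solve entrywise.

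First, since $\bB = \eye$, the closed-loop matrix is
\[
\bA + \Kdemo = \begin{bmatrix} 0 & 0 \\ 0 & 1-\alpha \end{bmatrix}.
\]
This matrix is diagonal with spectral radius $|1-\alpha|$. We therefore require $0 < \alpha < 2$, which we take as the implicit standing assumption on $\alpha$ (presumably $\alpha \in (0,1]$ in the construction). Under this assumption $\rho(\bA+\bB\Kdemo) < 1$, so \Cref{lem:ricatti} guarantees that $\bGamma_\infty(\Kdemo)$ exists, is finite, and is the unique solution of
\[
\bGamma = \eye + \bM \bGamma \bM^\top, \qquad \bM = \operatorname{diag}(0, 1-\alpha).
\]

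Next, we solve this equation. Because $\bM$ is diagonal, the $(i,j)$ entry of the equation reads $\Gamma_{ij} = \delta_{ij} + m_i m_j \Gamma_{ij}$, with $m_1 = 0$ and $m_2 = 1-\alpha$. This gives the following entries:
\begin{itemize}
\item $\Gamma_{11} = 1$;
\item $\Gamma_{12} = \Gamma_{21} = 0$, since the off-diagonal entries satisfy $\Gamma_{12} = 0 + 0\cdot\Gamma_{12}$;
\item $\Gamma_{22} = 1/(1-(1-\alpha)^2)$, from $\Gamma_{22}(1-(1-\alpha)^2) = 1$, which is well defined because $|1-\alpha| < 1$.
\end{itemize}
Uniqueness from \Cref{lem:ricatti} identifies this matrix with $\bGamma_\infty(\Kdemo)$.

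Alternatively, the result can be checked directly from the series definition. We have $\bGamma_T(\Kdemo) = \sum_{s=0}^{T-1} \bM^s \bM^{s\top}$, and since $\bM^0 = \eye$ and $\bM^s = \operatorname{diag}(0,(1-\alpha)^s)$ for $s \ge 1$, this equals $\operatorname{diag}\bigl(1, \sum_{s<T}(1-\alpha)^{2s}\bigr)$. The geometric series converges to the stated value, and the Cesàro average $\bar\bGamma_T$ has the same limit.

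The only real obstacle is making explicit the range of $\alpha$ needed for stability. The rest is routine diagonal algebra.
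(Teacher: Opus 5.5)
Your proposal is correct and follows essentially the same route as the paper: check $\rho(\bA+\bB\Kdemo)<1$, invoke \Cref{lem:ricatti}, and solve the coordinate-decoupled Lyapunov equation entrywise. Your version is, if anything, cleaner than the paper's proof. You correctly identify the closed-loop eigenvalues as $0$ and $1-\alpha$, whereas the paper lists $0.5$ and writes an incorrect left factor $\mathrm{diag}(1,1-\alpha)$ in the Lyapunov equation, both typos. You also make the needed range $0<\alpha<2$ explicit, and you derive the vanishing off-diagonal entry rather than positing a diagonal ansatz and relying on uniqueness.
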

\begin{proof} $\bA + \bB \Kdemo$ has eigenvalues $0.5,1-\alpha$, so \Cref{lem:ricatti} holds. In this case, the limiting $\Gamma$ satisfies
\begin{align}
    \bGamma =  \begin{bmatrix} 1 & 0 \\
    0 & 1-\alpha \end{bmatrix} \bGamma \begin{bmatrix} 0 & 0 \\
    0 & 1-\alpha \end{bmatrix} + \begin{bmatrix}
        1 & 0 \\
        0 & 1
    \end{bmatrix}
\end{align}
decoupling across coordinates. So we select $\bGamma$ to be diagonal with coordinates $\gamma_1,\gamma_2$ on the diagonals. These must satisfy $\gamma_1$, and $\gamma_2 (1-\alpha)^2 + 1 = \gamma_2$, so that $\gamma = 1/(1 - (1-\alpha)^2)$
\end{proof} 
\begin{lemma}\label{lem:ricatti_theta} For $\theta \in (0,\pi/2)$, $\bGamma_\infty(\Ktheta)$ exists, 
\begin{align}
\bGamma_\infty(\Ktheta) =
\begin{bmatrix}
1 & \displaystyle 0 \\
0 &\frac{1+\alpha^2\cos^2\theta}{\alpha\sin\theta
\left(2-\alpha\sin\theta\right)}
\end{bmatrix}
\end{align}
\end{lemma}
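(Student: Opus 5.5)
The plan is to apply \Cref{lem:ricatti} directly: compute the closed-loop matrix $\bA + \bB\Ktheta$, check that it is stable, and then solve the stationary Riccati (Lyapunov) equation entry by entry.

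\textbf{Closed-loop matrix.} Since $\Ftheta = \mu\be_2$ and $\Gtheta = -\alpha\mu^{-1}[\cos\theta,\ \sin\theta]$, the factor $\mu$ cancels. This gives $\Ktheta = \begin{bmatrix} 0 & 0\\ -\alpha\cos\theta & -\alpha\sin\theta\end{bmatrix}$. With $\bB = \eye$, we get
\begin{align*}
\bM := \bA + \bB\Ktheta = \begin{bmatrix} 0 & 0 \\ -\alpha\cos\theta & 1-\alpha\sin\theta\end{bmatrix}.
\end{align*}
This matrix is lower triangular, so its eigenvalues are $0$ and $1-\alpha\sin\theta$. Therefore $\rho(\bM)<1$ exactly when $0<\alpha\sin\theta<2$. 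For $\theta\in(0,\pi/2)$ this holds under the standing regime $\alpha\in(0,1]$, which is the regime already used to make $\Kdemo$ stable (eigenvalue $1-\alpha$). I would state this range of $\alpha$ explicitly as the condition under which the lemma is asserted. \Cref{lem:ricatti} then gives existence and uniqueness of $\bGamma_\infty(\Ktheta)$ as the solution of $\bGamma = \eye + \bM\bGamma\bM^\top$.

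\textbf{Solving the Riccati equation.} Write $\bm m^\top = [-\alpha\cos\theta,\ 1-\alpha\sin\theta]$ for the second row of $\bM$. Since the first row of $\bM$ is zero, $\bM = \be_2\bm m^\top$. Hence
\begin{align*}
\bM\bGamma\bM^\top = (\bm m^\top\bGamma\bm m)\,\be_2\be_2^\top,
\end{align*}
which is supported only on the $(2,2)$ entry. Reading off the entries of the equation:
\begin{itemize}
\item $\Gamma_{11}=1$;
\item $\Gamma_{12}=\Gamma_{21}=0$;
\item $\Gamma_{22} = 1 + \bm m^\top\bGamma\bm m$.
\end{itemize}
Substituting the first two entries into $\bm m^\top\bGamma\bm m$ gives $\alpha^2\cos^2\theta + (1-\alpha\sin\theta)^2\Gamma_{22}$. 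The last equation therefore becomes linear in $\Gamma_{22}$:
\begin{align*}
\Gamma_{22}\bigl(1-(1-\alpha\sin\theta)^2\bigr) = 1+\alpha^2\cos^2\theta.
\end{align*}
Using $1-(1-x)^2 = x(2-x)$ with $x=\alpha\sin\theta>0$ yields the claimed formula. The coefficient $x(2-x)$ is positive in the stable range, so the solution is well defined and positive, consistent with uniqueness.

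\textbf{Main obstacle.} The computation itself is routine. The only delicate point is the stability hypothesis needed for \Cref{lem:ricatti}, namely $\alpha\sin\theta\in(0,2)$. I would handle it by making the assumption on $\alpha$ explicit, or by noting that it is inherited from the stability of $\Kdemo$.
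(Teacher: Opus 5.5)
Your proposal is correct and matches the paper's proof essentially step for step. The paper also checks stability from the triangular closed-loop matrix $\begin{bmatrix} 0 & 0\\ -\alpha\cos\theta & 1-\alpha\sin\theta\end{bmatrix}$ and then matches entries of the Lyapunov equation to get $\gamma_{11}=1$, $\gamma_{12}=0$ and the linear equation for $\gamma_{22}$; your rank-one factorization $\bM=\be_2\bm{m}^\top$ is just a tidier way of computing $\bX\bGamma\bX^\top$. Making $0<\alpha\sin\theta<2$ explicit (for instance as a consequence of $\alpha\in(0,2)$, which is exactly what makes $\Kdemo$ stable) is a worthwhile addition, since the paper leaves this range of $\alpha$ implicit.
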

\begin{proof}[Proof of \Cref{lem:ricatti_theta}]
The spectral radius condition can be checked for $\theta \in (0,\pi/2)$ by upper triangularity. Now set
\begin{align}
\bGamma
=
\begin{bmatrix}
0 & 0 \\
-\alpha\cos \theta & 1-\alpha\sin \theta
\end{bmatrix}
\bGamma
\begin{bmatrix}
0 &  \\
-\alpha\cos \theta & 1-\alpha\sin \theta
\end{bmatrix}^\top
+
\begin{bmatrix}
1 & 0 \\
0 & 1
\end{bmatrix}
\end{align}
We now solve the above:
\begin{align*}
\bX
&:=
\begin{bmatrix}
0 & 0 \\
-\alpha\cos\theta & 1-\alpha\sin\theta
\end{bmatrix},
\qquad
\bQ
=
\begin{bmatrix}
1 & 0 \\
0 & 1
\end{bmatrix}.
\end{align*}

Now write
\begin{align*}
\bGamma
&=
\begin{bmatrix}
\gamma_{11} & \gamma_{12} \\
\gamma_{12} & \gamma_{22}
\end{bmatrix}.
\end{align*}

The Lyapunov equation to solve is $\bGamma
=
\bX\bGamma\bX^\top + Q$.
Computing
\begin{align*}
\bX\bGamma\bX^\top
&=
\begin{bmatrix}
0 & 0 \\
0 &
\alpha^2\cos^2\theta \, \gamma_{11}
-
2\alpha\cos\theta
\left(1-\alpha\sin\theta\right)\gamma_{12}
+
\left(1-\alpha\sin\theta\right)^2\gamma_{22}
\end{bmatrix}.
\end{align*}

Matching entries gives
\begin{align*}
\gamma_{11}
&=
1,
\\
\gamma_{12}
&=
0,
\\
\gamma_{22}
&=
\alpha^2\cos^2\theta \, \gamma_{11}
-
2\alpha\cos\theta
\left(1-\alpha\sin\theta\right)\gamma_{12}
+
\left(1-\alpha\sin\theta\right)^2\gamma_{22}
+
1.
\end{align*}

Using \(\gamma_{11}=1\) and \(\gamma_{12}=0\), the final equation becomes
\begin{align*}
\gamma_{22}
&=
\alpha^2\cos^2\theta
+
\left(1-\alpha\sin\theta\right)^2\gamma_{22}
+
1.
\end{align*}

Therefore, $\left[
1-\left(1-\alpha\sin\theta\right)^2
\right]\gamma_{22}
=
1+\alpha^2\cos^2\theta$, and since $1-\left(1-\alpha\sin\theta\right)^2
=
\alpha\sin\theta
\left(2-\alpha\sin\theta\right)$,
we obtain
\begin{align*}
\gamma_{22}
&=
\frac{
1+\alpha^2\cos^2\theta
}{
\alpha\sin\theta
\left(2-\alpha\sin\theta\right)
}.
\end{align*}

Hence
\begin{align*}
\bGamma
=
\begin{bmatrix}
\displaystyle
1
&
\displaystyle
0
\\[1.2em]
\displaystyle
0
&
\displaystyle
\frac{
1+\alpha^2\cos^2\theta
}{
\alpha\sin\theta
\left(2-\alpha\sin\theta\right)
}
\end{bmatrix}.
\end{align*}

\end{proof}

\begin{lemma}\label{lem:Gamma_psi} $\bGamma_{\infty}(\Kpsi)$ exists, and satisfies
\begin{align*}
   \bGamma_{\infty}(\Kpsi) = 
\begin{bmatrix}
1+\frac{\epsilon^2}{\alpha(2-\alpha)}
&
-\frac{\epsilon(1-\alpha)}{\alpha(2-\alpha)}
\\
-\frac{\epsilon(1-\alpha)}{\alpha(2-\alpha)}
&
\frac{1}{\alpha(2-\alpha)}
\end{bmatrix}.
\end{align*}
\end{lemma}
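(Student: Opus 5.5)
We follow the same route as for $\bGamma_\infty(\Kdemo)$ and \Cref{lem:ricatti_theta}. First we write down the closed-loop matrix, then check the spectral radius condition so that \Cref{lem:ricatti} applies, and finally solve the resulting discrete Lyapunov equation entrywise.

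\textbf{Closed-loop matrix.} We use the stated form $\Kpsi = \Kdemo - \epsilon \begin{bmatrix} 0 & 1 \\ 0 & 0\end{bmatrix}$, so that $\Kpsi = \begin{bmatrix} 0 & -\epsilon \\ 0 & -\alpha \end{bmatrix}$. With the factorization $\Fpsi\Gpsi$ this corresponds to $\epsilon = \alpha^2$; we keep $\epsilon$ as a free symbol, since only the matrix form is used. With $\bA = \mathrm{diag}(0,1)$ and $\bB = \eye$ we get
\begin{align*}
\bX := \bA + \bB\Kpsi = \begin{bmatrix} 0 & -\epsilon \\ 0 & 1-\alpha \end{bmatrix}.
\end{align*}
This matrix is upper triangular with eigenvalues $0$ and $1-\alpha$. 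Hence $\rho(\bX) < 1$ exactly when $\alpha \in (0,2)$, which is the standing regime already needed for $\bGamma_\infty(\Kdemo)$. By \Cref{lem:ricatti}, $\bGamma_\infty(\Kpsi)$ therefore exists, coincides with the limit of $\bar\bGamma_T(\Kpsi)$, and is the unique solution of $\bGamma = \bX\bGamma\bX^\top + \eye$.

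\textbf{Solving the Lyapunov equation.} Write $\bGamma = \begin{bmatrix}\gamma_{11} & \gamma_{12}\\ \gamma_{12} & \gamma_{22}\end{bmatrix}$. The first column of $\bX$ is zero, so $\bX\bGamma\bX^\top$ depends only on $\gamma_{22}$. The rows of $\bX$ are $(0,-\epsilon)$ and $(0,1-\alpha)$, which gives
\begin{align*}
\bX\bGamma\bX^\top = \gamma_{22}\begin{bmatrix} \epsilon^2 & -\epsilon(1-\alpha) \\ -\epsilon(1-\alpha) & (1-\alpha)^2\end{bmatrix}.
\end{align*}
Matching the $(2,2)$ entries gives $\gamma_{22} = (1-\alpha)^2\gamma_{22} + 1$. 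Since $1-(1-\alpha)^2 = \alpha(2-\alpha)$, this yields $\gamma_{22} = 1/(\alpha(2-\alpha))$.

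The remaining entries then follow by substitution:
\begin{align*}
\gamma_{12} &= -\epsilon(1-\alpha)\gamma_{22} = -\frac{\epsilon(1-\alpha)}{\alpha(2-\alpha)},\\
\gamma_{11} &= 1 + \epsilon^2\gamma_{22} = 1 + \frac{\epsilon^2}{\alpha(2-\alpha)}.
\end{align*}
This is the claimed matrix. Uniqueness of the solution comes from \Cref{lem:ricatti}, so no separate argument is needed.

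\textbf{Main obstacle.} The only real difficulty is bookkeeping rather than analysis. The displayed factorization $\Fpsi\Gpsi$ produces the off-diagonal entry $-\alpha^2$, so one must identify $\epsilon$ with $\alpha^2$, or equivalently rescale $\Fpsi$, to agree with the stated form of $\Kpsi$. One must also keep the parameter restriction $\alpha\in(0,2)$ explicit so that \Cref{lem:ricatti} is applicable. Once $\bX$ is fixed, the triangular structure and the vanishing first column make the entrywise solve immediate.
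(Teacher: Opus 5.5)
Your proposal is correct and matches the paper's proof step for step. You form the upper-triangular closed-loop matrix $\bX = \bA + \bB\bK_{\psi}$, read off stability from its eigenvalues $0$ and $1-\alpha$, and solve the Lyapunov equation entrywise starting from the $(2,2)$ entry; your side remarks that the displayed factorization $\bF_{\psi}\bG_{\psi}$ literally gives $\epsilon = \alpha^2$ and that stability requires $\alpha \in (0,2)$ are accurate clarifications of points the paper leaves implicit.
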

\begin{proof}[Proof of \Cref{lem:Gamma_psi}]
Again, we can check the stability conditions from upper tri-angularity. We have
\begin{align}
\bGamma
=
\underbrace{
\begin{bmatrix}
0 & -\epsilon \\
0 & 1-\alpha
\end{bmatrix}
}_{:= \bX}
\bGamma
\begin{bmatrix}
0 & -\epsilon \\
0 & 1-\alpha
\end{bmatrix}^\top
+
\begin{bmatrix}
1 & 0 \\
0 & 1
\end{bmatrix}
\end{align}

\begin{align*}
\bX
&:=
\begin{bmatrix}
0 & -\epsilon \\
0 & 1-\alpha
\end{bmatrix},
\qquad
\bQ
=
\begin{bmatrix}
1 & 0 \\
0 & 1
\end{bmatrix}.
\end{align*}

Now writing $
\bGamma
=
\begin{bmatrix}
\gamma_{11} & \gamma_{12} \\
\gamma_{12} & \gamma_{22}
\end{bmatrix}$, the Lyapunov equation to solve is \(\bGamma = \bX \bGamma \bX^\top + \bQ\). First, we compute
\begin{align*}
\bX \bGamma \bX^\top
&=
\begin{bmatrix}
\epsilon^2 \gamma_{22}
&
-\epsilon(1-\alpha)\gamma_{22}
\\[0.8em]
-\epsilon(1-\alpha)\gamma_{22}
&
(1-\alpha)^2\gamma_{22}
\end{bmatrix}.
\end{align*}

Matching entries gives
\begin{align*}
\gamma_{11}
&=
\epsilon^2\gamma_{22}+1,
\\
\gamma_{12}
&=
-\epsilon(1-\alpha)\gamma_{22},
\\
\gamma_{22}
&=
(1-\alpha)^2\gamma_{22}+1.
\end{align*}

We solve from bottom to top. The third equation gives \(\left(1-(1-\alpha)^2\right)\gamma_{22} = 1\).
Since \(1-(1-\alpha)^2 = \alpha(2-\alpha)\), we obtain \(\gamma_{22} = \frac{1}{\alpha(2-\alpha)}\).
Therefore,
\begin{align*}
\gamma_{12}
&=
-\epsilon(1-\alpha)\gamma_{22}
=
-\frac{\epsilon(1-\alpha)}{\alpha(2-\alpha)}
\gamma_{11}
&=
1+\epsilon^2\gamma_{22}
=
1+\frac{\epsilon^2}{\alpha(2-\alpha)}.
\end{align*}
Hence
\begin{align*}
\bGamma
=
\begin{bmatrix}
\displaystyle
1+\frac{\epsilon^2}{\alpha(2-\alpha)}
&
\displaystyle
-\frac{\epsilon(1-\alpha)}{\alpha(2-\alpha)}
\\[1.4em]
\displaystyle
-\frac{\epsilon(1-\alpha)}{\alpha(2-\alpha)}
&
\displaystyle
\frac{1}{\alpha(2-\alpha)}
\end{bmatrix}.
\end{align*}
\end{proof}

\noindent From here, we can compute the various error terms

\begin{lemma}\label{lem:scaling_limit} Let $\theta \in (0,\pi/4)$. Then, taking the $T \to \infty$ limit, 
\begin{align}
\cLval(\Ktheta) &\sim \alpha, \quad \cLideal(\Ktheta) \sim \frac{\alpha}{\sin \theta},
\end{align}
where as 
\begin{align} \cLval(\Kpsi), \cLideal(\Kpsi) \sim \frac{\epsilon^2}{\alpha}
\end{align}
\end{lemma}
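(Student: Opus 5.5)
The plan is to reduce everything to trace computations using \Cref{prop:LDS TTF}, in the $T\to\infty$ limit. By \Cref{lem:ricatti}, $\bar\bGamma_T(\bK)\to\bGamma_\infty(\bK)$ whenever $\rho(\bA+\bB\bK)<1$, so
\begin{align*}
\cLval(\bK)&=\trace\big((\bK-\Kdemo)^\top(\bK-\Kdemo)\,\bGamma_\infty(\Kdemo)\big),\\
\cLideal(\bK)&=\trace\big((\bK-\Kdemo)^\top(\bK-\Kdemo)\,\bGamma_\infty(\bK)\big).
\end{align*}
The factors $(\bK-\Kdemo)^\top(\bK-\Kdemo)$ come from \Cref{lem:err_comp}. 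The stationary covariances come from the three preceding lemmas: $\bGamma_\infty(\Kdemo)$, \Cref{lem:ricatti_theta} for $\Ktheta$, and \Cref{lem:Gamma_psi} for $\Kpsi$. I read ``$\sim$'' as equality up to absolute constants in the regime $\alpha\to 0$ (with $\alpha\le 1$, say), with $\theta\in(0,\pi/4)$ held in that range.

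\textbf{Policy $\Kpsi$.} The error Gram matrix is $\mathrm{diag}(0,\epsilon^2)$, so only the $(2,2)$ entry of each covariance matters. Both $\bGamma_\infty(\Kdemo)$ and $\bGamma_\infty(\Kpsi)$ have $(2,2)$ entry $1/(\alpha(2-\alpha))$; note $1-(1-\alpha)^2=\alpha(2-\alpha)$. Hence
\begin{align*}
\cLval(\Kpsi)=\cLideal(\Kpsi)=\frac{\epsilon^2}{\alpha(2-\alpha)}\sim\frac{\epsilon^2}{\alpha}.
\end{align*}
This uses the stated form $\Kpsi-\Kdemo=-\epsilon\,\be_1\be_2^\top$, i.e.\ I take the off-diagonal parameter to be $\epsilon$ consistently.

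\textbf{Policy $\Ktheta$.} Write $c=\cos\theta$ and $s=\sin\theta$. Both covariances are diagonal with $(1,1)$ entry $1$, so each loss is
\begin{align*}
\alpha^2c^2+\alpha^2(1-s)^2\,\gamma_{22}.
\end{align*}
\begin{itemize}
\item For $\cLval$, $\gamma_{22}=1/(\alpha(2-\alpha))$, giving $\alpha^2c^2+\alpha(1-s)^2/(2-\alpha)$.
\item For $\cLideal$, $\gamma_{22}=(1+\alpha^2c^2)/(\alpha s(2-\alpha s))$, giving $\alpha^2c^2+\alpha(1-s)^2(1+\alpha^2c^2)/(s(2-\alpha s))$.
\end{itemize}
On $\theta\in(0,\pi/4)$ we have $1-s\ge 1-1/\sqrt2$ and $c\in[1/\sqrt2,1]$, so $(1-s)^2$ and $c^2$ are bounded above and below by absolute constants. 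Also $2-\alpha$, $2-\alpha s$ and $1+\alpha^2c^2$ lie in bounded intervals away from $0$. Thus the $\alpha$-linear term dominates the $\alpha^2c^2$ term as $\alpha\to0$, which yields $\cLval(\Ktheta)\sim\alpha$ and $\cLideal(\Ktheta)\sim\alpha/\sin\theta$. The $1/s$ factor is exactly the amplification produced by the feedback through the marginally stable second coordinate.

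I would also verify stability so that \Cref{lem:ricatti} applies. The closed-loop matrices are lower or upper triangular, with eigenvalues
\begin{itemize}
\item $\{0,\,1-\alpha\}$ for $\Kdemo$ and for $\Kpsi$;
\item $\{0,\,1-\alpha s\}$ for $\Ktheta$.
\end{itemize}
All have modulus less than $1$ for $\alpha\in(0,1]$ and $s>0$.

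\textbf{Main obstacle.} The only delicate step is bookkeeping rather than any estimate. One must confirm the error factorizations: that $\Ktheta-\Kdemo$ has the claimed form, with the sign and scaling conventions of $\Ftheta,\Gtheta$, and that $\Fpsi$'s first entry is the $\epsilon$ appearing in $\Kpsi-\Kdemo$. One must also fix the precise asymptotic regime implicit in ``$\sim$''. Once the Gram matrices of \Cref{lem:err_comp} are accepted, the result is two diagonal trace evaluations.
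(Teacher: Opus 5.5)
Your proposal is correct and follows essentially the same route as the paper. It passes to $T\to\infty$ via \Cref{lem:ricatti}, writes each loss as $\trace\big((\bK-\Kdemo)^\top(\bK-\Kdemo)\bGamma_\infty(\cdot)\big)$ using \Cref{prop:LDS TTF} and \Cref{lem:err_comp}, and reads off the diagonal entries of the stationary covariances. You additionally spell out the $\Kpsi$ case and the stability check, which the paper leaves as ``similarly''. Your caveat about the $\epsilon$ convention is also well taken: with the paper's stated $\Fpsi=(\alpha,1)^\top$ the off-diagonal error is actually $\alpha^2$, so $\epsilon$ should be read through $\Kpsi-\Kdemo=-\epsilon\,\be_1\be_2^\top$ (equivalently $\Fpsi=(\epsilon/\alpha,1)^\top$).
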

\begin{proof}
From \Cref{prop:LDS TTF}, the fact that $\|\Delta\bGamma^{1/2}\|_{\fro}^2 = \trace(\Delta^\top\Delta\bGamma )$, and that $\Delta = \Kdemo - \Ktheta$, we have that $\trace(\Delta^\top\Delta\bGamma ) = \alpha^2\bGamma_{22} (1-\sin \theta)^2$ by \Cref{lem:err_comp}, we obtain
\begin{align}
\cLval(\Ktheta) &= \alpha^2 \cos^2 \theta + \alpha^2(1-\sin \theta)^2 \cdot \frac{1}{1-(1-\alpha)^2} \sim \alpha 
\end{align}
for $\alpha$ small and $\theta \in (0,\pi/4)$. Moreover, from these same argument
\begin{align*}
\cLideal(\Ktheta) = \alpha^2 \cos^2 \theta + \alpha^2(1-\sin \theta)^2 \cdot \frac{1+\alpha^2 \cos^2 \theta}{\alpha \sin \theta (2-\alpha \sin \theta)} \sim \alpha^2 +\frac{\alpha}{\sin \theta} \sim \frac{\alpha}{\sin \theta}.
\end{align*}
We may compute the quantities on $\Kpsi$ similarly.
\end{proof}
By modulating $\epsilon, \alpha$ appropriately, the following proposition follows immediately:
\begin{proposition}\label{prop:Ktheta Kpsi bound} For $ \epsilon \ll \alpha$ and $\sin \theta \ll \epsilon^2 /\alpha^2$, we have
we have that \begin{align}\cLval(\Ktheta) \ll \cLval(\Kpsi) \sim \cLideal(\Kpsi) \ll \cLideal(\Ktheta).
\end{align}
\end{proposition}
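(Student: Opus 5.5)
The plan is to read every link of the chain off the asymptotics already established in \Cref{lem:scaling_limit}, taken in the $T\to\infty$ limit with $\alpha$ small and $\theta\in(0,\pi/4)$. These give
\begin{align*}
\cLval(\Ktheta)\sim\alpha,\qquad \cLideal(\Ktheta)\sim\frac{\alpha}{\sin\theta},\qquad \cLval(\Kpsi)\sim\cLideal(\Kpsi)\sim\frac{\epsilon^2}{\alpha}.
\end{align*}
I would first recompute the $\Kpsi$ terms explicitly, since the proof of \Cref{lem:scaling_limit} defers them. Using \Cref{lem:err_comp} and \Cref{lem:Gamma_psi}, together with $\bGamma_\infty(\Kdemo)_{22}=1/(\alpha(2-\alpha))$, I get
\begin{align*}
\cLval(\Kpsi)=\cLideal(\Kpsi)=\frac{\epsilon^2}{\alpha(2-\alpha)}.
\end{align*}
The equality holds because only the $(2,2)$ entry of the covariance enters, and it is the same under $\Kdemo$ and $\Kpsi$. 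This settles the middle relation $\cLval(\Kpsi)\sim\cLideal(\Kpsi)$ exactly, not merely asymptotically.

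Next I would prove the right-hand link $\cLideal(\Kpsi)\ll\cLideal(\Ktheta)$. It is equivalent to $\epsilon^2/\alpha\ll\alpha/\sin\theta$, that is, $\sin\theta\ll\alpha^2/\epsilon^2$. The hypotheses give $\sin\theta\ll\epsilon^2/\alpha^2$, and $\epsilon\ll\alpha$ gives $\epsilon^2/\alpha^2\ll 1\ll\alpha^2/\epsilon^2$. So this link follows directly. I would also check that the $\alpha^2\cos^2\theta$ summand of $\cLideal(\Ktheta)$ is dominated by $\alpha/\sin\theta$. This holds because $\sin\theta\ll 1$ and $\alpha$ is small.

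The remaining link, $\cLval(\Ktheta)\ll\cLval(\Kpsi)$, is the step I expect to be the main obstacle. The asymptotics reduce it to $\alpha\ll\epsilon^2/\alpha$, i.e.\ $\alpha\ll\epsilon$. This is \emph{not} implied by $\epsilon\ll\alpha$; indeed it is in tension with it. My first attempt would be to check whether a sharper constant rescues the link: the exact value is $\cLval(\Ktheta)=\alpha^2\cos^2\theta+\alpha(1-\sin\theta)^2/(2-\alpha)$. Since this does not change the order $\alpha$, I expect the check to fail. I would then re-derive $\Kpsi-\Kdemo$ from $\Fpsi\Gpsi$ to see how $\epsilon$ is really tied to $\alpha$. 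Taken literally, that product gives an error of size $\alpha^2$ in the $(1,2)$ entry, which suggests the parameterization of $\Fpsi$ should carry $\epsilon$ rather than $\alpha$.

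If the scaling cannot be reconciled, I would state precisely which half of the chain holds under the given hypotheses. That is the right-hand part, $\cLval(\Kpsi)\sim\cLideal(\Kpsi)\ll\cLideal(\Ktheta)$. I would also record that the first inequality requires the additional condition $\alpha\ll\epsilon$, and not add that condition silently.
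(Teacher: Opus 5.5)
Your approach is the paper's own: its entire proof is the remark that the chain ``follows immediately'' from \Cref{lem:scaling_limit} by modulating $\epsilon,\alpha$. Your explicit computation $\cLval(\Kpsi)=\cLideal(\Kpsi)=\epsilon^2/(\alpha(2-\alpha))$ is correct. In fact the second row of $(\bA+\bB\bK)^s$ equals $(0,(1-\alpha)^s)$ for both $\bK=\Kpsi$ and $\bK=\Kdemo$, so the equality holds even at finite $T$.

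Your diagnosis of the first link is also correct, and the failure lies in the statement, not in your reasoning; the paper's one-line proof does not notice it. Since $\cLval(\Ktheta)\approx\alpha/2$ and $\cLval(\Kpsi)\approx\epsilon^2/(2\alpha)$, their ratio is about $\epsilon^2/\alpha^2$. Under the stated hypothesis $\epsilon\ll\alpha$, the first relation therefore holds in the \emph{reverse} direction, $\cLval(\Kpsi)\ll\cLval(\Ktheta)$, for every $\theta$ the hypotheses allow.

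Where you should be more decisive is the repair. The condition $\alpha\ll\epsilon$ cannot be added to the hypotheses: together with $\epsilon\ll\alpha$ it leaves no admissible parameters, so it must \emph{replace} them. The intended statement has $\alpha$ and $\epsilon$ interchanged: for $\alpha\ll\epsilon$ (with $\alpha$ small) and $\sin\theta\ll\alpha^2/\epsilon^2$, the full chain holds. Your own computations prove this directly:
\begin{itemize}
\item the first link reduces to $\alpha^2\ll\epsilon^2$;
\item the middle link is your exact equality;
\item the last link, $\epsilon^2/\alpha\ll\alpha/\sin\theta$, is now exactly the second hypothesis, rather than the automatic consequence it was in your argument.
\end{itemize}

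Your suspicion about $\Fpsi$ is also right. Taking $\Fpsi=(\epsilon/\alpha,\,1)^\top$ gives $\Kpsi-\Kdemo=-\epsilon\,\be_1\be_2^\top$, as displayed in the paper. The literal $\Fpsi=(\alpha,1)^\top$ instead forces $\epsilon=\alpha^2\ll\alpha$, and then the example cannot exhibit the claimed ordering at all. The same interchange is needed in the paper's closing quantitative proposition, whose hypotheses should read $\alpha\ll\epsilon$ and $\sin\theta\ll\alpha^3/\epsilon^2$.
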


Continuing with these quantities, we may go on to prove \Cref{prop:TTF_feature_learning}.

\subsubsection{Proof of \Cref{prop:TTF_feature_learning}}

We recall \Cref{prop:TTF_feature_learning}:
\FeatureTTF*

Notably, compared to the constructions above, the difference is that \Cref{prop:TTF_feature_learning} introduces the additional complication of allowing the flexibility of re-fitting the last-layer $\Ftheta$. Inheriting $\Kdemo, \Ktheta, \Kpsi$ and relevant quantities therein from earlier, the key result to establish is:

\begin{lemma}\label{lemma:infF_lb}
    For $\theta \in (0,\pi/8)$, we have 
    $\inf_{\bF} \cLideal(\bF \Gtheta) \ge \frac{\alpha^2}{4 \sin \theta}$. 
\end{lemma}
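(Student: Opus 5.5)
The plan is to reduce $\inf_{\bF}\cLideal(\bF\Gtheta)$ to a two-parameter family and then lower bound each term of the closed form in \Cref{prop:LDS TTF}. Since $\Gtheta \propto \bg^\top$ with $\bg := (\cos\theta,\sin\theta)^\top$, every candidate policy has the form $\bK = \bff\bg^\top$ for some $\bff=(f_1,f_2)^\top\in\R^2$, absorbing the scalar $-\alpha\mu^{-1}$ into $\bff$. As in \Cref{lem:scaling_limit}, we work in the $T\to\infty$ limit. If $\rho(\bA+\bff\bg^\top)\ge 1$, then $\bGamma_T(\bK)$ diverges and the bound is trivial. So we may assume stability and, by \Cref{lem:ricatti}, write
\[
\cLideal(\bff\bg^\top)=\trace\big(\Delta^\top\Delta\,\bGamma_\infty(\bff\bg^\top)\big),\qquad \Delta:=\Kdemo-\bff\bg^\top .
\]

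The first step is to identify what no choice of $\bff$ can fix. Let $\bv=(\sin\theta,-\cos\theta)^\top\perp\bg$. Then $\Delta\bv=\Kdemo\bv=(0,\alpha\cos\theta)^\top$ regardless of $\bff$. This is the formal version of $\Gtheta$ ``zeroing out'' a direction that $\Gstar$ needs. The second column is also useful:
\[
\Delta\be_2 = -(f_1\sin\theta,\ \alpha+f_2\sin\theta)^\top .
\]
The second step is to lower bound $\bGamma_\infty$. The closed loop is
\[
\bX=\begin{bmatrix} f_1\cos\theta & f_1\sin\theta\\ f_2\cos\theta & 1+f_2\sin\theta\end{bmatrix},
\]
and the fixed-point equation gives $\bGamma_\infty\succeq \bI$ and $\bGamma_\infty\succeq \bX\bGamma_\infty\bX^\top+\bI$. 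I would project this equation onto $\be_2$ (or onto $\bv$, which is close to $-\be_2$ for $\theta<\pi/8$). Iterating it, or using a Schur-complement argument on the $2\times2$ Lyapunov system as in \Cref{lem:ricatti_theta}, should yield
\[
\be_2^\top\bGamma_\infty\be_2 \gtrsim \frac{1}{1-(1+f_2\sin\theta)^2}\ \ge\ \frac{1}{2|f_2|\sin\theta}.
\]
Here stability should force $f_2\in(-2/\sin\theta,0)$.

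The third step is a case split on $f_2$. In the case $|f_2|\sin\theta\le \alpha/2$, we have $|\alpha+f_2\sin\theta|\ge\alpha/2$. Then
\[
\cLideal\ \ge\ \|\Delta\be_2\|^2\,\be_2^\top\bGamma_\infty\be_2\ \ge\ \frac{\alpha^2}{4}\cdot\frac{1}{2|f_2|\sin\theta},
\]
and the remaining task is to combine this with the residual $\bv$-term $\alpha^2\cos^2\theta\,\bv^\top\bGamma_\infty\bv$. In the other case, $|f_2|$ is large. The loop then damps the second coordinate strongly, but it injects the first coordinate through $f_2\cos\theta\, s_1$. I would show that this makes $\bGamma_\infty$ large in a direction where $\Delta$ is bounded away from zero, using the $\bv$-identity above, or else that $\|\Delta\bg\|^2=\|\bff+\alpha\sin\theta\,\be_2\|^2$ is itself large.

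The main obstacle is the second step and the bookkeeping between the cases. The crude bound $\be_2^\top\bGamma_\infty\be_2\gtrsim 1/(|f_2|\sin\theta)$, balanced against the Frobenius penalty from $\Delta$, naively gives only rates like $\alpha$ or $\alpha^{4/3}(\sin\theta)^{-2/3}$, not $\alpha^2/\sin\theta$. To get the sharp $1/\sin\theta$, I would first solve the $2\times2$ Lyapunov equation for $\bGamma_\infty(\bff\bg^\top)$ exactly, as was done for $\Ktheta$ in \Cref{lem:ricatti_theta}. I would then minimize $\trace(\Delta^\top\Delta\bGamma_\infty)$ explicitly over $(f_1,f_2)$. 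I expect the minimizer to keep $f_2\sin\theta=O(\alpha\sin\theta)$, so that the weak damping $\propto \sin\theta$ drives $\bGamma_{22}\sim 1/(\alpha\sin\theta)$. I also expect the constant $1/4$ to come from restricting to $\theta<\pi/8$, which bounds $\cos\theta$ and $1-\sin\theta$ below.
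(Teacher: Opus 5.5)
Your proposal has a genuine gap. The only step that could deliver the bound is solving the $2\times2$ Lyapunov equation for $\bGamma_\infty(\mathbf f\bg^\top)$ in closed form and lower-bounding $\trace(\Delta^\top\Delta\bGamma_\infty)$ over all stabilizing $(f_1,f_2)$. That is exactly the paper's route, but you only announce it, and the preliminary bounds you do write down are partly wrong.

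Write $r=f_1\cos\theta$, $h=-f_2\sin\theta$, $\tau=\tan\theta$. The closed loop is then $\bX=\begin{bmatrix} r & r\tau\\ -h/\tau & 1-h\end{bmatrix}$, with characteristic polynomial $z^2-(1+r-h)z+r$.
\begin{itemize}
\item \textbf{Stability range.} Stability means $-1<r<1$ and $0<h<2(1+r)$, not $f_2\in(-2/\sin\theta,0)$. For $f_1>0$ one may have $h>2$, and there your bound $1/(1-(1+f_2\sin\theta)^2)$ on $\bGamma_{22}$ is negative. That bound treats $s_2$ as a scalar AR(1) process and ignores the coupling through $s_1$.
\item \textbf{Cross terms.} The inequality $\cLideal\ge\|\Delta\be_2\|^2\bGamma_{22}$ is unjustified. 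We have $\trace(\Delta^\top\Delta\bGamma)=\sum_i\operatorname{Var}(\delta_i^\top\bs)$ over the rows $\delta_i$ of $\Delta$, and the cross terms can be negative. In general only the Schur complement $\bGamma_{22}-\bGamma_{12}^2/\bGamma_{11}$ may replace $\bGamma_{22}$, and $\bGamma_{12}\neq0$ once $f_1\neq0$.
\item \textbf{Dropped injection term.} Most importantly, your $\bGamma_{22}$ bound drops the noise $f_2^2\cos^2\theta\,\bGamma_{11}\ge h^2/\tau^2$ that the feedback $f_2\cos\theta\,s_1$ injects into $s_2$. Without it your first case bottoms out at $\alpha/4$, which is why you could not reach $\alpha^2/\sin\theta$.
\end{itemize}

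That injected noise is the missing idea. The $1/\sin\theta$ rate comes from trading the roughly $1/h$ growth of the $s_2$-variance under weak damping against the roughly $h^2/\tau^2$ injection. The paper makes this exact in four steps:
\begin{itemize}
\item the closed form $J(r,h)=N/\bigl(h\tau^2(1-r)(2+2r-h)\bigr)$;
\item the numerator bound $N\ge\frac{\alpha^2}{2}(1+r)\bigl[h^2+\tau^2(1-r)^2\bigr]$, which the paper asserts only for $\theta$ small given $\alpha$;
\item the denominator bound $2+2r-h\le2(1+r)$;
\item AM--GM, with equality at $h=\tau(1-r)$.
\end{itemize}
This gives $\alpha^2/(2\tan\theta)\ge\alpha^2/(4\sin\theta)$, so the $1/4$ comes from $\tan\theta\le2\sin\theta$, not from bounding $\cos\theta$ and $1-\sin\theta$.

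For the same reason your guess of the minimizer is off. The configuration $f_2\sin\theta=O(\alpha\sin\theta)$ with $\bGamma_{22}\sim1/(\alpha\sin\theta)$ is that of $\Ktheta$ itself, whose cost is $\sim\alpha/\sin\theta$ by \Cref{lem:scaling_limit}. Already on the slice $f_1=0$ with $\tan\theta\ll\alpha^2$, the best gain is $h\approx\tan\theta$, i.e.\ $|f_2|\approx1/\cos\theta$ independent of $\alpha$, at cost $\approx\alpha^2/\tan\theta$.

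On that slice your case split does close once the injection is kept. There $\bGamma_{11}=1$, $\bGamma_{12}=0$, and $\bGamma_{22}=(1+h^2/\tau^2)/(h(2-h))\ge1/\tau$ by AM--GM. So $h\le\alpha/2$ gives $\cLideal\ge\alpha^2/(4\tau)$, while $h>\alpha/2$ gives $\cLideal\ge h^2/\tau^2>\alpha^2/(4\tau)$. What your plan leaves open is exactly the case $f_1\neq0$, where $\bGamma_{12}\neq0$ and $h$ ranges up to $2(1+r)$. That case needs the explicit two-parameter computation, or a separate argument that $f_1=0$ is without loss of generality.
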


By rescaling $\Gtheta,\Ftheta$ by $\mu^{-1}$ and $\mu$, and using \Cref{prop:LDS TTF}  as in the proof of \Cref{lem:scaling_limit}, it suffices to prove the following.
\begin{lemma} Let $\theta \le \pi/8$.
Define the vector
\begin{align}
\bw = -\alpha
\begin{bmatrix}
\cos\theta \\
\sin \theta
\end{bmatrix}^\top,
\quad
\bA =
\begin{bmatrix}
0 & 0 \\
0 & 1
\end{bmatrix},
\quad
\bQ =
\begin{bmatrix}
1 & 0 \\
0 & 1
\end{bmatrix}
\end{align}
For any $\bv$, define two terms: $\bX_{\bv} = \bA - \bv \bw^\top$, $\Delta_{\bv} = \alpha \bA - \bv \bw^\top$.
When $\bv$ is such that a solution $\bGamma_{\bv}$ to the Riccati equation
\begin{align}
\bGamma_{\bv} = \bX_{\bv} \bGamma_{\bv}\bX_{\bv}^\top + \bQ.
\end{align}
exists, where $\bGamma_{\bv}$ is defined as above. Then, we have
\begin{align}
\inf_{\bv \in \R^2: \bGamma_{\bv} \text{ exists}} \trace\left(\Delta_{\bv}^\top\Delta_{\bv}\cdot \bGamma_{\bv}\right) \ge \frac{\alpha^2}{4\sin \theta}.
\end{align}
\end{lemma}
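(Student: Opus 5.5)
The plan is to turn the Lyapunov solution into a series and lower bound the cost along a single trajectory. By \Cref{lem:ricatti}, whenever $\bGamma_{\bv}$ exists we have $\rho(\bX_{\bv})<1$ and $\bGamma_{\bv}=\sum_{k\ge 0}\bX_{\bv}^k\bQ\bX_{\bv}^{k\top}$. Since $\bQ=\bI$, this gives
\begin{align*}
\trace\left(\Delta_{\bv}^\top\Delta_{\bv}\bGamma_{\bv}\right)=\sum_{k\ge0}\norm{\Delta_{\bv}\bX_{\bv}^k}_{\fro}^2\ \ge\ \sum_{k\ge 0}\norm{\Delta_{\bv}\by_k}_2^2,
\end{align*}
where $\by_k:=\bX_{\bv}^k\by_0$ for any unit vector $\by_0$. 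This is a lower bound by a single ``test trajectory''; I would use $\by_0=\be_2$, and possibly also $\by_0=\bq:=(\sin\theta,-\cos\theta)^\top$.

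The structural facts behind the choice of trajectory come from the ``zeroing out'' picture of \Cref{prop:TTF_feature_learning}. The vector $\bq$ satisfies $\bw^\top\bq=0$, so $\Delta_{\bv}\bq=\alpha\bA\bq=-\alpha\cos\theta\,\be_2$ and $\bX_{\bv}\bq=-\cos\theta\,\be_2$, independently of $\bv$. Thus no choice of last layer removes the error along $\bq$, and the dynamics push that error into $\be_2$. Along $\be_2$ we have $\bw^\top\be_2=-\alpha\sin\theta=:-c$, which is small. Hence
\begin{align*}
\bX_{\bv}\be_2=\be_2+c\,\bv,\qquad \Delta_{\bv}\be_2=\alpha\be_2+c\,\bv .
\end{align*}
So the feedback $\bv$ can only act on $\be_2$ through the small gain $c$. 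The general recursion is $\by_{k+1}=\bA\by_k-\bv(\bw^\top\by_k)$, together with the identity $\Delta_{\bv}\by_k=\by_{k+1}-(1-\alpha)\bA\by_k$. I would project this recursion onto $\be_2$ and onto the component along $\bw$, reducing it to a scalar two-dimensional linear recursion parametrized by the two scalars $\bw^\top\bv$ and $\be_2^\top\bv$.

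The case analysis is then on the size of the feedback gain $s:=c\norm{\bv}$. If $s\lesssim\alpha$, the $\be_2$-coordinate of $\by_k$ decays at rate at most $1-O(\alpha\sin\theta+s)$ per step. Meanwhile $\norm{\Delta_{\bv}\by_k}\gtrsim\alpha|\be_2^\top\by_k|$, because the $\alpha\bA$ term dominates the $c\bv$ term. Summing the geometric series gives a bound of order $\alpha^2/(\alpha\sin\theta+s)$, which is at least $\alpha^2/(4\sin\theta)$ once constants are tracked, using $\theta\le\pi/8$ and $\alpha\le1$. If instead $s\gg\alpha$, then already the $k=0$ and $k=1$ terms are large, since $\norm{\Delta_{\bv}\be_2}\ge s-\alpha$. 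One must also check that the energy injected into $\by_1=\be_2+c\bv$ keeps producing error. Here I would use the $\bq$-component of $\by_1$, whose error $\alpha\cos\theta$ is unremovable and which is re-injected into $\be_2$ at the next step.

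The main obstacle I expect is the intermediate regime where $s$ is comparable to $\alpha$, and $\bv$ is tuned so that $\bw^\top\bv$ makes $\bX_{\bv}$ contract fast while $\Delta_{\bv}$ stays moderate. There the single-trajectory bound with $\by_0=\be_2$ may only give $\alpha^2$ rather than $\alpha^2/\sin\theta$. My first attempt would be to avoid casework entirely. I would minimize the explicit scalar quadratic-over-stability expression obtained by solving the $2\times2$ Lyapunov equation in closed form, exactly as in \Cref{lem:ricatti_theta}, as a function of $(\bw^\top\bv,\be_2^\top\bv)$. Completing the square in $\be_2^\top\bv$ should then leave a one-variable bound whose minimum is of order $\alpha^2/\sin\theta$.
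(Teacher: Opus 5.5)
There is a genuine gap. Neither route in your proposal establishes the inequality, and the first one cannot.

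\textbf{The single-trajectory route fails even where $s \ll \alpha$.} Write $r := \alpha v_1\cos\theta$ and $h := -\alpha v_2\sin\theta$. Then
\[
\bX_{\bv} = \begin{bmatrix} r & r\tan\theta \\ -h/\tan\theta & 1-h \end{bmatrix}, \qquad \Delta_{\bv} = \bX_{\bv} - (1-\alpha)\bA, \qquad s = \sqrt{r^2\tan^2\theta + h^2}.
\]
Your case split is on $s$, but $s$ only measures the one-step gain. The first coordinate is a memory with pole $r$, and it accumulates $r\tan\theta$ times the $\be_2$-coordinate. So the effective loop gain on $\be_2$ is roughly $h/(1-r)$, which can be much larger than $s$. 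In that case the $\be_2$-coordinate need not decay at rate $1-O(\alpha\sin\theta+s)$, and $\bv(\bw^\top\by_k)$ need not be dominated by $\alpha\bA\by_k$.

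Concretely, take $\alpha=1/2$, $\tan\theta=0.01$, $r=0.9$, $h=0.005$, i.e.\ $\bv\approx(1.8,-1)^\top$. Then $s\approx 0.01\ll\alpha$, and $\bX_{\bv}$ has complex eigenvalues of modulus $\sqrt{0.9}$. Solving $\bP=\bX_{\bv}^\top\bP\bX_{\bv}+\Delta_{\bv}^\top\Delta_{\bv}$ gives:
\begin{itemize}
\item $\be_2^\top\bP\be_2=\sum_k\|\Delta_{\bv}\bX_{\bv}^k\be_2\|^2\approx 3.3$.
\item Since $\bX_{\bv}\bq$ and $\Delta_{\bv}\bq$ are multiples of $\be_2$, the $\bq$-trajectory equals $\alpha^2\cos^2\theta+\cos^2\theta\,\be_2^\top\bP\be_2\approx 3.5$.
\item Both are below $\alpha^2/(4\sin\theta)\approx 6.25$.
\item The true trace is $\approx 70.8$, of which $\be_1^\top\bP\be_1\approx 67.5$.
\end{itemize}
In coordinates $(x/\tan\theta,\,y)$ one sees that for fixed $(r,h)$ the $\be_2$-sum is nondecreasing in $\tan\theta$. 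It therefore stays below $3.3$ as $\theta\to0$, while the target grows like $1/\theta$.

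The cost you are missing comes from process noise along the feature direction $\be_1$ (equivalently $\bw$). This noise leaks into the second coordinate with gain $h/\tan\theta$ and is then integrated. No trajectory started from $\be_2$ or $\bq$ sees it, so you must keep both orthonormal directions in $\|\Delta_{\bv}\bX_{\bv}^k\|_{\fro}^2$.

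Your large-$s$ case is also weaker than stated. The bound $(s-\alpha)^2\ge\alpha^2/(4\sin\theta)$ needs $s\gtrsim\alpha/\sqrt{\sin\theta}$, so the unresolved middle range is far wider than $s\approx\alpha$.

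\textbf{The fallback is the paper's route, but it stops before the step that carries the proof.} The paper solves the Lyapunov equation in exactly the variables $(r,h)$ above, where stability reads $-1<r<1$, $0<h<2(1+r)$. It obtains $J=N/\big(h\tan^2\theta\,(1-r)(2+2r-h)\big)$ with an explicit cubic numerator $N$. It then:
\begin{itemize}
\item bounds $N\ge\frac{\alpha^2}{2}(1+r)\big[h^2+\tan^2\theta(1-r)^2\big]$;
\item uses $2+2r-h\le 2(1+r)$;
\item applies AM--GM, $h^2+\tan^2\theta(1-r)^2\ge 2h\tan\theta(1-r)$, to get $J\ge\alpha^2/(2\tan\theta)\ge\alpha^2/(4\sin\theta)$.
\end{itemize}
The two resulting pieces, $\alpha^2h/(4\tan^2\theta(1-r))$ and $\alpha^2(1-r)/(4h)$, are exactly the two competing effects described above: noise leakage from $\be_1$ versus slow stabilization of $\be_2$. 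They balance at $h=\tan\theta(1-r)$.

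``Completing the square in $\be_2^\top\bv$'' does not describe this. The quantity $h\propto\be_2^\top\bv$ enters $J$ rationally, including through the stability factor $h(2+2r-h)$ in the denominator. The extremal relation $h=\tan\theta(1-r)$ also couples both coordinates of $\bv$.

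You would additionally need to absorb the negative term $-2\alpha h\tan^2\theta$ in $N$; the paper does so only for $\theta$ small relative to $\alpha$. Finally, ``of order $\alpha^2/\sin\theta$'' is not the stated constant $1/4$.
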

\begin{proof}

\begin{align*}
\bw
&=
-\alpha
\begin{bmatrix}
\cos\theta \\
\sin\theta
\end{bmatrix},
\qquad
\bA
=
\begin{bmatrix}
0 & 0 \\
0 & 1
\end{bmatrix},
\qquad
\bQ
=
\begin{bmatrix}
1 & 0 \\
0 & 1
\end{bmatrix}.
\end{align*}
For any \(\bv \in \mathbb{R}^2\), define \(\bX_{\bv} = \bA-\bv\bw^\top\) and \(\Delta_{\bv} = \alpha \bA-\bv\bw^\top\).
Since
$-\bv\bw^\top
=
\alpha \bv
\begin{bmatrix}
\cos\theta & \sin\theta
\end{bmatrix}$,
we have
\begin{align*}
\bX_{\bv}
&=
\bA+\alpha \bv
\begin{bmatrix}
\cos\theta & \sin\theta
\end{bmatrix},
&
\Delta_{\bv}
&=
\alpha\bA+\alpha \bv
\begin{bmatrix}
\cos\theta & \sin\theta
\end{bmatrix}.
\end{align*}
Let \(c := \cos\theta\), \(s := \sin\theta\), and \(\tau := \tan\theta = \frac{s}{c}\).
Writing $\bv
=\begin{bmatrix}
v_1 \\
v_2
\end{bmatrix}$, 
define \(p := \alpha v_1\) and \(q := \alpha v_2\).
Then
\begin{align*}
\bX_{\bv}
&=
\begin{bmatrix}
pc & ps \\
qc & 1+qs
\end{bmatrix},
&
\Delta_{\bv}
&=
\begin{bmatrix}
pc & ps \\
qc & \alpha+qs
\end{bmatrix}.
\end{align*}
Now reparameterize by \(r := pc = \alpha v_1 \cos \theta\) and \(h:= -\alpha v_2 \sin \theta \theta\).
Since \(s,c>0\), this is an equivalent parameterization. In these variables,
\begin{align*}
\bX_{\bv}
&=
\begin{bmatrix}
r & r\tau \\
-\frac{h}{\tau} & 1-h
\end{bmatrix},
&
\Delta_{\bv}
&=
\begin{bmatrix}
r & r\tau \\
-\frac{h}{\tau} & \alpha-h
\end{bmatrix}.
\end{align*}
The characteristic polynomial of \(\bX_{\bv}\) is \(\chi(z) = z^2-(1+r-h)z+r\); therefore, by the Schur stability conditions, the Lyapunov/Riccati solution \(\bGamma_{\bv} = \bX_{\bv}\bGamma_{\bv}\bX_{\bv}^{\top} + \bQ\) exists if and only if \(r<1\), \(h>0\), and \(2+2r-h>0\). Equivalently, \(-1<r<1\) and \(0<h<2(1+r)\).
Define the objective \(J(r,h) := \operatorname{tr}\left(\Delta_{\bv}^{\top}\Delta_{\bv}\bGamma_{\bv}\right)\).
Solving the Lyapunov equation and substituting gives
\begin{align*}
J(r,h)
&=
\frac{
N(r,h,\tau,\alpha)
}{
h\tau^2(1-r)(2+2r-h)
},
\end{align*}
where
\begin{align*}
N(r,h,\tau,\alpha)
&=
\alpha^2(1+r)h^2
+
\alpha^2\tau^2(1-r)^2(1+r)
+
2(1-\alpha)h^3
\\
&\quad
+
\tau^2
\Big[
2\alpha^2hr
-
2\alpha h^2r
+
2\alpha hr^2
-
2\alpha h
+
h^2(1+r)
+
2hr^2
\Big]
\\
&\quad
+
\tau^4r^2(1+r).
\end{align*}

For fixed \(0<\alpha\leq 1\) and \(\theta\) sufficiently small, the numerator satisfies the lower bound
\begin{align*}
N(r,h,\tau,\alpha)
&\ge
\frac{\alpha^2}{2}(1+r)
\left[
h^2+\tau^2(1-r)^2
\right].
\end{align*}

Also, since \(0<h<2(1+r)\), we have \(2+2r-h \leq 2(1+r)\). Therefore,
\begin{align*}
J(r,h)
&\ge
\frac{
\frac{\alpha^2}{2}(1+r)
\left[
h^2+\tau^2(1-r)^2
\right]
}{
h\tau^2(1-r)\,2(1+r)
}
\\
&=
\frac{\alpha^2}{4}
\frac{
h^2+\tau^2(1-r)^2
}{
h\tau^2(1-r)
}.
\end{align*}

By AM-GM, \(h^2+\tau^2(1-r)^2 \geq 2h\tau(1-r)\). Hence
\begin{align*}
J(r,h)
&\geq
\frac{\alpha^2}{4}
\cdot
\frac{
2h\tau(1-r)
}{
h\tau^2(1-r)
}
\\
&=
\frac{\alpha^2}{2\tau}.
\end{align*}

Thus,
\begin{align*}
\inf_{\bv:\,\bGamma_{\bv}\ \mathrm{exists}}
\operatorname{tr}
\left(
\Delta_{\bv}^{\top}\Delta_{\bv}\bGamma_{\bv}
\right)
\geq
\frac{\alpha^2}{2\tan\theta}.
\end{align*}

Since \(\tan\theta \leq 2\sin\theta\) for \(\theta\) sufficiently small, this further implies
\begin{align*}
\inf_{\bv:\,\bGamma_{\bv}\ \mathrm{exists}}
\operatorname{tr}
\left(
\Delta_{\bv}^{\top}\Delta_{\bv}\bGamma_{\bv}
\right)
\geq
\frac{\alpha^2}{4\sin\theta}.
\end{align*}

\end{proof}

Thus, chaining \Cref{prop:Ktheta Kpsi bound} and \Cref{lemma:infF_lb} together yields the following quantitative form of \Cref{prop:TTF_feature_learning}.
\begin{proposition}For $\sin \theta \ll \frac{\epsilon^2}{\alpha^3}$ and $\epsilon \ll \alpha$, we have
    \begin{align}
\cLval(\Ktheta) \ll \cLval(\Kpsi) \sim \cLideal(\Kpsi) \ll  \inf_{\bF} \cLideal(\bF\Gtheta) \le \cLideal(\Ktheta).
    \end{align}
\end{proposition}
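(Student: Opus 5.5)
The plan is to chain the two computations already established: \Cref{prop:Ktheta Kpsi bound}, via the scaling estimates of \Cref{lem:scaling_limit}, and \Cref{lemma:infF_lb}. All quantities are taken in the $T\to\infty$ limit, where by \Cref{lem:ricatti} both $\cLval$ and $\cLideal$ reduce to traces $\trace(\Delta^\top\Delta\,\bGamma_\infty)$, as in \Cref{prop:LDS TTF}. I would prove the four links from right to left, since the rightmost ones are the easiest.

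\textbf{Rightmost link.} The inequality $\inf_{\bF}\cLideal(\bF\Gtheta)\le\cLideal(\Ktheta)$ is immediate: $\Ktheta=\Ftheta\Gtheta$ is one admissible choice of $\bF$. Stability of $\bA+\bB\Ktheta$ for $\theta\in(0,\pi/2)$ is given by \Cref{lem:ricatti_theta}, so this choice lies in the feasible set of the infimum.

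\textbf{Middle link.} The relation $\cLval(\Kpsi)\sim\cLideal(\Kpsi)\sim\epsilon^2/\alpha$ is taken directly from \Cref{lem:scaling_limit}. I would spell out that computation using \Cref{lem:err_comp} and \Cref{lem:Gamma_psi}: for $\cLideal$, the only surviving entry is $\epsilon^2\gamma_{22}=\epsilon^2/(\alpha(2-\alpha))$. For $\cLval$, the same entry is weighted by $1/(1-(1-\alpha)^2)$. Both are therefore of order $\epsilon^2/\alpha$ for small $\alpha$.

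\textbf{Second-to-last link.} To show $\epsilon^2/\alpha\ll\inf_{\bF}\cLideal(\bF\Gtheta)$, I would invoke \Cref{lemma:infF_lb}, which gives the lower bound $\alpha^2/(4\sin\theta)$ for $\theta<\pi/8$. The hypothesis $\sin\theta\ll\epsilon^2/\alpha^3$ then yields
\[
\frac{\alpha^2}{4\sin\theta}\gg\frac{\alpha^5}{\epsilon^2}.
\]
This dominates $\epsilon^2/\alpha$ as soon as $\epsilon^4\ll\alpha^6$. I would therefore state explicitly that "$\epsilon\ll\alpha$" is to be read quantitatively, as $\epsilon\ll\alpha^{3/2}$. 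This is consistent with the proposition's order-of-magnitude usage, and it keeps $\theta$ small, so the restriction $\theta<\pi/8$ from \Cref{lemma:infF_lb} is met automatically.

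\textbf{Leftmost link (main obstacle).} The remaining link is $\cLval(\Ktheta)\ll\cLval(\Kpsi)$. From \Cref{lem:scaling_limit}, $\cLval(\Ktheta)=\alpha^2\cos^2\theta+\alpha^2(1-\sin\theta)^2/(\alpha(2-\alpha))$, which is of order $\alpha$ for small $\theta$. Comparing this with $\epsilon^2/\alpha$ under $\epsilon\ll\alpha$ goes the wrong way. My first attempt would be to exploit the free scaling parameter $\mu$ together with the limit $\theta\to 0$, and to recheck the error matrix $\Ktheta-\Kdemo$ from \Cref{lem:err_comp}. In particular, I would verify whether the $(2,2)$ entry $(1-\sin\theta)$ should be $\sin\theta$-small, which would make $\cLval(\Ktheta)\sim\alpha^2\cos^2\theta$ dominated by the $\gamma_{11}=1$ direction. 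If the computation stands as written, the fallback is to rescale: take $\Gstar$, and hence $\Kdemo$, with a prefactor on the stable coordinate so that $\Kpsi$'s error is amplified relative to $\Ktheta$'s. The scalings should be re-derived so that $\cLval(\Ktheta)\ll\epsilon^2/\alpha$ holds while \Cref{lemma:infF_lb} is preserved. This reconciliation of the two competing scalings is where I expect the real work to lie.
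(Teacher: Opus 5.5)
Your route is the paper's: its proof is the one-line chaining of \Cref{prop:Ktheta Kpsi bound} (i.e.\ the scalings of \Cref{lem:scaling_limit}) with \Cref{lemma:infF_lb}. Your rightmost and middle links are correct; in fact $\cLval(\Kpsi)=\cLideal(\Kpsi)=\epsilon^2/(\alpha(2-\alpha))$ exactly. The gap is the leftmost link. You correctly noticed that it goes the wrong way, but the proposal leaves it open, and neither remedy you suggest can close it.

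The error matrix in \Cref{lem:err_comp} is right. The $(2,2)$ entry of $\Ktheta-\Kdemo$ is $\pm\alpha(1-\sin\theta)$. Making it small would force $\theta\to\pi/2$, i.e.\ aligning $\Gtheta$ with $\Gstar$, and that removes the $1/\sin\theta$ blow-up every other link needs. Reading $\epsilon\ll\alpha$ as $\epsilon\ll\alpha^{3/2}$ only pushes $\cLval(\Kpsi)$ further \emph{below} $\cLval(\Ktheta)$. Dropping the $\alpha^2\cos^2\theta$ term of $\cLval(\Ktheta)$,
\[
\frac{\cLval(\Kpsi)}{\cLval(\Ktheta)} \;\le\; \frac{\epsilon^2/(\alpha(2-\alpha))}{\alpha^2(1-\sin\theta)^2/(\alpha(2-\alpha))} \;=\; \frac{\epsilon^2}{\alpha^2(1-\sin\theta)^2},
\]
which is $\ll 1$ whenever $\epsilon\ll\alpha$ and $\theta<\pi/4$. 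So under the hypotheses as written the first link is false. The paper's chaining does not rescue it: \Cref{prop:Ktheta Kpsi bound} asserts $\cLval(\Ktheta)\ll\cLval(\Kpsi)$ under the same condition $\epsilon\ll\alpha$. That contradicts the scalings $\alpha$ versus $\epsilon^2/\alpha$ of \Cref{lem:scaling_limit} from which it is derived.

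No new construction is needed; the ratios in the hypotheses must be inverted. Take $\alpha\le 1$, $\alpha\ll\epsilon$, and $\sin\theta\ll\alpha^3/\epsilon^2$. Then, using only the lemmas you already cite:
\begin{itemize}
\item $\cLval(\Ktheta)\asymp\alpha\ll\epsilon^2/\alpha\asymp\cLval(\Kpsi)=\cLideal(\Kpsi)$, precisely because $\alpha^2\ll\epsilon^2$.
\item $\cLideal(\Kpsi)\le\epsilon^2/\alpha\ll\alpha^2/(4\sin\theta)\le\inf_{\bF}\cLideal(\bF\Gtheta)$, precisely because $\sin\theta\ll\alpha^3/\epsilon^2$.
\item Since $\alpha^3/\epsilon^2=\alpha(\alpha/\epsilon)^2\ll\alpha\le 1$, the angle restrictions $\theta<\pi/4$ of \Cref{lem:scaling_limit} and $\theta<\pi/8$ of \Cref{lemma:infF_lb} hold automatically. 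No auxiliary condition like $\epsilon\ll\alpha^{3/2}$ is needed.
\end{itemize}
Your rightmost link then finishes the chain.

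This regime also matches the intended picture. $\Kpsi$ has the larger validation error, but it sits in the off-diagonal entry feeding the memoryless first coordinate, so it does not compound. $\Ktheta$'s smaller error instead leaves the closed-loop mode $1-\alpha\sin\theta$ nearly marginal.
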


\subsection{Limitations}\label{appdx:limitations}

\DoPe incurs a nontrivial computational and memory overhead. For a linear layer with input dimension $d_{\mathrm{in}}$, output dimension $d_{\mathrm{out}}$, and batch size $N$, the per-step overhead consists of accumulating the input covariance, factoring the damped preconditioner, and applying the inverse preconditioner to the gradient:
\[
\bS = \bX^\top \bX, \qquad \bX \in \mathbb{R}^{B \times d_{\mathrm{in}}},
\]
which costs approximately $B d_{\mathrm{in}}^2$
FLOPs (due to symmetry of $\bS$. Computing the Cholesky factorization of the damped matrix $\bA = \bS + \lambda I \in \mathbb{R}^{d_{\mathrm{in}} \times d_{\mathrm{in}}}$
costs approximately $\frac{1}{3} d_{\mathrm{in}}^3$
FLOPs. Finally, applying the inverse preconditioner to the weight gradient
\[
\bG(\bS+\lambda \bI)^{-1}, \qquad \bG \in \mathbb{R}^{d_{\mathrm{out}} \times d_{\mathrm{in}}},
\]
via two triangular solves costs $2 d_{\mathrm{in}}^2 d_{\mathrm{out}}$
FLOPs. Thus, the total per-layer per-step idealized FLOP overhead is
\[
\boxed{
N d_{\mathrm{in}}^2
+ \frac{1}{3} d_{\mathrm{in}}^3
+ 2 d_{\mathrm{in}}^2 d_{\mathrm{out}}
}.
\]
In line with empirical know-how (see e.g., \citet{jordan2024muon}), in the large-batch regime, the cost of forming the activation covariance $\bS = \bX^\top \bX$ dominates. Thus, for layers where $d_{\mathrm{in}} \approx d_{\mathrm{out}}$, the overhead is approximately equal to an additional forward pass. Indeed, throughout our experiments, we found the overhead to be $\approx 1.25x$ in wall-clock time, commensurate with the estimate of a backward pass equal to $\sim3$ forward passes, such that \AP's overhead coarsely comes out to $5/4 = 1.25 \times$ regular backprop.

\DoPe also introduces additional memory overhead from storing the matrix-valued preconditioner statistics. With an exponential moving average (EMA), this requires storing one $d_{\mathrm{in}} \times d_{\mathrm{in}}$ symmetric matrix per preconditioned layer, i.e.,
$\sim d_{\mathrm{in}}^2/2$ additional parameters of optimizer state per layer.

Our current implementation of \DoPr only supports Distributed Data Parallelism, and not model parameter sharding such as FSDP. However, we do not anticipate any fundamental barriers for such schemes beyond existing related matrix-preconditioning methods such as \texttt{Shampoo}, which have distributed and sharded implementations \citep{shi2023distributed, ahn2025dion}.
Finally, \DoPe can be sensitive to the choice of matrix-inversion solver and damping parameter. In practice, we found PyTorch's Cholesky solve together with trace-scaled damping to provide a sufficiently stable and effective default.

\DoPe may improve downstream performance without improving training or validation loss, and it may not improve training-loss convergence relative to the base \GP optimizer. Moreover, exact \AP for convolutional or other weight-sharing layers can incur substantial memory overhead due to unfolding. While we introduce approximation schemes to account for the redundancy therein, developing hardware-accelerated approximations and exploring model--optimizer codesign remain important directions for future work.

Finally, we observe that large dropout rates can conflict with \AP EMA buffers and accelerate staleness due to zero-ing out random subsets of activations; therefore, in our current practice, we recommend using no or low dropout. In principle, we note more advanced schemes for updating the activation buffer (e.g., analogous to \citet{ahn2025dion2}) exist, but may require case-by-case treatment depending on how the architecture is implementated.
Improving the compatibility of \DoPe with dropout is left for future work.

\section{Experiment Details and Additional Results}\label{appdx:exps}

\icmlpar{Hardware}
\begin{itemize}[left=0pt]
    \item All imitation learning experiments were performed on 4$\times$ NVIDIA RTX A5000 GPUs.
    \item All language modeling experiments were performed on 8$\times$ NVIDIA RTX PRO 6000 Blackwell GPUs.
    \item All image generation experiments were performed on 8$\times$ NVIDIA A100 GPUs.
\end{itemize}

\subsection{General Operating Guideline}
\label{app:best_practices}
\vspace{-0.2cm}
In practice, we summarize a small set of operating guidelines that we have found to be broadly useful across settings.

\begin{itemize}[
    noitemsep,
    topsep=-5pt,
    parsep=0pt,
    partopsep=0pt,
    left=0pt]

    \item \textbf{EMA smoothing for very small-batch, continuous-valued inputs.}
    For continuous-valued inputs (in contrast to discrete-token embeddings in LLMs), we suggest applying to the \AP an exponential moving average (EMA) to both the activation covariance and gradient to reduce update variance when the batch size is significantly smaller than the model width. We note that appropriate considerations should be made if the \GP of choice also contains EMAs, as this may doubly apply smoothing on the gradient.

    \item \textbf{Use no/low dropout.}
    Dropout \citep{srivastava2014dropout} is designed to prevent spurious co-adaptation between features in the network. Firstly, due to masking different subsets of activations per update, large dropout can conflict with accumulation in the activation preconditioner buffers, accelerating staleness. Furthermore, \AP can be viewed as similarly de-biasing the gradient from its activation (i.e.\ feature) correlations. Therefore, we recommend using no dropout for most settings, or reduced amounts in large-scale convolutional architectures.

    \item 
    \textbf{Handling high-redundancy/weight-sharing layers.}
    \AP is most directly derived for linear layers. For layers that have high-redundancy and weight-sharing properties, such as convolutional layers, the exact \AP (resulting from \Cref{prop:precond invariance}) can incur needlessly large memory overhead from unfolding. 
    Following prior literature in Kronecker-Factored preconditioners \citep{grosse2016kronecker, george2018fast}, we may derive an efficient approximation of \AP that empirically preserves the performance of the full \AP as in \Cref{app:sua}. We also note that one of the motivations of heavy weight-sharing is to make the \emph{architecture} ``well-conditioned'' \citep{wilson2017marginal, keskar2017improving, xiao2018dynamical}, and thus may be partially redundant to our optimizer that is designed to handle heavy anisotropy. As we do not modify baseline model architectures in this work, we leave exploring model-optimizer co-design for future work.

\end{itemize}

\subsection{Feature Learning: Validation Loss versus Subspace Distance}
\label{appdx:feature learn exp}

This section provides details on the experiments shown in \Cref{fig:feature_learn}.

\paragraph{Data generation.}
We follow the synthetic two-layer setup of \citet{zhang2025concurrence}. We first generate a base matrix $\bF_0 \in \R^{\dy \times k}$ with i.i.d.\ $\normal(0,1)$ entries. For each task $\bs \in \{\mathsf{train}, \mathsf{test}\}$, we sample a matrix $\bB_\bs \in \R^{\dy \times \dy}$ and define the task-specific head
\[
\bF_\star^{\bs} = \exp\!\left(0.005(\bB_{\bs} - \bB_{\bs}^\top)\right)\bF_0,
\]
which produces small random rotations of a shared base feature matrix. The shared representation $\bG_\star \in \R^{k \times \dx}$ is sampled uniformly from the set of row-orthonormal matrices. For each task $\bs$, inputs are generated as
\[
\bx_i^{\bs} \sim \bSigma_{\bx,\bs}^{1/2} \cdot \mathrm{Unif}(\{\pm 1\}^{\dx}),
\]
and outputs are generated according to
\[
\by_i^{\bs} = \bF_\star^{\bs} \bG_\star \bx_i^{\bs} + \bveps_i^{\bs}, \quad
\bveps_i^{\bs} \sim \normal(0, \sigma^2_{\ep,\bs} \bI_{\dy}).
\]

\paragraph{Anisotropy.}
We sample a random rotation $\bO \in \R^{\dx \times \dx}$ and define
\[
\bSigma_{\bx,\bs} = \bO \bD \bO^\top,
\quad
\bD = \mathbf{diag}(\texttt{logspace}(0,5,\dx)),
\]
yielding highly ill-conditioned covariances. This setting amplifies the feature-learning bias described in \Cref{prop:SGD and AP feature learning}.

\paragraph{Model and training.}
We train linear models of the form $\bF_\theta \bG_\theta$ on the training task $\mathsf{train}$ using squared loss, i.e.,
\[
\cL_{\mathrm{train}}(\theta)
=
\mathbb{E}_{(\bx,\by)\sim \cD_{\mathsf{train}}}
\left[
\|\bF_\theta \bG_\theta \bx - \by\|^2
\right].
\]
We compare \texttt{SGD}, \texttt{Adam}, and \DoPe-\texttt{SGD}, using batch size $n=1024$ and averaging results over 10 random seeds. All methods are initialized identically and use matched learning rates.

\paragraph{Metrics.}
We report (i) the in-distribution validation loss $\cLval$ on held-out samples from $\mathsf{train}$, and (ii) the subspace distance $\dist(\Gtheta,\Gstar)$, which measures alignment between the learned and ground-truth feature subspaces. The latter isolates representation learning independent of downstream fitting.

\paragraph{Results.}
As shown in \Cref{fig:feature_learn}, validation loss suggests successful optimization across all methods. However, the subspace distance reveals a clear discrepancy: standard and gradient-preconditioned methods fail to recover the true feature subspace under high anisotropy, whereas \AP significantly improves alignment. This demonstrates that low validation loss can mask poor feature learning, and supports the claim that \AP mitigates TTF by promoting more uniform feature recovery.

\subsection{Activation Invariance of \DoPe}\label{appdx:activation invariance}
This section provides details on the experiments shown in \Cref{fig:AP invariance property}.

\iftoggle{arxiv}{
}{
\begin{figure}[t]

  \begin{center}        \includegraphics[width=0.85\linewidth]{figs/figs_grafting/activation_invariance_256width_2048bsx.png}
    \caption{
      When an affine transform is applied to the input distribution, with the initial weights transformed accordingly \eqref{eq:affine transform}. The \SGD trajectories (\textbf{left}) diverge, while the \DoPe-\SGD trajectories (\textbf{right}) match exactly, demonstrating the invariance induced by \DoPe under affine transforms (\Cref{prop:precond invariance}). See \Cref{appdx:activation invariance} for experiment details.
    }
    \label{fig:AP invariance property}
  \end{center}
  \vspace{-0.7cm}
\end{figure}
}

\paragraph{Models and datasets.}
We use 3 layer MLPs trained on 2 synthetically generated datasets. For the first dataset, the input data is generated with an anisotropic covariance $\bSigma\in \R^{32\times 32}$, where $\bSigma$ has log space eigenvalues ranging from $1$ to $100$. Then, the output is generated as $y=\bM_2\sigma(\bM_1\bx)$, where $\bM_1\in \R^{256\times 32}$, $\sigma$ is the GeLU activation function, and $\bM_2\in \R^{10\times 256}$, for each input data $\bx$. $\bM_1$ is initialized with each element drawn from $\mathcal{N}(0, \frac{1}{32})$ and $\bM_2$ is initialized with each element drawn from $\mathcal{N}(0, \frac{1}{256})$.

The second dataset, denoted the transformed dataset, generates each sample as $\bx_{t}=\bQ\bx+\bd$, where $\bx$ is a sample from the original. Here, $Q\in \R^{32\times 32}$ is a symmetric matrix with linear space eigenvalues ranging from $1$ to $100$, and $\bd\in \R^{32}$ is a random vector with elements drawn from the standard normal. Then, $\by_{t}=\bM_2\sigma(\bM_1\bx_{t})$, using the same method as in the other dataset. Both datasets are split into a $0.8:0.2$ into training data and validation data. This induces an affine transform of the input activations.

We then initialize $4$ models, $2$ to be trained on the original dataset (one optimized with \SGD, and one optimized with \DoPe-\SGD), and $2$ to be trained on the transformed dataset (one optimized with \SGD, and one optimized with \DoPe-\SGD). The two models trained on the original dataset are initialized identically, and the two trained on the transformed dataset are also initialized identically. All models have an input layer of dimension $(32, 256)$, hidden layer $(256, 256)$, and output layer $(256, 10)$, and use \texttt{SiLU} activations.

For all models, the hidden layer and output layers are identical. For the original dataset models, let the input layer have weight $\bW$, and bias $\bb$. Then, for the transformed dataset models, the input layer is initialized with weight $\bW\bQ^{-1}$, and bias $\bb-\bW\bQ^{-1}\bd$.

We note here, inputting $\bx$ into the models trained on the original data yield $\bh=\bW\bx+\bb$. Then, $\bx_{t}=\bQ\bx+\bd$, and when passed through the transformed models the output will be \[\bh_{t}=\bW\bQ^{-1}(\bQ\bx+\bd)+\bb-\bW\bQ^{-1}\bd=\bW\bx+\bb\]
So, for all models, the pre-activations after the first layer will be identical, and thus at initialization, \emph{all models will have identical outputs}.

From here, both models are trained for $50$ epochs of $20$ steps per epoch, with batch size $2048$. At each step, the models are validated on the $0.2$ validation set for their respective datasets, and the results reported. The models are trained so that at each training step, each batch exactly corresponds (e.g. for a batch $\bX\in \R^{32\times 2048}$ of the Original data, the transformed batch will exactly be $\bX_{t}=\bQ\bX+\bb\mathbf{1}^T$).

For the models trained using \DoPe, damping is set at $10^{-8}\cdot \trace(\bSigma)$, where $\bSigma$ is the preconditioner.

The identical trajectories of both \DoPe models demonstrates the affine invariance demonstrated in \Cref{prop:precond invariance} -- when learning the same function, and with identical initialization, affine transforms to the input distribution do not affect the loss trajectory when trained with \DoPe. However, loss trajectories under vanilla SGD do exhibit a drift, demonstrating the coordinate-dependent trajectory as stated in \Cref{sec:AP general architectures}. 

\subsection{Hyperparameter Scaling \DoPe}\label{appdx:mup}

This section provides details on the experiments shown in \Cref{fig:mup_nanogpt_adamw}, \Cref{fig:mup_nanogpt_muon}, and \Cref{fig:mup_CIFAR}. The goals of these experiments are to provide guidelines for scaling hyperparameters as width increases, as discussed in \citet{yang2022tensor} and related work (see \Cref{appdx:extended related}). These scalings allow for zero-shot hyperparameter transfer across models with differing width.

\begin{figure}[ht]
  \begin{center}  
  \centerline{\includegraphics[width=\columnwidth]{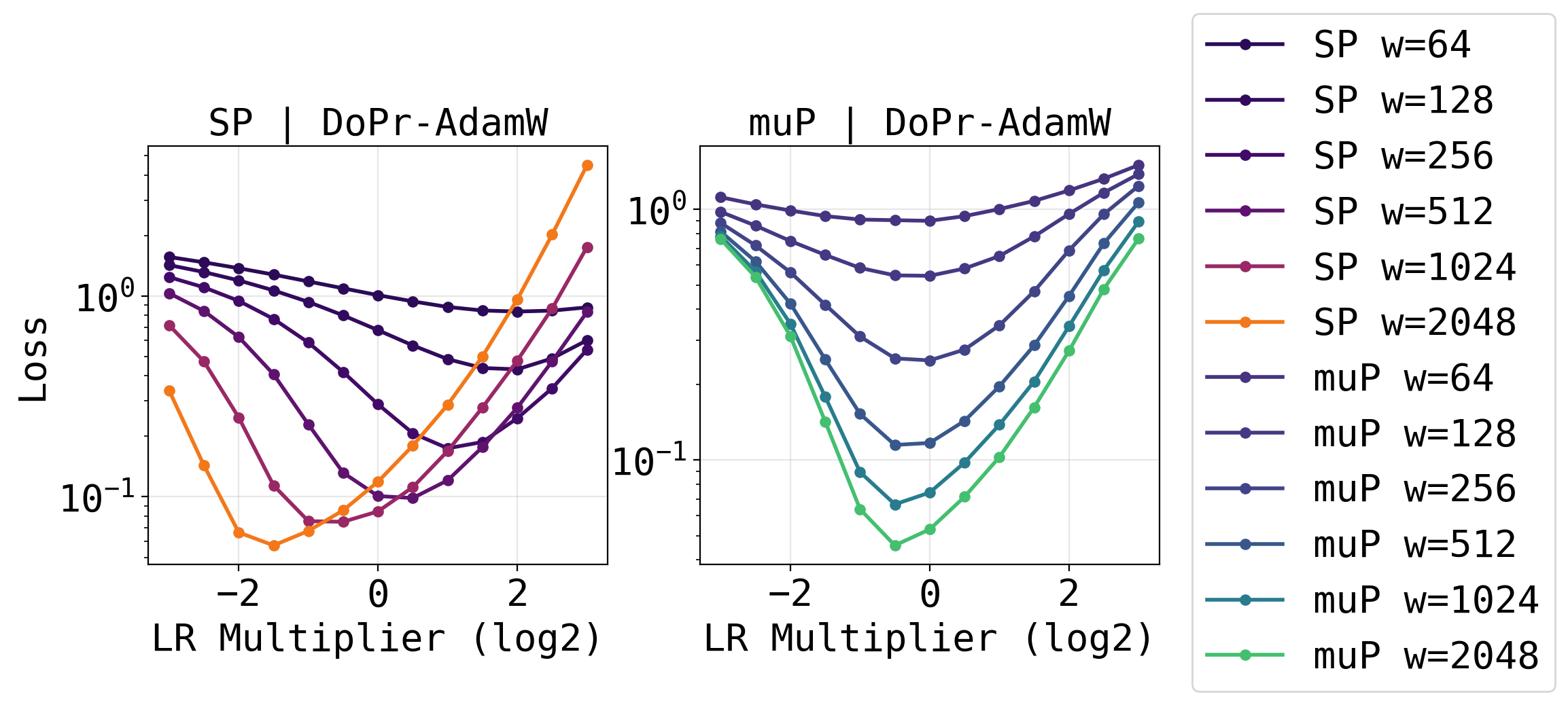}}
    \caption{
      We train a 3-layer MLP on CIFAR, and sweep over learning rates. We compare Standard Parametrization (SP, left) and Maximal Update parametrization ($\mu$P, right). The final train loss is reported for both $\mu$P and SP. $\mu$P achieves learning rate transfer as width increases.
    }
    \label{fig:mup_CIFAR}
  \end{center}
  \vspace{-1cm}
\end{figure}

\paragraph{Models and datasets.}
In \Cref{fig:mup_nanogpt_adamw} and \Cref{fig:mup_nanogpt_muon}, for language modeling, we use a GPT2-style Transformer with 4 layers, head dimension 64, sequence length 1024, and batch size 12, trained on the OpenWebText dataset. Each experiment processes approximately 100M training tokens.

For \Cref{fig:mup_CIFAR} and the weight-decay $\mu$P plots in \Cref{fig:mup_nanogpt_adamw}, we use 3 layer MLPs trained on CIFAR10 \cite{cifar10dataset}, normalizing input data to be between $[-1, 1]$. The MLP layers have no bias, and for width $w$, the layers are of size $(3072, w), (w,w)$, and $(w, 10)$, with cross entropy loss. For the learning rate sweeps (\Cref{fig:mup_CIFAR}), we use batch size $512$, and train for $8$ epochs. For the weight decay experiment, we use batch size $4096$ and train for $200$ epochs. 

\paragraph{Optimization.}
For the language modeling and CIFAR learning rate experiments we sweep over learning rate. For the language modeling we use base weight decay of $0.1$, and for CIFAR we use base weight decay of $0.01$. For language modeling, a base learning rate of $6\cdot 10^{-3}$ was used. For the Standard Parametrization (SP) sweep for CIFAR, we use base learning rate of $2^{-10}$, and for the $\mu$P sweep for CIFAR, we use base learning rate of $2^{-4}$. Then, the learning rate used is: 
\[\eta_{base} = \eta \cdot 2^k\]
where $k$ is the sweep multiplier. In all experiments, all layers are preconditioned, with damping that scales with $\trace(\bSigma)$, where $\bSigma$ is the preconditioner. The language model used damping $10^{-4}\cdot \trace(\bSigma)$, and the CIFAR experiments used damping $10^{-3}\cdot \trace(\bSigma)$.

Noting that the preconditioner $\bSigma$ is the same as the preconditioner in \KFAC, this damping aligns with the rescaled damping in \citet{ishikawa2023parameterization}, which naturally achieves the $\mu$P scaling with width. 
We also EMA the preconditioner with $\beta=0.9$, only in the CIFAR experiments. In language modeling, we report the best training loss achieved during training. For the CIFAR learning rate sweep, we report the training loss at the final epoch for the learning rate sweep. 

For the CIFAR weight decay experiment, we report $\frac{||\Delta \bW||_F}{||\bW||_F}$, the ratio of the update norm to the weight norm for the hidden layer $\bW$. We report this at each step in training. We use a base learning rate of $\eta_{base}=2^{-4}$, and a base weight decay of $\lambda_{base}=0.01$. 

\paragraph{$\mu$P vs SP.}
In all experiments, we follow the $\mu$P scaling rules as described in \citet{yang2022tensor}. As further explained below, we note that for \DoPe-\AdamW and \DoPe-\Muon in particular, the same scaling rules apply as for \AdamW and \Muon. This is because of the normalizing nature of these optimizers - despite \DoPe altering the gradient that \AdamW and \Muon act on, the final object still has the same normalized size.

For the $\mu$P sweeps, the input layers, are initialized as $\bW_{ij}\sim \mathcal{N}(0, d_{in}^{-1})$. The hidden layers are initialized as $W_{ij}\sim \mathcal{N}(0, w^{^{-1}})$, and the output layers are initialized as $\bW_{ij}\sim \mathcal{N}(0, w^{-2})$. The input layer learning rate is $\eta_{base}$, the hidden layer learning rate is $\eta_{base}\cdot w^{-1}$, and the output layer learning rate is $\eta_{base}\cdot w^{-1}$. In $\mu$P, during attention instead of computing $\frac{q^Tk}{\sqrt{d}}$, $\frac{q^Tk}{d}$ is used in attention. 

Given a base weight decay of $\lambda_{base}=0.01$, the input layer's weight decay is $\lambda_{base}$, the hidden layers' weight decay is $\lambda_{base}\cdot w$, and the output layer's weight decay is $\lambda_{base}\cdot w$. This style of weight decay keeps $\eta \cdot \lambda$ width-independent has been proposed by several past work \citep{kosson2024rotationalequilibriumweightdecay, kosson2025weightdecaymattermup, wang2025setadamsweightdecay, bergsma2025powerlinesscalinglaws}.  Other works, such as \citet{qiu2025hyperparameter, xiao2025rethinkingconventionalwisdommachine}, propose other methods of scaling weight decay, such as scaling weight decay with $\frac{1}{w}$. We find that the scaling of weight decay with width allows the relative update size to remain invariant towards the infinite width limit. 

For SP in the learning rate sweeps, all layers are set at their PyTorch defaults. In particular, the input layers are initialized at $\bW_{ij}\sim\mathcal{N}(0, d_{in}^{-1})$. The hidden and output layers are initialized as $\bW_{ij}\sim\mathcal{N}(0, w^{-1})$. All layers have constant learning rate, and constant weight decay. 

For the weight decay experiments, for the SP run we keep the $\mu$P initialization and learning rate scaling, and use a constant weight decay for all layers. 
\paragraph{Justification of \DoPe-\AdamW and \DoPe-\Muon $\mu$P scaling}
Here we provide explanation for why \DoPe does not affect the $\mu$P scaling rules for \AdamW and \Muon. Intuitively, the reason for this is because \AdamW and \Muon can both be seen as normalizing the gradient object - so regardless of whether it is applied to the raw gradient, or the \DoPe-gradient, the result will still have similar magnitude elements. 

Let the weight in question be $\bW$ at initialization, with raw gradient $\bG$, and preconditioner $\bSigma$. For simplicity we consider batch size of $1$, let $\bW$ be square, and let $\bz\in \R^{w}$ be the incoming data that the optimizer updates on. We assume $\bz$ has $\Theta(1)$ entries with respect to width. We examine the one-step update.

The preconditioned gradient is thus, for a damping constant $\rho$ \[\bM=\bG(\bSigma+\rho\cdot \trace(\bSigma) \bI)^{-1}\]
\[\bD=\Muon(\bM)\]
And so, the update is 
\[\bW_+=\bW-\eta \bD\]

We note that \Muon orthonormalizes the gradient matrix (in this case $\bM$), up to numerical errors in the Newton-Schulz iterations. Thus, we may treat $\bD$ as orthonormal. Then, we get:
\[\bW_+\bz=\bW\bz - \eta \bD\bz\]
The preconditioning only affects the $\bD\bz$ term, so we focus on this term. Since $\bD$ is approximately orthonormal, and $\bz$ has $\Theta(1)$ entries, $\bD\bz$ will also have $\Theta(1)$ entries, as $\bD$ is orthonormal (for square $\bM$). Thus, \DoPe-\Muon achieves learning rate transfer across width with constant learning rate, matching \Muon. 

In the case of \AdamW, we consider 
\[\bW_+=\bW-\eta \AdamW(\bM)\]
We consider \AdamW without the EMA buffers, since momentum does not affect $\mu$P \citep{qiu2025hyperparameter}. In this case, \AdamW is simply an elementwise sign function. We can also note that $\bG$ can be written as $\delta\bz^\top$, where $\delta$ is the gradient of the loss with respect to $\bW\bz$. Similarly, we have $\bSigma=\bz\bz^\top$. Thus, we see \[\bM=(\delta\bz^\top)(\bz\bz^\top+\rho\cdot \trace(\bz\bz^\top)\bI)^{-1}\]
Thus, applying an elementwise sign function to this will create a matrix $\bD=\AdamW(\bM)$ with all elements $\Theta(1)$, but correlated with $\bz$. Thus, we get
\[\bW_+=\bW\bz-\eta\bD\bz\]
Again, the first term is the same as in regular \AdamW. In the second term, as $\bD$ and $\bz$ are correlated, by the Law of Large Numbers the resulting terms will be $\Theta(w)$, and so $\eta$ should be scaled with $w^{-1}$, exactly matching the learning rate scaling for regular \AdamW. Notice in this case, $w$ is just the dimension of $\bz$. 

So, in either case, \DoPe-\AdamW and \DoPe-\Muon retains the same $\mu$P scaling rules as \AdamW and \Muon themselves.

\begin{figure}[t]
    \centering
    \includegraphics[width=\linewidth]{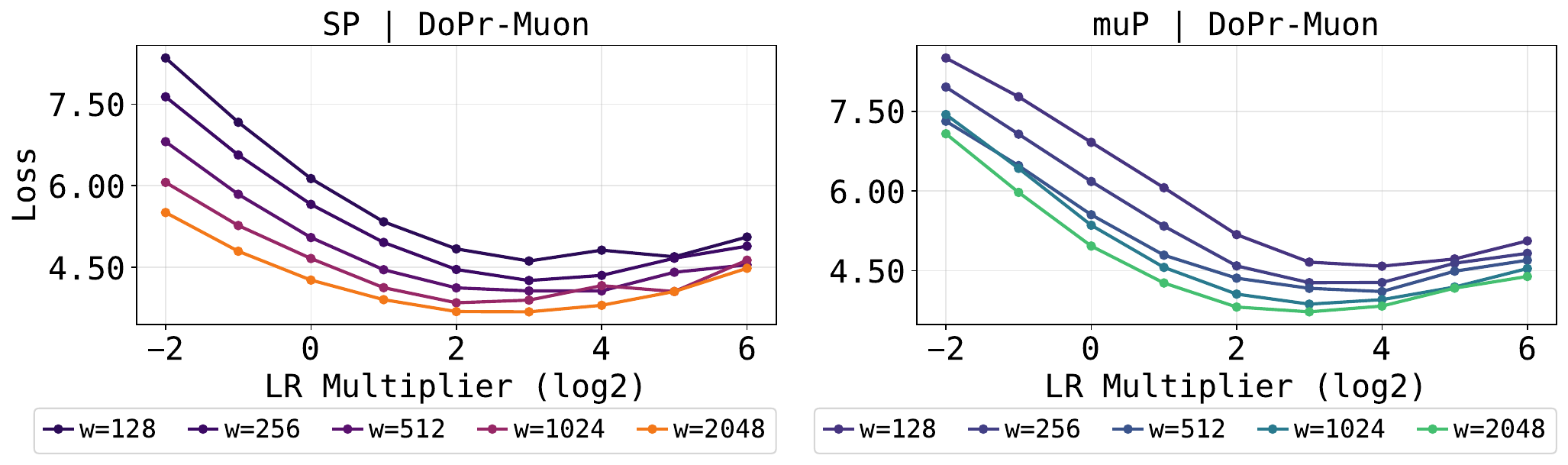}
    \caption{
\textbf{$\mu$P scaling behavior.}
Learning-rate sweeps comparing for \DoPe-\Muon under non-$\mu$P(SP) and $\mu$P parameterizations on GPT2 language model.
}
    \label{fig:mup_nanogpt_muon}
\end{figure}

\subsection{State-based Imitation Learning}
\label{app:state-based-IL}

This section provides implementation details and supporting results for \texttt{Humanoid-v5} and \texttt{Half-Cheetah-v5} Gymnasium benchmarks \citep{towers2025gymnasiumstandardinterfacereinforcement}

\textbf{Model and architecture.}  
To maintain consistency with prior work, we use the Residual MLP architecture together with model EMA and a cosine learning rate scheduler following \cite{block2024butterfly}. Full architectural hyperparameters are reported in \Cref{tab:architecture}.

\textbf{Training setup.}  
For each optimizer variant, we conduct grid searches over learning rate, damping, and weight decay. The best-performing hyperparameter configurations are reported in \Cref{tab:humanoid_consolidated} for \texttt{Humanoid-v5} and \Cref{tab:halfcheetah_consolidated} for \texttt{Half-Cheetah-v5}. We train three independent seeds over 1000 training steps.

\textbf{Evaluation Protocol.}  
We evaluate model performance using episodic return on the corresponding control task. For each seed, we use a bootstrapped-mean evaluation procedure described in \Cref{alg:bootstrap} and report the 10/50/90 quantiles.  

\begin{algorithm}[t]
\label{alg:bootstrap}
\caption{Bootstrap Mean Estimation of Success Rate}
\begin{algorithmic}[1]
\STATE \textbf{Input:} checkpoints $\{c_k\}_{k=1}^K$, rollouts per checkpoint $R$, bootstrap trials $B$, sample size $m$
\STATE $\mathcal{D} \gets [\,]$

\FOR{$k=1,\dots,K$}
    \STATE Load checkpoint $c_k$
    \FOR{$r=1,\dots,R$}
        \STATE Run one rollout episode
        \STATE Append outcome $x \in \{0,1\}$ to $\mathcal{D}$ \COMMENT{$1=$ success}
    \ENDFOR
\ENDFOR

\FOR{$b=1,\dots,B$}
    \STATE Draw subset $\mathcal{S}_b \subset \mathcal{D}$ of size $m$ (without replacement)
    \STATE $\hat{\mu}_b \gets \frac{1}{m}\sum_{x \in \mathcal{S}_b} x$
\ENDFOR

\STATE \textbf{Return:} 
$\mathrm{mean}(\hat{\mu}),\;
q_{0.10}, q_{0.50}, q_{0.90}$

\end{algorithmic}
\end{algorithm}

\textbf{Results.}  
Across both \texttt{Humanoid-v5} (\Cref{fig:humanoid_il}) and \texttt{Half-Cheetah-v5} (\Cref{fig:halfcheetah_il}), \DoPe{} variants consistently improve reward relative to their baseline optimizer counterparts. These gains do not reliably track training or validation loss trends.

\begin{figure}[t]
    \centering
    \includegraphics[width=1\linewidth]{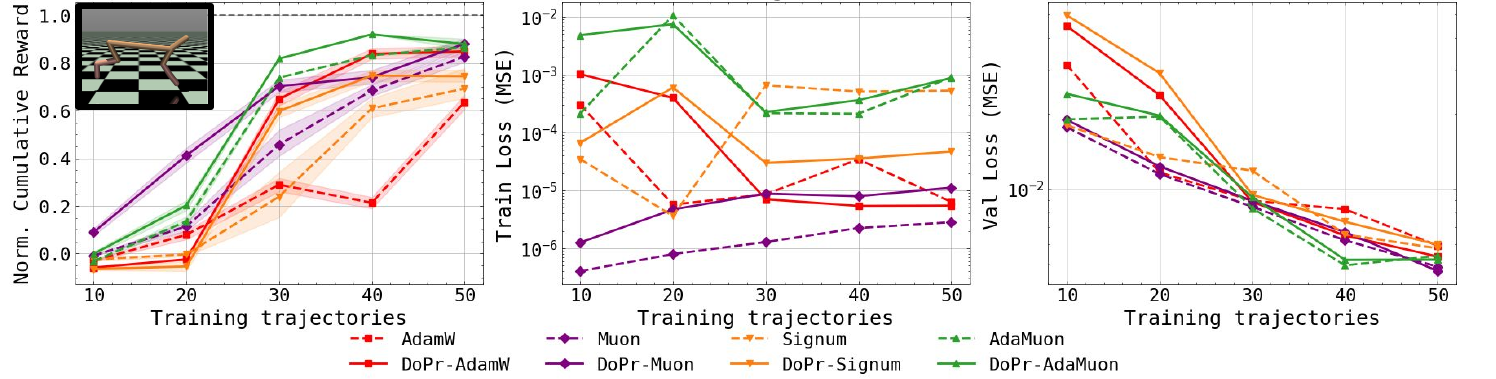}
    \caption{\textbf{Half-Cheetah-v5} \DoPe performance across \AdamW, \Muon, \Signum, and \AdaMuon. \DoPe variants attain higher terminal reward which does not consistently correlate with train or validation loss improvements.}
    \label{fig:halfcheetah_il}
\end{figure}

\begin{table}[h]
\centering
\caption{State-based IL Network Architecture and Training Configuration}
\label{tab:architecture}
\begin{tabular}{ll}
\toprule
\textbf{Parameter} & \textbf{Value} \\
\midrule
Hidden layers & [256, 256] \\
Activation & ReLU \\
Batch size & 256 \\
Training steps & 1000 \\
Eval trajectories & 100 \\
LR scheduler & Cosine annealing \\
Warmup fraction & 0.05 \\
Min LR ($\eta_{min}$) & 0.0 \\
EMA decay & 0.99 \\
EMA update freq & Every step \\
\bottomrule
\end{tabular}
\end{table}

\begin{table}[h]
\centering
\caption{Best optimizer hyperparameters across trajectory counts for Humanoid.}
\label{tab:humanoid_consolidated}
\small
\setlength{\tabcolsep}{4pt}
\renewcommand{\arraystretch}{1.25}
\begin{tabular}{lccccc}
\toprule
\textbf{Method} & \textbf{10 traj} & \textbf{20 traj} & \textbf{30 traj} & \textbf{40 traj} & \textbf{50 traj} \\
\midrule
AdamW
& \makecell{$\eta=3{\times}10^{-3}$ \\ $\lambda=1{\times}10^{-2}$}
& \makecell{$\eta=3{\times}10^{-3}$ \\ $\lambda=1{\times}10^{-2}$}
& \makecell{$\eta=3{\times}10^{-3}$ \\ $\lambda=0$}
& \makecell{$\eta=3{\times}10^{-3}$ \\ $\lambda=0$}
& \makecell{$\eta=1{\times}10^{-3}$ \\ $\lambda=0$} \\

\DoPe-AdamW
& \makecell{$\eta=1{\times}10^{-3}$ \\ $\lambda=0$ \\ $d=1{\times}10^{-4}$}
& \makecell{$\eta=1{\times}10^{-3}$ \\ $\lambda=0$ \\ $d=1{\times}10^{-3}$}
& \makecell{$\eta=1{\times}10^{-3}$ \\ $\lambda=0$ \\ $d=1{\times}10^{-4}$}
& \makecell{$\eta=1{\times}10^{-3}$ \\ $\lambda=0$ \\ $d=1{\times}10^{-3}$}
& \makecell{$\eta=1{\times}10^{-3}$ \\ $\lambda=0$ \\ $d=1{\times}10^{-5}$} \\

\midrule
Muon
& \makecell{$\eta_h=1{\times}10^{-2}$ \\ $\lambda=1{\times}10^{-2}$}
& \makecell{$\eta_h=2{\times}10^{-2}$ \\ $\lambda=5{\times}10^{-3}$}
& \makecell{$\eta_h=1{\times}10^{-2}$ \\ $\lambda=1{\times}10^{-2}$}
& \makecell{$\eta_h=2{\times}10^{-2}$ \\ $\lambda=1{\times}10^{-2}$}
& \makecell{$\eta_h=1{\times}10^{-2}$ \\ $\lambda=5{\times}10^{-3}$} \\

\DoPe-Muon
& \makecell{$\eta_h=2{\times}10^{-2}$ \\ $\lambda=1{\times}10^{-2}$ \\ $d=1{\times}10^{-4}$}
& \makecell{$\eta_h=2{\times}10^{-2}$ \\ $\lambda=5{\times}10^{-3}$ \\ $d=1{\times}10^{-4}$}
& \makecell{$\eta_h=2{\times}10^{-2}$ \\ $\lambda=1{\times}10^{-2}$ \\ $d=1{\times}10^{-4}$}
& \makecell{$\eta_h=2{\times}10^{-2}$ \\ $\lambda=1{\times}10^{-2}$ \\ $d=1{\times}10^{-5}$}
& \makecell{$\eta_h=2{\times}10^{-2}$ \\ $\lambda=1{\times}10^{-2}$ \\ $d=1{\times}10^{-5}$} \\

\midrule
Signum
& \makecell{$\eta=3{\times}10^{-4}$ \\ $\lambda=0$}
& \makecell{$\eta=1{\times}10^{-3}$ \\ $\lambda=0$}
& \makecell{$\eta=1{\times}10^{-3}$ \\ $\lambda=0$}
& \makecell{$\eta=3{\times}10^{-4}$ \\ $\lambda=0$}
& \makecell{$\eta=3{\times}10^{-4}$ \\ $\lambda=0$} \\

\DoPe-Signum
& \makecell{$\eta=1{\times}10^{-3}$ \\ $\lambda=0$ \\ $d=1{\times}10^{-3}$}
& \makecell{$\eta=3{\times}10^{-4}$ \\ $\lambda=0$ \\ $d=1{\times}10^{-4}$}
& \makecell{$\eta=3{\times}10^{-4}$ \\ $\lambda=0$ \\ $d=1{\times}10^{-5}$}
& \makecell{$\eta=3{\times}10^{-4}$ \\ $\lambda=0$ \\ $d=1{\times}10^{-4}$}
& \makecell{$\eta=3{\times}10^{-4}$ \\ $\lambda=0$ \\ $d=1{\times}10^{-5}$} \\

\midrule
AdaMuon
& \makecell{$\eta=1{\times}10^{-3}$ \\ $\lambda=1{\times}10^{-1}$}
& \makecell{$\eta=3{\times}10^{-3}$ \\ $\lambda=1{\times}10^{-2}$}
& \makecell{$\eta=1{\times}10^{-3}$ \\ $\lambda=1{\times}10^{-2}$}
& \makecell{$\eta=3{\times}10^{-3}$ \\ $\lambda=1{\times}10^{-1}$}
& \makecell{$\eta=3{\times}10^{-3}$ \\ $\lambda=1{\times}10^{-1}$} \\

\DoPe-AdaMuon
& \makecell{$\eta=3{\times}10^{-3}$ \\ $\lambda=1{\times}10^{-2}$ \\ $d=1{\times}10^{-2}$}
& \makecell{$\eta=3{\times}10^{-3}$ \\ $\lambda=1{\times}10^{-1}$ \\ $d=1{\times}10^{-2}$}
& \makecell{$\eta=1{\times}10^{-3}$ \\ $\lambda=1{\times}10^{-1}$ \\ $d=1{\times}10^{-4}$}
& \makecell{$\eta=1{\times}10^{-3}$ \\ $\lambda=1{\times}10^{-2}$ \\ $d=1{\times}10^{-4}$}
& \makecell{$\eta=3{\times}10^{-3}$ \\ $\lambda=1{\times}10^{-2}$ \\ $d=1{\times}10^{-3}$} \\
\bottomrule
\end{tabular}%
\end{table}

\begin{table}[h]
\centering
\caption{Best optimizer hyperparameters across trajectory counts for HalfCheetah.}
\label{tab:halfcheetah_consolidated}
\small
\setlength{\tabcolsep}{4pt}
\renewcommand{\arraystretch}{1.25}
\small
\begin{tabular}{lccccc}
\toprule
\textbf{Method} & \textbf{10 traj} & \textbf{20 traj} & \textbf{30 traj} & \textbf{40 traj} & \textbf{50 traj} \\
\midrule
\AdamW
& \makecell{$\eta=2{\times}10^{-3}$ \\ $\lambda=3{\times}10^{-3}$}
& \makecell{$\eta=2{\times}10^{-4}$ \\ $\lambda=0$}
& \makecell{$\eta=3{\times}10^{-3}$ \\ $\lambda=1{\times}10^{-2}$}
& \makecell{$\eta=2{\times}10^{-3}$ \\ $\lambda=3{\times}10^{-2}$}
& \makecell{$\eta=2{\times}10^{-3}$ \\ $\lambda=3{\times}10^{-2}$} \\

\DoPe-\AdamW
& \makecell{$\eta=1{\times}10^{-4}$ \\ $\lambda=0$ \\ $d=1{\times}10^{-5}$}
& \makecell{$\eta=3{\times}10^{-4}$ \\ $\lambda=0$ \\ $d=1{\times}10^{-3}$}
& \makecell{$\eta=2{\times}10^{-3}$ \\ $\lambda=0$ \\ $d=1{\times}10^{-3}$}
& \makecell{$\eta=2{\times}10^{-3}$ \\ $\lambda=1{\times}10^{-2}$ \\ $d=1{\times}10^{-6}$}
& \makecell{$\eta=2{\times}10^{-3}$ \\ $\lambda=1{\times}10^{-2}$ \\ $d=1{\times}10^{-3}$} \\

\midrule
\Muon
& \makecell{$\eta_h=2{\times}10^{-2}$ \\ $\lambda=5{\times}10^{-3}$}
& \makecell{$\eta_h=2{\times}10^{-2}$ \\ $\lambda=5{\times}10^{-3}$}
& \makecell{$\eta_h=3{\times}10^{-2}$ \\ $\lambda=5{\times}10^{-3}$}
& \makecell{$\eta_h=3{\times}10^{-2}$ \\ $\lambda=1{\times}10^{-2}$}
& \makecell{$\eta_h=3{\times}10^{-2}$ \\ $\lambda=2{\times}10^{-2}$} \\

\DoPe-\Muon
& \makecell{$\eta_h=5{\times}10^{-3}$ \\ $\lambda=5{\times}10^{-3}$ \\ $d=3{\times}10^{-4}$}
& \makecell{$\eta_h=3{\times}10^{-2}$ \\ $\lambda=2{\times}10^{-2}$ \\ $d=1{\times}10^{-3}$}
& \makecell{$\eta_h=3{\times}10^{-2}$ \\ $\lambda=2{\times}10^{-2}$ \\ $d=1{\times}10^{-3}$}
& \makecell{$\eta_h=3{\times}10^{-2}$ \\ $\lambda=5{\times}10^{-3}$ \\ $d=1{\times}10^{-3}$}
& \makecell{$\eta_h=3{\times}10^{-2}$ \\ $\lambda=2{\times}10^{-2}$ \\ $d=1{\times}10^{-3}$} \\

\midrule
\Signum
& \makecell{$\eta=1{\times}10^{-3}$ \\ $\lambda=0$}
& \makecell{$\eta=3{\times}10^{-4}$ \\ $\lambda=0$}
& \makecell{$\eta=3{\times}10^{-3}$ \\ $\lambda=0$}
& \makecell{$\eta=3{\times}10^{-3}$ \\ $\lambda=0$}
& \makecell{$\eta=3{\times}10^{-3}$ \\ $\lambda=0$} \\

\DoPe-\Signum
& \makecell{$\eta=1{\times}10^{-4}$ \\ $\lambda=0$ \\ $d=1{\times}10^{-4}$}
& \makecell{$\eta=1{\times}10^{-3}$ \\ $\lambda=0$ \\ $d=1{\times}10^{-5}$}
& \makecell{$\eta=1{\times}10^{-3}$ \\ $\lambda=0$ \\ $d=1{\times}10^{-3}$}
& \makecell{$\eta=1{\times}10^{-3}$ \\ $\lambda=0$ \\ $d=1{\times}10^{-3}$}
& \makecell{$\eta=1{\times}10^{-3}$ \\ $\lambda=0$ \\ $d=1{\times}10^{-3}$} \\

\midrule
\AdaMuon
& \makecell{$\eta=1{\times}10^{-2}$ \\ $\lambda=1{\times}10^{-2}$}
& \makecell{$\eta=1{\times}10^{-2}$ \\ $\lambda=1{\times}10^{-2}$}
& \makecell{$\eta=1{\times}10^{-2}$ \\ $\lambda=1{\times}10^{-2}$}
& \makecell{$\eta=1{\times}10^{-2}$ \\ $\lambda=1{\times}10^{-2}$}
& \makecell{$\eta=3{\times}10^{-3}$ \\ $\lambda=1{\times}10^{-2}$} \\

\DoPe-\AdaMuon
& \makecell{$\eta=1{\times}10^{-2}$ \\ $\lambda=1{\times}10^{-2}$ \\ $d=1{\times}10^{-3}$}
& \makecell{$\eta=1{\times}10^{-2}$ \\ $\lambda=1{\times}10^{-2}$ \\ $d=1{\times}10^{-2}$}
& \makecell{$\eta=1{\times}10^{-3}$ \\ $\lambda=1{\times}10^{-2}$ \\ $d=1{\times}10^{-3}$}
& \makecell{$\eta=1{\times}10^{-2}$ \\ $\lambda=1{\times}10^{-2}$ \\ $d=1{\times}10^{-2}$}
& \makecell{$\eta=3{\times}10^{-3}$ \\ $\lambda=1{\times}10^{-2}$ \\ $d=1{\times}10^{-3}$} \\
\bottomrule
\end{tabular}%
\end{table}

\subsection{Image-Based Robot Policy Learning}
\label{app:image-based-il}
This section provides additional implementation details and supporting results for the \texttt{Tool-Hang} and \texttt{Transport} Robomimic tasks \citep{robomimic2021}

\textbf{Model and architecture.}  
We evaluate \DoPe performance on state-of-the-art imitation learning architectures with generative control policies (GCPs) using the image-based \texttt{Tool Hang} and \texttt{Transport} tasks at Proficient Human (PH) expert level from Robomimic \citep{robomimic2021}. We use the training pipeline of \cite{pan2025much}, inheriting all architectural and environment hyperparameters. Our primary policy architecture is ChiUNet \citep{chi2023diffusion}, with approximately 20M parameters.

\textbf{Training setup.}  
We conduct learning rate sweeps over \AdamW, \Muon, and their corresponding \DoPe variants. We precondition all layers (including FiLM) with full preconditioning (via Cholesky) except for Conv2d layers which use SUA \cref{app:sua}. For each method, we tune learning rates while keeping the remaining training configuration fixed to the reference implementation. Full optimizer hyperparameters are reported in \Cref{tab:toolhang_hparams} and \Cref{tab:transport_hparams}.

\begin{table*}[t]
\centering
\small
\caption{Best optimizer hyperparameters for Tool Hang PH Image using ChiUNet (seeds 42, 43, 44).}
\label{tab:toolhang_hparams}
\begin{tabular}{lcccc}
\toprule
Hyperparameter & AdamW & \DoPe-AdamW & Muon & \DoPe-Muon \\
\midrule
lr                   & $1\mathrm{e}{-4}$ & $1\mathrm{e}{-4}$ & $3\mathrm{e}{-4}$ & $3\mathrm{e}{-4}$ \\
weight\_decay        & $1\mathrm{e}{-5}$ & $1\mathrm{e}{-5}$ & $0.1$ & $0.1$ \\
muon\_momentum       & --- & --- & $0.95$ & $0.95$ \\
grad\_clip\_norm     & $10.0$ & $10.0$ & $1.0$ & $1.0$ \\
gradient\_steps      & $300$k & $300$k & $200$k & $200$k \\
batch\_size          & $256$ & $256$ & $256$ & $256$ \\
precond\_ema\_rate   & --- & $0.9$ & --- & $0.9$ \\
grad\_ema\_rate      & --- & $0.9$ & --- & $0.9$ \\
precond\_include\_film & --- & true & --- & true \\
damping              & --- & $1\mathrm{e}{-4}$ & --- & $1\mathrm{e}{-4}$ \\
damping\_mode        & --- & trace & --- & trace \\
\bottomrule
\end{tabular}
\end{table*}

\begin{table*}[t]
\centering
\small
\caption{Best optimizer hyperparameters for Transport MH Image using ChiUNet (seeds 42, 43, 44).}
\label{tab:transport_hparams}
\begin{tabular}{lcccc}
\toprule
Hyperparameter & AdamW & \DoPe-AdamW & Muon & \DoPe-Muon \\
\midrule
lr                   & $1\mathrm{e}{-4}$ & $1\mathrm{e}{-4}$ & $1\mathrm{e}{-4}$ & $1\mathrm{e}{-4}$ \\
weight\_decay        & $1\mathrm{e}{-5}$ & $1\mathrm{e}{-5}$ & $0.1$ & $0.1$ \\
muon\_momentum       & --- & --- & $0.95$ & $0.95$ \\
grad\_clip\_norm     & $10.0$ & $10.0$ & $1.0$ & $1.0$ \\
gradient\_steps      & $300$k & $300$k & $200$k & $200$k \\
batch\_size          & $256$ & $256$ & $256$ & $256$ \\
precond\_ema\_rate   & --- & $0.0$ & --- & $0.0$ \\
grad\_ema\_rate      & --- & $0.0$ & --- & $0.0$ \\
precond\_include\_film & --- & true & --- & true \\
damping              & --- & $1\mathrm{e}{-4}$ & --- & $1\mathrm{e}{-4}$ \\
damping\_mode        & --- & trace & --- & trace \\
\bottomrule
\end{tabular}
\end{table*}

\textbf{Evaluation Protocol.}  
Policies are evaluated using task success rate under the standard Robomimic benchmark protocol. During training, we evaluate checkpoints every 20k steps using 9-step action sampling and report the corresponding rollout success rate using the bootstrapped mean procedure \Cref{alg:bootstrap}.

\textbf{Results.}  
Results are summarized in \Cref{fig:robomimic_merged_new.pdf}. Across both tasks, \DoPe{} variants consistently match or exceed their baseline counterparts, demonstrating that the proposed preconditioning strategy transfers effectively to high-dimensional visuomotor control.

\begin{figure*}[t]
\centering

\begin{subfigure}[t]{0.48\textwidth}
    \centering
    \includegraphics[width=\linewidth]{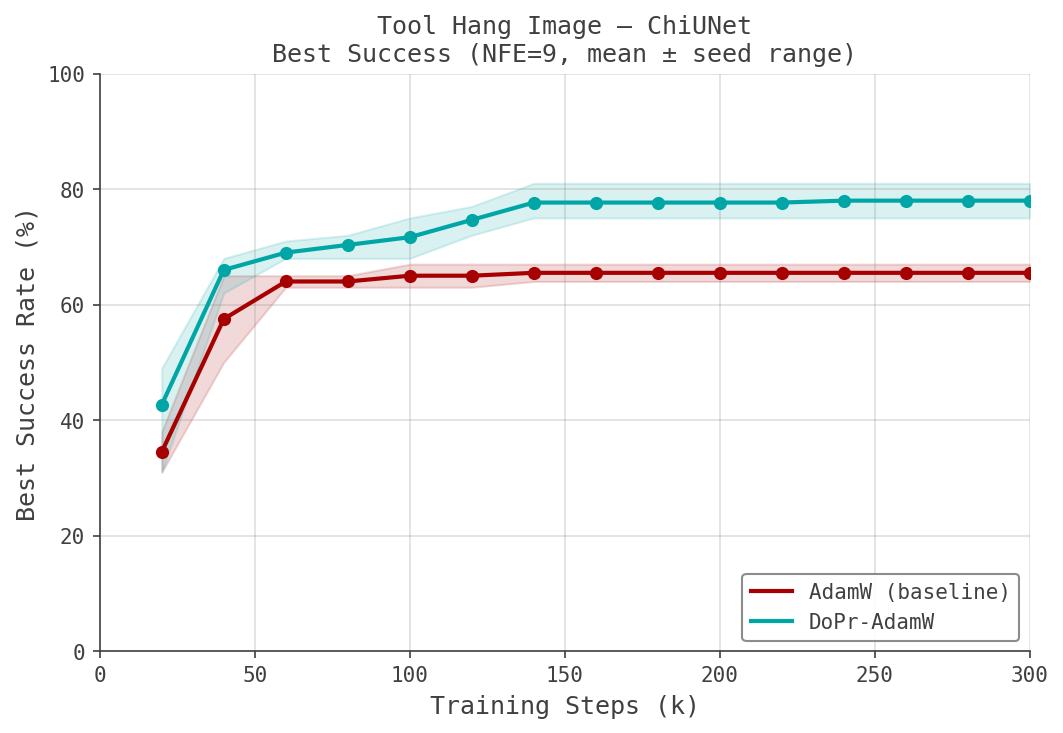}
    \caption{Tool Hang: AdamW vs \DoPe-AdamW}
\end{subfigure}
\hfill
\begin{subfigure}[t]{0.48\textwidth}
    \centering
    \includegraphics[width=\linewidth]{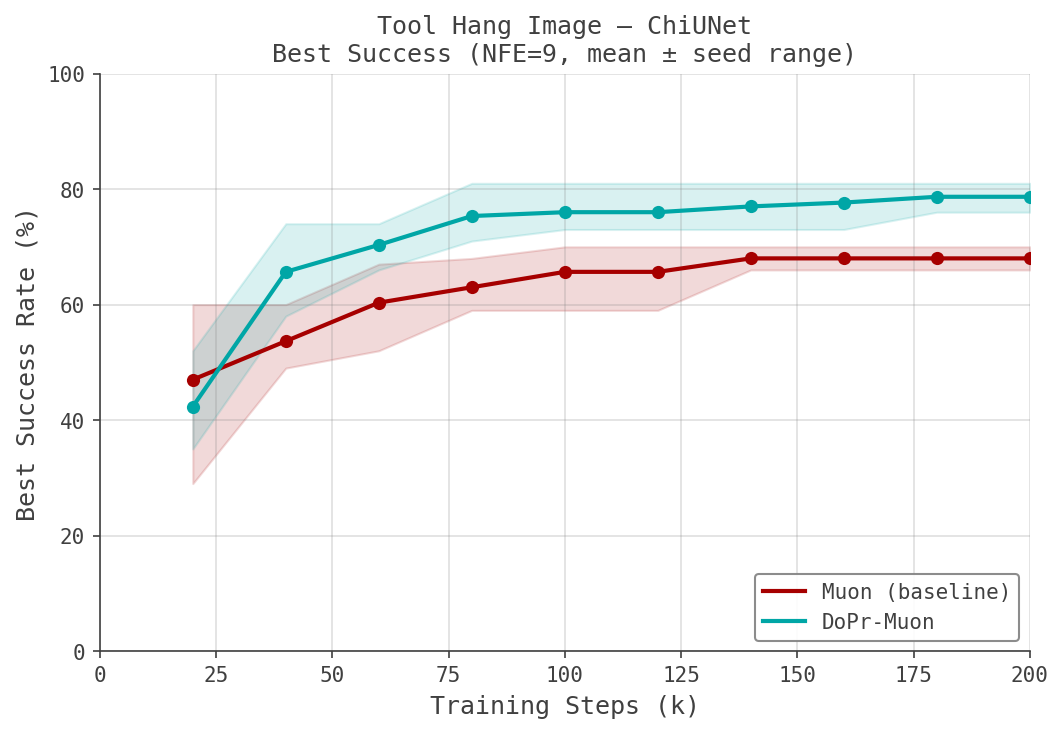}
    \caption{Tool Hang: Muon vs \DoPe-Muon}
\end{subfigure}

\vspace{0.75em}

\begin{subfigure}[t]{0.48\textwidth}
    \centering
    \includegraphics[width=\linewidth]{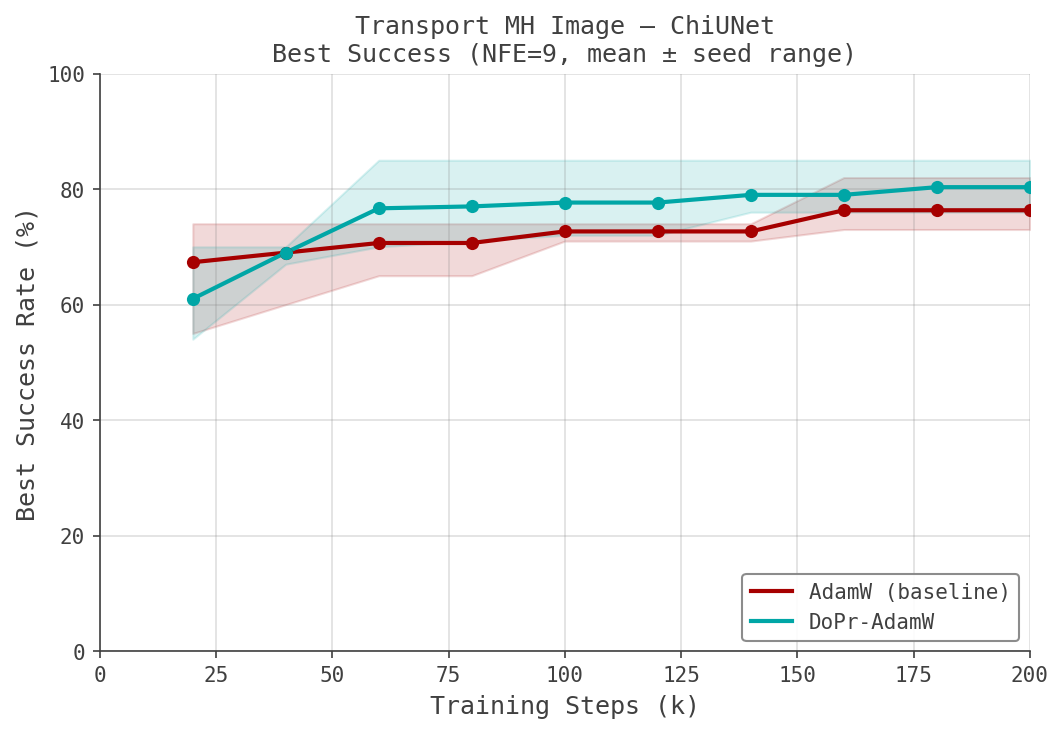}
    \caption{Transport: AdamW vs \DoPe-AdamW}
\end{subfigure}
\hfill
\begin{subfigure}[t]{0.48\textwidth}
    \centering
    \includegraphics[width=\linewidth]{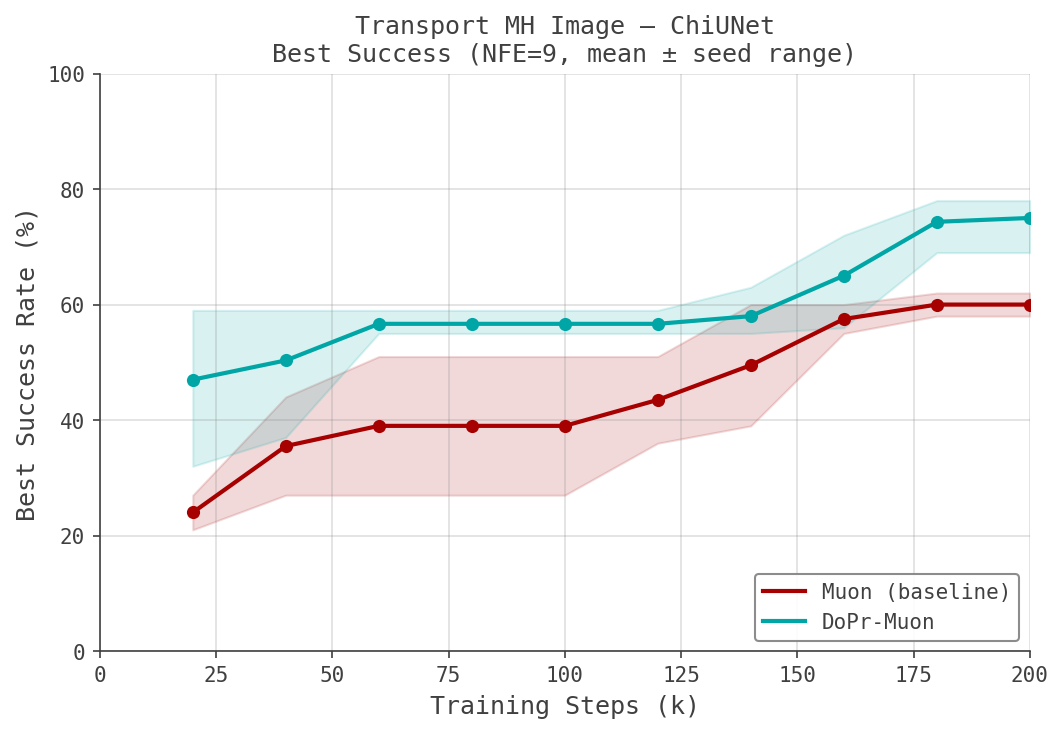}
    \caption{Transport: Muon vs \DoPe-Muon}
\end{subfigure}

\caption{Best mean success rate throughout training on \texttt{Tool-Hang} and \texttt{Transport} tasks from \texttt{robomimic}. We compare AdamW, Muon, and their \DoPe{} variants, reporting the best mean success rate over evaluation checkpoints every 20k steps.}
\label{fig:robomimic_image_based}

\end{figure*}

\begin{figure*}[t]
\centering

\begin{subfigure}[t]{0.48\textwidth}
    \centering
    \includegraphics[width=\linewidth]{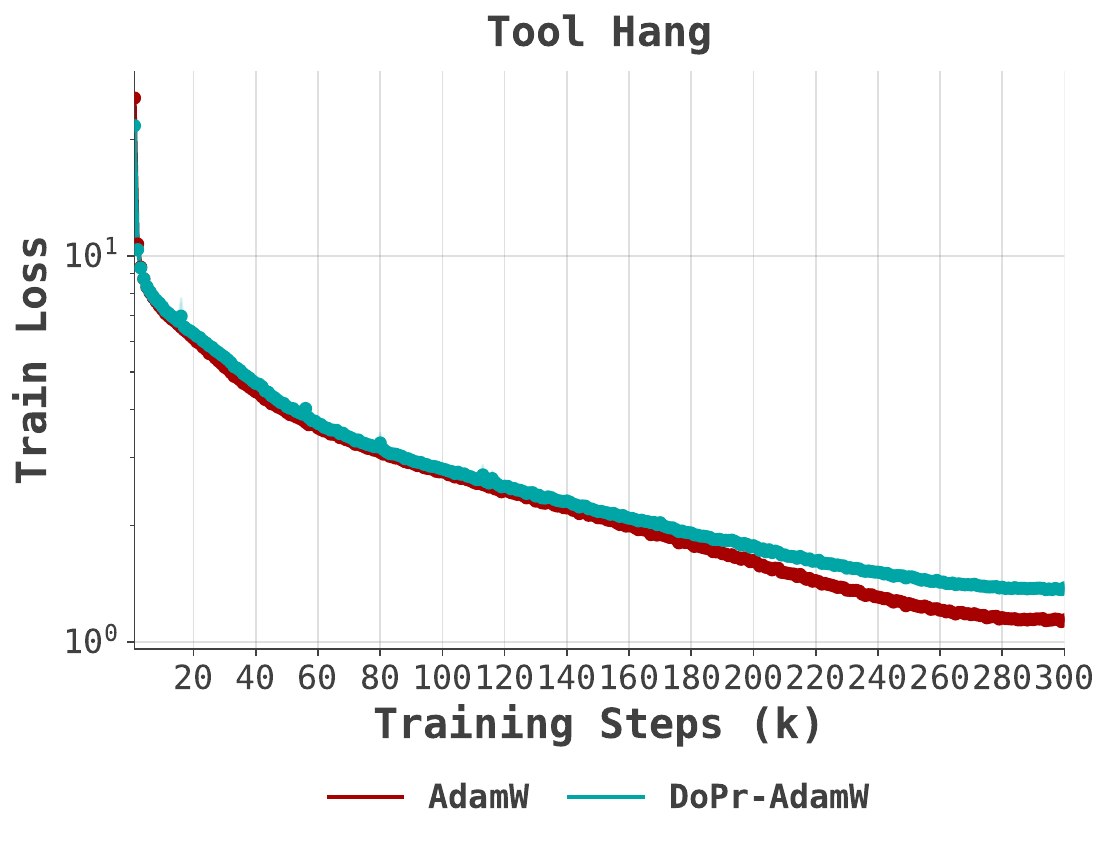}
    \caption{Tool Hang: AdamW vs \DoPe-AdamW}
\end{subfigure}
\hfill
\begin{subfigure}[t]{0.48\textwidth}
    \centering
    \includegraphics[width=\linewidth]{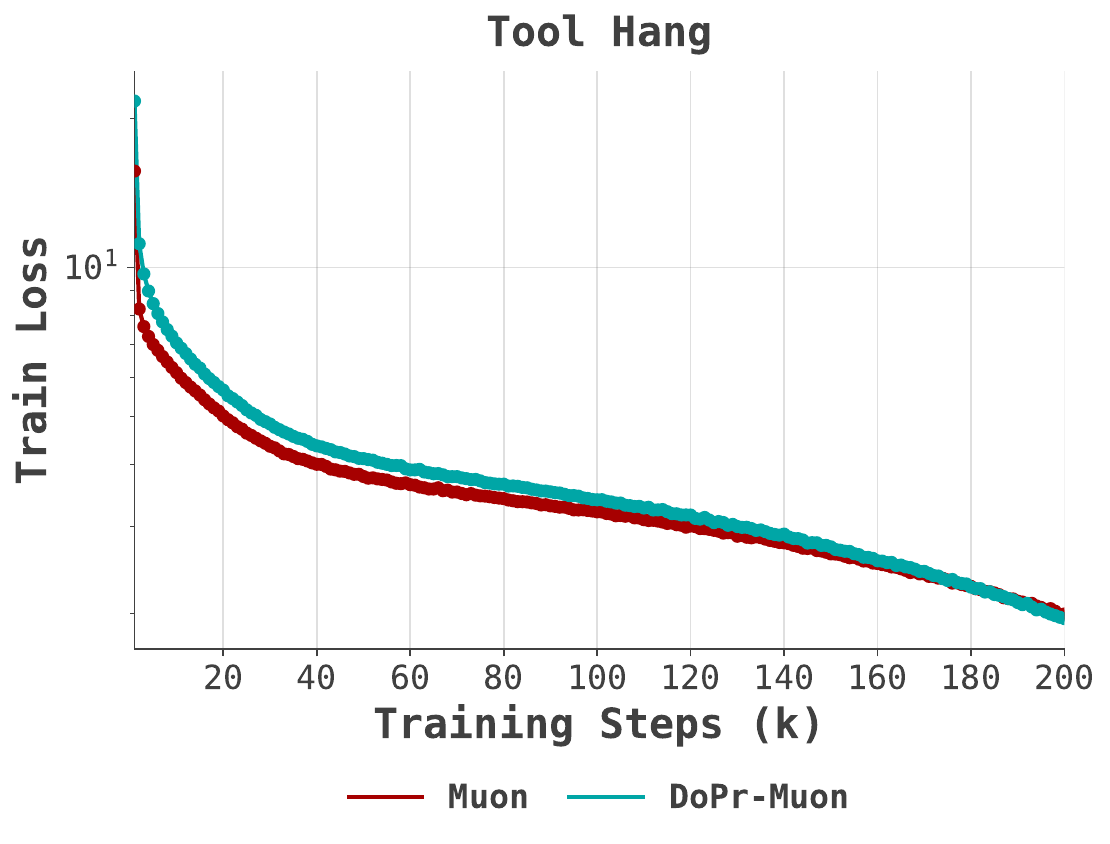}
    \caption{Tool Hang: Muon vs \DoPe-Muon}
\end{subfigure}

\vspace{0.75em}

\begin{subfigure}[t]{0.48\textwidth}
    \centering
    \includegraphics[width=\linewidth]{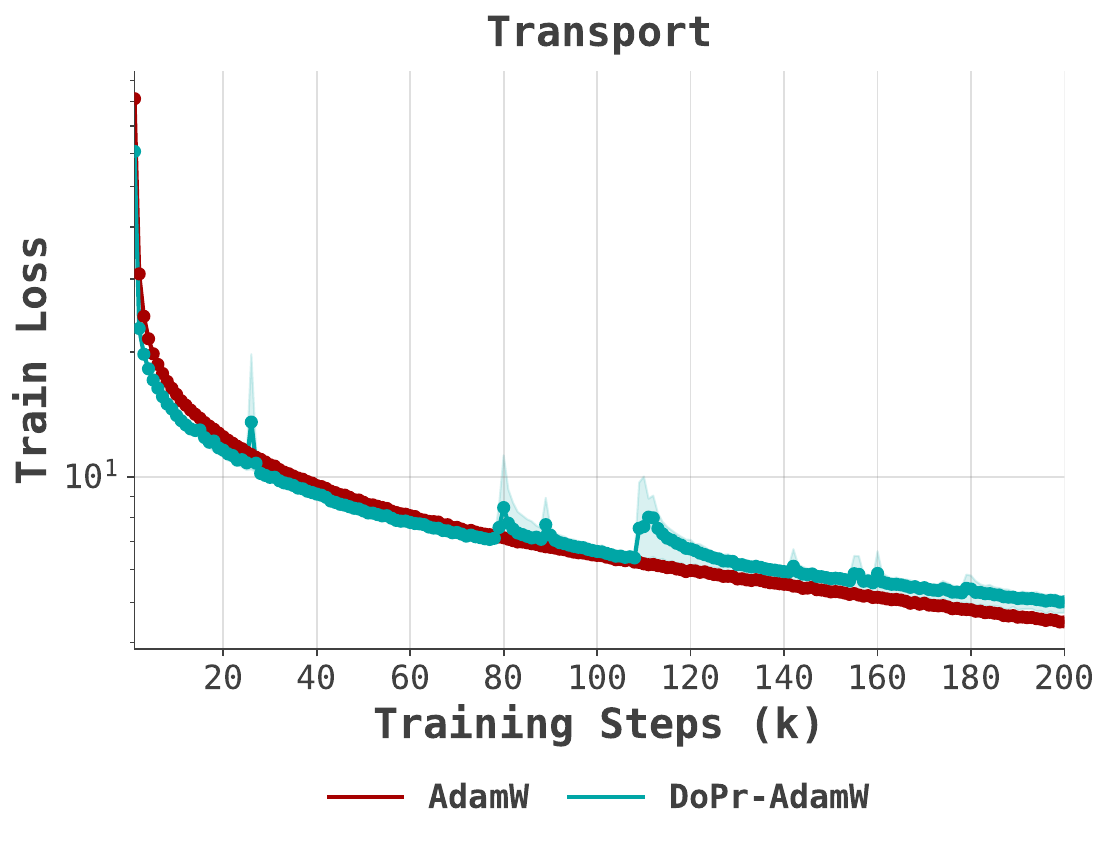}
    \caption{Transport: AdamW vs \DoPe-AdamW}
\end{subfigure}
\hfill
\begin{subfigure}[t]{0.48\textwidth}
    \centering
    \includegraphics[width=\linewidth]{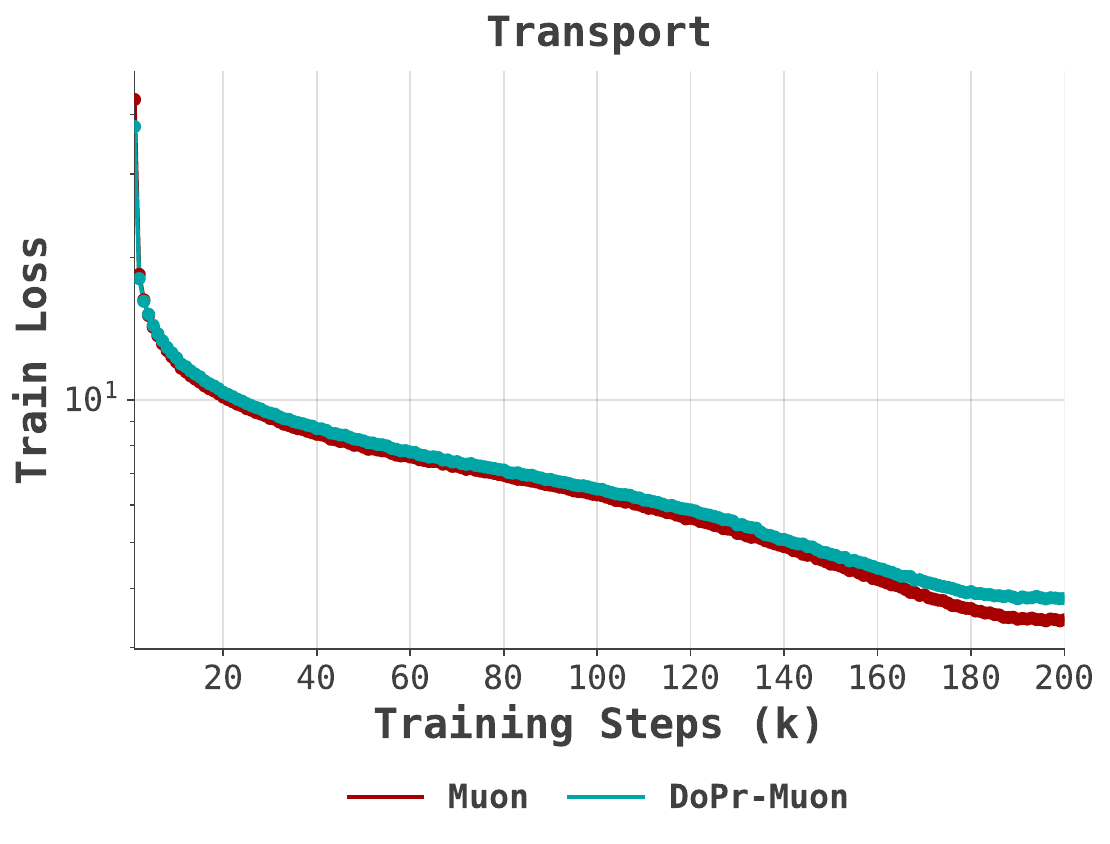}
    \caption{Transport: Muon vs \DoPe-Muon}
\end{subfigure}
\caption{Training Loss on \texttt{Tool-Hang} and \texttt{Transport} tasks from \texttt{robomimic}. We compare AdamW, Muon, and their \DoPe{} variants}
\label{fig:robomimic_image_based_train_loss}
\end{figure*}

\subsection{Supervised Fine-Tuning}
\label{app:sft_details}

This section provides full details for the supervised fine-tuning (SFT)
experiments used in \Cref{sec:sft}. We report two SFT settings. The
first is a small-scale \texttt{Llama-3.2-3B} \texttt{GSM8K} sweep used to characterize
learning-rate sensitivity, training-time sample efficiency, and the relationship
between downstream accuracy and token-level training loss. The second is
our primary large-scale math SFT experiment, which fine-tunes
\texttt{Llama-3.1-8B} with LoRA on the \texttt{OpenMathInstruct-2}
\texttt{train\_1M} split and evaluates on \texttt{GSM8K}, \texttt{GSM8K-CoT}, and \texttt{MATH-500}. 

\subsubsection{3B SFT}

\paragraph{Overview.}
This experiment uses
the \texttt{Llama-3.2-3B} base model
\cite{llama3modelcard, grattafiori2024llama} with LoRA \cite{hu2022lora} on a
100K-sample subset of OpenMathInstruct-2
\cite{toshniwal2024openmathinstruct} for one epoch.

\paragraph{Additional results}
We report the \Muon SFT sweep results in \Cref{fig:sft-3b-muon}. Consistent with the \DoPe-\AdamW comparison, \DoPe-\Muon also achieves stronger downstream \texttt{GSM8K} performance across the learning rate sweep, while often demonstrating higher final training loss than the base \Muon optimizer. This further supports our main observation that training loss alone can be a poor proxy for downstream performance, and that \DoPe improves the optimization in a way that is better aligned with downstream generalization.

\label{appdx:sft-3b-addres}
\vspace{-0.25cm}
\begin{figure}[t]
  \centering
  \setlength{\tabcolsep}{1pt}
  \renewcommand{\arraystretch}{0.05}

  \begin{tabular}{@{}cc@{}}
    \includegraphics[width=0.495\linewidth]{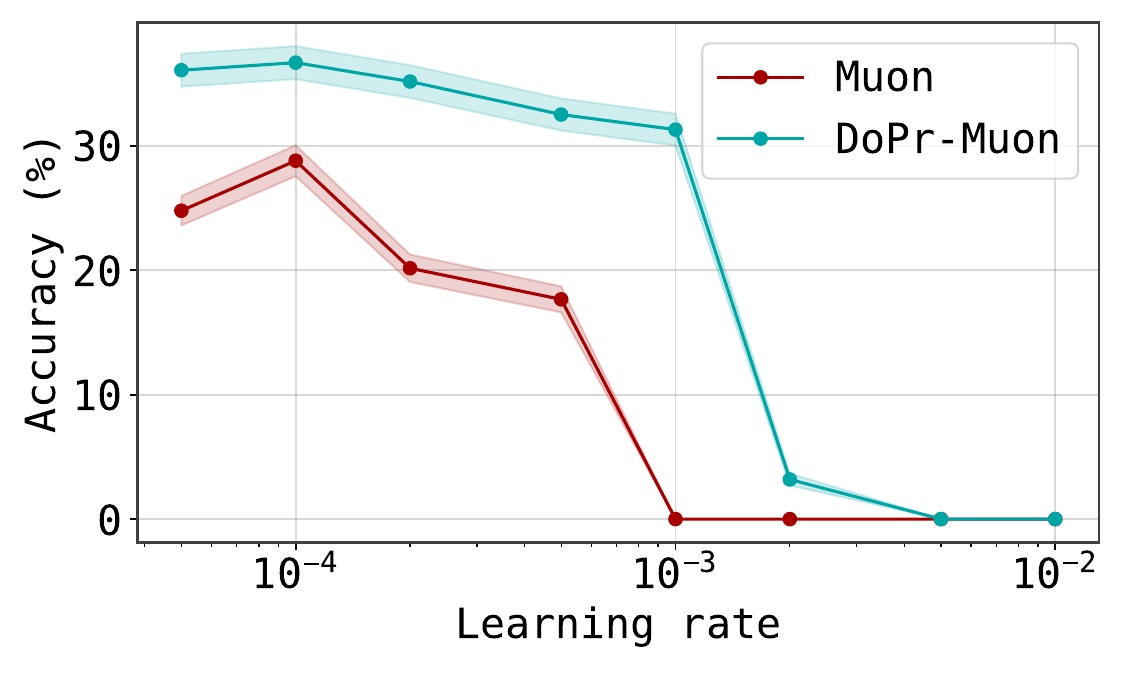}
    &
    \includegraphics[width=0.495\linewidth]{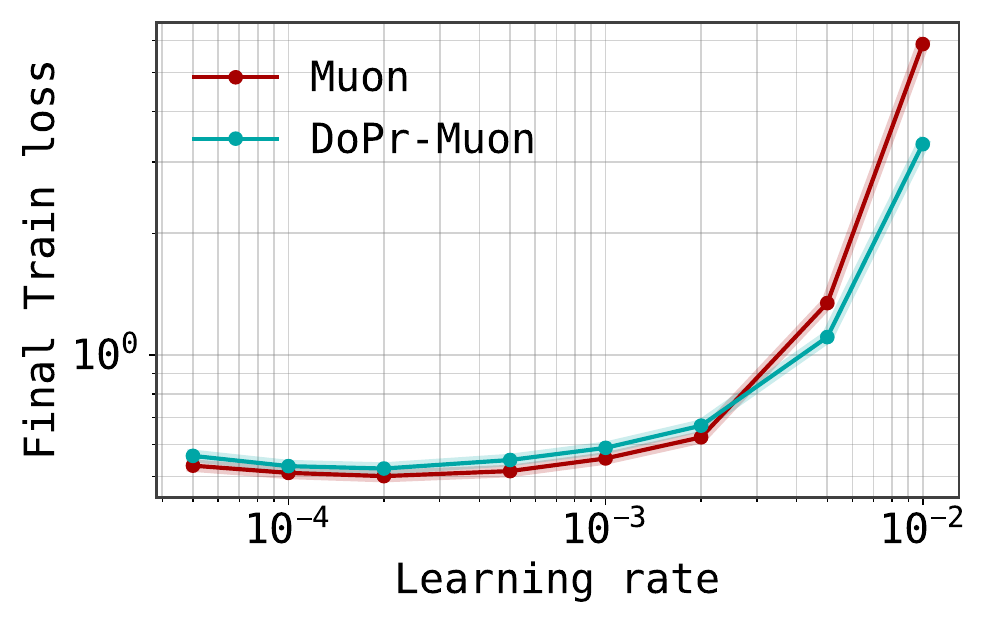}
  \end{tabular}

  \vspace{-0.15cm}
  \caption{
  \texttt{GSM8K} \textbf{3B SFT sweep}.
  Peak \texttt{GSM8K} accuracy across training steps vs.\ learning rate, comparing between \Muon and \DoPe-\Muon.
  }
  \label{fig:sft-3b-muon}
  \vspace{-0.25cm}
\end{figure}

\begin{table}[t]
  \centering
  \caption{
  Configuration for the small-scale \texttt{GSM8K} SFT sweep.
  }
  \small
  \setlength{\tabcolsep}{6pt}
  \begin{tabular}{p{0.28\linewidth} p{0.66\linewidth}}
    \toprule
    \textbf{Component} & \textbf{Setting} \\
    \midrule
    Base model & \texttt{Llama-3.2-3B} \cite{llama3modelcard, grattafiori2024llama} \\
    Tokenizer & \texttt{Llama-3.2-3B-Instruct} tokenizer \\
    Precision & bfloat16 weights and activations \\
    Attention implementation & FlashAttention-2 \cite{dao2022flashattention, dao2023flashattention2} \\
    \midrule
    Training dataset & OpenMathInstruct-2 \cite{toshniwal2024openmathinstruct} \\
    Data subset & 100K examples sampled from the 1M split \\
    Context length & 1024 tokens \\
    Packing & Enabled \\
    \midrule
    PEFT method & LoRA \cite{hu2022lora} \\
    LoRA target modules & All linear layers \\
    LoRA rank / scale & \(r=16\), \(\alpha=16\) \\
    Additional trainables & Token embedding matrix, used for chat-template special tokens \\
    Weight tying & Preserved between input embeddings and output projections \\
    \midrule
    Optimizers compared & \AdamW and \Muon, each with and without the \DoPe wrapper \\
    Training length & 1 epoch \\
    Base learning rate & \(1\times10^{-5}\) \\
    LR schedule & Cosine decay with 5\% warmup \\
    Per-device batch size & 2 \\
    Gradient accumulation & 8 steps \\
    \midrule
    Eval framework & \texttt{lm-eval} with \texttt{vLLM} backend \\
    Eval precision & bfloat16 \\
    Benchmark & \texttt{GSM8K}, 5-shot \\
    Prompting format & Chat template enabled; few-shot examples are formatted as multi-turn conversations \\
    \bottomrule
  \end{tabular}
  \label{tab:sft_small_setup}
\end{table}

\paragraph{Model and architecture.}
We fine-tune the \texttt{Llama-3.2-3B} base model
\cite{llama3modelcard, grattafiori2024llama} using the
\texttt{Llama-3.2-3B-Instruct} tokenizer. All experiments use bfloat16
precision for weights and activations, and implement attention with
FlashAttention-2.

\paragraph{Dataset and preprocessing.}
Training data are drawn from OpenMathInstruct-2
\cite{toshniwal2024openmathinstruct}, using a fixed 100K-example subset. We
tokenize sequences to a maximum context length of 1024 tokens and enable
sequence packing for efficiency.

\paragraph{Parameter-efficient fine-tuning.}
We apply LoRA \cite{hu2022lora} to all linear layers with rank \(r=16\) and
scale \(\alpha=16\). We additionally train the token embedding matrix to
accommodate chat-template special tokens, while preserving weight tying between
the input embeddings and output projection.

\paragraph{Optimization and training.}
We train for one epoch using either \AdamW or \Muon, each with and without the
\DoPe wrapper. The base learning rate is \(1\times10^{-5}\), with cosine
scheduling and 5\% warmup. We use per-device batch size 2 and gradient
accumulation over 8 steps.

\paragraph{Evaluation protocol.}
Downstream evaluation uses \texttt{lm-eval} with the \texttt{vLLM} backend in
bfloat16 precision. We evaluate on \texttt{GSM8K} with 5-shot prompting, enabling
chat-style prompting via the model's native template and formatting few-shot
exemplars as multi-turn conversations.

\subsubsection{8B SFT}
\label{appdx:sft-8b}
\paragraph{Overview.}
Our primary SFT experiment uses 
\texttt{meta-llama/Llama-3.1-8B} checkpoint
\cite{llama3modelcard, grattafiori2024llama} as the frozen base model. We train
only LoRA adapters \cite{hu2022lora}; the base model weights remain frozen
throughout training. The tokenizer and chat template are taken from
\texttt{meta-llama/Llama-3.1-8B-Instruct}. Training data are drawn from
\texttt{nvidia/OpenMathInstruct-2} \cite{toshniwal2024openmathinstruct}, split
\texttt{train\_1M}. The primary comparison in this setting is between \AdamW
and \DoPe-\AdamW, where \DoPe-\AdamW applies \AP to the
trainable LoRA modules before the AdamW update.

\definecolor{PrecondGain}{HTML}{006D2C}
\definecolor{PrecondLoss}{HTML}{8C2D04}
\definecolor{LightGain}{HTML}{E5F5E0}

\newcommand{\score}[2]{$#1{\scriptstyle\pm #2}$}
\newcommand{\bscore}[2]{$\mathbf{#1}{\scriptstyle\pm #2}$}
\newcommand{\posdelta}[1]{\textbf{\textcolor{PrecondGain}{$+#1$}}}
\newcommand{\negdelta}[1]{\textcolor{PrecondLoss}{$#1$}}
\newcommand{\zerodelta}{\textcolor{black}{$0.0$}}

\begin{table}[t]
\centering
\caption{
\textbf{8B SFT sweep.}
Final accuracy is reported as percent \(\pm\) one standard error for
\AdamW and \DoPe-\AdamW on \texttt{GSM8K}, \texttt{GSM8K-CoT}, and \texttt{MATH-500}.
\(\Delta\) denotes the improvement of \DoPe-\AdamW over \AdamW in percentage
points.
}
\label{tab:sft-serious}
\setlength{\tabcolsep}{3.2pt}
\renewcommand{\arraystretch}{1.08}
\scriptsize
\resizebox{\textwidth}{!}{%
\begin{tabular}{lccc ccc ccc}
\toprule
Learning rate
& \multicolumn{3}{c}{GSM8K}
& \multicolumn{3}{c}{GSM8K-CoT}
& \multicolumn{3}{c}{MATH-500} \\
\cmidrule(lr){2-4}\cmidrule(lr){5-7}\cmidrule(lr){8-10}
& \AdamW & \DoPe-\AdamW & \(\Delta\)
& \AdamW & \DoPe-\AdamW & \(\Delta\)
& \AdamW & \DoPe-\AdamW & \(\Delta\) \\
\midrule
\(2{\times}10^{-5}\)
& \score{71.4}{1.2} & \bscore{71.9}{1.2} & \posdelta{0.5}
& \bscore{72.2}{1.2} & \score{71.9}{1.2} & \negdelta{-0.2}
& \score{33.8}{2.1} & \score{33.8}{2.1} & \zerodelta \\
\(5{\times}10^{-5}\)
& \score{68.1}{1.3} & \bscore{70.1}{1.3} & \posdelta{2.0}
& \score{69.4}{1.3} & \bscore{71.6}{1.2} & \posdelta{2.1}
& \score{34.4}{2.1} & \bscore{35.0}{2.1} & \posdelta{0.6} \\
\(7{\times}10^{-5}\)
& \bscore{69.7}{1.3} & \score{69.5}{1.3} & \negdelta{-0.2}
& \score{69.9}{1.3} & \bscore{70.1}{1.3} & \posdelta{0.2}
& \score{29.6}{2.0} & \bscore{34.4}{2.1} & \posdelta{4.8} \\
\(1{\times}10^{-4}\)
& \bscore{68.1}{1.3} & \score{66.7}{1.3} & \negdelta{-1.4}
& \bscore{72.5}{1.2} & \score{68.5}{1.3} & \negdelta{-3.9}
& \bscore{33.0}{2.1} & \score{32.2}{2.1} & \negdelta{-0.8} \\
\(2{\times}10^{-4}\)
& \score{78.0}{1.1} & \bscore{80.1}{1.1} & \posdelta{2.1}
& \bscore{81.0}{1.1} & \score{75.7}{1.2} & \negdelta{-5.2}
& \bscore{41.6}{2.2} & \score{38.6}{2.2} & \negdelta{-3.0} \\
\rowcolor{LightGain}
\(5{\times}10^{-4}\)
& \score{66.2}{1.3} & \bscore{80.5}{1.1} & \posdelta{14.3}
& \score{51.6}{1.4} & \bscore{81.9}{1.1} & \posdelta{30.3}
& \score{46.4}{2.2} & \bscore{47.4}{2.2} & \posdelta{1.0} \\
\rowcolor{LightGain}
\(7{\times}10^{-4}\)
& \score{56.0}{1.4} & \bscore{70.4}{1.3} & \posdelta{14.4}
& \score{14.5}{1.0} & \bscore{78.4}{1.1} & \posdelta{63.9}
& \score{42.6}{2.2} & \bscore{45.8}{2.2} & \posdelta{3.2} \\
\(1{\times}10^{-3}\)
& \bscore{2.9}{0.5} & \score{2.2}{0.4} & \negdelta{-0.7}
& \bscore{2.8}{0.5} & \score{2.0}{0.4} & \negdelta{-0.8}
& \score{3.6}{0.8} & \bscore{4.6}{0.9} & \posdelta{1.0} \\
\midrule
Mean \(\Delta\) over LR grid
& \multicolumn{2}{c}{} & \posdelta{3.9}
& \multicolumn{2}{c}{} & \posdelta{10.8}
& \multicolumn{2}{c}{} & \posdelta{0.9} \\
\bottomrule
\end{tabular}%
}
\end{table}

\paragraph{Additional results.}
Results are reported in \Cref{tab:sft-serious}. Across the full learning-rate grid, \DoPe-\AdamW improves the average final accuracy over \AdamW by \(+3.9\), \(+10.8\), and \(+0.9\) percentage points on \texttt{GSM8K}, \texttt{GSM8K-CoT}, and \texttt{MATH-500}, respectively. The gains are especially pronounced in the high-learning-rate regime: at \(5\times10^{-4}\), \DoPe-\AdamW improves \texttt{GSM8K-CoT} from \(51.6\%\) to \(81.9\%\), and at \(7\times10^{-4}\), from \(14.5\%\) to \(78.4\%\). Thus, beyond improving the best observed endpoint performance, \DoPe substantially broadens the stable and effective learning-rate range for large-scale SFT. Taking the best final checkpoint over the sweep, \DoPe-\AdamW reaches \(80.5\%\) on \texttt{GSM8K}, \(81.9\%\) on \texttt{GSM8K-CoT}, and \(47.4\%\) on \texttt{MATH-500}, improving over the best \AdamW result on all three benchmarks.

\begin{table}[t]
  \centering
  \caption{
  Core configuration for the large-scale 8B SFT experiment.
  }
  \small
  \setlength{\tabcolsep}{6pt}
  \begin{tabular}{p{0.28\linewidth} p{0.66\linewidth}}
    \toprule
    \textbf{Component} & \textbf{Setting} \\
    \midrule
    Base model & \texttt{meta-llama/Llama-3.1-8B} \\
    Tokenizer / chat template & \texttt{meta-llama/Llama-3.1-8B-Instruct} \\
    Precision & bfloat16 model and compute; float32 preconditioner statistics \\
    Attention implementation & FlashAttention-2 \cite{dao2022flashattention, dao2023flashattention2} \\
    Trainer & \texttt{trl.SFTTrainer} with PEFT LoRA \\
    \midrule
    Training dataset & \texttt{nvidia/OpenMathInstruct-2}, split \texttt{train\_1M} \\
    Input fields & \texttt{problem} and \texttt{generated\_solution} \\
    Objective & Completion-only next-token loss; user-prompt tokens are masked \\
    Maximum sequence length & 4096 tokens \\
    Packing & Enabled \\
    Training length & 2 epochs \\
    Processed tokens & \(997{,}968{,}955\) tokens \\
    Optimizer steps & 482  \\
    \midrule
    Effective global batch size & 512 packed sequences \\
    World size & 8 GPUs \\
    Per-device micro-batch size & 2 \\
    Gradient accumulation & 32 steps \\
    DeepSpeed configuration & ZeRO-3 \\
    Gradient checkpointing & Enabled, reentrant checkpointing path \\
    \bottomrule
  \end{tabular}
  \label{tab:sft_8b_setup}
\end{table}

\paragraph{Training data and prompt construction.}
Each training example consists of a math problem and a generated solution from
OpenMathInstruct-2. The \texttt{problem} field is inserted into the fixed user
prompt:
\begin{quote}
{\small\ttfamily
Solve the following math problem. Make sure to put the answer (and only answer)
inside \textbackslash boxed\{\}.
}
\end{quote}
The target completion is the corresponding \texttt{generated\_solution}. 

\paragraph{Parameter-efficient fine-tuning.}
All large-scale SFT runs use LoRA \cite{hu2022lora}; full fine-tuning is not
used. LoRA adapters are applied to the standard Llama attention and MLP
projection matrices:
\[
\{
\texttt{q\_proj}, \texttt{k\_proj}, \texttt{v\_proj}, \texttt{o\_proj},
\texttt{gate\_proj}, \texttt{up\_proj}, \texttt{down\_proj}
\}.
\]
The LoRA rank is \(r=64\), the LoRA scale is \(\alpha=128\), LoRA dropout is
0.05, and no bias terms are trained.

\begin{table}[t]
  \centering
  \caption{
  LoRA, optimizer, learning-rate sweep, and preconditioner configuration for the
  large-scale 8B SFT experiment.
  }
  \small
  \setlength{\tabcolsep}{6pt}
  \begin{tabular}{p{0.30\linewidth} p{0.64\linewidth}}
    \toprule
    \textbf{Component} & \textbf{Setting} \\
    \midrule
    PEFT method & LoRA \cite{hu2022lora} \\
    LoRA target modules & \texttt{q,k,v,o,gate,up,down} projection matrices \\
    LoRA rank / scale & \(r=64\), \(\alpha=128\) \\
    LoRA dropout & 0.05 \\
    LoRA bias & \texttt{none} \\
    Trainable adapter parameters & \(167{,}772{,}160\) \\
    \midrule
    AdamW betas and epsilon & \((0.9, 0.98)\), \(\epsilon=10^{-8}\) \\
    Weight decay & 0.01 \\
    LR schedule & Constant, no warmup \\
    \bottomrule
  \end{tabular}
  \label{tab:sft_8b_lora_optim}
\end{table}

\paragraph{Evaluation protocol.}
Evaluation is performed with \texttt{lm-evaluation-harness} using the
\texttt{vLLM} backend. Fine-tuned models are evaluated by loading the frozen
\texttt{Llama-3.1-8B} base weights together with the LoRA adapter and the saved
adapter tokenizer when present. The evaluation wrapper applies the chat template
for fine-tuned adapters. Inference uses bfloat16 precision, tensor parallelism
over 8 GPUs, greedy decoding, temperature \(0.0\), and top-\(p=1.0\). For the
learning-rate sweep evaluation, we use \texttt{max\_model\_len=8193},
\texttt{max\_gen\_toks=2048}, \texttt{gpu\_memory\_utilization=0.9},
\texttt{batch\_size=auto}, and \texttt{max\_batch\_size=64}. The one-token
context margin is introduced to avoid vLLM scoring failures at exactly the model
length.

We report the following evaluation tasks:
\begin{itemize}[
    noitemsep,
    topsep=-3pt,
    parsep=0pt,
    partopsep=0pt,
    left=0pt]
    \item \texttt{gsm8k}: 5-shot direct \texttt{GSM8K} generation, exact match with the
    flexible extraction filter.
    \item \texttt{gsm8k\_cot}: 8-shot chain-of-thought \texttt{GSM8K} generation, exact
    match with the flexible extraction filter.
    \item \texttt{minerva\_math500}: 4-shot Minerva-style MATH-500 generation,
    scored with \texttt{math\_verify}.
\end{itemize}
Reported uncertainty values are the standard errors emitted by
\texttt{lm-evaluation-harness}.

\subsection{Generative Modeling}
\label{app:gen_modeling}

\begin{figure}[t]
  \centering
  \begin{subfigure}[t]{0.48\linewidth}
    \centering
    \includegraphics[width=\linewidth]{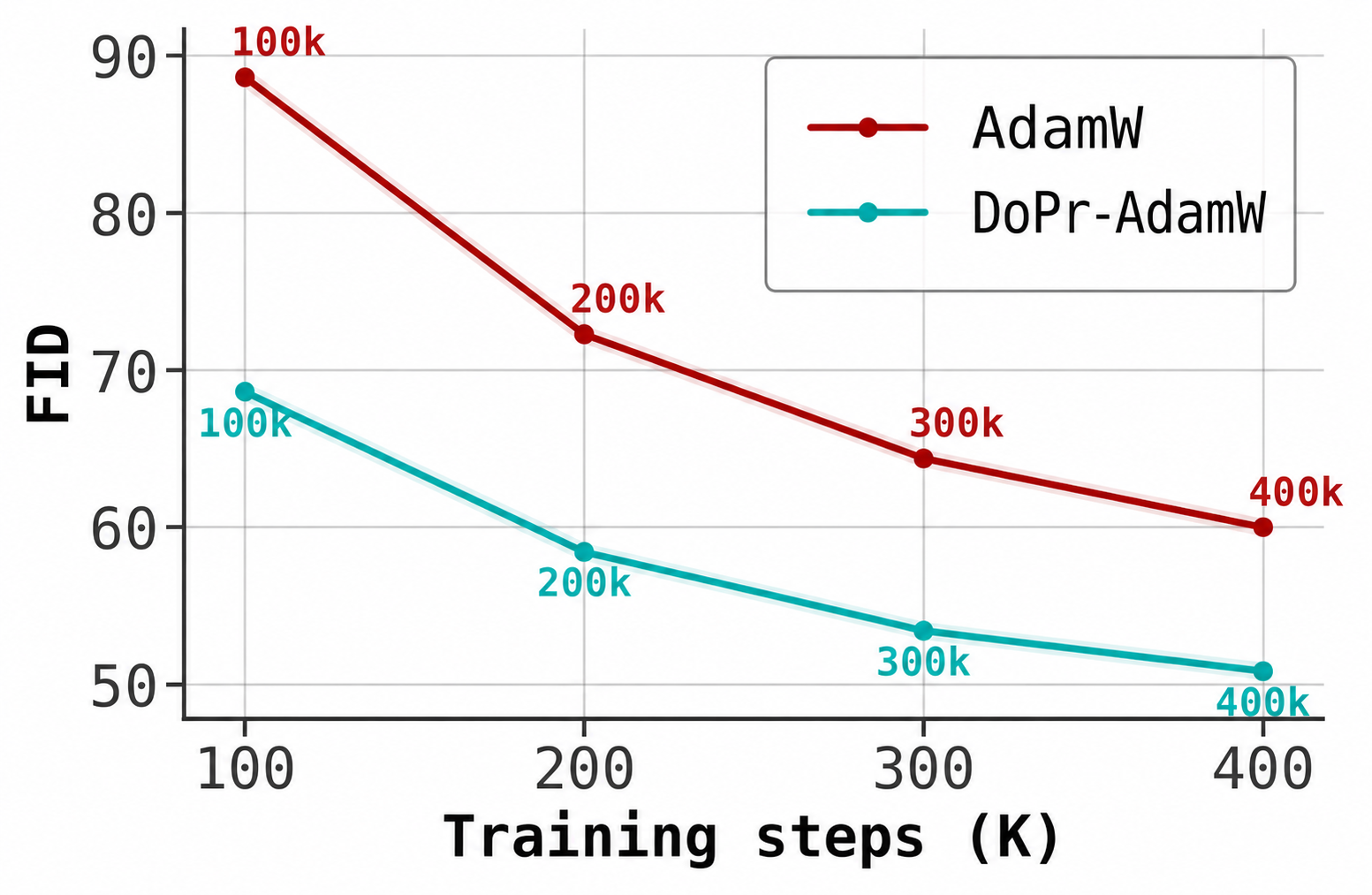}
    \label{fig:sit_imagenet256_fid_adamw}
  \end{subfigure}
  \hfill
  \begin{subfigure}[t]{0.48\linewidth}
    \centering
    \includegraphics[width=\linewidth]{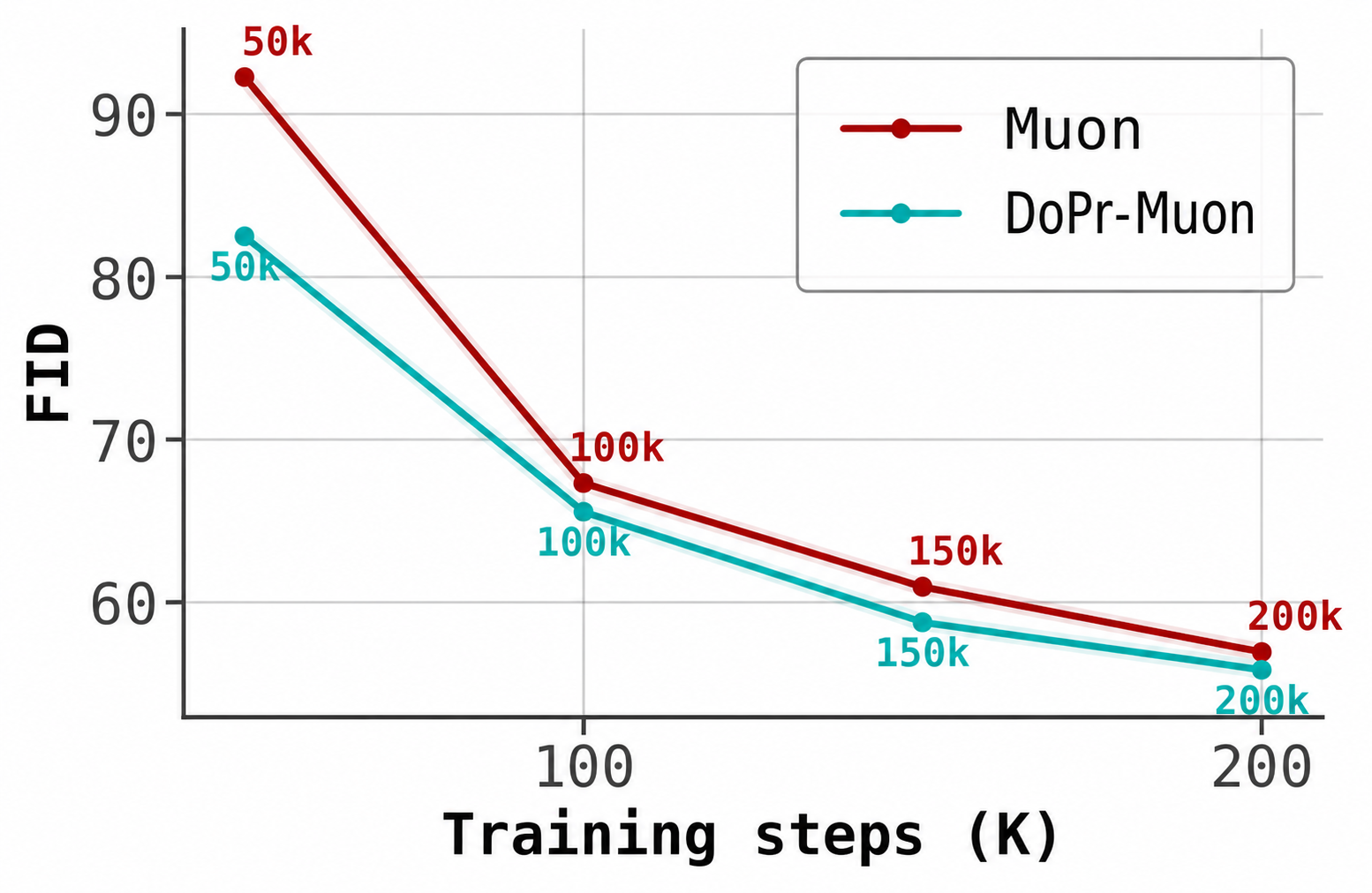}
    \label{fig:sit_imagenet256_fid_muon}
  \end{subfigure}

  \centering
  \begin{subfigure}[t]{0.48\linewidth}
    \centering
    \includegraphics[width=\linewidth]{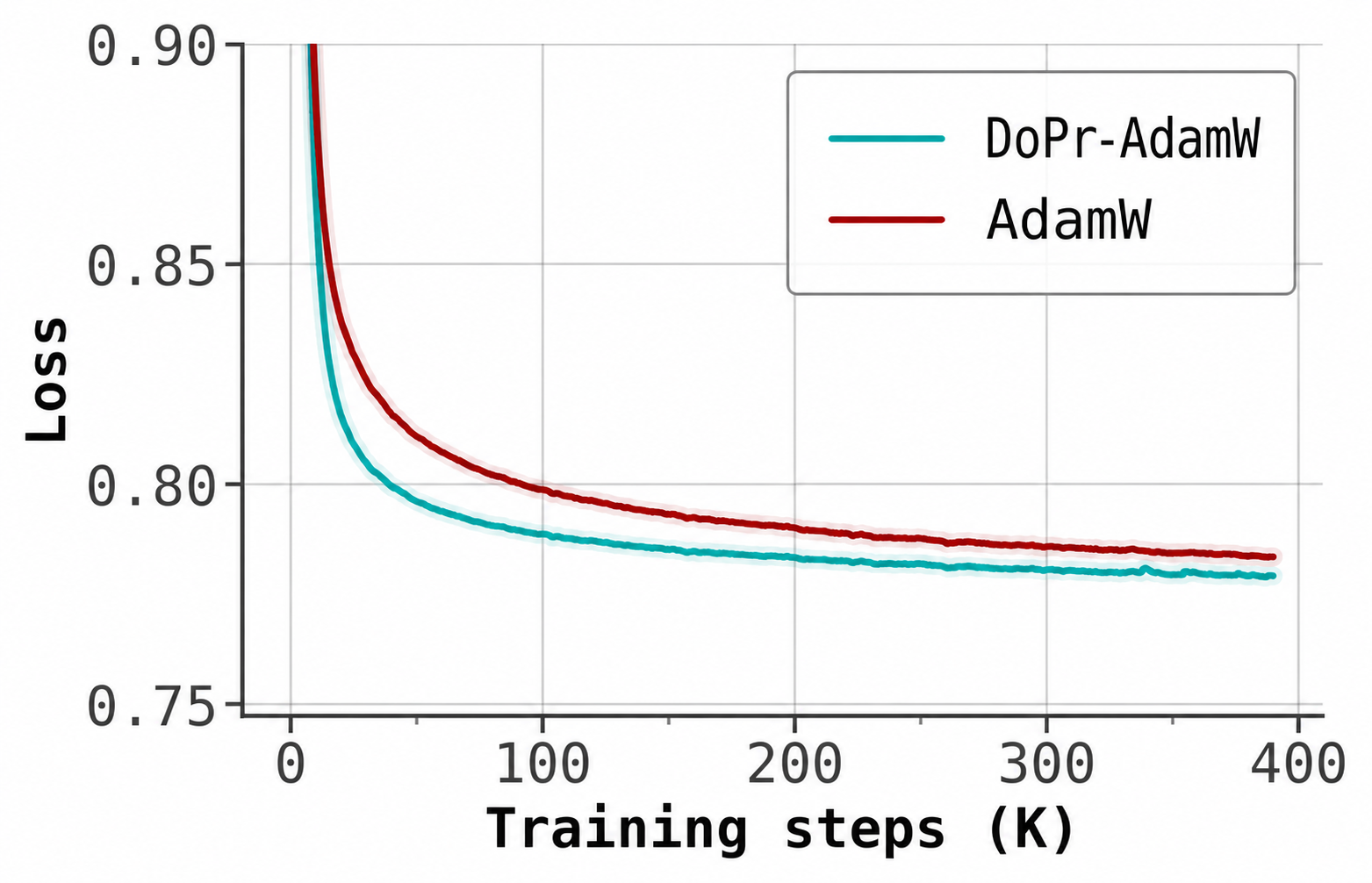}
    \label{fig:sit_imagenet256_loss_adamw}
  \end{subfigure}
  \hfill
  \begin{subfigure}[t]{0.48\linewidth}
    \centering
    \includegraphics[width=\linewidth]{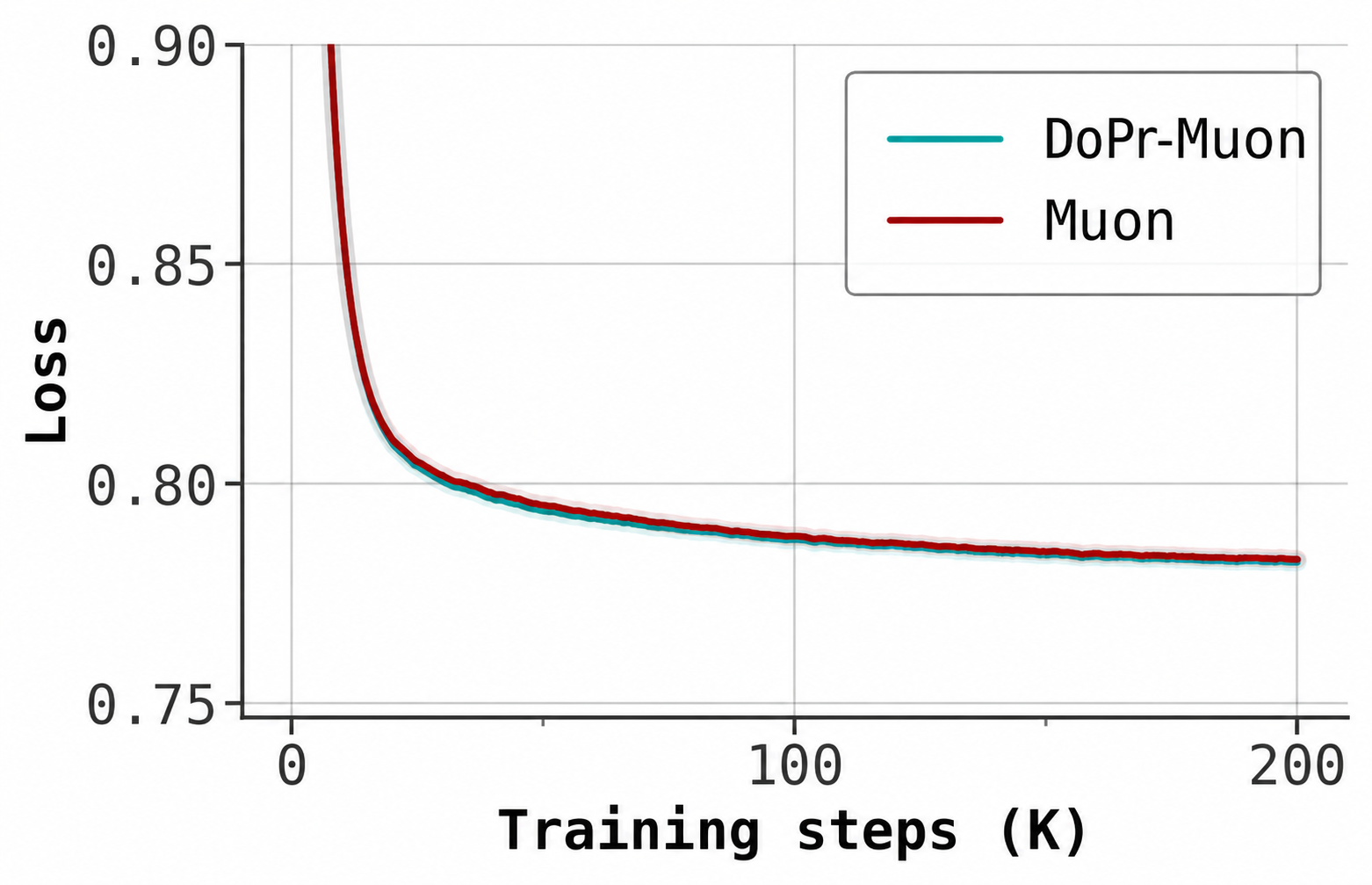}
    \label{fig:sit_imagenet256_loss_muon}
  \end{subfigure}

  \caption{\textbf{SiT-S on ImageNet256}. \textbf{Top row}: perceptual quality (FID-50K) vs.\ training steps. \textbf{Bottom row:} training loss trajectories. We see that \DoPe improves sample quality as well as loss convergence.
  }
  \label{fig:sit_imagenet256}
  \vspace{-0.3cm}
\end{figure}

\paragraph{Generative modeling as TTF.}
Generative modeling is a natural TTF setting: training is local teacher-forced velocity regression, whereas deployment rolls out the learned velocity field under its own induced trajectory. As in flow matching, the model minimizes errors on $(\xblue[t],\yblue[t])$, where $\yblue[t]=\bv(\xblue[t],t)$ is the closed-form target velocity. At generation time, however, samples are produced by iteratively integrating the learned field, e.g.\ $\frac{\rmd}{\rmd t}\zred=\vred(\xred,t)$, so velocity errors can accumulate and shift future states away from the training distribution. Consequently, per-step training loss can misalign with final sample quality metrics such as FID.

We evaluate \DoPe on generative modeling using the SiT-S model \citep{ma2024sit} trained at $256 \times 256$ resolution on ImageNet \citep{deng2009imagenet}.  As in \citet{ma2024sit}, the baseline model is trained with \AdamW, and generates using a number of function evaluations (NFE) fixed to 250. We follow our general operating guidelines and swap the optimizer to \DoPe-\AdamW, maintaining the same learning rate and all other architectural, training, and evaluation protocols as the baseline. We then repeat this using baseline \Muon and \DoPe-\Muon. Table~\ref{tab:sit_setup} summarizes the model, training, and evaluation configuration.

We plot the FID-50K scores against training steps as the ``downstream metric''; refer to \Cref{fig:sit_imagenet256}. Across both \AdamW and \Muon, \DoPe consistently accelerates convergence and improves final FID. In addition, qualitative inspection of intermediate generations in Figure~\ref{fig:sit_imagenet256_samples_20k} indicates that \DoPe induces a distinct optimization trajectory compared to the base optimizers.

Notably, as seen in \Cref{fig:sit_imagenet256}, the relative improvements in FID are somewhat proportional to \emph{improvements} in train loss convergence. We hypothesize that generative modeling is a relatively unique TTF setting, in that one has control over the data-generating dynamical system. For example, many works study ``straightening'' the transport trajectory (cf.\ standard flow matching versus Rectified Flow \citep{liu2023flow} and Optimal-Transport CFM \citep{tong2024improving}), which can enable low NFE sampling. Therefore, the severity of TTF is tied to the generative algorithm. 
We leave exploring this space for co-design to future work.

\begin{figure}[t]
  \centering
  \begin{tabular}{c c}
    \textbf{\AdamW (20K steps)} & \textbf{\DoPe-\AdamW (20K steps)} \\
    \includegraphics[width=0.45\linewidth]{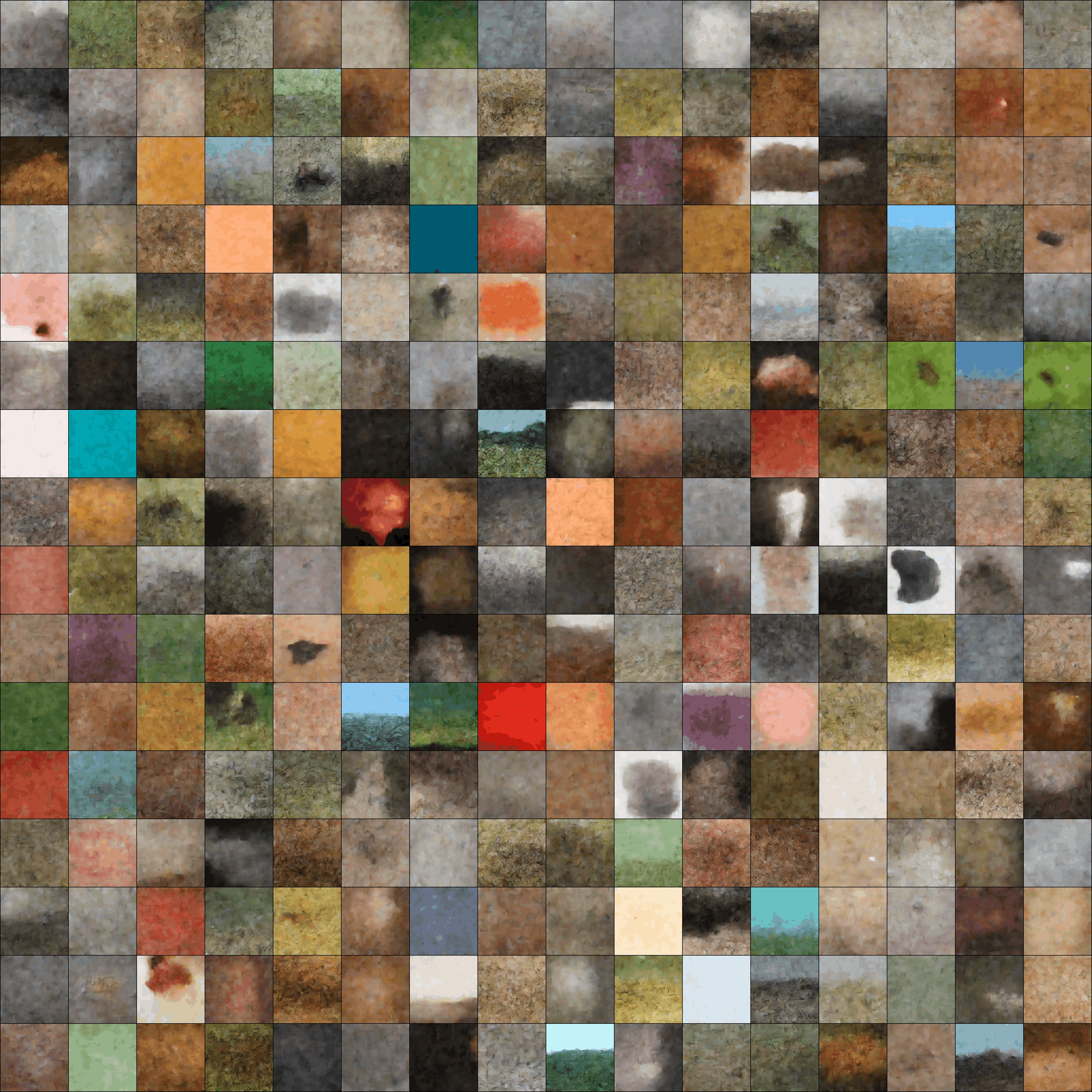} &
    \includegraphics[width=0.45\linewidth]{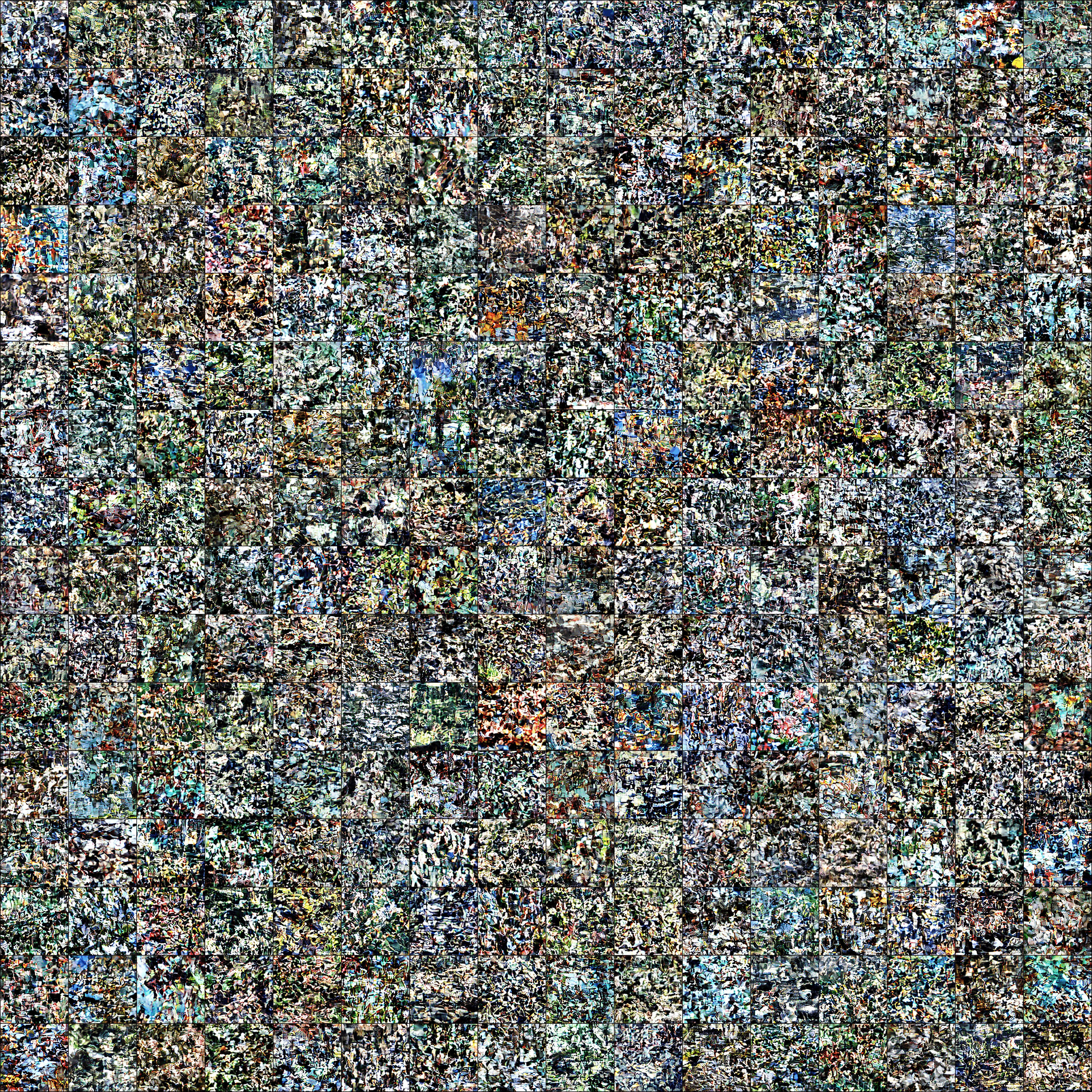}
  \end{tabular}

  \caption{
  \textbf{Generated ImageNet256 samples at 20K training steps (SiT-S, \AdamW).}
  Although both models are trained under identical settings and evaluated using the same ODE sampler,
  the samples generated by \DoPe-\AdamW exhibit qualitatively different structure and composition compared to \AdamW at the same early training stage.
  }
  \label{fig:sit_imagenet256_samples_20k}
  \vspace{-0.5cm}
\end{figure}

\begin{table}[t]
  \centering
  \caption{Experimental setup for SiT-S generative modeling on ImageNet at $256\times256$. Across all runs, we vary only the optimizer choice and whether \DoPe is enabled; all other training and evaluation settings match the standard SiT protocol.}
  \small
  \setlength{\tabcolsep}{6pt}
  \begin{tabular}{p{0.26\linewidth} p{0.68\linewidth}}
    \toprule
    \textbf{Component} & \textbf{Setting} \\
    \midrule
    Model & SiT-S \cite{ma2024sit} \\
    Dataset & ImageNet \cite{deng2009imagenet} \\
    Resolution & $256 \times 256$ \\
    Architecture / parameterization / conditioning &
    Exactly follows the original SiT implementation \cite{ma2024sit} \\
    \midrule
    Optimizers compared &
    \AdamW; \Muon; each \emph{with} and \emph{without} the \DoPe wrapper \\
    Training steps &
    \AdamW: 400K steps; \Muon: 200K steps \\
    Learning rate &
    \AdamW: matches SiT defaults \cite{ma2024sit}; \Muon: $10^{-3}$ (per prior recommendations) \\
    Other training settings &
    Held fixed across runs (data preprocessing, batch size, LR schedule, regularization) \\
    \midrule
    Sampler & ODE-based sampler from SiT \cite{ma2024sit} \\
    NFE & 250 \\
    Metric & FID-50K (unless otherwise specified) \\
    FID reference set & ImageNet training set \cite{deng2009imagenet} \\
    \bottomrule
  \end{tabular}
  \label{tab:sit_setup}
\end{table}

\section{Additional Experiments and Setup}\label{appdx:additional exps}

\subsection{Moment Grafting Experiments}

\paragraph{Datasets.} 
The ``original'' dataset used is CIFAR10 \cite{cifar10dataset}. We split this into a 0.8:0.2 split of train/validation data. Then, we generate a ``transformed'' dataset by applying a random spatial rotation of CIFAR by $90^\circ$, and after normalizing all coordinates to be between $[-1, 1]$, we apply $\tanh$ elementwise. We note this induces a nonlinear transform. 
\paragraph{Training.}
Next, we train $3$ models, each a 3 layer MLP with layer dimensions $(3072, 256)$, $(256, 256)$, and $(256, 10)$, and SiLU activations. All models are initialized identically. One model is trained on the original CIFAR dataset \cite{cifar10dataset}, called ``Original." Another model is trained on the transformed CIFAR data, denoted ``Transformed." We also coordinate training so that at each step, the batches the Transformed model sees corresponds exactly to the transformed version of the batches the Original model sees. The last model is called ``Grafted." All models are trained simultaneously.

The Grafted model is trained as follows: during training, the activations inputted into each layer of the Transformed model are recorded, and their means/covariances calculated. Then, the Grafted model takes as input the Original data batch. Then, before passing the data through each layer's weights, an affine transform is applied to the incoming activations to match their mean and covariance to the cached means/covariances of the Transformed model's forward pass.  

Concretely, let $\boldsymbol{\mu}_{t}, \bSigma_{t}$ be the mean and covariance of transformed batch, $\bX_{t}\in \R^{3072\times b}$, where $b$ is the batch size. When the corresponding batch $\bX$ is passed into the Grafted model, let $\bX$ have mean $\boldsymbol{\mu}$ and covariance $\bSigma$. Then, before we pass $\bX$ through the layer, we apply the following transform to $\bX$: 
\[\bX_{new}=\bSigma_{t}^{1/2}\bSigma^{-1/2}(\bX-\boldsymbol\mu\mathbf{1}^\top) + \boldsymbol\mu_{t}\mathbf{1}^\top\]
We can note that as $\bX-\boldsymbol\mu \mathbf{1}^\top$ has mean $0$, then $\bX_{new}$ will have mean $\boldsymbol{\mu}_{t}$. Similarly, we can find the covariance as: 
\[(\bX_{new}-\boldsymbol\mu_{t}\mathbf{1}^\top)(\bX_{new}-\boldsymbol\mu_{t}\mathbf{1}^\top)^\top=\bSigma_{t}^{1/2}\bSigma^{-1/2}(\bX-\boldsymbol\mu\mathbf{1}^\top)(\bX-\boldsymbol\mu\mathbf{1}^\top)^\top(\bSigma^{-1/2})^\top(\bSigma_{t}^{1/2})^\top\]
Noting that covariances matrices are symmetric: 
\[=\bSigma_{t}^{1/2}\bSigma^{-1/2}\bSigma(\bSigma^{-1/2})^\top(\bSigma_{t}^{1/2})^\top=\bSigma_{t}^{1/2}\bSigma_{t}^{1/2}=\bSigma_{t}\]
Therefore, this transform does give $\bX_{new}$ a mean of $\boldsymbol\mu_{t}$, and covariance of $\bSigma_{t}$. However, since a nonlinear transform was used to create the transformed data, this affine transform \emph{will not be an exact recovery of the transformed data}, only matching the mean and covariance. We repeat this process for the hidden layer and output layer, with the respective activations for each.

In these experiments, we use a batch size of $2048$, and $\mu$P scaling. The base learning rate was $10^{-3}$ for \SGD, $10^{-5}$ for \AdamW, and $10^{-4}$ for \Muon. Otherwise, the default PyTorch parameters are used for \AdamW and \Muon. To compute square roots and inverse square roots of the covariance matrices, symmetric eigenvalue decomposition was used, with a damping constant of $10^{-6}\cdot \trace(\bA)$, where $\bA$ is the matrix to be decomposed. 

\paragraph{Evaluation.}
Finally, the models are validated on the Transformed data after each training step, and the validation losses plotted. The results are displayed in \Cref{fig:Moment graft}. 

Despite not being trained on the Transformed data, the Grafted model's validation performance on the transformed data matches much more closely with the Transformed model than the Original model (trained on the original data) across optimizers, suggesting that when trained on data with similar means and covariances, models often learn similarly, even if the data itself does not perfectly match. While not displayed, we further note that increasing batch size and decreasing learning rate both help to improve how closely the Grafted model fits the Transformed model in validation loss, as they reduce drift arising from mean/covariance estimation noise and higher-order mismatches from the \emph{non-linear} data transform. Notably, we remark that since there is no corrective mechanism to port the Grafted model's weights onto the Transformed model's ``ground truth'' weights, the drift will monotonically compound over the training horizon.

The Grafted model matching the Transformed model in validation demonstrates that even ignoring the higher-order nonlinear aspects of the forward pass, the mean and covariance of a forward pass are sufficient to \emph{graft} much of the downstream performance of a model that is truly trained on the transformed data onto another model, despite the latter only seeing the original data in its forward/backward pass. This lends credence to the premise of \AP, where the working hypothesis is that ``error directions'' (as captured by the mean and covariance of each layer's activations) strongly bias the optimizer's downstream behavior. Grafting demonstrates that, in the hypothetical case where we \emph{knew} the feature directions of the ``downstream'' model, we can graft its bias onto the training on the training distribution. However, since we typically cannot know these features ahead of time, \AP --- which essentially treats all feature directions equally --- is the ``worst-case'' optimal approach.

\begin{figure}[ht]
  \begin{center}    
  \centerline{\includegraphics[width=\columnwidth]{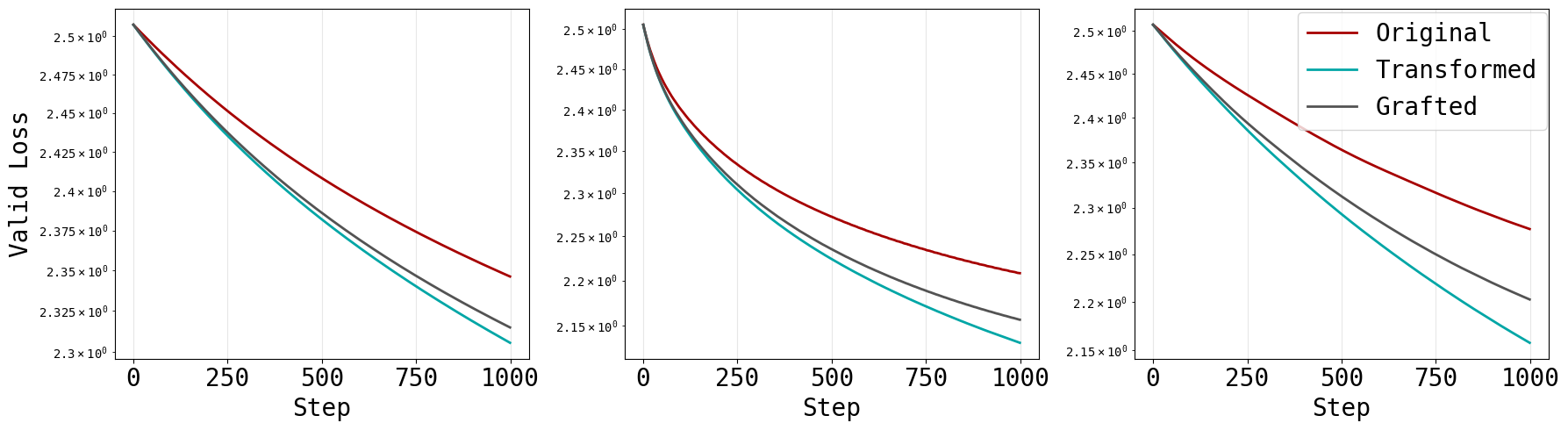}}
    \caption{
      We train 3 MLPs on CIFAR, and a nonlinear invertible transform of CIFAR. We plot the validation loss for the 3 models for each of \AdamW (left), \SGD (middle), and \Muon (right). Across all $3$, it can be seen that the Grafted model's trajectory is closer to the Transformed model than the Original model. 
    }
    \label{fig:Moment graft}
  \end{center}
  \vspace{-0.5cm}
\end{figure}

\subsection{Stable Ranks of Optimizer and Model Quantities}\label{appdx:stable ranks}

Following prior work that hypothesizes the superior training efficiency of \Muon (compared to, e.g., \SGD) due to ratios of certain model/optimizer quantities, we plot these relevant quantities using the terminal weights of the small-scale 3B SFT run in \Cref{sec:sft} and \Cref{app:sft_details}. In particular, we follow the predictions in \citet{davis2025spectral} and compare the ratio of the ``nuclear rank'' of the layer-wise gradients and the stable rank of the batch activation covariances of a forward pass, see \Cref{fig:stable_rank_G}. However, we note that \DoPe-\Muon does not orthogonalize the raw gradient, rather, it takes the \AP gradient as input. Following the hypothesis in \citet{davis2025spectral}, we also compare the nuclear rank of the \AP gradient to the stable rank of the activation covariances (\Cref{fig:stable_rank_GSigmaInv}).

\begin{figure}[t]
    \centering
    \begin{subfigure}{0.48\linewidth}
        \centering
        \includegraphics[width=\linewidth]{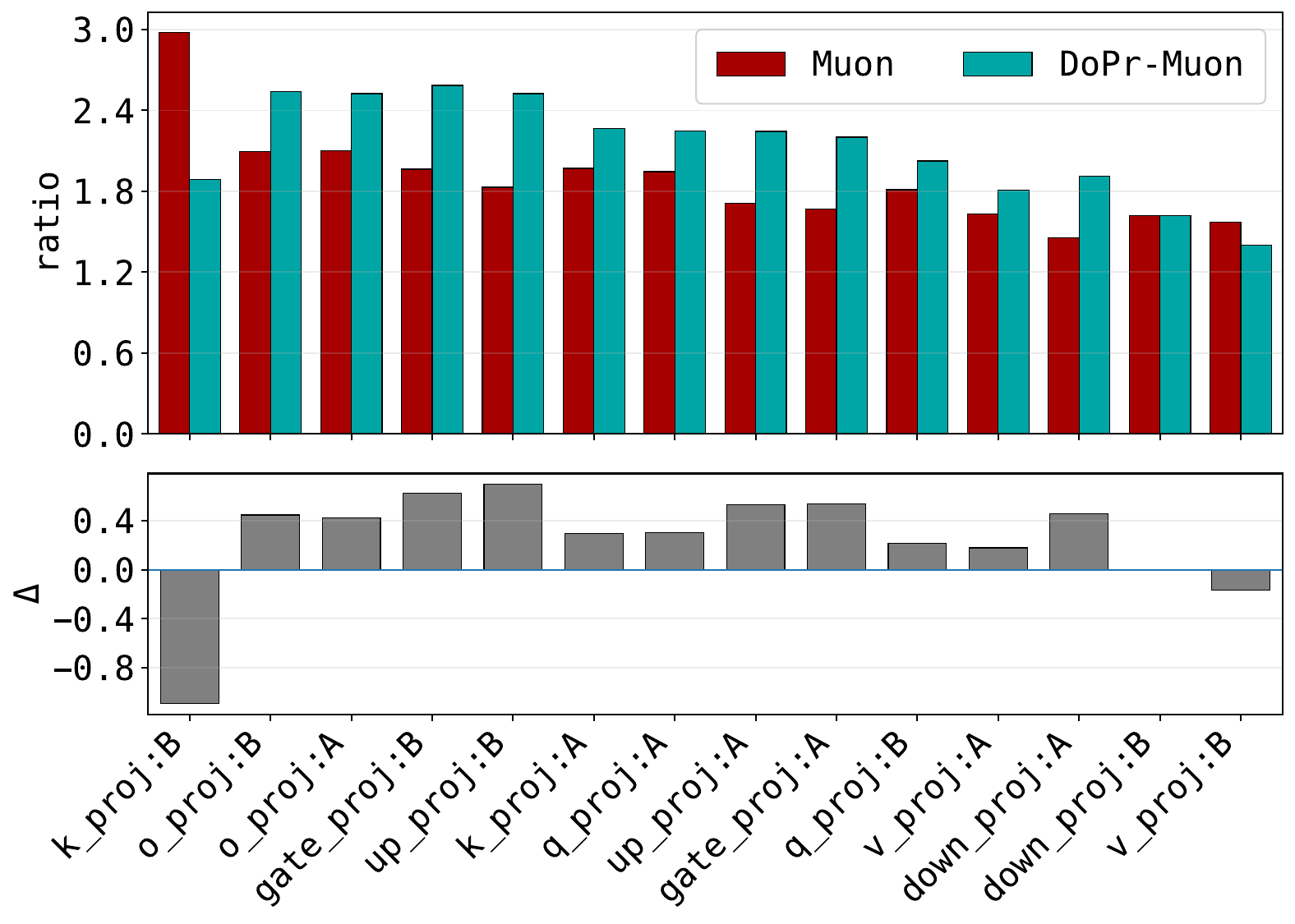}
        \caption{}
        \label{fig:stable_rank_G}
    \end{subfigure}
    \hfill
    \begin{subfigure}{0.48\linewidth}
        \centering
        \includegraphics[width=\linewidth]{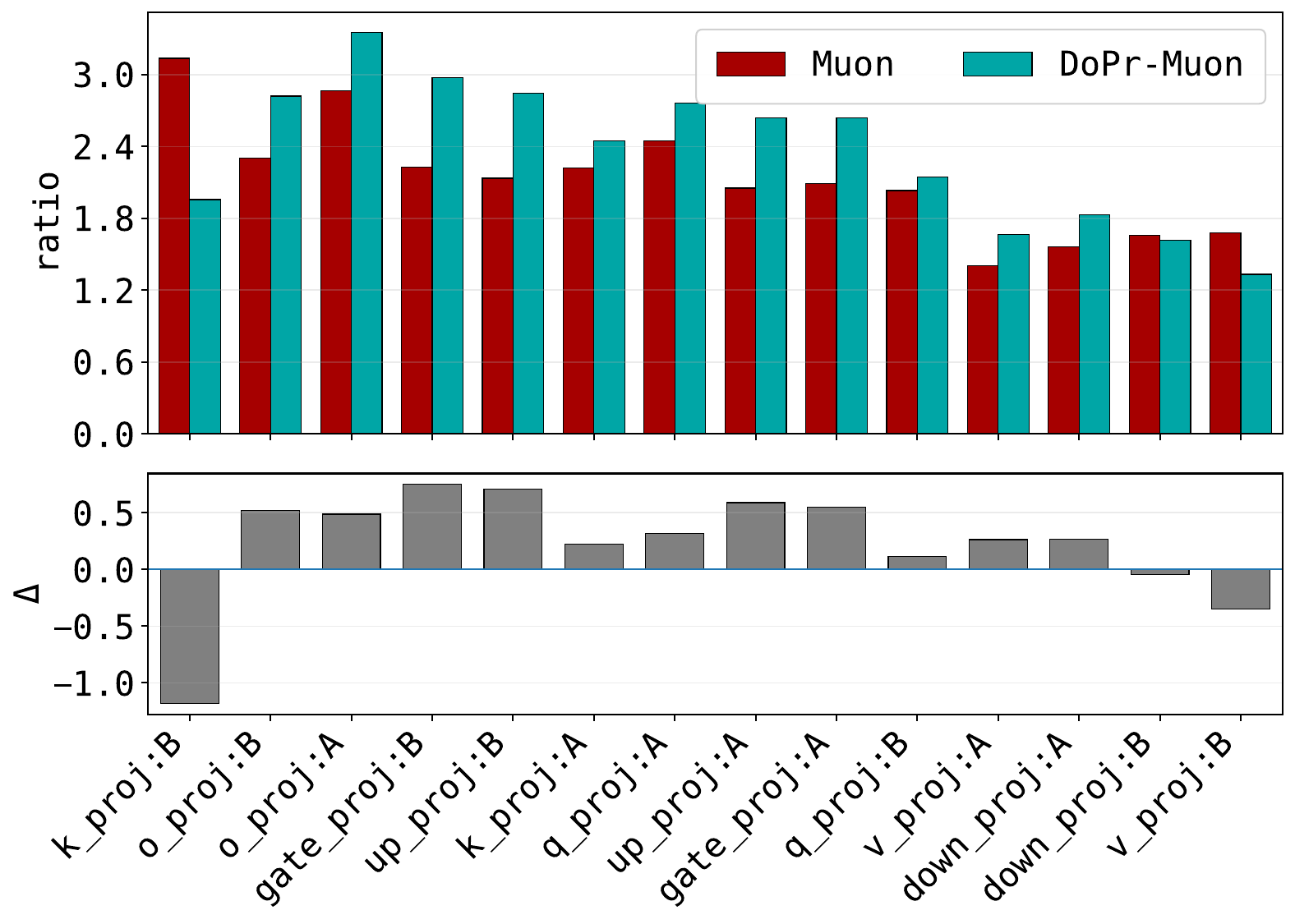}
        \caption{}
        \label{fig:stable_rank_GSigmaInv}
    \end{subfigure}

    \begin{subfigure}{0.48\linewidth}
        \centering
        \includegraphics[width=\linewidth]{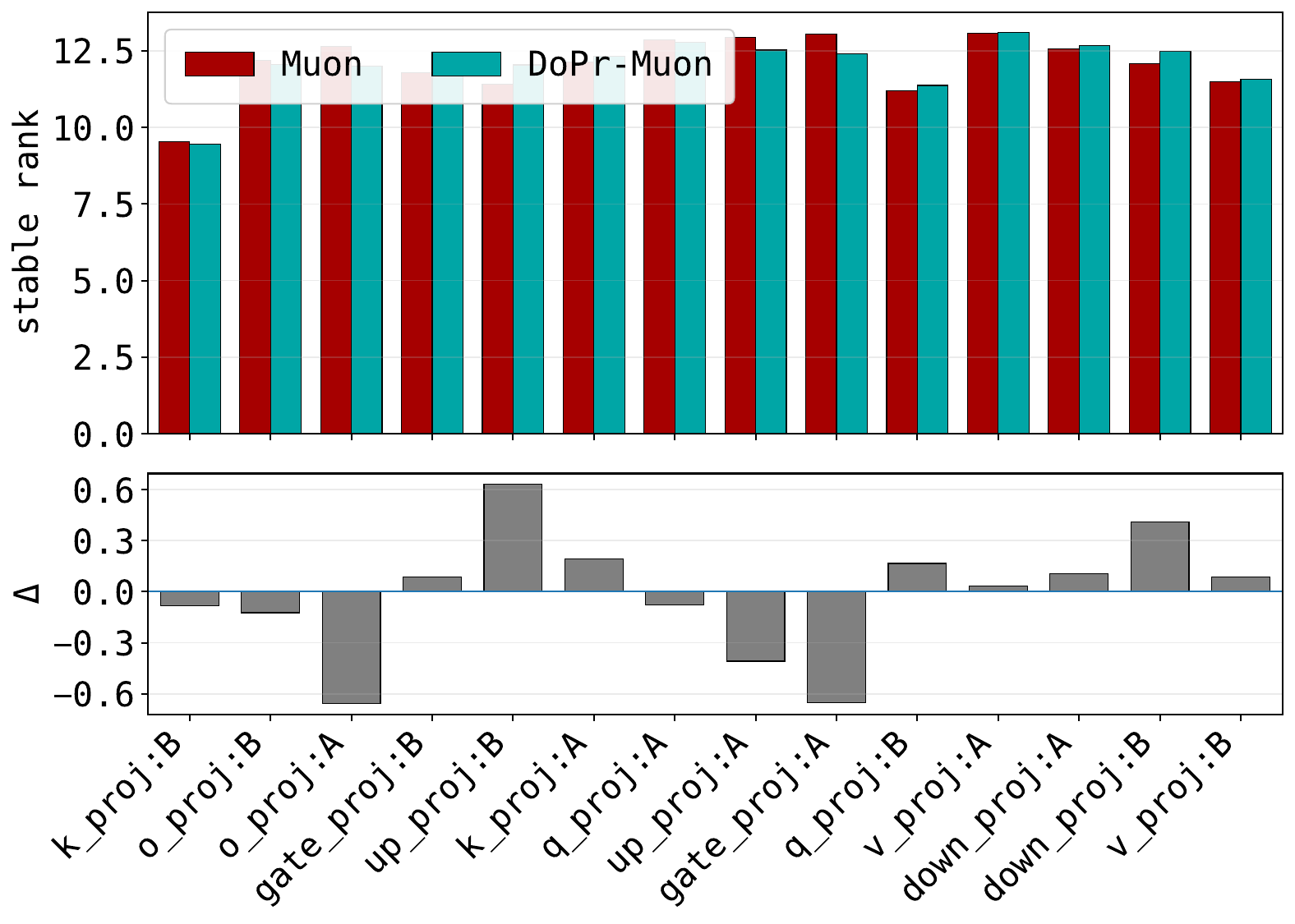}
        \caption{}
        \label{fig:stable_rank_W}
    \end{subfigure}
    \hfill
    \begin{subfigure}{0.48\linewidth}
        \centering
        \includegraphics[width=\linewidth]{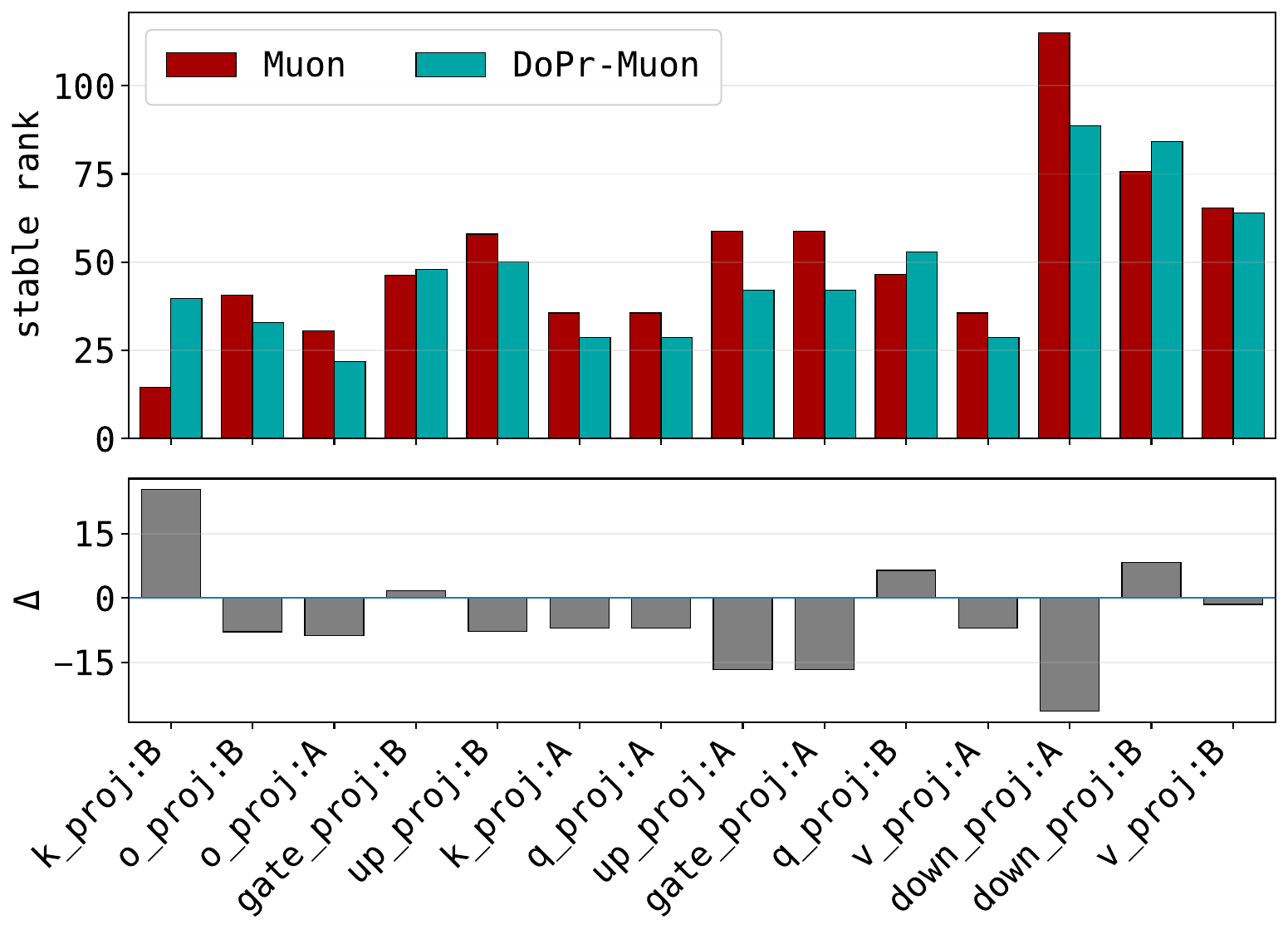}
        \caption{}
        \label{fig:stable_rank_Sigma}
    \end{subfigure}

    \caption{Metric values aggregated by layer group for \Muon and \DoPe-\Muon. The bottom panel shows the mean difference (\DoPe-\Muon minus \Muon).}
    \label{fig:stable_rank}
\end{figure}

On the other hand, we plot the standalone stable ranks of the weight matrices themselves (\Cref{fig:stable_rank_W}) and the batch activation covariances (\Cref{fig:stable_rank_Sigma}). In this case, we do not see a consistent trend across layers, which supports our working hypothesis that our intervention on the optimizer is unrelated to regularizing the optimizer dynamics in \emph{weight geometry}, but rather targets the \emph{error geometry}, captured by the correspondence between the (\AP) gradient and the activation.

\end{document}